\DeclareMathOperator{\Var}{Var}
\DeclareMathOperator{\Cov}{Cov}
\DeclareMathOperator{\Corr}{Corr}
\newtheorem{example}{Example}
\newcommand{\ignore}[1]{}{}
\def\independenT#1#2{\mathrel{\setbox0\hbox{$#1#2$}%
		\copy0\kern-\wd0\mkern4mu\box0}}
\renewcommand{\hat}{\widehat}
\newtheorem{thm}{Theorem}[section]
\newtheorem{cor}{Corollary}[section]
\newtheorem{lem}{Lemma}[section]
\newtheorem{defn}{Definition}[section]
\newtheorem{assumption}{Assumption}
\newcommand{\bX}{\boldsymbol{X}}
\newcommand{\bI}{\boldsymbol{I}}
\newcommand{\bXT}{\widetilde{\bX} }
\newcommand{\bA}{\boldsymbol{A}}
\newcommand{\bR}{\boldsymbol{R}}
\newcommand{\bZ}{\boldsymbol{Z}}
\newcommand{\bQ}{\boldsymbol{Q}}
\newcommand{\OT}{\widetilde{O}}
\newcommand{\oT}{\widetilde{o}}
\newcommand{\bSigma}{\boldsymbol{\Sigma}}
\newcommand{\bLambda}{\boldsymbol{\Lambda}}
\newcommand{\mbR}{\mathbb{R}}
\newcommand{\mcN}{\mathcal{N}}
\newcommand{\FDP}{{\rm FDP}}
\newcommand{\FDR}{{\rm FDR}}
\newcommand{\Power}{{\rm Power}}
\newcommand{\sign}{{\rm sign}}
\newcommand{\Expected}{\mathbb{E}}
\newcommand{\Prob}{\mathbb{P}}
\newcommand{\hS}{\widehat{S}}
\newcommand{\hth}{\widehat{\theta}}
\newcommand{\wht}{\widehat{t}}
\begin{document}

\title{Differentially Private Model-X Knockoffs via Johnson-Lindenstrauss Transform
\date{\today}
\author{
Yuxuan Tao\thanks{Department of Mathematics, University of Southern California. \texttt{yntao@usc.edu}.}
\ \ \ \ \ \
Adel Javanmard\thanks{Department of Data Sciences and Operations, University of Southern California. \texttt{ajavanma@usc.edu}.}
}
}

\maketitle

\begin{abstract}

We introduce a novel privatization framework for high-dimensional controlled variable selection. Our framework enables rigorous False Discovery Rate (FDR) control under differential privacy constraints. While the Model-X knockoff procedure provides FDR guarantees by constructing provably exchangeable ``negative control" features, existing privacy mechanisms like Laplace or Gaussian noise injection disrupt its core exchangeability conditions. Our key innovation lies in privatizing the data knockoff matrix through the Gaussian Johnson-Lindenstrauss Transformation (JLT), a dimension reduction technique that simultaneously preserves covariate relationships through approximate isometry for $(\epsilon,\delta)$-differential privacy.

We theoretically characterize both FDR and the power of the proposed private variable selection procedure, in an asymptotic regime. Our theoretical analysis characterizes the role of different factors, such as the JLT's dimension reduction ratio, signal-to-noise ratio, differential privacy parameters, sample size and feature dimension, in shaping the privacy-power trade-off. Our analysis is based on a novel `debiasing technique' for high-dimensional private knockoff procedure.
We further establish sufficient conditions under which the power of the proposed procedure converges to one. This work bridges two critical paradigms---knockoff-based FDR control and private data release---enabling reliable variable selection in sensitive domains. Our analysis demonstrates that structural privacy preservation through random projections outperforms the classical noise addition mechanism, maintaining statistical power even under strict privacy budgets.
\end{abstract}

\noindent \emph{Keywords:} differential privacy, model-X knockoffs, false discovery rate control, LASSO, feature selection, Johnson-Lindenstrauss transform.

\section{Introduction}
\subsection{Differentially Private FDR Control}
Feature selection plays a pivotal role in high-dimensional statistical analysis and predictive modeling by isolating the most informative variables for outcome prediction. This process is foundational not only in statistics and machine learning but also in a broad spectrum of applied fields, including healthcare, biomedicine, and finance, where interpretability is of crucial importance.

A central challenge in feature selection is the risk of false discoveries—that is, the erroneous identification of irrelevant variables as significant predictors. Such errors can compromise model validity and lead to misleading conclusions. Therefore, an effective feature selection method must achieve a balance: maximizing the identification of truly relevant variables while minimizing the inclusion of spurious features.
 
To address this challenge, the False Discovery Rate (FDR) has emerged as a standard criterion in multiple hypothesis testing. FDR is defined as the expected proportion of false positives among all variables deemed significant, providing a practical and interpretable measure for controlling error rates in feature selection procedures \citep{Benjamini1995Controlling}. 

In recent years, a series of studies proposed to control FDR via constructing knockoff variables \citep{Barber_2015,candes2017panning,barber2018knockoff}. The Model-X knockoffs \citep{candes2017panning}, compared to the oracle version \citep{Barber_2015}, operates under fewer assumptions about the relationship between the variables and the response, making it more flexible for practical applications.

Notably, feature selection applications frequently involve datasets containing sensitive personal information. For example, healthcare research analyzing associations between diseases and single-nucleotide polymorphisms (SNPs) often relies on highly confidential genetic and medical data \citep{WTCCC}. Similar concerns arise in government census data, social media datasets, and other large-scale collections of personal information. Implementing privacy-preserving feature selection protects sensitive personal data from adversarial information retrieval. In response to these challenges, differential privacy \citep{DworkMcSherryNissimSmith} establishes a formal framework for quantifying privacy guarantees by introducing calibrated randomness into algorithms. This ensures that two datasets differing in a single entry produce outputs with statistically indistinguishable probability distributions. Differential privacy quantifies privacy through the parameter $\epsilon$, which directly controls the strength of privacy guarantees. Smaller values of $\epsilon$ correspond to stricter privacy protections, as they limit how much an adversary can infer about individual data points in a dataset. 

Research on differentially private FDR control has advanced through diverse approaches to harmonize false discovery rate guarantees with privacy preservation. The DP-BHq method \citep{dwork2021differentially} adapts the Benjamini-Hochberg procedure by injecting calibrated noise into p-values before thresholding, ensuring ($\epsilon$, $\delta$)-differential privacy while maintaining FDR control under arbitrary dependencies. Building on this, DP-AdaPT \citep{xia2023adaptive} enhances power via adaptive thresholding and p-value transformations, leveraging a mirror peeling algorithm to optimize noise injection. For sequential settings, \citep{zhang2020paprikaprivateonlinefalse} extend DP guarantees to online hypothesis testing, managing cumulative privacy loss in real-time decisions. Knockoff-based methods \citep{PournaderiXiang,cai2023privateestimationinferencehighdimensional} integrate Laplace/Gaussian noise into the knockoff framework to select variables while controlling FDR, under privacy constraints.

\subsection{Input/Output Perturbation}\label{sec:input-output}

An essential property of differential privacy is its post-processing safety: if an algorithm is differentially private, then any post-processing applied to its output will also be differentially private, provided that the post-processing does not access the original raw data. This property allows privatization to be applied at any stage of a data-processing pipeline. Specifically, depending on where privatization occurs, methods can be categorized as either input perturbation or output perturbation. Input perturbation privatizes the raw data before any processing takes place, while output perturbation applies privacy mechanisms to the results of data analysis or to trained models.

While output perturbation can be more straightforward to implement—since a privacy mechanism is applied after a model or statistic has been computed—it carries the risk that the model or results may have been derived from sensitive data that was not protected during earlier stages of processing. Output perturbation typically involves adding noise directly to the final model parameters or statistical outputs. Other methods like differentially private stochastic gradient descent (DP-SGD) \citep{Abadi_2016} inject noise into the gradient calculations during the training process itself, making them more accurately described as “in-process” or “algorithmic” perturbation rather than pure output perturbation. A major drawback of these approaches is that they fix the model or algorithm at the time privacy is applied. As a result, if someone wants to use a different algorithm or conduct a new analysis, they must reapply the privacy mechanism to the data. This limitation reduces flexibility, since each new model or method requires a separate privacy-preserving process, potentially increasing the cumulative privacy loss and complicating data reuse.

On the other hand, input perturbation ensures that all subsequent analyses are performed only on already privatized data. This approach can provide stronger individual-level privacy guarantees and enable more flexible data-sharing arrangements, as the raw data is never accessed again by downstream users or data curators. Furthermore, once data has been privatized through input perturbation, third parties can conduct a wide range of analyses or model training without introducing additional privacy risks, thanks to the post-processing property of differential privacy.

Examples of input perturbation include methods that locally obfuscate data before sharing with any central unit. A notable example is the approach used in \citep{duchiminimax}, which applies local differential privacy to ensure that each individual's data is privatized at the source before being collected or analyzed centrally.
 
\citep{duchiminimax} established minimax bounds that characterize the optimal achievable error when operating under local privacy constraints.
In the context of convex optimization, the minimax error bound indicates that applying $\epsilon$-local differential privacy effectively reduces the usable sample size from $n$ to approximately $\epsilon^2 n / d$, where $d$ is the dimension of the objective function’s domain. As a result, achieving the same convergence rate under local differential privacy requires a much larger sample size, with an additional factor of 
$d/\epsilon^2$, where 
$d$ is the data dimension and 
$\epsilon$ is the privacy parameter. This means that statistical inference in high-dimensional settings with local privacy constraints becomes significantly more challenging, as the effective sample size is substantially reduced and much more data is needed to maintain statistical accuracy

In this work, we study private variable selection in an input perturbation framework, and characterize the privacy-power trade-off for model-X knockoffs when ``data" is privatized. In the context of differentially private knockoff procedures, \citep{PournaderiXiang} also introduces an input perturbation method that injects noise to the second-moment of the data matrix. While their simulations demonstrated that it is possible to achieve feature selection with FDR control and reasonable power under differential privacy, there remains limited understanding of the precise trade-off between the strength of the privacy constraint and the statistical power of the knockoff procedure. Additionally, a significant limitation of privatizing the second-moment matrix by noise injection is that the resulting matrix is not guaranteed to be positive semi-definite (PSD), which can pose challenges for constructing valid knockoffs and ensuring reliable inference. A non-PSD private estimate of the second-moment matrix in (penalized) linear regression leads to a non-convex optimization problem, which imposes much stricter requirements for the convergence of iterative methods such as gradient descent. Regression-based estimators like Lasso, which are commonly used in the knockoff procedure, rely on the second-moment (Gram) matrix being positive semi-definite (PSD) to ensure that the optimization problem is well-defined and convex. If the privatized moment matrix is not PSD, the resulting optimization problem becomes non-convex and may not have a well-defined or unique solution.

\subsection{Johnson-Lindenstrauss Transform (JLT)}

Beyond direct noise injection, the Johnson-Lindenstrauss Transform (JLT) offers an elegant approach to differentially private covariance estimation. By leveraging random projections, JLT obscures small changes in the input data while nearly preserving the covariance structure, thus enabling privacy protection with minimal distortion to the underlying data relationships \citep{blocki2012johnsonlindenstrauss}. The classical JLT \citep{JL84} guarantees that a dataset can be projected into a lower-dimensional space using a random Gaussian matrix while approximately maintaining pairwise distances. This provides a theoretical foundation for using JLT in second-moment matrix approximation. In particular, given data matrix $\bA\in\mbR^{n\times d}$, JLT enables the private release of a covariance matrix
\[
\bSigma^* = \frac{1}{r} \bA^\top\bR^\top\bR\bA\,,
\]
where $\bR\in\mbR^{r\times n}$ is a random projection matrix whose entries are i.i.d. samples from $\mcN(0,1)$ \citep{blocki2012johnsonlindenstrauss}. 

In contrast to noise injection, the Johnson-Lindenstrauss Transform (JLT) guarantees that the released second-moment matrix is positive semi-definite (PSD), making it well-suited for (penalized) linear regression tasks that require a well-posed, convex optimization problem and rely solely on the second moment of the data. This technique has also been applied to private Ordinary Least Squares (OLS) estimation and confidence interval construction \citep{pmlr-v70-sheffet17a}. However, there is a lack of analysis in the existing literature regarding penalized linear regression methods such as Lasso when using JL-transformed data. In the knockoff procedure, feature statistics based on penalized linear regression—such as the Lasso Coefficient Difference (LCD)—are often used to achieve high power. Therefore, a thorough analysis of Lasso with JL-transformed data is important for making valid inferences when knockoffs are privatized using the JLT.

\subsection{Our Contribution and Overview}

In this work, we introduce the use of the Johnson-Lindenstrauss Transform (JLT) to achieve differential privacy in the Model-X knockoff procedure. Unlike traditional approaches such as the Gaussian Mechanism, which directly inject noise and may yield non-PSD (positive semi-definite) estimates for the second-moment matrix—potentially resulting in ill-posed or non-convex optimization problems—JLT guarantees a PSD output. This property ensures that downstream (penalized) linear regression tasks, such as those involving Lasso, remain well-posed and computationally tractable.

Our primary focus is to systematically investigate the interplay between privacy constraints and the statistical performance of the Model-X knockoff procedure when privatized using JLT. Specifically, we aim to understand how increasing privacy requirements impact both the power and false discovery rate (FDR) of knockoff-based inference. To this end, we employ a novel feature statistic derived from a debiased version of the Lasso estimator, inspired by \citep{JM14}. Leveraging this debiased approach, we provide an asymptotic characterization of both the statistical power and FDR for the privatized Model-X knockoff.

Our theoretical analysis demystifies the role of different factors, such as the JLT’s dimension reduction ratio, signal-to-noise ratio, differential privacy parameters, sample size and features dimension, in shaping the privacy-power trade-off. In summary, our main contributions are as follows:
\begin{itemize}
\item We propose the Johnson-Lindenstrauss Transform as a principled method for privatizing the Model-X knockoff procedure, ensuring a PSD second-moment matrix and well-posed optimization for regression-based feature selection.

\item We propose a novel class of private feature statistics based on the debiased private Lasso estimator, encompassing popular methods such as Lasso Coefficient Difference (LCD) \citep{Barber_2015}.

\item We provide a rigorous power analysis of the privatized knockoff procedure, enabling practical predictions of power, and a deeper understanding of the privacy-power trade-off.

\item We demonstrate that the Johnson-Lindenstrauss Transform outperforms existing privatization methods such as the Gaussian Mechanism, which lacks guarantees for PSD outputs and can lead to ill-posed optimization problems in linear regression applications.
\end{itemize}

Our work bridges a critical gap in the literature by offering both theoretical and empirical guidance for practitioners seeking to balance privacy and statistical power in high-dimensional inference with Model-X knockoffs.

The paper is organized as follows: Section~\ref{sec:prelim} provides a concise introduction to the Model-X knockoff framework, the fundamentals of differential privacy, and the Johnson-Lindenstrauss Transform (JLT). Section~\ref{sec:method-JL} details the JLT-based privatization procedure, introduces the debiased private Lasso estimator, and outlines the private feature selection process using Model-X knockoffs. Section~\ref{sec:utility} presents a technique for characterizing the distribution of the privatized Lasso estimator, along with a thorough analysis of the power and false discovery rate (FDR) of the proposed method. Section~\ref{sec:baseline} discusses the baseline approach, specifically the application of the Gaussian mechanism to the private knockoff procedure. Section~\ref{sec:simulation} showcases the simulation results that illustrate the empirical performance of these methods. Finally, Section~\ref{sec:real-data} showcases a real-world application, illustrating the practical relevance and effectiveness of our proposed approach.

\section{Preliminary}\label{sec:prelim}
\paragraph{Notation}
For matrix norms, we denote by $|\cdot|_\infty$ the entry-wise maximum absolute value: for $\bA\in\mbR^{m\times n}$, $|\bA|_\infty = \max_{i\in[m],j\in[n]}|\bA_{ij}|$. Further, $\|\bA\|_2$ and $\|\bA\|_F$ respectively denote the operator norm and the Frobenius norm of $\bA$. For vector norms, we denote by $\|\cdot\|_1$, $\|\cdot\|_2$,$\|\cdot\|_\infty$ the $\ell_1$-norm, $\ell_2$-norm, and the max norm, respectively. Throughout, we use boldface letters to indicate matrices, while vectors and scalars are represented by non-bold letters.
For asymptotic notations, we use $O_p(\cdot)$, $o_p(\cdot)$ for order in probability. For example, $X_n=o_p(a_n)$ means that the probability of $|X_n/a_n| > \varepsilon$ goes to 0 for any positive $\varepsilon$.
  
\subsection{FDR control and Model-X Knockoffs}

Consider a linear model
\begin{equation}\label{eq:linear-model}
    y= \bX\theta_0+\xi\,,
\end{equation}
where we observe the design matrix $\bX\in\mbR^{n\times p}$ and the response $ y\in\mbR^{n\times 1}$. In addition, $\xi\in\mbR^{n\times 1}$ is a vector of i.i.d., zero-mean random variables. The model parameter $\theta_0=(\theta_{0,1},\dotsc,\theta_{0,p})^\top$ is unknown. 

To simulate the scenario of feature selection, we assume that only a small number of $\theta_{0,j}$ are non-zero. We denote the sparsity set as $S_0 = \{j\in[p]:\theta_{0,j}\neq 0\}$, and the sparsity level as $s_0 = |S_0|$. In statistical literature, a feature selection procedure is treated as a multiple hypothesis testing problem, where the null hypothesis is 
\[H_{0,j}:\theta_{0,j}=0,\phantom{sss}j\in[p]\,.\]
 A zero feature/variable is also called a null feature/variable, which will be used interchangeably throughout. Given $(\bX,y)$,  a feature selection procedure outputs $\hS_0\subseteq[p]$ as an estimate for $S_0$. The performance of the procedure can be evaluated by \emph{False Discovery Proportion (FDP)} and \emph{Power}, defined as the following:
\begin{equation}\label{eq:Power-FDP}
    \FDP := \frac{|S_0^c\cap\hS_0|}{|\hS_0|\vee 1}, \phantom{sssss}\Power := \frac{|S_0\cap\hS_0|}{|S_0|}\,.
\end{equation}
An ideal feature selection procedure should maximize power by identifying as many non-zero features from $S_0$ as possible, while maintaining a low FDP by minimizing the selection of zero features. False Discovery Rate (FDR) is the expected value of FDP over the randomness of $\bX$ and $y$,
\begin{equation}\label{eq:FDR}
    \FDR := \Expected(\FDP)\,.
\end{equation}
The study of FDR control is to design feature selection procedures such that $\FDR\leq q$, for a user-specified tolerance level $q\in(0,1)$.

The Lasso estimator, is given by the solution of the following $\ell_1$-regularized loss 
\begin{equation}\label{eq:lasso-obj}
    \hth = \arg\min_{\theta\in\mbR^p}\frac{1}{2n}\|\bX \theta -y\|^2_2+\lambda\|\theta\|_1\,,
\end{equation}
and has been arguably the most common estimator in feature selection procedures for two decades because it is relatively easy to solve and tends to give a sparse solution. An intuitive selection procedure based on Lasso is given by
\[
\hS_0 := \{j\in[p]: \hth_j\neq 0\}\,.
\]
$|\hth_j|$ can be considered as the feature statistics for this selection procedure. 
Model selection (support recovery) of the Lasso estimator has been studied in the literature, typically by bounding $\mathbb{P}(\hS_0\neq S_0)$; see e.g \citep{zhao2006model,wainwright2009sharp,BulmannvandeGeer11}. However, correct support recovery depends, in a crucial way, on the so-called irrepresentability condition of \citep{zhao2006model}. 
 
\citep{Barber_2015} introduced the knockoff filter, a variable selection method for controlling the false discovery rate (FDR) in homoscedastic linear models. This method achieves exact FDR control in finite samples without making specific assumptions about the covariates or the magnitudes of the unknown regression coefficients. However, it is limited to the regime where $n\ge p$. This approach was extended by \citep{candes2017panning}, where Model-X knockoffs provide valid inference even when the conditional distribution of the response is arbitrary and completely unknown, and regardless of the number of covariates. As the name suggests, the Model-X knockoff framework requires the covariates to be random (i.e., independent and identically distributed rows) with a distribution that is known or can be approximated sufficiently well.

The core idea of the knockoff framework is to generate an auxiliary null variable, called a knockoff variable, for each original variable (covariate). These knockoff variables are designed to mimic the correlation structure of the original variables, ensuring that if an original variable is truly null, its feature statistic will be statistically indistinguishable from that of its knockoff copy. Leveraging this property, we can estimate the false discovery proportion and control the FDR using a data-driven threshold.

The present work focuses on the Model-X knockoffs regime introduced by \citep{candes2017panning}. We assume that the entries of the design matrix $\bX$ are sampled independently from a distribution $\mathcal{D}$. A corresponding knockoff matrix $\bXT$ is then generated as an independent copy of $\bX$. However, generating valid knockoff variables is not always straightforward; in fact, it can be quite challenging when the underlying distribution is not well-known. After constructing the knockoff matrix, we combine it with the original design matrix to form the augmented matrix $[\bX\ \bXT]$. 

To identify the non-zero features, we define a statistic $W([\bX\ \bXT],y)=(W_1,\dotsc,W_p)$ for each feature. As discussed in\citep{candes2017panning}, for the null features, the signs of all $W_j$'s behave like independent coin tosses, and large values of $W_j$ provide stronger evidence against the null hypothesis $H_{0,j}$. For example, Lasso Coefficient Difference (LCD) is a common choice of feature statistics. Specifically, we first compute the Lasso solution with the augmented matrix:
\begin{equation}\label{eq:lasso-knockoff}
    \hth(\lambda) = \arg\min_{\theta\in\mbR^{2p}}\frac{1}{2n}\|[\bX\ \bXT]\theta-y\|_2^2+\lambda\|\theta\|_1\,.
\end{equation}
Define LCD statistics to be
\[
W_j = |\hth_j|-|\hth_{j+p}|, \phantom{ss} j\in[p]\,.
\]
If $\theta_{0,j}=0$, then $\hth_j$ and $\hth_{j+p}$ are statistically indistinguishable, implying that $W_j$ has a symmetric distribution around zero. Conversely, if $\theta_{0,j}\neq 0$, the magnitude of$|\theta_{0,j}|$ typically stands out relative to null features, causing $W_j$ to be a large positive value and providing evidence against the null hypothesis $\theta_{0,j}=0$. Besides LCD, any statistics exhibiting this coin-toss behavior can be applied. In our work, we utilize feature statistics based on the debiased Lasso method proposed by \citep{JM14}, detailed in Section~\ref{sec:method-JL}.

Lastly, given a target FDR level $q\in(0,1)$, we compute a data-dependent threshold,
\[
\wht = \min\left\{t\in\mathcal W: \widehat{\FDP}(t) \leq q\right\},\quad \widehat{\FDP}(t) = \frac{1+|\{j\in[p]:W_j\leq -t\}|}{1\vee|\{j\in[p]:W_j\geq t\}|}\,,
\]
where $\wht=+\infty$ when the above set is empty and $\mathcal{W}=\{|W_j|:j\in[p]\}\setminus{\{0\}}$. It has been shown by \citep{candes2017panning} that the following selection satisfies the FDR control guarantee, satisfying $\FDR\leq q$:
\[
\hS_0 = \left\{j\in[p]:W_j\geq \wht\,\right\}\,,
\]

\subsection{Differential Privacy and Johnson-Lindenstrauss Transform (JLT)}\label{sec:prelim-jlt}
Differential privacy is a mathematically rigorous framework that enables the release of statistical information about datasets while protecting the privacy of individual data subjects. Suppose a dataset $A_1\in \mathcal{X}$ consists of data entries: $A_1 = (a_1,a_2,\dotsc,a_m)$. We call $A_1, A_2$  \textit{neighboring datasets} if $A_1$ and $A_2$ only differ by one data entry. For example, $A_2 = (a_1,\dotsc,a_{k-1},a'_k,a_{k+1},\dotsc,a_m)$. Suppose a randomized algorithm takes input from $\mathcal{X}$. The goal of differential privacy is to ensure that the output of the algorithm is almost independent of any single entry in the data. This is formally captured by the definition of differential privacy, given below.
\begin{defn}
    (Differential privacy) For $\epsilon>0,\delta\in(0,1)$, a randomized algorithm $\mathcal{M}:\mathcal{X}\to\mathcal{F}$ is $(\epsilon,\delta)$-differentially private if given any $F\subseteq\mathcal{F} $, $A_1,A_2\in\mathcal{X}$ such that $A_1,A_2$ differ by at most one data entry,
    \[
    \Prob(\mathcal{M}(A_1)\in F) \leq e^\epsilon\Prob(\mathcal{M}(A_2)\in F)+\delta\,.
    \]
\end{defn}

Differential privacy has several important properties, including post-processing immunity—meaning that any function applied to the output of a differentially private algorithm cannot weaken its privacy guarantees—and composition, which allows privacy loss to be tracked and controlled when multiple differentially private mechanisms are applied to the same data.

In practice, the main challenge is not simply making an algorithm private, but maintaining utility under privacy constraints. For example, an algorithm that outputs nothing is perfectly private but completely useless. Classic techniques such as the Laplace mechanism and the Gaussian mechanism achieve differential privacy by injecting carefully calibrated noise into the output of a function, balancing privacy with utility guarantees. The amount of noise added is determined by the sensitivity of the function and the desired level of privacy, ensuring that the released data remains useful while limiting what can be inferred about any individual in the dataset.

Before proceeding further, we formally state the post-processing immunity of differential privacy. We refer to \citep[Proposition 2.1]{DworkRoth} for its proof.
\begin{lem}\label{lem:DP-post}
    (Post-processing) If an algorithm $\mathcal{M}:\mathcal{X}\to\mathcal{F}$ is $(\epsilon,\delta)$-differentially private, for arbitrary random function $f:\mathcal{F}\to\mathcal{F}'$, $\mathcal{M}\circ f:\mathcal{X}\to \mathcal{F}'$ is $(\epsilon,\delta)$-differentially private.
\end{lem}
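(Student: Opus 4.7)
The plan is to establish the post-processing property by first handling the case of a deterministic post-processing function $f$ via a direct preimage argument, and then extending to random $f$ by conditioning on its internal randomness. The key observation is that the differential privacy definition is phrased in terms of probabilities of output events, and post-processing can only coarsen the information in the output, never distinguish between outputs more finely than $\mathcal{M}$ already does.

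First, I would treat the deterministic case. Fix any measurable event $F' \subseteq \mathcal{F}'$ and define its preimage $F := f^{-1}(F') \subseteq \mathcal{F}$. Since $\{f(\mathcal{M}(A)) \in F'\} = \{\mathcal{M}(A) \in F\}$, the $(\epsilon,\delta)$-privacy of $\mathcal{M}$ applied to the measurable set $F$ yields
\begin{equation*}
\Prob\bigl(f(\mathcal{M}(A_1)) \in F'\bigr) = \Prob\bigl(\mathcal{M}(A_1) \in F\bigr) \le e^\epsilon \Prob\bigl(\mathcal{M}(A_2) \in F\bigr) + \delta = e^\epsilon \Prob\bigl(f(\mathcal{M}(A_2)) \in F'\bigr) + \delta.
\end{equation*}
This establishes $(\epsilon,\delta)$-privacy of the composition in the deterministic case.

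For random $f$, I would represent it as $f(y) = g(y, Z)$, where $g$ is a deterministic jointly measurable function and $Z$ is a random variable independent of the internal randomness of $\mathcal{M}$. Conditioning on $Z = z$, the map $y \mapsto g(y, z)$ is deterministic, so the deterministic argument above yields the privacy inequality for each fixed $z$. Taking expectation over $Z$ and using independence preserves the inequality, since the right-hand side is affine in the conditional probabilities. The only potential obstacle is technical rather than conceptual: verifying joint measurability of $g$ so that the preimages in the deterministic step are measurable and Fubini-type interchange is justified. This is standard in the differential privacy literature and poses no real difficulty under the usual measurability assumptions on $f$.
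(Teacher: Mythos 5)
Your proof is correct and is the standard argument: the paper does not prove this lemma itself but defers to \citep[Proposition 2.1]{DworkRoth}, whose proof proceeds exactly as yours does (preimage argument for deterministic $f$, then handling randomized $f$ by averaging over its internal randomness, which is equivalent to viewing it as a convex combination of deterministic maps). The independence of the auxiliary randomness $Z$ from the data and from $\mathcal{M}$'s coins, which you note, is the only point that needs to be stated, and you have it.
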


Since differential privacy is post-processing safe, privatization can be applied either to the input or the output of an algorithm. In regression-type applications, algorithms often depend only on the second moment of the data matrix. Existing pre-processing techniques for knockoff procedures, such as those described by \citep{PournaderiXiang}, inject a noisy matrix directly into the second moment matrix. This approach is analogous to the Gaussian mechanism used for private PCA \citep{analyzegauss}. However, a key shortcoming of direct noise injection is that it does not guarantee the resulting matrix to be positive semi-definite (PSD). As discussed in \citep{sheffet2015private} and \citep{XKI11}, and as further demonstrated in our simulation studies, a non-PSD estimate for the second moment matrix can be detrimental in (penalized) linear regression applications. This is because many regression methods rely on the convexity and invertibility provided by a PSD matrix, and a non-PSD matrix can lead to ill-posed or non-convex optimization problems, undermining both the validity and performance of the analysis.

The Johnson-Lindenstrauss Transform (JLT) provides a powerful and elegant approach for differentially private data release, particularly for second-moment (covariance) matrix estimation. Unlike direct noise injection, JLT guarantees that the privatized second-moment matrix remains positive semi-definite (PSD), which is essential for downstream applications such as regression.

Given a data matrix $\bA\in\mbR^{n\times d}$, the goal is to privately release the second-moment matrix 
$\bA^\top \bA$. Instead of adding noise directly to $\bA^\top \bA$, JLT achieves privacy by left-multiplying 
$\bA$ with a random projection matrix $\bR\in\mbR^{r \times n}$, where the entries of $\bR$ 
are i.i.d. Gaussian random variables with mean zero and variance $\frac{1}{r}$. The transformed matrix 
$(\bR\bA)^\top \bR\bA$ serves as a randomized, PSD estimate of the original second moment, and closely approximates $\bA^\top\bA$ if 
$\bR^\top\bR\approx \bI_n$ due to the Restricted Isometry Property (RIP); see e.g., \citep{candestao05}. Properly scaled Gaussian matrices satisfy RIP with high probability \citep{vershynin11}. For differential privacy, \citep{blocki2012johnsonlindenstrauss} and \citep{sheffet2015private} show that if all singular values of $\bA$ are above a certain threshold 
$w>0$, then the projection 
$(\bR\bA)^\top \bR\bA$ is $(\epsilon,\delta)$-differentially private. If the smallest singular value is less than $w$, the data matrix can be augmented by appending $w\bI_d$ to ensure the privacy guarantee holds, where
\begin{align}\label{eq:w-2}
w^2=\frac{4B^2}{\epsilon}\left(\sqrt{2r\log(4/\delta)}+\log(4/\delta)\right)\,.
\end{align}

The formal privacy guarantee is as stated below. We refer to \citep[Theorem A.1]{pmlr-v70-sheffet17a} for the proof.
\begin{thm}
    Algorithm \ref{algo:JLT} is $(\epsilon,\delta)$-differentially private.
\end{thm}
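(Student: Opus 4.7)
The plan is to establish differential privacy by analyzing the conditional distribution of the output $(\bR\bA)^\top\bR\bA$ given $\bA$, and bounding the log-likelihood ratio between the output distributions for neighboring datasets. The central observation is that, conditional on $\bA$ (possibly after padding with $w\bI_d$), each row of $\bR\bA$ is an i.i.d.\ multivariate Gaussian $\mathcal{N}(0, \bA^\top\bA / r)$, so releasing $(\bR\bA)^\top\bR\bA$ is equivalent to releasing a Wishart-type statistic whose scale matrix is $\bA^\top\bA/r$. Thus privacy reduces to comparing two multivariate Gaussian distributions with covariances $\bA_1^\top\bA_1 / r$ and $\bA_2^\top\bA_2 / r$ for neighboring $\bA_1, \bA_2$.

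Next, I would quantify the sensitivity. Since $\bA_1$ and $\bA_2$ differ in a single row, the perturbation $\bA_1^\top\bA_1 - \bA_2^\top\bA_2 = a_1 a_1^\top - a_2 a_2^\top$ has rank at most two, and the per-row bound $\|a_i\|_2 \le B$ controls the spectral norm of this perturbation. Padding with $w\bI_d$ when necessary guarantees that $\bA_i^\top\bA_i \succeq w^2 \bI_d$, so both covariance matrices are well-conditioned and their inverses have bounded operator norm. The key algebraic step is to write the log-density ratio of the two Gaussians as
\[
\tfrac{1}{2}\log\det\!\bigl(\bA_2^\top\bA_2 (\bA_1^\top\bA_1)^{-1}\bigr) \;+\; \tfrac{1}{2}\,\mathrm{tr}\!\bigl(\bigl[(\bA_2^\top\bA_2)^{-1} - (\bA_1^\top\bA_1)^{-1}\bigr]\, M\bigr),
\]
where $M=(\bR\bA)^\top \bR\bA$. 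Using the Sherman--Morrison identity on the rank-two update, both the determinant term and the coefficient matrix in the trace can be bounded in terms of $B^2/w^2$.

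The third step is the concentration argument used to control the stochastic piece. Under the output distribution for $\bA_1$, the random trace term is a weighted sum of Gaussian quadratic forms whose mean and variance are controlled by $r$ and by $B^2/w^2$. A standard sub-exponential (chi-squared) tail bound shows that with probability at least $1-\delta$ the stochastic term is at most $O(B^2 w^{-2} \sqrt{r\log(1/\delta)} + B^2 w^{-2}\log(1/\delta))$. Choosing $w^2$ so that the resulting deterministic bound on the log-ratio is at most $\epsilon$ yields precisely the threshold $w^2 = \tfrac{4B^2}{\epsilon}\bigl(\sqrt{2r\log(4/\delta)} + \log(4/\delta)\bigr)$ displayed in~\eqref{eq:w-2}.

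Finally, I would combine these estimates into the standard $(\epsilon,\delta)$-privacy statement: on the high-probability event the log-density ratio is at most $\epsilon$, and the complement contributes the additive $\delta$ slack. Because $(\bR\bA)^\top \bR\bA$ is obtained as a (deterministic) post-processing of $\bR\bA$, Lemma~\ref{lem:DP-post} transfers the privacy guarantee to the released Gram matrix. The main obstacle I anticipate is the concentration step: getting the constants in the chi-squared tail bound to line up with the exact form of $w^2$ in~\eqref{eq:w-2} requires a careful two-sided argument that simultaneously handles the determinant term and the quadratic-form term, rather than bounding each crudely in isolation.
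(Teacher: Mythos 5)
Your outline is essentially the argument behind the result the paper invokes by citation rather than proving: the theorem is deferred to \citep[Theorem A.1]{pmlr-v70-sheffet17a}, and that proof is exactly the Gaussian likelihood-ratio analysis you sketch (rows of $\bR\bA'$ i.i.d.\ $\mcN(0,\bA'^\top\bA'/r)$, a rank-two perturbation of the Gram matrix for neighboring inputs, the lower bound $\bA'^\top\bA'\succeq w^2\bI_d$ from the appended block, concentration of the quadratic form, and the choice of $w$ in \eqref{eq:w-2}). Two cautions. First, your displayed log-ratio is missing the factor $r/2$ coming from the product over the $r$ i.i.d.\ rows; with that factor restored, the determinant term and the \emph{mean} of the trace term are each of order $rB^2/w^2$, which is far too large for the stated $w$, and the argument only closes because these leading-order pieces cancel, leaving a KL-type term of order $r(B^2/w^2)^2$ plus a fluctuation of order $(B^2/w^2)\bigl(\sqrt{r\log(1/\delta)}+\log(1/\delta)\bigr)$. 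So the ``careful two-sided argument'' you flag at the end is the crux, not a refinement; bounding the determinant and trace terms in isolation, as the sketch's concentration paragraph reads, would fail. Second, your final post-processing step runs backwards: Algorithm~\ref{algo:JLT} releases $\bR\bA'$ itself, so that is the object whose privacy must be shown; your likelihood-ratio computation in fact already does this (the density of $\bR\bA'$ depends on the output only through its Gram matrix), and Lemma~\ref{lem:DP-post} is only needed to pass from $\bR\bA'$ to the Gram matrix, not the other way around.
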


\begin{algorithm}\label{algo:JLT}
    \caption{Johnson-Lindenstrauss Transform Privatization \citep{pmlr-v70-sheffet17a}}
    \textbf{Input}: data matrix $\bA\in \mathbb{R}^{n\times d}$ and upper-bound $B>0$ on the $\ell_2$-norm of every row of $\bA$; privacy parameters $\epsilon>0,\ \delta\in(0,1/e)$; number of rows in the projection matrix, $r$.
    \begin{enumerate}[label=\arabic*.]
        \item Compute $w^2=\frac{4B^2}{\epsilon}\left(\sqrt{2r\log(4/\delta)}+\log(4/\delta)\right).$
        \item Append a scaled identity matrix to $\bA$, i.e. $\bA' := \begin{bmatrix}
            \bA \\
            w\bI_d
        \end{bmatrix}$ 
        \item Generate random matrix $\bR\in\mbR^{r\times (n+d)}$ by sampling entries from $\mcN(0,\frac{1}{r})$.
        \item \textbf{Return} $\bR\bA'$.
    \end{enumerate}
\end{algorithm}

\section{Methodology}\label{sec:method-JL}
\subsection{JL Privatization}
Consider the linear model as in (\ref{eq:linear-model}), where we observe $\bX \in \mathbb{R}^{n\times p}$, $y \in \mbR^n$, and our goal is to design an $(\epsilon, \delta)$-differentially private feature selection procedure that releases an estimate $\hS_0$ of the model support, $S_0$. The target FDR level is $q\in(0,1)$. Suppose that the entries of $\bX$ are sampled independently from a known, bounded distribution $\mathcal{D}$, and generate the knockoff matrix $\bXT$ by sampling from $\mathcal{D}$ independently. We construct an augmented matrix $\bA\in\mbR^{n\times (2p+1)}$ by concatenating $\bX$, $\bXT,$ and $y$:
\[
\bA := [\bX\ \bXT\ y]\,.
\]
Choose $r$ as the number of rows for the Gaussian matrix $\bR$, which will be further addressed in section \ref{sec:utility}. Assume the $\ell_2$ norm for each row of $\bA$ is bounded by $B$. Then perform JLT as in Algorithm (\ref{algo:JLT}) with the input $\bA,B,\epsilon,\delta$, (and so $d=2p+1$ in this case) which returns
\[
\bA^* := \bR\begin{bmatrix}
    \bA\\
    w\bI_{2p+1}
\end{bmatrix}\,, 
\]
where $\bR\in\mbR^{r\times(n+2p+1)}$ consists of entries sampled from $\mcN(0,1/r)$, and $w$ is defined as in Algorithm (\ref{algo:JLT}). In this work we use asterisk `$*$' to indicate private releases.

By splitting $\bA^*\in\mbR^{r\times(2p+1)}$, we obtain privatized data $\bX^*,\bXT^*$, $y^*$:
\[
\bA^* = [\phantom{.}\underbrace{[\bX^*\ \bXT^*]}_{\in\mbR^{r\times 2p}}\phantom{ss} \underbrace{y^*}_{\in\mbR^{r\times 1}}]\,.
\]
To see the correspondence between non-privates and privates, consider a split of $\bR$ according to the dimensions of $[\bX\ \bXT]$ and $y$:
\begin{align*}
    \bR\begin{bmatrix}
    \bA\\
    w\bI_{2p+1}
\end{bmatrix} 
&=[\underbrace{\bR_1}_{\mbR^{r\times n}}\ \underbrace{\bR_2}_{\mbR^{r\times 2p}}\ \underbrace{\bR_3}_{\mbR^{r\times 1}}]
\begin{bmatrix}
    [\bX\ \bXT]\phantom{sss}y\\
    \phantom{s}w\bI_{2p+1}
\end{bmatrix}\\
&=[\underbrace{\bR_1[\bX\ \bXT]+w\bR_2}_{\in\mbR^{r\times 2p}}\phantom{ss}
    \underbrace{\bR_1y+w\bR_3}_{\in\mbR^{r\times 1}}] \,.
\end{align*}
In other words, we privatize the data matrices $[\bX\ \bXT]$ and $y$ by left-multiplying them with the same random matrix $\bR_1\in\mbR^{r\times n}$, and then adding independent Gaussian noise $w\bR_2$ and $w\bR_3$, respectively.

\subsection{Feature Statistics: Debiased Private Lasso}\label{sec:feature-statistics}
Inspired by \citep{JM14}, our choice of feature statistics are based on a debiased version of the Lasso estimator. To obtain the debiased estimator, we first solve a standard Lasso optimization using the private data $[\bX^*\ \bXT^*],y^*$. Specifically, for some penalty parameter $\lambda>0$, we compute:
\begin{equation}\label{eq:lasso-JL}
    \hth^*(\lambda) := \arg\min_{\theta\in\mbR^{2p}}\frac{1}{2n}\left\|[\bX^*\ \bXT^*]\theta - y^*\right\|_2^2+\lambda\|\theta\|_1\,.
\end{equation}
Note that by Lemma~\ref{lem:DP-post}, every subsequent computation based on $[\bX^*\ \bXT^*],y^*$ is $(\epsilon,\delta)$-differentially private. The debiased private Lasso estimator is then defined as:
\begin{equation}\label{eq:lasso-debias}
    \hth^u := \hth^*(\lambda) +\frac{1}{n}[\bX^*\ \bXT^*]^\top(y^*-[\bX^*\ \bXT^*]\hth^*(\lambda))+\frac{w^2}{n }\hth^*(\lambda) \,.
\end{equation}
Using this debiased estimator, we define feature statistics based on a general family of functions. Namely, for some $f:\mbR\to\mbR_{\ge 0}$,
\begin{equation}\label{eq:W-feat-stat}
    W_j = f(\hth^u_j)-f(\hth^u_{j+p})\,,
\end{equation} 

Lemma~\ref{lem:iid-coin-toss} below shows that the defined statistic $W_j = f(\hth^u_j)-f(\hth^u_{j+p})$ exhibits the i.i.d. coin-toss symmetry property, thus controlling the FDR, which is formally stated in Theorem~\ref{thm:FDR}. The proof of Lemma~\ref{lem:iid-coin-toss} is analogous to that of \citep[Lemma 3.3]{candes2017panning} and \citep[Lemma 1]{Barber_2015}, but must be adapted to account for the random matrix $\bR$ and the appended matrix $w\bI_{2p+1}$. We defer the proof to Appendix~\ref{proof:lem:iid-coin-toss}.
\begin{lem}\label{lem:iid-coin-toss} 
    Let $W$ be defined as in~\eqref{eq:W-feat-stat}, and let $c=(c_1,\dotsc,c_p)$ be an independent random vector where $c_j=1$ for all $j\in S_0$ and $c_j\stackrel{i.i.d.}{\sim}Unif(\{-1,1\})$ for all $j \in S_0^c$. Then
    \[
    W\stackrel{d}{=}(c_1W_1,\dotsc,c_pW_p)\,.
    \]
\end{lem}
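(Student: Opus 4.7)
The plan is to adapt the classical Model-X knockoff symmetry argument to the JLT-privatized setting by establishing two ingredients: (i) a swap-exchangeability for the privatized augmented data $([\bX^*\ \bXT^*], y^*)$ under any $\swap(S)$ with $S \subseteq S_0^c$, and (ii) the fact that swapping columns $j$ and $j+p$ of the design in \eqref{eq:lasso-JL} swaps entries $j$ and $j+p$ of $\hth^u$, hence flips the sign of $W_j$. Given (i) and (ii), the conclusion follows by averaging over a uniformly random $S \subseteq S_0^c$, exactly as in the proofs of Lemma~3.3 of \citep{candes2017panning} and Lemma~1 of \citep{Barber_2015}.

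For (i), I would start from the Model-X exchangeability of $([\bX\ \bXT], y)$: since columns of $\bX$ and $\bXT$ are i.i.d.\ samples from $\mathcal{D}$ and $y$ depends only on the $\bX$-columns indexed by $S_0$, for every $S \subseteq S_0^c$ one has $([\bX\ \bXT]_{\swap(S)}, y) \stackrel{d}{=} ([\bX\ \bXT], y)$. Applying $\swap(S)$ to the privatized design yields
\[
[\bX^*\ \bXT^*]_{\swap(S)} = \bR_1[\bX\ \bXT]_{\swap(S)} + w\bR_2 P_S,
\]
where $P_S$ is the column-permutation matrix representing $\swap(S)$. Because the columns of $\bR_2$ are i.i.d.\ $\mathcal{N}(0,1/r)$, we have $\bR_2 P_S \stackrel{d}{=} \bR_2$, and because $(\bR_1, \bR_2, \bR_3)$ is independent of $(\bX, \bXT, y)$, combining with Model-X exchangeability gives $([\bX^*\ \bXT^*]_{\swap(S)}, y^*) \stackrel{d}{=} ([\bX^*\ \bXT^*], y^*)$.

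For (ii), I would observe that the Lasso objective in \eqref{eq:lasso-JL} is invariant under simultaneously swapping columns $j,\,j+p$ of the design and entries $j,\,j+p$ of $\theta$; hence $\hth^*$ is equivariant, i.e.\ $\hth^*\!\bigl([\bX^*\ \bXT^*]_{\swap(S)}, y^*\bigr) = P_S\,\hth^*\!\bigl([\bX^*\ \bXT^*], y^*\bigr)$. The debiasing step in \eqref{eq:lasso-debias} preserves this equivariance: the residual term $[\bX^*\ \bXT^*]^\top(y^* - [\bX^*\ \bXT^*]\hth^*)$ transforms by $P_S$ (the fitted vector is unchanged while the outer transpose has its $j$-th and $(j+p)$-th rows swapped), and the term $\frac{w^2}{n}\hth^*$ transforms by $P_S$ directly. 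Therefore $\hth^u$ is equivariant, which swaps $f(\hth^u_j)$ with $f(\hth^u_{j+p})$ and flips $W_j \mapsto -W_j$ for $j \in S$ while leaving $W_k$ unchanged for $k \notin S$.

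Combining (i) and (ii), for every $S \subseteq S_0^c$ one obtains $(W_1, \ldots, W_p) \stackrel{d}{=} (\varepsilon_1^S W_1, \ldots, \varepsilon_p^S W_p)$ with $\varepsilon_j^S = -1$ for $j \in S$ and $+1$ otherwise; averaging over a uniformly random $S \subseteq S_0^c$ yields the claimed coin-toss representation, noting that $c_j$ is forced to $+1$ for $j \in S_0$ because those indices are never swapped. The main obstacle is step (i): the augmentation block $w\bI_{2p+1}$ is \emph{not} individually invariant under a column swap, so one cannot simply permute columns of $\bA'$ before multiplying by $\bR$. The resolution is to keep the swap at the output level $\bA^*$, transferring the permutation onto the Gaussian block $\bR_2$, where the column-exchangeability of i.i.d.\ Gaussians absorbs it. This is the only place where the JLT structure genuinely enters the proof and departs from the non-private knockoff argument.
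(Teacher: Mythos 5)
Your proposal is correct and follows essentially the same route as the paper's proof: Model-X swap-exchangeability of $([\bX\ \bXT],y)$ for $S\subseteq S_0^c$, transferring the column permutation onto the Gaussian block $\bR_2$ (whose i.i.d.\ entries absorb it) to handle the non-swap-invariant $w\bI_{2p+1}$ augmentation, and equivariance of the private Lasso and its debiased version so that the swap flips the sign of $W_j$, followed by the standard averaging over a random $S\subseteq S_0^c$. The only difference is organizational—you place the $\bR_2$-exchangeability in a "privatized data are exchangeable" step and keep the estimator equivariance deterministic, whereas the paper folds the $\bR_2$ argument into a distributional equivariance of $\hth^*$—but the substance is identical.
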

Note that the i.i.d. coin-toss symmetry property of $W_j$ holds for general choice of $f:\mbR\to\mbR_{\ge 0}$, which eventually controls the FDR as later shown in Section~\ref{sec:FDR-control}. However, for the knockoff procedure to have strong power in detecting non-null variables, we need extra assumptions on $f$. For example, $f(x)$ goes to infinity when $|x|$ goes to infinity. We defer the rigorous assumptions of $f$ needed for power analysis to Section~\ref{sec:Power-FDR-analysis}. A straightforward example of $f$ is $f(\hth^u_j) = |\hth^u_j|$, hence the statistic $W_j$ is the absolute-value difference of the debiased coefficients: $W_j = |\hth^u_j|-|\hth^u_{j+p}|$. Indeed, the commonly-used Lasso Coefficient Difference (LCD) statistic falls within this general framework, as clarified by Lemma~\ref{lem:function-of-debiased} below.
\begin{lem}\label{lem:function-of-debiased}
    Let $\hth^*$ and $\hth^u$ respectively denote the private Lasso estimator and the debiased estimator given by  (\ref{eq:lasso-JL}) and (\ref{eq:lasso-debias}). Then for all $j\in[2p]$,
    \begin{equation}\label{eq:function-of-debiased}
        \hth^*_j(\lambda) =\frac{1}{1+ \frac{w^2}{  n}}\cdot \sign(\hth^u_j)\cdot\left(|\hth^u_j|-\lambda\right)_+\,,
    \end{equation}
    where $(a)_+ = \max\{a,0\}$ for $a\in\mbR$.
\end{lem}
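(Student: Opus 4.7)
The plan is to derive the identity from the KKT optimality conditions of the private Lasso problem and then invert the resulting relation componentwise via a sign/case analysis. Write $\bZ := [\bX^*\ \bXT^*]\in\mbR^{r\times 2p}$ and let $a := 1+\frac{w^2}{n}$, so that the target identity reads $\hth^*_j = \tfrac{1}{a}\,\sign(\hth^u_j)\,(|\hth^u_j|-\lambda)_+$.

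First I would write down the subgradient optimality condition for~\eqref{eq:lasso-JL}: there exists $g\in\mbR^{2p}$ with $g_j=\sign(\hth^*_j)$ when $\hth^*_j\neq 0$ and $g_j\in[-1,1]$ when $\hth^*_j=0$, such that
\[
\frac{1}{n}\bZ^\top\bigl(\bZ\hth^*-y^*\bigr)+\lambda g = 0,
\]
equivalently, $\frac{1}{n}\bZ^\top(y^*-\bZ\hth^*)=\lambda g$. Plugging this identity into the definition~\eqref{eq:lasso-debias} of the debiased estimator collapses the residual term cleanly, yielding the key relation
\[
\hth^u = \Bigl(1+\tfrac{w^2}{n}\Bigr)\hth^* + \lambda g = a\,\hth^* + \lambda g.
\]
This is the linear inversion I will exploit in the remaining step.

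Next I would split into three cases according to the sign of $\hth^*_j$. If $\hth^*_j>0$, then $g_j=1$, so $\hth^u_j = a\hth^*_j+\lambda>\lambda>0$; hence $\sign(\hth^u_j)=1$ and $|\hth^u_j|-\lambda = a\hth^*_j>0$, which rearranges to the claim. The case $\hth^*_j<0$ is symmetric: $g_j=-1$ gives $\hth^u_j=a\hth^*_j-\lambda<-\lambda<0$, and the identity drops out after taking absolute values. For $\hth^*_j=0$, we have $\hth^u_j=\lambda g_j$ with $|g_j|\le 1$, so $|\hth^u_j|\le \lambda$ and $(|\hth^u_j|-\lambda)_+=0$, matching $\hth^*_j=0$ on the right-hand side.

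The only subtle point, and what I view as the main obstacle, is handling the case $\hth^*_j=0$ carefully: the subgradient $g_j$ is not uniquely determined, so one must verify that the bound $|g_j|\le 1$ is sufficient to place $\hth^u_j$ inside $[-\lambda,\lambda]$ and that both sides of~\eqref{eq:function-of-debiased} vanish regardless of which element of the subdifferential is selected. Once this is checked, the three cases together cover all $j\in[2p]$ and yield the stated identity. Note that the argument uses only the form of the debiasing correction and the KKT conditions, and in particular does not depend on any property specific to the JLT privatization beyond the fact that $\bZ$ and $y^*$ are fixed once the random projection is drawn.
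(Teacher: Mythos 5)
Your proposal is correct and follows essentially the same route as the paper's proof: both use the KKT/subgradient condition of the private Lasso to collapse the debiasing correction into $\hth^u = (1+\tfrac{w^2}{n})\hth^* + \lambda g$ (the paper phrases this as $\hth^* = \eta(\hth^u)$ for an elastic-net-type proximal map) and then invert componentwise by a sign case analysis, including the $\hth^*_j=0$ case exactly as you handle it. No gaps.
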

The proof of Lemma~\ref{lem:function-of-debiased} is provided in Appendix~\ref{pf:function-of-debiased}. Notice that when we choose $f(\hth^u_j)$ according to \eqref{eq:function-of-debiased} in Lemma~\ref{lem:function-of-debiased}, we recover the LCD statistic: 
\[
W_j = |\hth^*_j|-|\hth^*_{j+p}|\,.
\]

\subsection{FDR Control}\label{sec:FDR-control}
We next choose a data-dependent threshold for FDR budget $q>0$,
\[
\wht = \min\left\{t\in\mathcal{W}: \frac{1+|\{j\in[p]:W_j\leq-t\}|}{1\vee|\{j\in[p]:W_j\geq t\}|}\leq q\right\}\,,
\]
where $\wht = +\infty$ when the above set is empty and $\mathcal{W}=\{|W_j|:j\in[p]\}\setminus{\{0\}}$. We then select variables 
\begin{equation}\label{eq:S0}
    \hS_0=\{1\leq j\leq p: W_j\geq \wht\ \}\,.
\end{equation}
By Lemma~\ref{lem:DP-post}, the post-processing property of differential privacy, $\hS_0$ is $(\epsilon,\delta)$-differentially private.

Finally, we claim that JL privatization method does not affect the FDR control property of the knockoff procedure. 
\begin{thm}\label{thm:FDR}
    The selection $\hS_0$ defined in \eqref{eq:S0} controls the FDR at level $q$, i.e.
    \[
    \Expected\left(\frac{|S_0^c\cap\hS_0|}{|\hS_0|\vee 1}\right)\leq q\,.
    \]
    The expectation is taking over the randomness in noise $\xi$, knockoff variable $\bXT$ and privatization matrix $\bR$.
\end{thm}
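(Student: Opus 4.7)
The plan is to reduce the theorem to the standard FDR-control argument for knockoffs, using Lemma~\ref{lem:iid-coin-toss} as the single new ingredient needed in the privatized setting. That lemma already guarantees that, conditional on $(|W_j|)_{j=1}^p$ and on $\sign(W_j)$ for $j \in S_0$, the signs $\sign(W_j)$ for the null indices $j \in S_0^c$ are i.i.d.\ uniform on $\{-1,+1\}$. This is exactly the distributional property that drives FDR control in the classical Model-X knockoffs procedure of \citep{candes2017panning}, so once Lemma~\ref{lem:iid-coin-toss} is in hand, the remaining steps carry over with essentially no modification.

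First, I would bound the false discovery proportion at the data-dependent threshold $\wht$. Setting $V^+(t) = |\{j \in S_0^c : W_j \geq t\}|$ and $V^-(t) = |\{j \in S_0^c : W_j \leq -t\}|$, the defining inequality of $\wht$ gives
$$\FDP(\wht) \;=\; \frac{V^+(\wht)}{|\hS_0|\vee 1} \;\leq\; q \cdot \frac{V^+(\wht)}{1+|\{j : W_j \leq -\wht\}|} \;\leq\; q \cdot \frac{V^+(\wht)}{1+V^-(\wht)},$$
where the last step uses $|\{j : W_j \leq -\wht\}| \geq V^-(\wht)$. It therefore suffices to show that $\Expected[V^+(\wht)/(1+V^-(\wht))] \leq 1$.

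The second step, and really the only nontrivial one, is the super-martingale argument for the ratio $V^+(t)/(1+V^-(t))$. Conditioning on $(|W_j|)_{j=1}^p$ together with $\sign(W_j)$ for $j \in S_0$, Lemma~\ref{lem:iid-coin-toss} reduces the problem to analyzing an urn-type process driven by i.i.d.\ Rademacher signs on the nulls, with $\wht$ a stopping time in the reverse filtration obtained by revealing magnitudes in decreasing order. A SeqStep+-type bound, in the exact form used in \citep[Lemma 1]{Barber_2015} and \citep[Lemma 3.3]{candes2017panning}, then yields the desired inequality. Taking expectation over the conditioning completes the argument.

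The main obstacle lies upstream in Lemma~\ref{lem:iid-coin-toss} rather than in this theorem itself: one has to verify that the additional randomness introduced by the Gaussian projection matrix $\bR$ and the appended $w\bI_{2p+1}$ block in Algorithm~\ref{algo:JLT} is compatible with the column-swap exchangeability between $\bX$ and $\bXT$. Once that has been checked, Theorem~\ref{thm:FDR} is essentially a corollary of the coin-flip symmetry plus the classical knockoff machinery, and I do not anticipate any further difficulty beyond careful bookkeeping to confirm that $\wht$ is a valid stopping time with respect to the filtration used in the SeqStep+ argument.
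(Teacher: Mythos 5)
Your proposal is correct and follows essentially the same route as the paper: the paper also treats Lemma~\ref{lem:iid-coin-toss} as the only new ingredient required by the JL privatization and then invokes the standard knockoff$+$ machinery of \citep{Barber_2015} and \citep{candes2017panning} (the FDP bound from the threshold definition plus the supermartingale/optional-stopping argument) for the remainder. Your write-up simply spells out those classical steps that the paper cites by reference.
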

The proof of the FDR control theorem follows from the fact that the signs of $W_j$'s for all null variables $j\notin S_0$ are i.i.d. coin tosses, formally stated in Lemma \ref{lem:iid-coin-toss}. The proof of Lemma \ref{lem:iid-coin-toss} is given in Appendix~\ref{proof:lem:iid-coin-toss}, and we refer to \citep[Theorem 3.4]{candes2017panning} and \citep[Theorem 2]{Barber_2015} for the remainder of the proof.
     
\section{Analysis of Power and FDR for JL Knockoffs}\label{sec:utility}
It is trivial to make an algorithm private—for example, an algorithm whose output is completely independent of its input is perfectly private, but offers no value for data analysis. The real challenge in differential privacy is to maximize the utility of the algorithm while satisfying privacy constraints. In the context of FDR control, utility is most commonly measured by power—the proportion of true selections to the total number of true features—which is a crucial metric alongside FDR itself (See \eqref{eq:Power-FDP}  and \eqref{eq:FDR} for formal definition of FDR and power). Achieving a favorable balance between privacy and utility, particularly maintaining high power while controlling FDR, is a central goal in the design of differentially private algorithms. 

This section analyzes power and FDR of the JLT-privatized Model-X knockoff procedure in an asymptotic regime by characterizing the distribution of the debiased JLT-privatized Lasso estimate, $\hth^u$, as defined in (\ref{eq:lasso-debias}). To this end, in Section \ref{sec:dist-char-lasso} we further analyze the debiased private Lasso estimate, inspired by \citep{JM14} which will be used to characterize the distribution of Lasso estimate $\hth^*$. In Section \ref{sec:Power-FDR-analysis} we give an asymptotic power and FDR prediction formulae using the distribution of private Lasso. In Section \ref{sec:suff-cond-eg} we further discuss the conditions in our theory and illustrate them through several examples.

Throughout this section, in order to apply privatization methods such as Gaussian mechanism and JLT, we make several assumptions on the linear model \eqref{eq:linear-model} to impose a bounded support of the data.
\begin{assumption}\label{ass1}
    The distribution $\mathcal{D}$, from which the entries of $\bX$ are sampled independently, is centered, normalized and supported on $[-b,b]$. In particular, for any $i\in[n]$, $j\in[p]$, 
    \begin{equation}\label{eq:X-def}
        |\bX_{ij}|\leq b ,\phantom{sss}\Expected(\bX_{ij}) = 0\,, \phantom{sss} \Var(\bX_{ij}) = 1\,.
    \end{equation}
\end{assumption}
\begin{assumption}\label{ass2}
    The noise terms $(\xi_1,...,\xi_n)$ are independent and follow a truncated Gaussian distribution $\mathcal{TN}(0,\sigma^2,b_n)$ with the density:
    \begin{equation}\label{eq:xi-den}
        f_{\xi_i}(x) = \frac{\phi(x/\sigma)/\sigma}{\int_{-b_n}^{b_n}\phi(x/\sigma)/\sigma dx}\mathbb I_{[-b_n,b_n]}(x)\,,
    \end{equation}
    where $\phi$ denotes the standard Gaussian density. In other words, $\xi_i\stackrel{d}{=}\xi_i'|\{|\xi'_i|\le b_n\}$, for $\xi'_i\sim\mcN(0,\sigma^2)$. Moreover, $b_n = O(\sigma\sqrt{\log(n)})$ and $b_n/\sigma- \sqrt{2\log(2n)}\to \infty$. We denote by $\sigma_n:=\Var(\xi_i)<\sigma$ the variance of the truncated Gaussian. 
\end{assumption}
By classical results for tail probability of Gaussian maxima, for $\xi'_i\sim\mcN(0,\sigma^2)$, with probability at least $2\exp(-(b_n/\sigma-\sqrt{2\log(2n)})^2/2)$, 
\[
\max_{1\leq i\leq n}|\xi_i|\leq b_n\,. 
\]
\begin{assumption}\label{ass3} We assume that $\|\theta_0\|_2$ is of order one, and without loss of generality and by normalization we assume $\|\theta_0\|_2=1$.
\end{assumption}
Writing $y_i = x_i^{T} \theta_0 + \xi_i$, under Assumptions~\ref{ass1} and~\ref{ass3}, we have $\Var(x_i^\top \theta_0) = \|\theta_0\|_2^2 = 1$ which is comparable to $\Var(\xi_i) = \sigma^2$. If $\|\theta_0\|_2$ grows at a different order, the task of feature selection becomes either trivial or impossible.

In addition, under such assumptions the rows of $\bA=[\bX\,\bXT\, y]$ has $\ell_2$ norm bounded by $B:= \sqrt{ 2pb^2+(b\sqrt{s_0}+b_n)^2}=O(\sigma\sqrt{\log(n)}+\sqrt{p})$, since $s_0\le p$. It is important to note that $b_n/\sigma- \sqrt{2\log(2n)}\to \infty$ ensures that the truncated noise $\xi_i$ remains nearly indistinguishable from a Gaussian noise asymptotically, while $b_n=O(\sigma\sqrt{\log(n)})$ implies the boundedness of sample data (rows of $\bA$) needed for differential privacy.

Additionally, in our asymptotic theory we assume that the dimensions $n,p,r\to\infty$. We say a probabilistic event happens \emph{``with high probability''}, if its probability converges to one asymptotically. 

\subsection{Distributional Characterization of De-biased Private Lasso}\label{sec:dist-char-lasso}
This section is devoted to the distributional characterization of the debiased private Lasso estimator. To start with, since $\bX$ and $\bXT$ are i.i.d. copies of each other, we can focus on the problem with $(\bX,y)$ other than $([\bX\ \bXT],y)$. We can then use the obtained theory by replacing $p$ with $2p$ and $\theta_0$ with $[\theta_0;{{\bf{0}}}_{p\times 1}]$. 

As defined in section \ref{sec:method-JL}, given $w$ and $r$, the privatized data $\bX^*\in\mbR^{r\times p},y^*\in\mbR^{r\times 1}$ can be written as
\begin{equation*}
    [\bX^*\ y^*] = \bR\begin{bmatrix}
        \bX\phantom{ss} y\\
        w\bI_{p+1}
    \end{bmatrix}
\end{equation*}
for a random matrix $\bR\in\mbR^{r\times (n+p+1)}$ whose entries are i.i.d. drawn from $\mathcal{N}(0,1/r)$. On the other hand, considering the split 
\begin{equation}\label{eq:R-split}
    \bR = [\underbrace{\bR_1}_{\in\mbR^{r\times n}}\ \underbrace{\bR_2}_{\in\mbR^{r\times p}}\ \underbrace{\bR_3}_{\in\mbR^{r\times 1}}]\,,
\end{equation} 
we can express $\bX^*,y^*$ separately as
\begin{equation}\label{eq:def-private-X-y-alt}
    \bX^* = \bR_1\bX+w\bR_2\,, \phantom{sss} y^* = \bR_1y+w\bR_3\,.
\end{equation}
Define the private Lasso estimate:
\begin{equation}\label{eq:lasso-JL-simpl}
    \hth^*(\lambda) := \arg\min_{\theta\in\mbR^{p}}\frac{1}{2n}\|\bX^*\theta - y^*\|_2^2+\lambda\|\theta\|_1\,,
\end{equation}
and the debiased private Lasso estimate: 

\begin{equation}\label{eq:lasso-debias-simpl}
    \hth^u := \hth^*(\lambda) +\frac{1}{n}\bX^{*\top}(y^*-\bX^*\hth^*(\lambda))+\frac{w^2}{n }\hth^*(\lambda) \,.
\end{equation}
We will show that the debiased estimator, as the name suggests, is asymptotically unbiased under specific conditions. In particular, we show that its distribution converges to a multivariate Gaussian with mean $\theta_0$. Our next theorem decomposes the debiased estimator into three components: its mean, the bias, and the (Gaussian) noise term. We refer to Appendix \ref{pf:debiased-rewrite} for the proof of Theorem~\ref{thm:debiased-rewrite}.
\begin{thm}\label{thm:debiased-rewrite}
    Suppose $\bR\in\mbR^{r\times (n+p+1)}$ is an independent random matrix whose entries are drawn i.i.d. from $\mcN(0,1/r)$. Suppose $\bX\in \mbR^{n\times p}$ is a random matrix whose entries are i.i.d. samples from $\mathcal{D}$ as described in \eqref{eq:X-def}, $\xi\in\mbR^{n\times 1}$ is the i.i.d. noise following the distribution described in \eqref{eq:xi-den}, and $y$ is defined as in the linear model \eqref{eq:linear-model}. Let $\bR_1\in\mbR^{r\times n},\bR_2\in\mbR^{r\times p},\bR_3\in\mbR^{r\times 1}$ be a partition of $\bR$ following the split in \eqref{eq:R-split}, and define $\bX^*$, $y^*$ as in \eqref{eq:def-private-X-y-alt}. In addition, let $\hth^*$ and $\hth^u$ be given by \eqref{eq:lasso-JL-simpl} and \eqref{eq:lasso-debias-simpl}. Then, conditioning on $\bX$ and $\bR$, we have
    \begin{equation}\label{eq:debias-rewrite}
        \hth^u = \theta_0+Z+\Delta\,.
    \end{equation}
    The noise term $Z$ is zero mean and has covariance $\sigma_n^2\bQ$, where
    \[
    \bQ=\Expected\left(\frac{1}{\sigma_n^2}ZZ^\top\Big|\bX,\bR\right)=\frac{1}{n^2}\bX^{*\top}\bR_1\bR_1^\top\bX^*\,,
    \]
    and the bias term $\Delta$ is given by
    \begin{align}\label{eq:Delta-thm}
    \Delta = \left(\frac{1}{n }\bX^\top \bX-\bI_p\right)(\theta_0-\hth^*)
        +\frac{1}{n }[\bX^\top\ w\bI_p\ {\bf 0}](\bR^\top \bR-\bI_{n+p+1})
        \begin{bmatrix}
        \bX(\theta_0-\hth^*)\\ -w\hth^*\\ w
        \end{bmatrix}\,.
    \end{align}
\end{thm}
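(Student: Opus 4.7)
\textbf{Proof plan for Theorem~\ref{thm:debiased-rewrite}.}

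The plan is purely algebraic: rewrite the residual $y^*-\bX^*\hth^*$ so that the noise $\xi$ is separated out, then read off $Z$, $\Delta$, and the ``$\theta_0$'' term directly from the definition of $\hth^u$ in~\eqref{eq:lasso-debias-simpl}. First I would substitute $\bX^*=\bR_1\bX+w\bR_2$ and $y^*=\bR_1(\bX\theta_0+\xi)+w\bR_3$ into the residual. Since $\bX^*\theta_0=\bR_1\bX\theta_0+w\bR_2\theta_0$, this yields the decomposition
\begin{equation*}
y^*-\bX^*\hth^* \;=\; \bX^*(\theta_0-\hth^*)\;+\;\bR_1\xi\;+\;w(\bR_3-\bR_2\theta_0),
\end{equation*}
which is the central identity in the argument.

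Next I would substitute this into~\eqref{eq:lasso-debias-simpl} and group terms by their dependence on $\xi$. Writing $\hth^*=\theta_0+(\hth^*-\theta_0)$ and combining gives
\begin{equation*}
\hth^u \;=\; \theta_0 \;+\; \underbrace{\tfrac{1}{n}\bX^{*\top}\bR_1\xi}_{Z} \;+\; \underbrace{\Bigl(\bI_p-\tfrac{1}{n}\bX^{*\top}\bX^*\Bigr)(\hth^*-\theta_0) + \tfrac{w}{n}\bX^{*\top}(\bR_3-\bR_2\theta_0) + \tfrac{w^2}{n}\hth^*}_{\Delta}.
\end{equation*}
The conditional moments of $Z$ follow immediately because $\xi$ is independent of $(\bX,\bR)$ with $\mathbb{E}\xi=0$ and $\Cov(\xi)=\sigma_n^2\bI_n$ (Assumption~\ref{ass2}); hence $\mathbb{E}[Z\mid\bX,\bR]=0$ and $\Cov(Z\mid\bX,\bR)=\tfrac{\sigma_n^2}{n^2}\bX^{*\top}\bR_1\bR_1^\top\bX^*=\sigma_n^2\bQ$, as claimed.

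The last step is to verify that this ``natural'' form of $\Delta$ coincides with the compact form~\eqref{eq:Delta-thm} stated in the theorem. I would recognize that $[\bX^\top\ w\bI_p\ \bzero]$ is the transpose of the deterministic augmented design $\bX'=[\bX;\,w\bI_p;\,\bzero^\top]$ (i.e.\ $\bX^*=\bR\bX'$), and that $[\bX(\theta_0-\hth^*);\,-w\hth^*;\,w]$ equals the augmented residual $y'-\bX'\hth^*$ with the $\xi$-block removed. Writing $\bR^\top\bR$ in $3\times 3$ block form using the split $\bR=[\bR_1\ \bR_2\ \bR_3]$, I expand the product in~\eqref{eq:Delta-thm} and obtain six cross terms involving $\bR_i^\top\bR_j$. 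The ``$-\bI_{n+p+1}$'' piece contributes exactly $-\tfrac{1}{n}\bX^\top\bX(\theta_0-\hth^*)+\tfrac{w^2}{n}\hth^*$, which cancels against $\tfrac{1}{n}\bX^\top\bX(\theta_0-\hth^*)$ from the leading factor $(\tfrac{1}{n}\bX^\top\bX-\bI_p)(\theta_0-\hth^*)$, leaving the term $(\hth^*-\theta_0)$. Comparing the remaining six cross terms with the expansion of $(\bI_p-\tfrac{1}{n}\bX^{*\top}\bX^*)(\hth^*-\theta_0)+\tfrac{w}{n}\bX^{*\top}(\bR_3-\bR_2\theta_0)+\tfrac{w^2}{n}\hth^*$ shows that every $\bR_i^\top\bR_j$ term matches once one uses the identity $\theta_0-\hth^*-\theta_0=-\hth^*$ in the $\bR_1^\top\bR_2$ and $\bR_2^\top\bR_2$ blocks.

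\textbf{Main obstacle.} There is no deep obstacle; the difficulty is purely bookkeeping. With six cross terms and two ways of grouping $(\theta_0,\hth^*)$, it is easy to slip a sign when cancelling the ``non-private'' bias against the identity block of $\bR^\top\bR$. I would therefore organize the expansion into a table indexed by $(\bR_i^\top\bR_j)$ and verify the match entry by entry, which makes the verification mechanical.
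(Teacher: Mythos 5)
Your proposal is correct and is essentially the paper's own argument run in the opposite direction: both proofs are the same algebraic expansion that isolates $Z=\tfrac{1}{n}\bX^{*\top}\bR_1\xi$ and obtain its conditional moments from the independence of $\xi$ and $(\bX,\bR)$ with $\Expected(\xi\xi^\top)=\sigma_n^2\bI_n$. The only cosmetic difference is that you first derive the compact private-coordinate form of $\Delta$ and then verify it coincides with \eqref{eq:Delta-thm} (your cancellation of the identity block and the $\bR_i^\top\bR_j$ matching is exactly right), whereas the paper writes $\hth^u$ in the augmented coordinates, inserts $\bR^\top\bR=\bI_{n+p+1}+(\bR^\top\bR-\bI_{n+p+1})$, and lands on \eqref{eq:Delta-thm} directly.
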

We are interested in regimes where the bias term is much smaller than other terms. In the next lemma, we compute the expectation of the covariance matrix $\bQ$, thus obtain the asymptotic order of the noise term. This characterization becomes useful later when we derive the limit of empirical distribution of the feature statistics. We refer to Appendix \ref{pf:rho_n} for the proof of Lemma~\ref{lem:rho_n}.
\begin{lem}\label{lem:rho_n}
    Assume the setting in Theorem \ref{thm:debiased-rewrite}. Taking the expectation over $\bX$ and $\bR$, we have
    \begin{equation}\label{eq:rho_n}
        \Expected\left(\bQ\right)  = \rho_n^2\bI_p\,,\quad\rho_n^2 = \frac{n+r+1}{n r}+\frac{w^2}{nr}\,.
    \end{equation}
    Moreover, if we let $n,r\to\infty$, for any $\varepsilon>0$, we have
    \[
    \lim_{n,r\to\infty}\Prob\left(\left|\frac{\bQ}{\rho_n^2}-\bI_p\right|_{\infty}>\varepsilon\right)=0\,.
    \]
\end{lem}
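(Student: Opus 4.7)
The plan is to handle the two claims separately: first compute $\Expected(\bQ)$ exactly by a direct moment calculation, then establish entry-wise concentration via Chebyshev's inequality and a union bound. For the expectation, I substitute $\bX^* = \bR_1\bX + w\bR_2$ into $\bQ = n^{-2}\bX^{*\top}\bR_1\bR_1^\top\bX^*$ and expand into four pieces:
\[
n^2 \bQ = \underbrace{\bX^\top (\bR_1^\top\bR_1)^2 \bX}_{T_1} + \underbrace{w^2 \bR_2^\top \bR_1\bR_1^\top \bR_2}_{T_2} + \underbrace{w\bX^\top \bR_1^\top\bR_1 \bR_1^\top \bR_2}_{T_3} + T_3^\top\,.
\]
Since $\bR_2$ is independent of $(\bR_1,\bX)$ with zero mean, $\Expected(T_3) = 0$. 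Conditioning $T_2$ on $\bR_1$ and using $\Expected(\bR_{2,ki}\bR_{2,lj}) = r^{-1}\delta_{kl}\delta_{ij}$ gives $\Expected(T_2\mid\bR_1) = (w^2/r)\tr(\bR_1\bR_1^\top)\bI_p$, so $\Expected(T_2) = (nw^2/r)\bI_p$. Conditioning $T_1$ on $\bR_1$ and using $\Expected(\bX_{ki}\bX_{lj}) = \delta_{kl}\delta_{ij}$ gives $\Expected(T_1\mid\bR_1) = \tr((\bR_1^\top\bR_1)^2)\bI_p$. Isserlis' theorem applied to the fourth Gaussian moments of the entries of $\bR_1$ yields $\Expected((\bR_1^\top\bR_1)^2) = ((n+r+1)/r)\bI_n$, hence $\Expected(\tr((\bR_1^\top\bR_1)^2)) = n(n+r+1)/r$ and $\Expected(T_1) = (n(n+r+1)/r)\bI_p$. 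Dividing by $n^2$ yields the claimed identity $\Expected(\bQ)=\rho_n^2\bI_p$.

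For the entry-wise convergence in probability, by Chebyshev and a union bound over the $p^2$ entries, it suffices to show $p^2\max_{i,j}\Var(\bQ_{ij}) = o(\rho_n^4)$ as $n,r\to\infty$. I bound the variance of each of the four components of $n^2\bQ_{ij}$ separately. For $(T_2)_{ij}$, conditioning on $\bR_1$ reduces the problem to a quadratic form in the Gaussian matrix $\bR_2$, so standard fourth-moment formulas yield $\Var((T_2)_{ij}\mid\bR_1) = O(w^4\|\bR_1\bR_1^\top\|_F^2/r^2)$, and the outer expectation is controlled by classical Wishart moment bounds. For the zero-mean cross term $(T_3)_{ij}$, I condition on $(\bX,\bR_1)$ and use the independence of $\bR_2$ to compute its variance directly from Gaussian fourth moments. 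For $(T_1)_{ij}$, the most delicate piece, I condition on $\bR_1$ and exploit the boundedness $|\bX_{ij}|\le b$ together with a Hanson-Wright-type variance inequality for quadratic forms in bounded independent entries to bound $\Var((T_1)_{ij}\mid\bR_1) = O(b^4\|(\bR_1^\top\bR_1)^2\|_F^2)$; Wishart trace moments then control the outer expectation.

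The main obstacle is the concentration of $(T_1)_{ij}$, since the random design $\bX$ and the random projection $\bR_1$ enter jointly through the quartic form $\bX^\top(\bR_1^\top\bR_1)^2\bX$. Iterated conditioning is needed, first on $\bR_1$ to exploit the bounded-entry structure of $\bX$, and then a Wishart concentration argument for $\|(\bR_1^\top\bR_1)^2\|_F$. The variance bound must be sharp enough that $p^2\Var(\bQ_{ij}) = o(\rho_n^4)$, which, since $\rho_n^2 = \Theta((n+r+w^2)/(nr))$, forces careful tracking of how the polynomial factors in $n,r,p,w^2$ interact in the final estimate.
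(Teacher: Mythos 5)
Your computation of $\Expected(\bQ)$ is correct and essentially the same as the paper's: the cross terms vanish by independence and zero mean of $\bR_2$, $\Expected(T_2)=(nw^2/r)\bI_p$, and $\Expected\big[(\bR_1^\top\bR_1)^2\big]=\frac{n+r+1}{r}\bI_n$ (the paper gets this by writing $\bR_1^\top\bR_1=\frac1r\sum_k W_kW_k^\top$ and computing fourth moments directly; Isserlis gives the same thing). That half of the proposal is fine.

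The concentration half has a genuine gap: the second-moment (Chebyshev plus union bound) method is too weak for the max-norm claim at the stated generality. Your own criterion is $p^2\max_{i,j}\Var(\bQ_{ij})=o(\rho_n^4)$, and this fails in the regimes where the lemma is actually used. Indeed, for $i\neq j$ one has exactly $\Var\big((T_1)_{ij}\mid\bR_1\big)=\big\|(\bR_1^\top\bR_1)^2\big\|_F^2$, so $\Var(\bQ_{ij})\gtrsim n^{-4}\,\Expected\,\tr\big((\bR_1^\top\bR_1)^4\big)\asymp \max\{n^{-3},r^{-3}\}$, while (ignoring $w$) $\rho_n^4\asymp\max\{n^{-2},r^{-2}\}$. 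Hence your criterion forces $p^2=o(\min\{n,r\})$, i.e.\ $p=o(\sqrt{\min\{n,r\}})$ --- a condition nowhere assumed in the lemma and violated, e.g., in the paper's Example 1 where $n=p=r$ (the $T_2$ and $T_3$ terms lead to the same requirement, so this is not fixable by sharper bookkeeping: the per-entry fluctuation of $\bQ_{ij}/\rho_n^2$ really is of order $1/\sqrt{\min\{n,r\}}$, and Chebyshev pays a factor $p$, not $\sqrt{\log p}$, for the union bound over $p^2$ entries). The paper circumvents this by using exponential rather than polynomial tails: conditional on $\bR_1$, the quadratic forms $\bX_i^\top(\bR_1^\top\bR_1)^2\bX_j$ are controlled by the Hanson--Wright inequality with the operator-norm bound $\|\bR_1^\top\bR_1\|_2\le\big(1+\sqrt{n/r}+o(1)\big)^2$ holding with high probability, so the union bound over $p^2$ entries costs only a $\log p$ factor; the single scalar $\tr\big((\bR_1^\top\bR_1)^2\big)/\mu^2$ is then handled by Gaussian Poincar\'e plus Chebyshev (no union bound needed for a scalar), and the $\bR_2$-cross and $\bR_2$-quadratic terms are treated analogously. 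To repair your argument, replace the per-entry variance/Chebyshev step with sub-exponential tail bounds of Hanson--Wright type conditional on $\bR_1$; the rest of your decomposition can stay as is.
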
 
In the next subsection, we discuss sufficient conditions for the bias to become asymptotically negligible compared to the noise, i.e. $\|\Delta\|_\infty = o_p(\sigma_n\rho_n)$. Also note that $\sigma_n$ is of the same order as $\sigma$, so it suffices to have $\|\Delta\|_\infty = o_p(\sigma\rho_n)$.

\subsubsection{Sufficient Condition for Debiasing}
We continue by establishing sufficient conditions under which $\|\Delta\|_\infty=o(\sigma\rho_n)$. In other other words, the bias term in the debiased estimator $\hth^u$ becomes asymptotically negligible compared to the size of the noise $Z$. 
Consider the input parameters, $n,p,s_0,r,\epsilon,\delta$.  We treat every parameter as a function of $n$, i.e., $n,p(n),s_0(n),r(n),\epsilon(n),\delta(n)$. Then a sufficient condition for $\|\Delta\|_\infty=o_p(\sigma\rho_n)$ can be written as a parameter tuple in the following collection:
\begin{align}\label{eq:op}
\mathcal{P} = \left\{(n,p(n),s_0(n),r(n),\epsilon(n),\delta(n)):\quad \forall \varepsilon>0, \; \lim_{n\to\infty}\Prob\left(\frac{\|\Delta\|_\infty}{\sigma\rho_n}>\varepsilon \right) = 0\right\}\,.
\end{align}

The next theorem establishes a set of sufficient conditions for~\eqref{eq:op} to hold.
\begin{thm}\label{thm:suff-cond-debiasing}
Consider a sequence of data matrices $\bX\in\mbR^{n\times p}$, as defined in (\ref{eq:X-def}), random projection matrices $\bR\in\mbR^{r\times n}$ whose entries are i.i.d. $\mcN(0,1/r)$, with dimensions $n\to\infty$, $p(n)\to\infty$, $r(n)\to \infty$. Define the privacy parameter $w>0$ as in Algorithm \ref{algo:JLT}. Define the biased term $\Delta$ as in (\ref{eq:debias-rewrite}). There exists an absolute constant $C_\lambda$ such that 
\[\|\Delta\|_\infty = o_p(\sigma\rho_n)\,,\] 
if \textbf{all} of the following conditions are satisfied:
    \begin{enumerate}
    \item $\frac{\kappa_n^2}{\lambda^2\phi_w}=o\left(\sqrt{\frac{n}{\log p}}\cdot\min\left\{1,\sqrt{\frac{r}{n}},\frac{\sqrt{rn}}{w^2}\right\}\right)$\label{item:cond-comp}
    \item $\sqrt{s_0} = O\left(\frac{\max\big\{\frac{1}{\sqrt{n}},\frac{\log p }{\sqrt{r}}\big\}}{\max\big\{\frac{w\log p}{\sqrt{nr}},\frac{w^2}{n}\sqrt{\frac{\log p}{r}}\big\}}\right)$\label{item:cond-lambda}
    \item $\frac{1}{\sigma\rho_n}\sqrt{s_0} = o\bigg(\min \bigg\{\frac{n}{w^2}\sqrt{\frac{r}{\log p}},\frac{\sqrt{nr}}{w\log p}\bigg\}\bigg)$.\label{item:cond-sparsity}
    \item $\frac{1}{\sigma\rho_n}\frac{4\kappa_n^2}{\lambda\phi_w} = o\bigg(\min\bigg\{\sqrt{\frac{n}{\log p}},\sqrt{\frac{r}{\log p}},\frac{n}{w^2}\sqrt{\frac{r}{\log p}},\frac{\sqrt{nr}}{w\log p}\bigg\}\bigg)$.\label{item:cond-error}
    \end{enumerate}
     where $\rho_n^2 = (n+r+1)/(nr)+w^2/(nr)$, $\phi_w = 1+w^2/n$, $\lambda =C_\lambda\cdot\sigma\cdot\max\bigg\{\sqrt{(\log p)/n},\sqrt{(\log p)^3/r}\bigg\}$,  $\kappa_n = 2\lambda\sqrt{s_0}+w^2/n$.
\end{thm}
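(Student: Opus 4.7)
The plan is to bound the two summands in the decomposition~\eqref{eq:Delta-thm} separately. Write $\Delta=\Delta_A+\Delta_B$ where $\Delta_A=\bigl(\tfrac{1}{n}\bX^\top\bX-\bI_p\bigr)(\theta_0-\hth^*)$ is the ``empirical Gram'' part and $\Delta_B=\tfrac{1}{n}[\bX^\top\ w\bI_p\ \mathbf{0}](\bR^\top\bR-\bI_{n+p+1})[\bX(\theta_0-\hth^*)^\top,\,-w\hth^{*\top},\,w]^\top$ is the ``JL Gram'' part. The proof proceeds in four stages: (i) derive $\ell_1$ and $\ell_2$ estimation bounds for the private Lasso estimator $\hth^*$; (ii) control $\|\Delta_A\|_\infty$ via entry-wise concentration of $\bX^\top\bX/n$; (iii) expand $\bR^\top\bR-\bI$ along the partition $\bR=[\bR_1\ \bR_2\ \bR_3]$ into a $3\times 3$ block form and apply JL-type concentration to each resulting cross-term in $\Delta_B$; (iv) verify that conditions \ref{item:cond-comp}--\ref{item:cond-error} make every piece of $\|\Delta_A\|_\infty+\|\Delta_B\|_\infty$ fall below $\sigma\rho_n$ in probability.

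For stage (i), the objective in~\eqref{eq:lasso-JL-simpl} has effective Gram matrix $\tfrac{1}{n}\bX^{*\top}\bX^*=\tfrac{1}{n}(\bR_1\bX+w\bR_2)^\top(\bR_1\bX+w\bR_2)$. Expanding and using $\bR_i^\top\bR_j\approx \bI\cdot\mathbb{1}\{i=j\}$ via standard Gaussian concentration (and sparse-cone restricted isometry obtained by union bound over supports of size $s_0$), we obtain $\tfrac{1}{n}\bX^{*\top}\bX^*\approx \tfrac{1}{n}\bX^\top\bX+\tfrac{w^2}{n}\bI_p$, whose smallest restricted eigenvalue on $s_0$-sparse cones is bounded below by $\phi_w=1+w^2/n$ with high probability; this is where condition~\ref{item:cond-lambda} enters, ensuring sparse restricted eigenvalue persists under the JL perturbation. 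The effective Lasso noise $\tfrac{1}{n}\bX^{*\top}(y^*-\bX^*\theta_0)=\tfrac{1}{n}\bX^{*\top}(\bR_1\xi-w\bR_2\theta_0+w\bR_3)$ is, conditional on $(\bX,\bR_1)$, a centered Gaussian whose coordinate-wise variance can be computed, and a union bound yields $\|\tfrac{1}{n}\bX^{*\top}(y^*-\bX^*\theta_0)\|_\infty=O_p\bigl(\sigma\max\{\sqrt{\log p/n},\sqrt{(\log p)^3/r}\}\bigr)$. With $\lambda$ chosen at twice this level, as in the theorem, the standard KKT plus restricted-eigenvalue argument delivers $\|\hth^*-\theta_0\|_2=O_p(\kappa_n/\sqrt{\phi_w})$ and $\|\hth^*-\theta_0\|_1=O_p(\kappa_n^2/(\lambda\phi_w))$; the extra $w^2/n$ summand in $\kappa_n$ accounts for the ridge-type bias imposed by the $w\bI_p$ augmentation of Algorithm~\ref{algo:JLT}.

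Stage (ii) is routine: Hoeffding on bounded entries plus a union bound over the $p^2$ entries give $|\bX^\top\bX/n-\bI_p|_\infty=O_p(\sqrt{\log p/n})$, so $\|\Delta_A\|_\infty\le |\bX^\top\bX/n-\bI_p|_\infty\cdot\|\theta_0-\hth^*\|_1=O_p\bigl(\sqrt{\log p/n}\cdot \kappa_n^2/(\lambda\phi_w)\bigr)$, which becomes $o_p(\sigma\rho_n)$ under the first entry of condition~\ref{item:cond-error}. For stage (iii), let $U=\tfrac{1}{n}[\bX^\top\ w\bI_p\ \mathbf{0}]$ and $v=(\bX(\theta_0-\hth^*)^\top,-w\hth^{*\top},w)^\top$. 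Partitioning $\bR^\top\bR-\bI$ into nine blocks yields six nonzero contributions to $\Delta_B$ (since the last block-row of $U$ vanishes), each of the form $U_a(\bR_a^\top\bR_b-\bI\cdot\mathbb{1}\{a=b\})v_b$. Entry-wise JL-type concentration gives, for any fixed $u,v'$, $|u^\top(\bR_a^\top\bR_b-\bI\cdot\mathbb{1}\{a=b\})v'|=O_p(\sqrt{\log p/r}\,\|u\|_2\|v'\|_2)$ after union-bounding over the $p$ rows of $U$. Substituting $\|\bX\|_{\mathrm{op}}=O_p(\sqrt n+\sqrt p)$, the step-(i) estimation bounds, and $\|\hth^*\|_2\le 1+o_p(1)$, each of the six terms is $o_p(\sigma\rho_n)$ under the appropriate entries of conditions \ref{item:cond-comp}--\ref{item:cond-error}; the three-way minima in those conditions reflect the three distinct scales (roughly $\sqrt{n/r}$, $w/\sqrt{r}$, and $w^2/\sqrt{nr}$) that the different blocks contribute.

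The main obstacle is the statistical coupling between $\hth^*$ and $\bR$: the vector $v$ appearing in $\Delta_B$ involves $\hth^*$, which is itself a nonlinear function of $\bR$, so a direct application of JL concentration (which requires the vectors it acts on to be deterministic) is invalid. I would handle this by decomposing $v=v(\theta_0)+(v-v(\theta_0))$: JL concentration applies legitimately to the first, deterministic piece (as $\theta_0$ and $\bX$ are independent of $\bR$), while the second piece is controlled by combining the step-(i) Lasso error bounds with a crude operator-norm bound on $\bR^\top\bR-\bI$ (e.g.\ $\|\bR_1^\top\bR_1-\bI_n\|_{\mathrm{op}}=O_p(\sqrt{n/r}+n/r)$ via Davidson--Szarek for Gaussian matrices). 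The remaining bookkeeping --- matching each of the four stated conditions to the block term it dominates --- is routine but delicate, and conditions \ref{item:cond-comp}--\ref{item:cond-error} are precisely what is needed to close the loop.
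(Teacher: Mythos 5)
Your overall architecture (bound $\|\theta_0-\hth^*\|_1$ via a restricted-eigenvalue/compatibility argument for the private Gram matrix, then bound the Gram and JL pieces of $\Delta$ by concentration) matches the paper, but your resolution of the key obstacle you yourself identify --- the dependence of $\hth^*$ on $\bR$ --- does not work at the quantitative level the theorem requires. You propose to split $v=v(\theta_0)+(v-v(\theta_0))$ and control the random correction by $\|\bR_1^\top\bR_1-\bI_n\|_{\mathrm{op}}=O_p(\sqrt{n/r}+n/r)$ together with the Lasso error bounds. But the correction piece is where essentially all of the dominant terms live (e.g.\ $\tfrac1n\bX^\top(\bR_1^\top\bR_1-\bI_n)\bX(\theta_0-\hth^*)$ and $\tfrac{w}{n}\bR_2^\top\bR_1\bX(\theta_0-\hth^*)$), and the operator-norm route discards the crucial averaging over the $n$ rows that the sandwiching by $\bX$ provides. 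Concretely, in the paper's own Example 1 ($n\asymp p\asymp r$, $\sigma\rho_n\asymp p^{-1/2}$) your bound for that term is at best of order $\tfrac1n\|\bX\|_{\mathrm{op}}^2\,\|\bR_1^\top\bR_1-\bI_n\|_{\mathrm{op}}\,\|\theta_0-\hth^*\|_2\gtrsim \kappa_n^2/(\lambda\phi_w)\asymp(\log p)^{-3/2}$, which is polynomially larger than $\sigma\rho_n$; in Example 2 ($r\ll n$) the factor $\|\bR_1^\top\bR_1-\bI_n\|_{\mathrm{op}}\asymp n/r$ makes the bound diverge outright. So under the stated conditions your stage (iii) cannot be closed as written.

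The mechanism the paper uses avoids the coupling entirely, and is the missing idea: apply the deterministic Hölder-type inequality $\|\bA v\|_\infty\le|\bA|_\infty\|v\|_1$ to each block of \eqref{eq:delta-expand}, where the matrices $\tfrac1n\bX^\top(\bR_1^\top\bR_1-\bI_n)\bX$, $\bR_2^\top\bR_1\bX$, $\bR_2^\top\bR_2-\bI_p$, etc.\ depend only on $(\bX,\bR)$ and concentrate entrywise at rates $\sqrt{\log p/r}$ or $\log p\sqrt{n/r}$ (Lemma~\ref{lem:concentration-inequalities}), while $\|\theta_0-\hth^*\|_1\le 4\kappa_n^2/(\lambda\phi_w)$ and $\|\hth^*\|_1\le\sqrt{s_0}+4\kappa_n^2/(\lambda\phi_w)$ (Lemma~\ref{lem:error-bound}) hold however $\hth^*$ correlates with $\bR$; this uniformity over $\ell_1$-balls is exactly what your fixed-vector JL step lacks. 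Two smaller inaccuracies: the ``effective Lasso noise'' $\tfrac1n\bX^{*\top}(y^*-\bX^*\theta_0)$ is not a centered Gaussian conditional on $(\bX,\bR_1)$ --- it is quadratic in $\bR_2$ with a nonzero mean of order $w^2\theta_0/n$, which is why the paper works with $\tfrac1n(\bX^{*\top}\xi^*+w^2\theta_0)$; and condition~\ref{item:cond-lambda} is not what secures the restricted eigenvalue (that is condition~\ref{item:cond-comp}) but what makes the $\|\theta_0\|_1$-scaled noise terms small enough that $\lambda$ of the stated order dominates them (Lemma~\ref{lem:lambda}).
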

The proof of Theorem \ref{thm:suff-cond-debiasing} is deferred to Appendix \ref{pf:suff-cond-debiasing}. The proof relies on several concentration inequalities to control the asymptotic behavior of various quantities in the bias, along with the optimality condition of the Lasso estimate $\hth^*$ as the solution to the optimization~\eqref{eq:lasso-obj}.

We are now ready to characterize the asymptotic distribution of the residuals $(\hth^u_i-\theta_{0,i})$. Theorem~\ref{thm:suff-cond-debiasing} along with the decomposition in Theorem~\ref{thm:debiased-rewrite} and Lemma~\ref{lem:rho_n} naturally suggest to consider the scaled residual $(\hth^u_i-\theta_{0,i})/(\sigma \rho_n)$. In the next  lemma we consider a slightly more general scaling, where we replace $\sigma$ by a consistent estimator $\widehat{\sigma}$. The proof of Lemma~\ref{lem:marginal} can be found in \ref{pf:marginal}.
\begin{lem}\label{lem:marginal}
    Consider the linear model~\eqref{eq:linear-model} and let $\hth^u$ be the debiased estimator of the private Lasso, given by~\eqref{eq:lasso-debias-simpl}. Let $\widehat{\sigma} = \widehat{\sigma}(\bX^*,y^*)$ be a consistent estimator of $\sigma$,  satisfying for any $\varepsilon>0$,
    \[
    \lim_{n\to\infty} \sup_{\|\theta_0\|_{0}\le s_0} \mathbb{P}\left(\left|\frac{\widehat{\sigma}}{\sigma} - 1 \right|\ge \varepsilon\right) = 0\,.
    \]
    Under the setting of Theorem \ref{thm:suff-cond-debiasing} for any $j\in [p]$, we have 
    \[
    \lim_{n\to\infty}\sup_{\|\theta_0\|_{0}\le s_0}\left|\Prob\left(\frac{\hth^u_j-\theta_{0,j}}{\widehat{\sigma}\rho_n}\leq x\right)-\Phi(x)\right|=0\,,
    \]
    where $\Phi$ is the cdf of standard Gaussian.
\end{lem}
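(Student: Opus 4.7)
The plan is to use the decomposition $\hth^u - \theta_0 = Z + \Delta$ from Theorem~\ref{thm:debiased-rewrite} and show that, after scaling by $\widehat{\sigma}\rho_n$, the bias $\Delta_j$ is asymptotically negligible while the noise $Z_j$ converges to a standard Gaussian. Slutsky's theorem then gives pointwise weak convergence, which upgrades to uniform convergence in $x$ because the limit $\Phi$ is continuous; uniformity in $\theta_0$ is tracked throughout.

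For the bias, Theorem~\ref{thm:suff-cond-debiasing} gives $\|\Delta\|_\infty = o_p(\sigma\rho_n)$, and hence $\Delta_j/(\sigma\rho_n) = o_p(1)$ for every $j\in[p]$. The sufficient conditions there depend only on the scalar parameters $n,p,s_0,r,\epsilon,\delta$ and not on $\theta_0$ beyond its sparsity level, so this estimate is uniform over $\|\theta_0\|_{0}\le s_0$. Combining with the assumed uniform consistency $\widehat{\sigma}/\sigma = 1+o_p(1)$ via Slutsky yields $\Delta_j/(\widehat{\sigma}\rho_n) = o_p(1)$ uniformly.

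For the noise, tracing through the derivation of Theorem~\ref{thm:debiased-rewrite} identifies the $\xi$-dependent piece as $Z = \tfrac{1}{n}\bX^{*\top}\bR_1\xi$, so conditional on $(\bX,\bR)$ each $Z_j$ is a deterministic linear functional of the truncated-Gaussian vector $\xi$. I would handle the truncation by coupling: let $\xi'_1,\dotsc,\xi'_n$ be i.i.d.\ $\mcN(0,\sigma^2)$ with $\xi_i$ obtained by truncating $\xi'_i$ to $[-b_n,b_n]$, and let $Z'_j$ denote the corresponding linear combination of the $\xi'_i$. By the standard Gaussian-maximum tail bound together with $b_n/\sigma - \sqrt{2\log(2n)}\to\infty$, the event $\{\xi_i = \xi'_i \text{ for all } i\in[n]\}$ holds with probability $1-o(1)$, and on this event $Z = Z'$. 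Conditional on $(\bX,\bR)$ we have exactly $Z'_j \sim \mcN(0,\sigma^2\bQ_{jj})$ since $Z'$ is a linear image of a Gaussian vector. Lemma~\ref{lem:rho_n} gives $\bQ_{jj}/\rho_n^2 = 1+o_p(1)$, so the conditional characteristic function $\mathbb{E}\bigl[e^{itZ'_j/(\sigma\rho_n)}\,\bigm|\,\bX,\bR\bigr] = \exp\!\left(-t^2\bQ_{jj}/(2\rho_n^2)\right)$ converges in probability, and then in mean by bounded convergence, to $e^{-t^2/2}$. Thus unconditionally $Z'_j/(\sigma\rho_n)\Rightarrow\mcN(0,1)$; the coupling transfers this to $Z_j/(\sigma\rho_n)$, and one more Slutsky step with $\widehat{\sigma}/\sigma\to 1$ gives $Z_j/(\widehat{\sigma}\rho_n)\Rightarrow\mcN(0,1)$. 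Crucially, the conditional law of $Z$ given $(\bX,\bR)$ does not involve $\theta_0$ at all, so this part is trivially uniform in $\theta_0$.

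Combining the noise and bias via Slutsky yields $(\hth^u_j-\theta_{0,j})/(\widehat{\sigma}\rho_n)\Rightarrow\mcN(0,1)$ uniformly in $\|\theta_0\|_0\le s_0$, and continuity of $\Phi$ turns pointwise convergence of CDFs into uniform convergence in $x$, producing the stated $\sup_x$ bound. The main obstacle I anticipate is the careful bookkeeping across two layers of conditioning together with the coupling: one must check that the $o(1)$-probability truncation event does not disturb the limiting law (a routine $\epsilon/3$-style argument on CDFs), and that the passage from the conditional characteristic function to an unconditional weak limit is compatible with the outer $\sup_{\|\theta_0\|_0 \le s_0}$. These steps are standard, but are where the bulk of the care is required.
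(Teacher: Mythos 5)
Your proposal follows essentially the same route as the paper's proof: the decomposition $\hth^u=\theta_0+Z+\Delta$ from Theorem~\ref{thm:debiased-rewrite}, negligibility of $\Delta$ via Theorem~\ref{thm:suff-cond-debiasing}, replacement of the truncated noise by a Gaussian vector on a high-probability event (your coupling is the same device as the paper's conditioning on $\{\|\xi\|_\infty\le b_n\}$), the variance identification $\bQ_{jj}/\rho_n^2\to 1$ from Lemma~\ref{lem:rho_n}, and Slutsky-type bookkeeping for $\widehat{\sigma}/\sigma\to 1$ uniformly in $\|\theta_0\|_0\le s_0$. The only difference is cosmetic: you normalize by $\rho_n$ and pass through conditional characteristic functions, whereas the paper normalizes by $\sqrt{\bQ_{jj}}$ so the conditional law is exactly standard normal and then does an $\varepsilon$-argument directly on the CDFs.
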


A simple generalization of the above result is to derive asymptotic distribution of $\hth^u_G:= (\hth^u_j)_{j\in G}$ for finite set $G\subseteq[p]$.
\begin{lem}\label{lem:marginal-gen}
 Consider the setting of Theorem~\ref{thm:suff-cond-debiasing} and let $G=G(n)$ be a sequence of sets $G(n)\subseteq[p]$ with $|G(n)| = k$ fixed as $n,p\to\infty$.  Then, for all $x = (x_1,\dotsc, x_k)\in\mathbb{R}^k$ we have
\[
\lim_{n\to\infty}\sup_{\|\theta_0\|_{0}\le s_0}\left|\Prob\left(\frac{\hth^u_G-\theta_{0,G}}{\widehat{\sigma}\rho_n}\leq x\right)-\Phi_k(x)\right|=0\,,
\]
where $(a_1,\dotsc, a_k)\le (b_1,\dotsc, b_k)$ indicates that $a_i\le b_i$ for $i\in[k]$ and $\Phi_k(x) = \Phi(x_1)\dotsc \Phi(x_k)$.
\end{lem}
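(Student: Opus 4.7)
The plan is to extend the one-dimensional argument of Lemma~\ref{lem:marginal} to the $k$-dimensional setting using the decomposition $\hth^u = \theta_0 + Z + \Delta$ from Theorem~\ref{thm:debiased-rewrite}, the covariance estimate from Lemma~\ref{lem:rho_n}, and the bias control from Theorem~\ref{thm:suff-cond-debiasing}. Writing
\[
\frac{\hth^u_G - \theta_{0,G}}{\widehat{\sigma}\rho_n} \;=\; \frac{Z_G}{\widehat{\sigma}\rho_n} + \frac{\Delta_G}{\widehat{\sigma}\rho_n},
\]
the bias piece is handled directly: since $\|\Delta\|_\infty = o_p(\sigma\rho_n)$ uniformly in $\theta_0$ with $\|\theta_0\|_0\le s_0$ (the hypotheses of Theorem~\ref{thm:suff-cond-debiasing} depend only on $s_0$, not on the support of $\theta_0$), and since $\widehat{\sigma}/\sigma \to 1$ in probability uniformly by assumption, a union bound over the fixed-size set $G$ gives $\|\Delta_G\|_\infty/(\widehat{\sigma}\rho_n) = o_p(1)$.

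The main work is the noise term. Conditioning on $(\bX,\bR)$, Theorem~\ref{thm:debiased-rewrite} implies $Z_G \sim \mathcal{N}(0,\sigma_n^2 \bQ_{G,G})$ with $\bQ = \tfrac{1}{n^2}\bX^{*\top}\bR_1\bR_1^\top \bX^*$. By Lemma~\ref{lem:rho_n}, $|\bQ/\rho_n^2 - \bI_p|_\infty \to 0$ in probability, and because $|G|=k$ is fixed the $k\times k$ submatrix $M_n := \bQ_{G,G}/\rho_n^2$ converges to $\bI_k$ in probability in operator norm. Hence, conditional on $(\bX,\bR)$, the random vector $Z_G/(\sigma_n\rho_n)$ is $\mathcal{N}(0,M_n)$, and the multivariate Gaussian CDF $M\mapsto \Phi_M(x)$ is continuous at $M=\bI_k$. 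Taking expectations and applying bounded convergence,
\[
\Prob\!\left(\frac{Z_G}{\sigma_n\rho_n}\le x\right) = \mathbb{E}[\Phi_{M_n}(x)] \longrightarrow \Phi_k(x),
\]
and this convergence does not depend on $\theta_0$ since the conditional law of $Z$ given $(\bX,\bR)$ is $\theta_0$-free. To switch $\sigma_n$ to $\widehat{\sigma}$, I use that Assumption~\ref{ass2} (with $b_n/\sigma - \sqrt{2\log(2n)}\to\infty$) implies $\sigma_n/\sigma\to 1$, and combining with $\widehat{\sigma}/\sigma \to 1$ in probability gives $\sigma_n/\widehat{\sigma} \to 1$ in probability; Slutsky's theorem then yields $Z_G/(\widehat{\sigma}\rho_n) \toD \mathcal{N}(0,\bI_k)$.

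Combining the noise and bias pieces by another Slutsky argument gives pointwise (in $x$) convergence of $\Prob((\hth^u_G-\theta_{0,G})/(\widehat{\sigma}\rho_n)\le x)$ to $\Phi_k(x)$. Upgrading to uniform convergence over $\|\theta_0\|_0 \le s_0$ relies on the fact that every ingredient above (the conditional law of $Z$, the estimate on $\bQ$, the bound $\|\Delta\|_\infty=o_p(\sigma\rho_n)$, and the consistency of $\widehat{\sigma}$) holds uniformly in the support of $\theta_0$; and upgrading pointwise to uniform in $x$ is automatic because $\Phi_k$ is continuous and the limit has no atoms (a P\'olya-type argument). The main obstacle I expect is bookkeeping the interchange of the conditional-on-$(\bX,\bR)$ Gaussian convergence with the unconditional probability statement, i.e., formalizing the fact that a conditional Gaussian whose (random) covariance converges in probability to $\bI_k$ yields, unconditionally, a standard Gaussian limit; this is standard but needs a clean bounded-convergence argument applied entrywise to the CDF.
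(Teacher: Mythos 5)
Your overall architecture (split $\hth^u_G-\theta_{0,G}$ into noise plus bias, kill the bias via Theorem~\ref{thm:suff-cond-debiasing}, show $\bQ_{G,G}/\rho_n^2\to\bI_k$ from Lemma~\ref{lem:rho_n}, then pass from a conditional-on-$(\bX,\bR)$ Gaussian statement to the unconditional one and swap $\sigma$ for $\widehat\sigma$) is exactly the route the paper takes for the one-dimensional Lemma~\ref{lem:marginal} and intends for this lemma. However, there is one step that is false as written: you assert that, conditional on $(\bX,\bR)$, $Z_G\sim\mcN(0,\sigma_n^2\bQ_{G,G})$, citing Theorem~\ref{thm:debiased-rewrite}. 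That theorem only gives the conditional mean and covariance; it does not (and cannot) give Gaussianity, because under Assumption~\ref{ass2} the noise $\xi$ is a \emph{truncated} Gaussian, so $Z_G=\tfrac{1}{n}(\bX^{*\top}\bR_1\xi)_G$ is a linear image of truncated Gaussians and is not exactly Gaussian given $(\bX,\bR)$. Your own formula flags the inconsistency: if the conditional law were Gaussian it would have covariance $\sigma^2\bQ_{G,G}$, whereas the factor $\sigma_n^2<\sigma^2$ arises precisely from the truncation. Consequently the identity $\Prob(Z_G/(\sigma_n\rho_n)\le x)=\Expected[\Phi_{M_n}(x)]$, on which your bounded-convergence step rests, does not hold as stated.

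The missing ingredient is the truncation-comparison argument that the paper uses in the proof of Lemma~\ref{lem:marginal}: write $\xi\stackrel{d}{=}\xi'\mid\{\|\xi'\|_\infty\le b_n\}$ with $\xi'\sim\mcN(0,\sigma^2\bI_n)$, observe that with the untruncated noise the conditional law of $Z_G/(\sigma\rho_n)$ given $(\bX,\bR)$ is exactly $\mcN(0,\bQ_{G,G}/\rho_n^2)$, and bound the discrepancy between the two conditional CDFs by $2\,\Prob(\|\xi'\|_\infty>b_n)$, which vanishes since $b_n/\sigma-\sqrt{2\log(2n)}\to\infty$; the scaling mismatch is harmless because $\sigma_n/\sigma\to1$ under the same assumption. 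Once this step is inserted, the rest of your argument (continuity of $M\mapsto\Phi_M(x)$ at $\bI_k$, bounded convergence over the high-probability event $\{|\bQ_{G,G}/\rho_n^2-\bI_k|_\infty\le\varepsilon\}$, the $o_p(\sigma\rho_n)$ bias bound, consistency of $\widehat\sigma$, and the $\theta_0$-uniformity coming from the fact that the conditional law of $Z$ is $\theta_0$-free while the bias and $\widehat\sigma$ bounds are uniform over $\|\theta_0\|_0\le s_0$) goes through and matches the paper's intended extension of Lemma~\ref{lem:marginal}; note also that the lemma only requires the statement for each fixed $x$, so your P\'olya-type upgrade, while fine, is not needed.
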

In words, Lemma~\ref{lem:marginal-gen} provides distributional characterization for low-dimensional projections of $\hth^u$. The proof of Lemma~\ref{lem:marginal-gen} is similar to 
 Lemma~\ref{lem:marginal} and hence we omit it.

\subsection{Power and FDR Analysis}\label{sec:Power-FDR-analysis}

Under the knockoff setting, we replace $p$ by $2p$ and $\theta_0$ by $[\theta_0;{{\bf0}}_{p\times 1}]$ because of the additional knockoff variables. Recall the feature statistics from \eqref{eq:W-feat-stat}, $W_j = f(\hth^u_j)-f(\hth^u_{j+p})$, $j\in[p]$, with $\hth^u$ given by~\eqref{eq:lasso-debias}. As discussed in Sections~\ref{sec:feature-statistics} and \ref{sec:FDR-control}, the FDR control holds for general $f$. However, to obtain significant power in detecting non-null variables, we need to make assumptions so that a large value of $W_j$ provides evidence against the null hypothesis that $\theta_{0,j}=0$. Furthermore, since $f$ could depend on parameters $n$, $p$, etc., we introduce a second argument $a$ to encode this dependence. 
\begin{assumption}\label{ass:f-feat-stat}
    Given $\hth^u$ given by~\eqref{eq:lasso-debias}, for $j\in[p]$, the feature statistics are defined as $W_j:=f(\hth^u_j;a)-f(\hth^u_{j+p};a)$, for some $f:\mbR^2\to\mbR_{\ge 0}$. For any $a\in \mbR$, $f(\cdot;a)$ satisfies the following
\begin{enumerate}
    \item $f(\cdot;a)$ is $L$-Lipschitz for a fixed constant $L>0$.
    \item $k\cdot f(x;a)=f(kx;ka)$ for any $k>0$.
    \item $f(x;a)=f(-x;a)$.
    \item $f(\cdot;a)$ is non-decreasing on $[0,\infty)$.
    \item $f(x;a)\ge c_ax$ when $x\ge c_x$ for some $c_a>0$, $c_x>0$.
\end{enumerate}
\end{assumption}

We next construct an estimator of the false discovery proportion (FDP) as a function of the threshold $t$:
\begin{equation}\label{eq:FDP-hat}
    \widehat{\FDP}(t) = \frac{1+|\{j:W_j\leq -t\}|}{|\{j:W_j\geq t\}|\vee 1} \,.
\end{equation}
The selection threshold $\wht$ is chosen in a data-dependent way. Namely, given the target False Discovery Rate (FDR) level $q\in(0,1)$, we set:
\[
    \widehat{t} := \min \left\{t\in \mathcal W: \widehat{\FDP}(t)\leq q\right\},\phantom{ss}\hS_0=\left\{j\in[p]:W_j\geq\wht\, \right\}\,,
\]
where $\wht=+\infty$ when the above set is empty and $\mathcal{W}=\{|W_j|:j\in[p]\}\setminus{\{0\}}$. Consequently, the power and FDP for knockoff procedure are data-dependent as well. 

We use the distributional characterization of the debiased private Lasso to derive analytical predictions for FDR and power as a function of threshold $t$, namely

\begin{equation}\label{eq:FDP_t}
    \FDP(t) = \frac{|\{j\in[p]:\theta_{0,j}=0,W_j\geq t\}|}{|\{j\in[p]:| W_j\geq t\}|}\,;
\end{equation}
\begin{equation}\label{eq:power_t}
    \Power(t) = \frac{|\{j\in[p]:\theta_{0,j}\neq0,W_j\geq t\}|}{s_0}\,.
\end{equation}
It is worth noting that $\widehat{\FDP}(t)$ is an observable random variable, while $\FDP(t)$ is not observable (since $\theta_0$ is unknown).

In the asymptotic setting and under the conditions of Theorem \ref{thm:suff-cond-debiasing}, we have $\hth^u \approx \theta_0+\sigma\rho_n Z$, where $Z\sim \mcN({{\bf{0}}},\bI_{2p})$. Since $f$ is known, we have
\begin{align*}
W_j = f(\hth^u_j;a)-f(\hth^u_{j+p};a)\approx f(\theta_{0,j}+\sigma\rho_nZ_j;a)- f(\sigma\rho_nZ_{j+p};a)\,.
\end{align*}
Such approximation helps us to predict the power and FDP of a selection even when the power and FDP are not observable. The next theorem formally provides these predictions by characterizing the asymptotic behavior of the power and FDP. By assuming $p,s_0,r,\epsilon,\delta$ are all functions of $n$, we have the following result.
\begin{thm}\label{thm:utility-limit}
    Consider the conditions of Theorem 4.2. Recall $\rho_n$ from~\eqref{eq:rho_n}. Assume $s_0(n)/p(n)\to c_0$ for some constant $c_0\in(0,1)$, and for all $j\in S_0$, $\theta_{0,j}\geq \mu_n$ for some positive sequence $\{\mu_n\}_{n\in\mathbb N}$ that $\lim_{n\to\infty}\mu_n/(\sigma\rho_n)=\mu_0$. Suppose the test statistics $W_j = f(\hth^u_j;\sigma\rho_n)-f(\hth^u_{j+p};\sigma\rho_n)$ for some $f$ satisfies Assumption~\ref{ass:f-feat-stat}. Then for any constant $t_0>0$, the following bounds hold true with high probability:
    \begin{align}\label{eq:FDP-hat-limit}
        &\underset{n\to\infty}{{\rm limsup}}\;\widehat{\FDP}(\sigma\rho_nt_0) \leq \widehat{\alpha}(\mu_0,t_0)\,, \text{ where } \\
        &\widehat{\alpha}(\mu_0,t_0):= \frac{c_0\Prob(f(\mu_0+Z;1)-f(Z';1)\leq -t_0)+(1-c_0)\Prob(f(Z;1)-f(Z';1)\leq -t_0)}{c_0\Prob(f(\mu_0+Z;1)-f(Z';1)\geq t_0)+(1-c_0)\Prob(f(Z;1)-f(Z';1)\geq t_0)}\nonumber\,.
    \end{align}
    In addition,
    \begin{align}\label{eq:FDP-limit}
        &\underset{n\to\infty}{{\rm limsup}}\;\FDP(\sigma\rho_nt_0) \leq \alpha(\mu_0,t_0), \text{ where }\\
        &\alpha(\mu_0,t_0):=  \frac{(1-c_0)\Prob(f(Z;1)-f(Z';1)\geq t_0)}{c_0\Prob(f(\mu_0+Z;1)-f(Z';1)\geq t_0)+(1-c_0)\Prob(f(Z;1)-f(Z';1)\geq t_0)}\nonumber\,.
    \end{align}
    For the power, we have 
    \begin{equation}\label{eq:power-limit}
        \underset{n\to\infty}{{\rm liminf}}\;\Power(\sigma\rho_nt_0) \geq \beta(\mu_0,t_0),\quad\quad 
        \beta(\mu_0,t_0):= \Prob(f(\mu_0+Z;1)-f(Z';1)\geq t_0)\,,
    \end{equation}
    where $Z,Z'$ are independent Gaussian $\mcN(0,1)$.
\end{thm}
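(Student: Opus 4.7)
The plan is to reduce all three quantities to empirical averages of i.i.d.-like Gaussian-transformed events, then evaluate those averages via the distributional approximation of $\hth^u$, and finally use stochastic monotonicity in the signal strength to replace $\theta_{0,j}/(\sigma\rho_n)$ by its lower envelope $\mu_0$. First, I invoke the scale-invariance $k\cdot f(x;a)=f(kx;ka)$ with $k = 1/(\sigma\rho_n)$ to rewrite
$$
\frac{W_j}{\sigma\rho_n} = f\!\left(\tfrac{\hth^u_j}{\sigma\rho_n};1\right) - f\!\left(\tfrac{\hth^u_{j+p}}{\sigma\rho_n};1\right),\quad j\in[p].
$$
Applying Theorem~\ref{thm:debiased-rewrite} in the $2p$-dimensional augmented setting with true parameter $[\theta_0;\mathbf{0}_p]$ decomposes $\hth^u = [\theta_0;\mathbf{0}] + Z + \Delta$, with $Z\mid \bX,\bR \sim \mcN(\mathbf{0},\sigma_n^2\bQ)$. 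By Theorem~\ref{thm:suff-cond-debiasing}, Lemma~\ref{lem:rho_n}, and Assumption~\ref{ass2} (which gives $\sigma_n/\sigma\to 1$), there is an event of probability tending to $1$ on which $\|\Delta\|_\infty = o(\sigma\rho_n)$, $|\bQ/\rho_n^2 - \bI_{2p}|_\infty = o(1)$, and $\sigma_n = (1+o(1))\sigma$. The $L$-Lipschitz property of $f(\cdot;1)$ then yields the uniform approximation $\max_j |W_j/(\sigma\rho_n) - \widetilde W_j| \overset{p}{\to} 0$, where $\widetilde W_j := g(\theta_{0,j}/(\sigma\rho_n) + \zeta_j) - g(\zeta_{j+p})$, $g(x):=f(x;1)$, and $\zeta\in\mbR^{2p}$ is centered Gaussian with covariance $\bQ/\rho_n^2$.

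Next I show, for any fixed $t_0 > 0$, that each of the four empirical proportions $p^{-1}|\{j\in S_0 : \widetilde W_j \gtrless \pm t_0\}|$ and $p^{-1}|\{j\in S_0^c : \widetilde W_j \gtrless \pm t_0\}|$ concentrates around its conditional mean given $\bX,\bR$. Since $|\bQ/\rho_n^2 - \bI_{2p}|_\infty = o(1)$, the pairwise correlations between distinct coordinates of $\zeta$ are uniformly $o(1)$; a standard bound for indicator covariances of a jointly Gaussian pair then gives $\mathrm{Cov}(\mathbbm{1}\{\widetilde W_i\ge t_0\},\mathbbm{1}\{\widetilde W_j\ge t_0\}) = o(1)$ uniformly for $i\ne j$, so the variance of each empirical proportion is $o(1)$. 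The uniform approximation of Step~1 transfers concentration to the original statistics $W_j/(\sigma\rho_n)$.

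For Step~3, I pass from the true signal $\theta_{0,j}/(\sigma\rho_n) \ge \mu_n/(\sigma\rho_n) \to \mu_0$ to the envelope $\mu_0$ via stochastic monotonicity. Symmetry $g(-x)=g(x)$ and monotonicity of $g$ on $[0,\infty)$ give, for $c>0$ and $\zeta\sim\mcN(0,1)$,
$$
\Prob(|a+\zeta|\ge c) = 1-\Phi(c-a)+\Phi(-c-a),
$$
whose derivative in $a$ equals $\phi(c-a)-\phi(c+a)\ge 0$ for $a\ge 0$; hence $g(a+\zeta)$ is stochastically non-decreasing in $a\ge 0$. Conditioning on an independent $\zeta'\sim\mcN(0,1)$ then shows $a\mapsto \Prob(g(a+\zeta)-g(\zeta')\ge t_0)$ is non-decreasing and $a\mapsto \Prob(g(a+\zeta)-g(\zeta')\le -t_0)$ is non-increasing on $[0,\infty)$. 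Applying these inequalities with $a=\theta_{0,j}/(\sigma\rho_n)$ for $j\in S_0$ and letting $n\to\infty$ bounds the signal-side proportions in the direction needed: for~\eqref{eq:power-limit}, the signal count divided by $s_0$ is asymptotically $\ge \Prob(g(\mu_0+\zeta)-g(\zeta')\ge t_0)$; for~\eqref{eq:FDP-limit}, the denominator divided by $p$ is asymptotically $\ge c_0\Prob(g(\mu_0+\zeta)-g(\zeta')\ge t_0)+(1-c_0)\Prob(g(\zeta)-g(\zeta')\ge t_0)$ while the numerator (over nulls only) converges to $(1-c_0)\Prob(g(\zeta)-g(\zeta')\ge t_0)$; and for~\eqref{eq:FDP-hat-limit}, the same denominator bound combines with the numerator upper bound $c_0\Prob(g(\mu_0+\zeta)-g(\zeta')\le -t_0)+(1-c_0)\Prob(g(\zeta)-g(\zeta')\le -t_0)$ (plus $1/p \to 0$).

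The main obstacle is Step~2. The feature statistics $\widetilde W_j$ share the common Gaussian vector $\zeta$ and are therefore not independent, while Lemma~\ref{lem:marginal-gen} provides joint-Gaussian approximation only for collections of fixed size. I expect to resolve this by a variance computation that relies crucially on the entrywise bound $|\bQ/\rho_n^2 - \bI|_\infty=o(1)$ (rather than an operator-norm bound), combined with the elementary fact that, for indicators of events depending on different coordinates of a jointly Gaussian vector, the covariance is controlled by the pairwise correlation. A second, more delicate, point is the uniform-in-$j$ transfer of the approximation $W_j/(\sigma\rho_n)\approx\widetilde W_j$ through the non-smooth indicators $\mathbbm{1}\{\cdot\ge t_0\}$; this is handled by inflating/deflating the threshold $t_0$ by a small $\varepsilon>0$, using Lipschitz-ness of $f$ and continuity of the limit distribution to absorb the slack, and finally letting $\varepsilon\to 0$.
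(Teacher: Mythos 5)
Your proposal follows essentially the same route as the paper's proof: decompose $\hth^u$ via Theorem~\ref{thm:debiased-rewrite}, use the Lipschitz and scale properties of $f$ to absorb $\|\Delta\|_\infty=o_p(\sigma\rho_n)$ into an $\varepsilon$-inflated threshold, show the empirical proportions concentrate by proving pairwise indicator covariances vanish (the paper's Lemma~\ref{lem:h'j-Cov}, which handles the shared randomness by conditioning on $\bQ$, canonical correlation, and a Pinsker/total-variation bound on the conditional means), evaluate the means via the bivariate Gaussian limit of Lemma~\ref{lem:marginal-gen}, and replace $\theta_{0,j}/(\sigma\rho_n)$ by $\mu_0$ through exactly the monotonicity argument of Lemma~\ref{lem:prob-bounds-mu}. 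The one point you flag as delicate---dependence through the common Gaussian vector with random covariance $\bQ$---is precisely what the paper resolves by the law of total variance after conditioning on $\bQ$, so your plan is sound and matches the paper's argument.
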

The proof of Theorem \ref{thm:utility-limit} is given in \ref{pf:utility-limit}. Note that $\alpha(\mu_0,t_0)$, $\widehat{\alpha}(\mu_0,t_0)$ and $\beta(\mu_0,t_0)$ are deterministic functions, where in the notation we made the dependence on the signal strength $\mu_0$, the sparsity level $c_0$ and the threshold $t_0$ explicit. It is also worth noting that the privacy parameters $(\epsilon, \delta)$ appear in the above bounds through $\rho_n$, given by~\eqref{eq:rho_n}, which involves $w$ that depends on $(\epsilon, \delta)$ as in~\eqref{eq:w-2}. 

Note that the selection threshold $t_n$ is data-dependent, which is designed to control the FDR under a pre-determined level $q$, as shown by Theorem~\ref{thm:FDR}. Corollary~\ref{cor:power1} discusses conditions under which the knockoff procedure can achieve asymptotic power one and additional conditions to achieve zero false discovery proportion. The proof of Corollary~\ref{cor:power1} is given in Appendix~\ref{pf:power1}. 

\begin{cor}\label{cor:power1}
Consider the conditions of Theorem~\ref{thm:suff-cond-debiasing}. Recall $\rho_n$ from~\eqref{eq:rho_n}. Assume $s_0(n)/p(n)\to c_0$ for some constant $c_0\in(0,1)$, and for all $j\in S_0$, $|\theta_{0,j}|\geq \mu_n$ for some positive sequence $\{\mu_n\}_{n\in\mathbb N}$. Suppose the test statistics $W_j = f(\hth^u_j;\sigma\rho_n)-f(\hth^u_{j+p};\sigma\rho_n)$ for some $f$ satisfies the conditions in Assumption~\ref{ass:f-feat-stat}. If $\liminf_{n\to\infty}\mu_n/(\sigma\rho_n)=\infty$, we have the following:  
\begin{enumerate}
    \item For any $q\in(0,1)$, the sequence of selection threshold is defined as 
    \[t_n:=\min\{t\in\mathcal{W}:\widehat{\FDP}(t)\le q\}\,,\] 
    where $\wht=+\infty$ when the above set is empty and $\mathcal{W}=\{|W_j|:j\in[p]\}\setminus{\{0\}}$. Then $t_n$ satisfies that 
    \begin{enumerate}
        \item For any $n$, $\FDR(t_n)\le q$.
        \item $\lim_{n\to\infty} \Power(t_n) =1$
    \end{enumerate}
    \item For any sequence of thresholds $\{t_n\}_{n\in \mathbb{N}}$ such that \[\liminf_{n\to\infty}\frac{t_n}{\sigma\rho_n} = \infty\quad \text{ and }\quad  \lim_{n\to\infty}\frac{t_n}{\mu_n}=0,\]
    we have
    \[
    \lim_{n\to\infty} \FDP(t_n) =0,\quad\quad 
    \lim_{n\to\infty} \Power(t_n) =1\,.
    \]
\end{enumerate}
\end{cor}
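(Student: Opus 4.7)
The plan is to combine Theorem~\ref{thm:FDR} with the asymptotic formulas of Theorem~\ref{thm:utility-limit}, extended to the regime $\mu_n/(\sigma\rho_n)\to\infty$. Claim~1(a) is immediate from Theorem~\ref{thm:FDR}, since $t_n$ is the standard knockoff cutoff for target level $q$. For both power statements it is convenient to invoke the scaling identity $f(x;\sigma\rho_n)=\sigma\rho_n\,f(x/(\sigma\rho_n);1)$ from Assumption~\ref{ass:f-feat-stat}.2, so that $W_j=\sigma\rho_n[f(\hth^u_j/(\sigma\rho_n);1)-f(\hth^u_{j+p}/(\sigma\rho_n);1)]$, with the rescaled arguments satisfying $\hth^u_j/(\sigma\rho_n)\approx \theta_{0,j}/(\sigma\rho_n)+Z_j$ and $\hth^u_{j+p}/(\sigma\rho_n)\approx Z_{j+p}$ by Theorem~\ref{thm:debiased-rewrite}, Lemma~\ref{lem:rho_n}, and Lemma~\ref{lem:marginal}, for independent standard Gaussians $Z_j, Z_{j+p}$.

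For Claim~1(b), I first show that $t_n\le\sigma\rho_n\,t_0^*$ with high probability, for a deterministic $t_0^*=t_0^*(q)$. Running the argument behind Theorem~\ref{thm:utility-limit} with the diverging $\mu_0(n):=\mu_n/(\sigma\rho_n)\to\infty$ in place of the fixed $\mu_0$, and invoking Assumption~\ref{ass:f-feat-stat}.5 to obtain $f(\mu_0(n)+Z;1)\to\infty$ in probability, the bound $\widehat{\alpha}(\mu_0,t_0)$ collapses to
\[
\limsup_{n\to\infty}\widehat{\FDP}(\sigma\rho_n t_0)\;\le\;\frac{(1-c_0)\gamma(t_0)}{c_0+(1-c_0)\gamma(t_0)},\qquad\gamma(t_0):=\Prob\bigl(f(Z;1)-f(Z';1)\ge t_0\bigr),
\]
where I have used the symmetry $\Prob(f(Z;1)-f(Z';1)\le-t_0)=\gamma(t_0)$ since $(Z,Z')\stackrel{d}{=}(Z',Z)$. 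Because $f$ is Lipschitz, $f(Z;1)-f(Z';1)$ is sub-Gaussian, so $\gamma(t_0)\downarrow 0$ as $t_0\uparrow\infty$, and I pick $t_0^*$ large enough that the right-hand side is strictly less than $q$. This forces $t_n\le\sigma\rho_n t_0^*$ with high probability. Since $\Power$ is non-increasing in its threshold,
\[
\Power(t_n)\;\ge\;\Power(\sigma\rho_n t_0^*)\;\ge\;\beta(\mu_0(n),t_0^*)=\Prob\bigl(f(\mu_0(n)+Z;1)-f(Z';1)\ge t_0^*\bigr)\longrightarrow 1,
\]
again by Assumption~\ref{ass:f-feat-stat}.5 combined with the Lipschitz bound on $f(Z';1)$.

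For Claim~2, I would argue directly from the rescaled representation of $W_j$. For $j\in S_0$, Assumption~\ref{ass:f-feat-stat}.5 gives $f(\hth^u_j/(\sigma\rho_n);1)\ge c_1(\mu_n/(\sigma\rho_n)-O_p(1))$ whenever the argument exceeds $c_x$, which holds for all large $n$ because $\mu_n/(\sigma\rho_n)\to\infty$, while the Lipschitz property yields $f(\hth^u_{j+p}/(\sigma\rho_n);1)=O_p(1)$. Multiplying by $\sigma\rho_n$ gives $W_j\ge c_1\mu_n(1-o_p(1))$, which exceeds $t_n$ since $t_n/\mu_n\to 0$; an empirical-measure argument analogous to Theorem~\ref{thm:utility-limit}, combined with the uniform bias control $\|\Delta\|_\infty=o_p(\sigma\rho_n)$ from Theorem~\ref{thm:suff-cond-debiasing}, yields $\Power(t_n)\to 1$. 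For the FDP, for $j\in S_0^c$ we have $W_j=O_p(\sigma\rho_n)$, so $\Prob(W_j\ge t_n)\to 0$ since $t_n/(\sigma\rho_n)\to\infty$; averaging gives $|\{j\in S_0^c:W_j\ge t_n\}|=o_p(p)$, while the denominator is at least $|\{j\in S_0:W_j\ge t_n\}|=s_0\,\Power(t_n)\ge c_0 p(1-o_p(1))$, hence $\FDP(t_n)=o_p(1)$.

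The main technical obstacle is the diverging $\mu_0(n)=\mu_n/(\sigma\rho_n)$, since Theorem~\ref{thm:utility-limit} is stated for a fixed constant $\mu_0$. The cleanest workaround is to re-examine its proof and verify that the empirical-cdf convergence on the signal set is uniform in the signal strength $\mu_0$, which follows from the Lipschitz property of $f$ (ensuring the limiting probabilities are continuous in $\mu_0$). A second option is a sandwiching argument: for any fixed $M$, lower-bound $\Power(\sigma\rho_n t_0^*)$ by what it would be if every signal had magnitude exactly $M\sigma\rho_n$ (using monotonicity of $f$ from Assumption~\ref{ass:f-feat-stat}.4), apply Theorem~\ref{thm:utility-limit} with $\mu_0=M$, and then let $M\to\infty$ after $n\to\infty$. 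A secondary point is that the empirical-measure argument in Claim~2 requires $\ell_\infty$-control of the bias $\Delta$ uniformly across all $2p$ coordinates, but this is delivered directly by Theorem~\ref{thm:suff-cond-debiasing}.
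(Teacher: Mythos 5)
Your proposal follows essentially the same route as the paper: part 1(a) from Theorem~\ref{thm:FDR}; part 1(b) by exhibiting a deterministic threshold $\sigma\rho_n t_*$ at which $\widehat{\FDP}$ asymptotically drops below $q$ (using that the signal term in $\widehat{\alpha}$ vanishes as $\mu_n/(\sigma\rho_n)\to\infty$) and then comparing $\Power(t_n)$ with $\Power(\sigma\rho_n t_*)$; part 2 by showing the null statistics are $O_p(\sigma\rho_n)\ll t_n$ while the signal statistics exceed $c_1\mu_n(1-o_p(1))\gg t_n$, which is exactly the content of the paper's bounds $\alpha(\mu_n/(\sigma\rho_n),t_n/(\sigma\rho_n))\to 0$ and $\beta(\mu_n/(\sigma\rho_n),t_n/(\sigma\rho_n))\to 1$. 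Your flagged concern about applying Theorem~\ref{thm:utility-limit} with diverging $\mu_0(n)$ is legitimate; the paper simply reruns the theorem's argument with $\mu_n/(\sigma\rho_n)$ in place of a fixed $\mu_0$ (the monotonicity in Lemma~\ref{lem:prob-bounds-mu} is what makes this harmless), and your sandwich with a fixed level $M$ followed by $M\to\infty$ is a valid alternative way to make the same point.

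The one step stated imprecisely is the inference ``$\widehat{\FDP}(\sigma\rho_n t_0^*)\le q$ w.h.p.\ forces $t_n\le \sigma\rho_n t_0^*$ w.h.p.'' This does not follow, because $t_n$ is by definition the smallest element of the discrete set $\mathcal{W}=\{|W_j|\}\setminus\{0\}$ satisfying the constraint, and $\sigma\rho_n t_0^*$ need not belong to $\mathcal{W}$; it can happen that every qualifying element of $\mathcal{W}$ exceeds $\sigma\rho_n t_0^*$, in which case $t_n>\sigma\rho_n t_0^*$ and your monotonicity step would point the wrong way. The paper handles this by passing to $T_n=\{j:|W_j|\ge\sigma\rho_n t_*\}$ and the smallest such statistic $|W_k|$: since no statistic lies in $[\sigma\rho_n t_*,|W_k|)$, one has $\widehat{\FDP}(|W_k|)=\widehat{\FDP}(\sigma\rho_n t_*)\le q$ and $\Power(|W_k|)=\Power(\sigma\rho_n t_*)$, so $t_n\le|W_k|$ and $\Power(t_n)\ge\Power(\sigma\rho_n t_*)$ in all cases. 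With that one-line repair (either $t_n\le\sigma\rho_n t_0^*$, or $t_n$ equals the smallest statistic above it and the two powers coincide), your argument for 1(b) goes through and matches the paper's.
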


Lastly, we consider a special case of the feature statistics class---Lasso Coefficient Difference (LCD). Let $\hth^* $ be the private Lasso estimate defined in \eqref{eq:lasso-JL}. Denoted by $\eta$ the mapping recovering the private Lasso estimate $\hth^*$ from the debiased private Lasso $\hth^u$ in Lemma~\ref{lem:function-of-debiased}. Namely, for any $j\in[2p]$,
\[
\hth^*_j = \eta(\hth^u_j;\lambda):=\frac{1}{1+ \frac{w^2}{n}}\cdot\sign(\hth^u_j)\cdot\left(|\hth^u_j|-\lambda\right)_+\,.
\]
Recall that under the setting of Theorem~\ref{thm:suff-cond-debiasing}, $\lambda$ is of the order $\sigma\max\{\sqrt{\log p/n},\sqrt{(\log p)^3/r}\}$. The following lemma shows that under the setting of Theorem~\ref{thm:suff-cond-debiasing}, $\lambda$ dominates $\sigma\rho_n$ asymptotically. The proof of Lemma~\ref{lem:lambda-bigger-than-rho_n} is given in Appendix~\ref{pf:lambda-bigger-than-rho_n}.
\begin{lem}\label{lem:lambda-bigger-than-rho_n}
    Consider the conditions of Theorem 4.2. Recall that 
    \[
    \lambda = C_\lambda\cdot\sigma\cdot\max\left\{\sqrt{\frac{\log p}{n}},\sqrt{\frac{(\log p)^3}{r}}\right\}\,,\quad\rho_n^2 = \frac{n+r+1}{n r}+\frac{w^2}{nr}\,.
    \]
    for some constant $C_\lambda>0$. Assume $ \lim_{n\to\infty}s_0(n)/p(n) = c_0$ for some $c_0\in(0,1)$ and $\sigma=\Theta(1)$, then we have $w^2=o(n)$ and $\sigma\rho_n=o(\lambda)$.
\end{lem}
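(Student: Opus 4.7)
I prove the two claims in the order stated, deducing the second from the first. The second is a short computation; the first requires a case analysis exploiting the explicit form $w^2 \asymp p\sqrt{r}/\epsilon$ coming from Algorithm~\ref{algo:JLT}.

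\paragraph{Deriving $\sigma\rho_n = o(\lambda)$ from $w^2 = o(n)$.}
If $w^2 = o(n)$, then
\[
\rho_n^2 \;=\; \frac{n+r+1+w^2}{nr} \;=\; O\!\left(\tfrac{1}{n}+\tfrac{1}{r}\right),
\]
while $\lambda^2 = \Theta\bigl(\sigma^2\max\{\log p/n,\,(\log p)^3/r\}\bigr)$. Dividing and using $\sigma = \Theta(1)$ and $p\to\infty$ gives
\[
\frac{(\sigma\rho_n)^2}{\lambda^2} \;=\; O\!\left(\frac{1/n+1/r}{\max\{\log p/n,\,(\log p)^3/r\}}\right) \;=\; O(1/\log p) \;=\; o(1).
\]

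\paragraph{Proving $w^2 = o(n)$ (the main step).}
Let $M := \max\{1/\sqrt{n},\,\log p/\sqrt{r}\}$. Substituting $\sqrt{s_0}=\Theta(\sqrt{p})$ into the second term of the denominator in Condition~2 of Theorem~\ref{thm:suff-cond-debiasing} yields $w^2 \le C\,n M \sqrt{r/(p\log p)}$. In \emph{Case A} ($r \le n(\log p)^2$, so $M = \log p/\sqrt{r}$), this collapses to $w^2 \le C n\sqrt{\log p/p} = o(n)$ since $p\to\infty$, and we are done. In \emph{Case B} ($r \ge n(\log p)^2$, so $M = 1/\sqrt{n}$), the two terms of Condition~2 simultaneously give $w^2 \le C\sqrt{nr/(p\log p)}$ and $w^2 \le Cr/(p(\log p)^2)$. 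Plugging the explicit expression $w^2 \asymp p\sqrt{r}/\epsilon$ from Algorithm~\ref{algo:JLT} (valid since $B^2 = O(p)$ under $\sigma=\Theta(1)$) into both bounds forces $p^3\log p \lesssim n\epsilon^2$ and $r \gtrsim p^4(\log p)^4/\epsilon^2$, which combine to force
\[
\frac{w^2}{n} \;\gtrsim\; \frac{p^3(\log p)^2/\epsilon^2}{n} \;\gtrsim\; \log p \;\to\;\infty.
\]
I then claim this contradicts Condition~4 of Theorem~\ref{thm:suff-cond-debiasing}. Indeed, with $w^2/n\to\infty$ we have $\phi_w\sim w^2/n$, $\kappa_n\sim w^2/n$, and (using $w^2 = o(r)$ together with $r\ge n$) $\rho_n^2 \sim 1/n$. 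Condition~4 then reduces, via its $\sqrt{n/\log p}$ sub-term in the minimum, to $w^2/\sqrt{\log p} = o(\sqrt{n/\log p})$, i.e.\ $w^2 = o(\sqrt{n})$, contradicting $w^2\gg n$. Hence only Case~A can occur.

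\paragraph{Main obstacle.}
The delicate work is entirely in Case~B: one must insert the explicit expression for $w^2$ into both Conditions~2 and~4 of Theorem~\ref{thm:suff-cond-debiasing}, carefully track the dependence on $\epsilon$ (and absorb the $\sqrt{\log(1/\delta)}$ factors), and identify the specific sub-term of the minimum in Condition~4 that yields the contradiction. Once $w^2=o(n)$ is in hand, the passage to $\sigma\rho_n=o(\lambda)$ is a one-line comparison.
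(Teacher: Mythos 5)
Your reduction of the second claim to the first is correct and matches the paper's final step, and your Case A is fine: with $\sqrt{s_0}=\Theta(\sqrt p)$, the second term in the denominator of Condition~2 of Theorem~\ref{thm:suff-cond-debiasing} indeed gives $w^2\le Cn\sqrt{\log p/p}=o(n)$ when $r\le n(\log p)^2$. The genuine gap is in Case B. From $w^2\gtrsim p\sqrt r/\epsilon$ and your bound $w^2\le C\sqrt{nr/(p\log p)}$ you correctly deduce $n\epsilon^2\gtrsim p^3\log p$, i.e.\ a \emph{lower} bound on $n$; but your concluding display $\frac{w^2}{n}\gtrsim\frac{p^3(\log p)^2/\epsilon^2}{n}\gtrsim\log p$ needs the \emph{reverse} inequality $n\epsilon^2\lesssim p^3\log p$. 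So the claimed divergence $w^2/n\to\infty$ does not follow, and the contradiction with Condition~4 is never reached. In fact the dichotomy itself is untenable: Case B is compatible with all four conditions while $w^2=o(n)$ holds there too. For instance $n=p^4$, $r=p^6$, $\epsilon=p^2$, $s_0=c_0p$, $\log(1/\delta)=O(\log p)$ gives $w^2\asymp p^2\sqrt{\log p}=o(n)$, $r\ge n(\log p)^2$, and one can check Conditions~1--4 all hold; hence ``only Case A can occur'' is false, and no argument can rule Case B out.

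The paper avoids the case split entirely. It isolates the $(w^2/n)^2$ contribution to $\kappa_n^2$ in Condition~4, keeps only the first two terms of the minimum, and obtains $\frac{(w^2/n)^2}{1+w^2/n}=o\big(\log p\cdot\sqrt{(n+r+1+w^2)/(nr)}\cdot\sigma\big)$, from which $w^2/n=o(1)$ follows directly with $\sigma=\Theta(1)$; no use of the explicit formula $w^2\asymp p\sqrt r/\epsilon$ or of Condition~2 is needed. Your Case B can be repaired in the same spirit: assuming $w^2\ge cn$, your bound (ii) gives $w^2=o(r)$, hence $\rho_n^2=\Theta(1/n)$ and $\lambda\asymp\sigma\sqrt{\log p/n}$, so the left side of Condition~4 is at least of order $w^2/(\sigma^2\sqrt{\log p})$, while its $\sqrt{n/\log p}$ sub-term forces $w^2=o(\sigma^2\sqrt n)$, a contradiction -- i.e.\ the contradiction comes from Condition~4 alone, not from the $\epsilon$-bookkeeping you attempted. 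As written, however, the proof of $w^2=o(n)$ is incomplete.
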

As discussed in the lemma above, $\lambda$ does not scale with $\sigma\rho_n$, hence $\eta(\cdot; \lambda)$ does not fit into the regime of Theorem~\ref{thm:utility-limit} where the function is of the form $f(\hth^u_j;\sigma\rho_n)$. Nonetheless, in the next corollary we show that by adopting the proof of Theorem~\ref{thm:utility-limit}, one can obtain sufficient conditions for the power of the test to reach asymptotically perfect power and zero FDP.

\begin{cor}\label{cor:utility-LCD}
    Consider the conditions of Theorem 4.2. Assume $s_0(n)/p(n)\to c_0$ for some constant $c_0\in(0,1)$, and for all $j\in S_0$, $\theta_{0,j}\geq \mu_n$ for some positive sequence $\{\mu_n\}_{n\in\mathbb N}$. Suppose the test statistics $W_j = |\hth^*_j|-|\hth^*_{j+p}|$ where $\hth^*$ is the private Lasso estimate defined in \eqref{eq:lasso-JL}. For any positive sequence $\{t_n\}_{n\in\mathbb N}$ such that $\liminf_{n\to \infty} t_n/(\sigma\rho_n)>0$, then
    \begin{align*}
        \lim_{n\to\infty}\widehat \FDP(t_n)=0\,,\quad\lim_{n\to\infty}\FDP(t_n) = 0\,.
    \end{align*}
    In addition, if there exists $\varepsilon>0$ such that $ \liminf_{n\to\infty}\frac{\mu_n-(1+\varepsilon)t_n}{\lambda}>1$,
    \[
    \lim_{n\to\infty}\Power(t_n) = 1
    \]
\end{cor}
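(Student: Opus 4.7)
The idea is to combine Lemma~\ref{lem:function-of-debiased} with the debiased decomposition of Theorem~\ref{thm:debiased-rewrite} and show uniformly that (i) the soft-threshold at level $\lambda$ annihilates every null coordinate of $\hth^u$, and (ii) the signal coordinates clear the threshold $\lambda+t_n\phi_w$. By Lemma~\ref{lem:function-of-debiased},
\[
W_j \;=\; \frac{1}{\phi_w}\bigl[(|\hth^u_j|-\lambda)_+ \,-\, (|\hth^u_{j+p}|-\lambda)_+\bigr],\qquad \phi_w = 1 + \tfrac{w^2}{n},
\]
so each $W_j$ is controlled entirely by whether $|\hth^u_j|$ or $|\hth^u_{j+p}|$ exceeds $\lambda$. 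Theorem~\ref{thm:debiased-rewrite}, together with the bias bound of Theorem~\ref{thm:suff-cond-debiasing} and the variance computation of Lemma~\ref{lem:rho_n}, gives $\hth^u = \theta_0 + Z + \Delta$, where $\|\Delta\|_\infty = o_p(\sigma\rho_n)$ and, conditional on $(\bX,\bR)$, $Z$ is (approximately) centered Gaussian with covariance $\sigma_n^2\bQ = \sigma^2\rho_n^2 \bI_{2p}(1+o_p(1))$.

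\textbf{Null coordinates and FDP.} A Gaussian maximal inequality combined with a union bound yields $\max_{j\in[2p]}|Z_j| \le \sigma\rho_n\sqrt{2\log(2p)}(1+o_p(1))$ w.h.p. Taking the absolute constant $C_\lambda$ in $\lambda = C_\lambda\sigma\max\{\sqrt{\log p/n},\sqrt{(\log p)^3/r}\}$ sufficiently large, a two-regime case analysis (depending on which term of the $\max$ dominates) shows that $\lambda/(\sigma\rho_n\sqrt{\log p})$ stays bounded below by any preassigned constant; this is a $\sqrt{\log p}$-factor sharpening of Lemma~\ref{lem:lambda-bigger-than-rho_n}. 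Consequently $|\hth^u_j| \le |Z_j| + |\Delta_j| < \lambda$ for every null $j$ w.h.p., giving $W_j = 0$ for all $j$ with $\theta_{0,j}=0$ (including every knockoff index $p+1,\dots,2p$). Hence $|\{j\notin S_0 : W_j \ge t_n\}| = 0$ w.h.p., so $\FDP(t_n) = 0$; and because $(|\hth^u_{j+p}|-\lambda)_+ = 0$ for every $j$, we also obtain $W_j \ge 0$ for all $j\in S_0$, hence $|\{j: W_j \le -t_n\}| = 0$ w.h.p.

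\textbf{Signal coordinates, power, and $\widehat{\FDP}\to 0$.} The hypothesis $\liminf(\mu_n-(1+\varepsilon)t_n)/\lambda>1$ yields some $\eta>0$ with $\mu_n \ge (1+\eta)\lambda + (1+\varepsilon)t_n$ eventually. For $j\in S_0$, uniformly over the $s_0 \le p$ signal indices we have $|Z_j + \Delta_j| = o_p(\lambda)$, since $\sigma\rho_n\sqrt{\log p} = o(\lambda)$ from the null step and $\|\Delta\|_\infty = o_p(\sigma\rho_n)$. Combined with $\phi_w \to 1$ (by $w^2 = o(n)$ from Lemma~\ref{lem:lambda-bigger-than-rho_n}) and $1+\varepsilon > \phi_w$ eventually,
\[
|\hth^u_j| \ge \mu_n - o(\lambda) \;\ge\; (1+\eta)\lambda + (1+\varepsilon)t_n - o(\lambda) \;\ge\; \lambda + t_n\phi_w,
\]
so $W_j = (|\hth^u_j|-\lambda)/\phi_w \ge t_n$ for every $j\in S_0$ w.h.p. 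This gives $\Power(t_n) = 1$ w.h.p., and $|\{j: W_j \ge t_n\}| \ge s_0 \to \infty$ drives $\widehat{\FDP}(t_n) \le 1/s_0 \to 0$.

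\textbf{Main obstacle.} The key technical step is upgrading Lemma~\ref{lem:lambda-bigger-than-rho_n} from $\lambda/(\sigma\rho_n) \to \infty$ to $\lambda/(\sigma\rho_n\sqrt{\log p})$ bounded below by an arbitrarily large constant. This uniform null-control requires a case analysis across the two regimes for $\lambda$ and a specific lower bound on the absolute constant $C_\lambda$ that must be checked against conditions 1--4 of Theorem~\ref{thm:suff-cond-debiasing}. Once this is in place, the remainder of the argument is bookkeeping driven by the decomposition $\hth^u = \theta_0 + Z + \Delta$, the vanishing-bias guarantee $\|\Delta\|_\infty = o_p(\sigma\rho_n)$, and the Lipschitz behavior of the soft-threshold operator.
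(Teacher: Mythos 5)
Your reduction via Lemma~\ref{lem:function-of-debiased} and the decomposition $\hth^u=\theta_0+Z+\Delta$ is the same starting point as the paper, but the uniform (sup-norm) route you build on it has a genuine gap at the signal step. Your null step, even granting a larger $C_\lambda$, only gives $\sigma\rho_n\sqrt{\log p}\le \lambda/C'$ for some fixed constant $C'$; it does \emph{not} give $\sigma\rho_n\sqrt{\log p}=o(\lambda)$, and that stronger claim is false in general: in the regime where $\sqrt{\log p/n}$ is the dominant term of $\lambda$ and $1/n$ the dominant term of $\rho_n^2$, one has $\sigma\rho_n\sqrt{\log p}\asymp \lambda/C_\lambda$, i.e.\ the same order as $\lambda$. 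Consequently the chain $|\hth^u_j|\ge \mu_n-o(\lambda)\ge \lambda+t_n\phi_w$ for \emph{every} $j\in S_0$ does not follow: the hypothesis only provides $\mu_n-(1+\varepsilon)t_n\ge(1+\eta)\lambda$ for some possibly tiny $\eta>0$, while your uniform noise bound is of size roughly $\sqrt{2}\,\lambda/C_\lambda$ with $C_\lambda$ a fixed absolute constant, and nothing guarantees $\sqrt{2}/C_\lambda<\eta$. (Enlarging $C_\lambda$ also silently changes $\lambda$, and hence the meaning of the power hypothesis, which is stated in units of the $\lambda$ fixed by Theorem~\ref{thm:suff-cond-debiasing}.) A second mismatch: your bound $\widehat{\FDP}(t_n)\le 1/s_0$ requires all signals to be selected, so your proof of the first claim uses the condition $\liminf(\mu_n-(1+\varepsilon)t_n)/\lambda>1$, which the corollary does not assume for the $\widehat{\FDP}$ and $\FDP$ limits.

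The paper avoids both problems by never asking for simultaneous control of all $2p$ coordinates. Since $\Power(t_n)$ is a \emph{fraction}, it suffices that $\Prob(W_j\ge t_n)\to1$ for each $j\in S_0$, which needs only the marginal statement $\sigma\rho_n=o(\lambda)$ (Lemma~\ref{lem:lambda-bigger-than-rho_n}) via the event $F_n=\{\max(|\bLambda_j^\top\xi|,|\bLambda_{j+p}^\top\xi|)>\tfrac{d}{2}\lambda\}$ having vanishing probability coordinatewise; the passage from per-coordinate expectations to the empirical fractions is done with the covariance bound of Lemma~\ref{lem:h'j-Cov} and Chebyshev, exactly as in Theorem~\ref{thm:utility-limit}. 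The same marginal-plus-Chebyshev machinery gives the $\widehat{\FDP}$ and $\FDP$ limits without any signal-strength assumption, because for the LCD statistic $(|\theta_{0,j}+\bLambda_j^\top\xi|-\lambda)_+\ge 0$ always, so the events $\{W_j\le -t_n\}$ have vanishing probability regardless of $\mu_n$. If you want to salvage your draft, replace the uniform signal bound by this per-coordinate argument and decouple the $\widehat{\FDP}$/$\FDP$ part from the power part; the null-coordinate sup-norm control and the $\sqrt{\log p}$-sharpened comparison of $\lambda$ with $\sigma\rho_n$ are then unnecessary.
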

The proof of Corollary~\ref{cor:utility-LCD} is given in~\ref{pf:utility-LCD}.

\subsection{Examples: Sufficient Conditions}\label{sec:suff-cond-eg}
Here we provide some examples of the parameter settings that satisfy the sufficient conditions described in Theorem \ref{thm:suff-cond-debiasing}. Recall from Algorithm~\ref{algo:JLT} that $w^2 = \frac{4B^2}{\epsilon}\left(\sqrt{2r\log(4/\delta)}+\log(4/\delta)\right) = O\left(\frac{(p+\sigma^2\log(n))\sqrt{r\log(1/\delta)}}{\epsilon}\right)$, assuming $r>\sqrt{\log(4/\delta)}$. We also use $\OT$ and $\oT$ to hide log factors.
\begin{example}Consider $n=p=r$, $\epsilon = p^{3/4}\sqrt{\log p}$, $\delta = p^{-2}$, $\sqrt{s_0} = o\left(\frac{p^{1/8}}{\log p}\right)$, $\sigma=1$. Now we check if the sufficient conditions are satisfied. 
\begin{enumerate}
\item  For condition \ref{item:cond-comp}, $\frac{\kappa_n^2}{\lambda^2\phi_w}=o\left(\sqrt{\frac{n}{\log p}}\cdot\min\left\{1,\sqrt{\frac{r}{n}},\frac{\sqrt{rn}}{w^2}\right\}\right)$,\\
on the left hand side (LHS):
\[
\lambda = C_\lambda\cdot\max\left\{\sqrt{\frac{\log p}{n}},\sqrt{\frac{(\log p)^3}{r}}\right\}=\Omega\left(\sqrt{\frac{(\log p)^3}{p}}\right)\,,
\]
\[w^2=O\left(\frac{p\sqrt{r\log(1/\delta)}}{\epsilon}\right)=O(p^{3/4})\,,
\]
\[
\kappa_n = 2\lambda\sqrt{s_0}+\frac{w^2}{n}=o\left(\sqrt{\frac{(\log p)^3}{p}}\cdot\frac{p^{1/8}}{\log p}\right)+\Theta\left(p^{-1/4}\right)=O\left(p^{-1/4}\right)\,.
\]
Also note that since $\phi_w = 1+w^2/n=\Theta(1)$, 
\[
\mbox{LHS} = O\left(\frac{p^{1/2}}{(\log p)^3}\right)\,.
\]
On the right hand side (RHS):
\[
\min\left\{1,\sqrt{\frac{r}{n}},\frac{\sqrt{rn}}{w^2}\right\} = 1 \implies \mbox{RHS} = \sqrt{\frac{p}{\log p}}\gg \frac{p^{1/2}}{(\log p)^3}\,.
\]
\item For condition \ref{item:cond-lambda}, $\sqrt{s_0} = O\left(\frac{\max\big\{\frac{1}{\sqrt{n}},\frac{\log p }{\sqrt{r}}\big\}}{\max\big\{\frac{w\log p}{\sqrt{nr}},\frac{w^2}{n}\sqrt{\frac{\log p}{r}}\big\}}\right)$,\\
RHS:
\[
\max\left\{\frac{1}{\sqrt{n}},\frac{\log p}{\sqrt{r}}\right\}=\frac{\log p}{\sqrt{p}},\ \max\left\{\frac{w\log p}{\sqrt{nr}},\frac{w^2}{n}\sqrt{\frac{\log p}{r}}\right\}=O\left(p^{-5/8}\log p\right)\,,
\]
so
\[
\mbox{RHS}=\Omega(p^{1/8})\gg\sqrt{s_0}=o(p^{1/8}/\log p)\,.
\]

\item Condition \ref{item:cond-sparsity}, $\frac{1}{\sigma\rho_n}\sqrt{s_0} = o\bigg(\min \bigg\{\frac{n}{w^2}\sqrt{\frac{r}{\log p}},\frac{\sqrt{nr}}{w\log p}\bigg\}\bigg)$,\\
LHS:
\[
    \rho_n^2 = \frac{n+r+1}{ nr}+\frac{w^2}{nr} = \Theta\left(\frac{1}{p}\right)\,.
\]
Multiply $\rho_n$ on both sides, then RHS
\[
    \rho_n\min \bigg\{\frac{n}{w^2}\sqrt{\frac{r}{\log p}},\frac{\sqrt{nr}}{w\log p}\bigg\} = \Omega\left(\frac{p^{1/8}}{\log p}\right)\gg \sqrt{s_0}\,.
\]
\item For condition \ref{item:cond-error}, $\frac{1}{\sigma\rho_n}\frac{4\kappa_n^2}{\lambda\phi_w} = o\bigg(\min\bigg\{\sqrt{\frac{n}{\log p}},\sqrt{\frac{r}{\log p}},\frac{n}{w^2}\sqrt{\frac{r}{\log p}},\frac{\sqrt{nr}}{w\log p}\bigg\}\bigg)$,\\
RHS:
\[
\min\bigg\{\sqrt{\frac{n}{\log p}},\sqrt{\frac{r}{\log p}},\frac{n}{w^2}\sqrt{\frac{r}{\log p}},\frac{\sqrt{nr}}{w\log p}\bigg\} = \sqrt{\frac{p}{\log p}}\,.
\]
LHS:
\[
\rho_n=\Theta(p^{-1/2}),\ \kappa_n^2 = \left(2\lambda\sqrt{s_0}+\frac{w^2}{n}\right)^2=O(p^{-1/2}),\ \lambda\phi_w = \Omega\left(\sqrt{(\log p)^3/p}\right)\,,
\]
\[
\implies LHS = O\left(\sqrt{\frac{p}{(\log p)^3}}\right)\ll RHS\,.
\]
Therefore $\rho^{-1}\|\Delta\|_\infty=o(1)$ with high probability under this setting.
\end{enumerate}
\end{example}

\begin{example} This example, we try the case when $s_0 $ is of the same order as $p$. Let $n=p^3(\log p)^6$, $r=p^2(\log p)^5$, $\epsilon=1$, $\delta=p^{-4}$, $s_0 = c_0\cdot p$ for some $c_0\in(0,1)$, $\sigma=1$. 
\begin{enumerate}
\item For condition \ref{item:cond-comp}, $\frac{\kappa_n^2}{\lambda^2\phi_w}=o\left(\sqrt{\frac{n}{\log p}}\cdot\min\left\{1,\sqrt{\frac{r}{n}},\frac{\sqrt{rn}}{w^2}\right\}\right)$\\
LHS:
\[
\lambda = C_\lambda\max\left\{\sqrt{\frac{\log p}{n}},\sqrt{\frac{(\log p)^3}{r}}\right\}=\Theta\left((p\log p)^{-1}\right)\,,
\]
\[
w^2=\Theta\left(\frac{p\sqrt{r\log (1/\delta)}}{\epsilon}\right)=\Theta\left(p^2(\log p)^3\right)\,,
\]
\[
\kappa_n =2\lambda\sqrt{s_0}+\frac{w^2}{n}=\Theta\left(p^{-1/2}(\log p)^{-1}+p^{-1}(\log p)^{-3}\right)=\Theta\left(p^{-1/2}(\log p)^{-1}\right)\,.
\]
Since $\phi_w = 1+\frac{w^2}{n}=\Theta(1)$, then 
\[
\frac{\kappa_n^2}{\lambda^2\phi_w}=\Theta(p)\,.
\]
RHS:
\[
\min\left\{1,\sqrt{\frac{r}{n}},\frac{\sqrt{rn}}{w^2}\right\}=\sqrt{\frac{r}{n}}=p^{-1/2}(\log p)^{-1/2}\,,
\]
\[
\sqrt{\frac{n}{\log p}}\cdot p^{-1/2}(\log p)^{-1/2}=p
(\log p)^2\gg \mbox{LHS}=\Theta(p)\,.
\]
\item For condition \ref{item:cond-lambda}, $\sqrt{s_0} = O\left(\frac{\max\big\{\frac{1}{\sqrt{n}},\frac{\log p }{\sqrt{r}}\big\}}{\max\big\{\frac{w\log p}{\sqrt{nr}},\frac{w^2}{n}\sqrt{\frac{\log p}{r}}\big\}}\right)$,\\
RHS:
\[
\max\left\{\frac{1}{\sqrt{n}},\frac{\log p}{\sqrt{r}}\right\}=\frac{\log p}{\sqrt{r}} = p^{-1}(\log p)^{-3/2}\,,
\]
\[
\max\left\{\frac{w\log p}{\sqrt{nr}},\frac{w^2}{n}\sqrt{\frac{\log p}{r}}\right\}=\Theta\left(\max\left\{p^{-3/2}(\log p)^{-3},p^{-2}(\log p)^{-5} \right\}\right)=\Theta\left(p^{-3/2}(\log p)^{-3}\right)
\]
\[
\implies\frac{\max\big\{\frac{1}{\sqrt{n}},\frac{\log p }{\sqrt{r}}\big\}}{\max\big\{\frac{w\log p}{\sqrt{nr}},\frac{w^2}{n}\sqrt{\frac{\log p}{r}}\big\}}=p^{1/2}(\log p)^{3/2}\gg\mbox{LHS}= O(p^{1/2})\,.
\]
\item For condition \ref{item:cond-sparsity}, $\frac{1}{\sigma\rho_n}\sqrt{s_0} = o\bigg(\min \bigg\{\frac{n}{w^2}\sqrt{\frac{r}{\log p}},\frac{\sqrt{nr}}{w\log p}\bigg\}\bigg)$,\\
LHS:
\[
\rho_n^2 = \Theta\left(\frac{1}{n}+\frac{1}{r}+\frac{w^2}{nr}\right)=\Theta\left(p^{-2}(\log p)^{-5}\right)\implies \rho_n=\Theta\left( p^{-1}(\log p)^{-5/2}\right)\,,
\]
\[
\frac{1}{\sigma\rho_n}\sqrt{s_0} =O\left( p^{3/2}(\log p)^{5/2}\right)\,.
\]
RHS:
\[
\min\left\{ \frac{n}{w^2}\sqrt{\frac{r}{\log p}}, \frac{\sqrt{nr}}{w\log p}\right\}=p^{3/2}(\log p)^{3}\gg \mbox{LHS}=O(p^{3/2}(\log p)^{5/2})\,.
\]
\item For condition \ref{item:cond-error}, $\frac{1}{\sigma\rho_n}\frac{4\kappa_n^2}{\lambda\phi_w} = o\bigg(\min\bigg\{\sqrt{\frac{n}{\log p}},\sqrt{\frac{r}{\log p}},\frac{n}{w^2}\sqrt{\frac{r}{\log p}},\frac{\sqrt{nr}}{w\log p}\bigg\}\bigg)$,\\
LHS:
\[
\frac{1}{\sigma\rho_n}\frac{4\kappa_n^2}{\lambda\phi_w}=O\left(p(\log p)^{5/2}\cdot p^{-1}(\log p)^{-2}\cdot p\log p\right)=O\left( p(\log p)^{3/2}\right)\,.
\]
RHS:
\[
\min\bigg\{\sqrt{\frac{n}{\log p}},\sqrt{\frac{r}{\log p}},\frac{n}{w^2}\sqrt{\frac{r}{\log p}},\frac{\sqrt{nr}}{w\log p}\bigg\} = p(\log p)^2\gg \mbox{LHS}=O\left(p(\log p)^{3/2} \right)\,.
\]
\end{enumerate}
    
\end{example}

In Example 1, we have $n$ and $p$ of the same order and the privacy loss $\epsilon = \tilde{O}(p^{3/4})$. In Example 2, $n = \tilde{O}(p^3)$ and $\epsilon = O(1)$. 
As discussed in Section~\ref{sec:input-output}, the minimax bound for linear regression under local differential privacy constraint reduces the effective sample size from $n$ to $\epsilon^2 n/p$~\citep{duchiminimax}. This suggests that in order to achieve descent performance in (local) differentially private setting, either $\epsilon$ should be set large or $n\gg p$. 

We next study a more general class of parameter regimes that satisfy the conditions stated in Theorem~\ref{thm:suff-cond-debiasing}, where we focus on the case of constant sparsity level, namely $s_0 = c_0 p$, for some constant $c_0\in(0,1)$.

\textbf{Question. }Consider $n,s_0,r,\epsilon,\delta$ as functions of $p$, and let $\mathcal{P}$ denotes the collection of tuples $(n,p,s_0,r,\epsilon,\delta)$ such that the conditions in Theorem \ref{thm:suff-cond-debiasing} are satisfied. Suppose $s_0 = c_0p$, for some constant $c_0\in (0,1)$. What are the proper orders of $n,\epsilon$, for which there exists a choice of $r$ as a function of $p$ such that the tuple $(n,p,c_0p,r,\epsilon,\delta)\in\mathcal{P}$?

To answer this question, we look at a simpler version of it: let $n=p^\alpha,r=p^\beta,\epsilon=p^\gamma$, find the set of $(\alpha,\gamma)\in\mathbb{R}_+^2$ such that there exists a $\beta$ for which the parameters are in $\mathcal{P}$. We assume $\log(1/\delta)=O(\log(p))$. With this simplification, we can find sufficient conditions in $\alpha, \beta,\gamma$ for each of the conditions in Theorem \ref{thm:suff-cond-debiasing}.
\begin{enumerate}
    \item $\max\{1,2+\beta-2\alpha-2\gamma+\min\{\alpha,\beta\}\}<\max\{0,1+\frac{1}{2}\beta-\alpha-\gamma\}+\min\{\frac{1}{2}\beta,\frac{1}{2}\alpha,\alpha+\gamma-1\}$
    \item $\max\{1-\frac{1}{4}\beta-\frac{1}{2}\alpha-\frac{1}{2}\gamma,\frac{3}{2}-\alpha-\gamma\}<\max\{-\frac{1}{2}\alpha,-\frac{1}{2}\beta\}$
    \item $1+\min\{\alpha,\beta,\alpha+\gamma+\frac{1}{2}\beta-1\}<\min\{2\alpha+2\gamma-2,\frac{1}{2}\beta+\alpha+\gamma-1\}$
    \item 
        $\min\{\frac{1}{2}\alpha,\frac{1}{2}\beta,\frac{1}{2}\alpha+\frac{1}{2}\gamma-\frac{1}{2}+\frac{1}{4}\beta\}+\max\{1-\min\{\alpha,\beta\},2+\beta-2\alpha-2\gamma\}<\\
        -\frac{1}{2}\min\{\alpha,\beta\}+\max\{0,1+\frac{1}{2}\beta-\alpha-\gamma\}+\min\{\frac{1}{2}\alpha,\frac{1}{2}\beta,\alpha+\gamma-1,\frac{1}{4}\beta+\frac{1}{2}(\alpha+\gamma)-\frac{1}{2}\}$
\end{enumerate}
To answer the question we can fix $\beta$ and find the range of $(\alpha,\gamma)$, as presented in Figure \ref{fig:ineq}. Here are some observations:
\begin{enumerate}
    \item We must have $\alpha>2$ and $\beta>2$ for above inequalities to have a nonempty intersection.
    \item If $\alpha+\gamma>3$, then there exists $\beta>2$ such that the above inequalities have a nonempty intersection.
    \item If $\gamma>3/4$, then there exists $\beta>2$ such that the above inequalities have a nonempty intersection.
\end{enumerate}

\begin{figure}[h]
    \centering
    \includegraphics[width=0.95\textwidth]{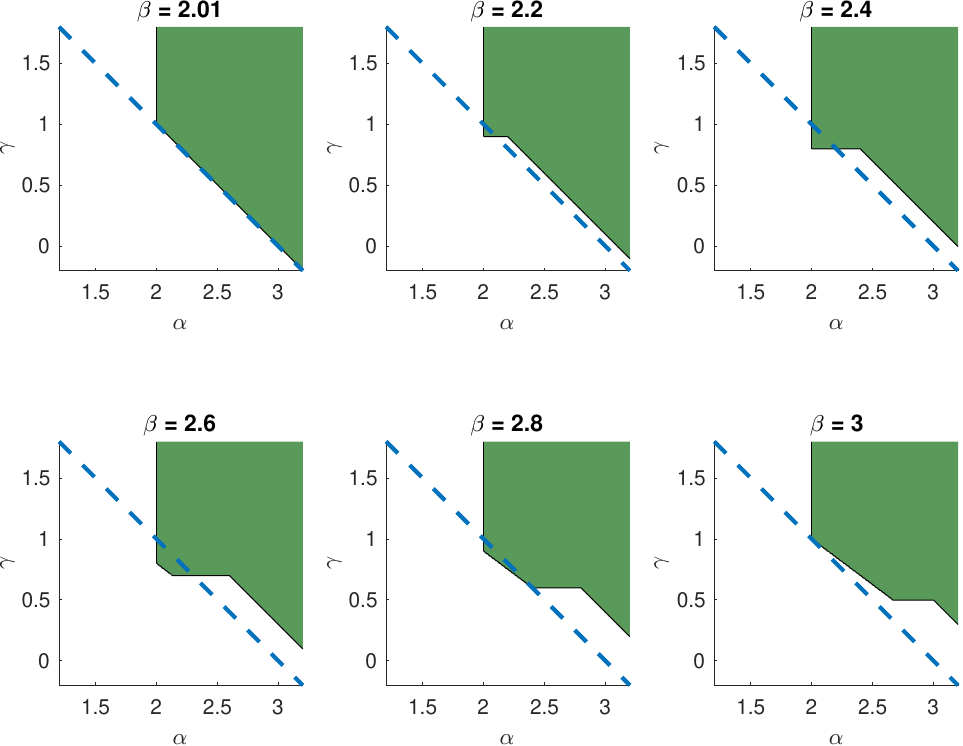} 
    \caption{Sufficient conditions for debiasing. Assume $n=p^\alpha,r=p^\beta,\epsilon=p^\gamma,s_0=c_0p$ for some $c_0\in(0,1)$. The shaded region indicates the choice of $\alpha$ and $\gamma$ such that the debiasing conditions are satisfied for different $\beta$. Dotted line indicates the equality $\alpha+\gamma=3$.}
    \label{fig:ineq}
\end{figure}

With the insights provided by this simplification, we have the following corollary on the choice of parameters, and refer to Appendix~\ref{proof:cor:sample-privacy-tradeoff} for its proof. 
\begin{cor}\label{cor:sample-privacy-tradeoff}
    If $s_0 = c_0p$ for some $c_0 \in (0,1), n=p^{\alpha},\epsilon=p^\gamma$ for $\alpha>2,\gamma+\alpha>3$, $\log(\delta^{-1})=O(\log(p))$, then there exists $r$ such that $(n,p,s_0,r,\epsilon,\delta)\in\mathcal{P}$. 
\end{cor}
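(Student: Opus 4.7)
The plan is to reduce the existence statement to a concrete feasibility problem in one real variable $\beta$, and exhibit a valid $\beta$ whenever $\alpha>2$ and $\alpha+\gamma>3$.

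\textbf{Step 1: Reduction to exponent inequalities.} First I would substitute $n=p^\alpha$, $r=p^\beta$, $\epsilon=p^\gamma$, $s_0=c_0 p$ and $\log(1/\delta)=O(\log p)$ into Theorem~\ref{thm:suff-cond-debiasing}. Recalling $w^2=O(p^{1-\gamma+\beta/2})\cdot\mathrm{polylog}(p)$ (after absorbing the $B^2=O(p)$ factor from Assumption~\ref{ass1} and noting $\sigma=\Theta(1)$), and $\rho_n^2=\Theta(1/n+1/r+w^2/(nr))$, every quantity appearing in conditions~\ref{item:cond-comp}--\ref{item:cond-error} reduces to $p$ raised to a piecewise-linear function of $(\alpha,\beta,\gamma)$, up to $\mathrm{polylog}(p)$ factors hidden by $\OT,\oT$. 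Taking logarithms (base $p$) and sending $p\to\infty$, the four ``$o(\cdot)$'' conditions become the four strict inequalities on the exponents $(\alpha,\beta,\gamma)$ listed just before Figure~\ref{fig:ineq}.

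\textbf{Step 2: Constructive choice of $\beta$.} Given $\alpha>2$ and $\alpha+\gamma>3$, I would propose a concrete candidate, for instance $\beta=\alpha$ when $\gamma\le 1$ and $\beta$ slightly above $2$ otherwise, so that $\beta>2$, $\beta\le\alpha$, and $\alpha+\gamma-1>\beta/2$. With this choice the $\min$ and $\max$ expressions simplify: the left-hand sides of all four inequalities become dominated by the leading terms coming from the ``noise'' branch $1/\sqrt{n}$ (rather than the $w$-dependent branches), because the condition $\alpha+\gamma>3$ is exactly what ensures that each $w$-dependent term $w^2/n$, $w\log p/\sqrt{nr}$, etc.\ stays strictly below the benchmark $\rho_n$ and $\sigma\sqrt{\log p/n}$. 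Then one checks each of the four inequalities algebraically; the gap $\alpha+\gamma-3>0$ gives the strict slack required by $o(\cdot)$.

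\textbf{Step 3: Verification of the four conditions.} I would verify conditions~\ref{item:cond-comp}--\ref{item:cond-error} one at a time, using the following pattern. In condition~\ref{item:cond-comp}, $\kappa_n^2/(\lambda^2\phi_w)$ is dominated by $s_0$ (since $w^2/n=o(\lambda\sqrt{s_0})$ under $\alpha+\gamma>3$), so the inequality reduces to $s_0\cdot\mathrm{polylog}(p)\ll \sqrt{n/\log p}\cdot\min\{1,\sqrt{r/n},\sqrt{rn}/w^2\}$, which holds because $s_0=\Theta(p)$ and $\alpha>2$. Conditions~\ref{item:cond-lambda}--\ref{item:cond-sparsity} reduce similarly to $\sqrt{p}\ll p^{(\alpha-1)/2}$ and $\sqrt{p}/\rho_n\ll $ (a power of $p$ that is at least $p^{1/2+\epsilon_0}$), both of which use $\alpha>2$. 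Condition~\ref{item:cond-error} is the tightest one; here I would substitute the explicit $\kappa_n$ and $\lambda$ and observe that the inequality collapses to requiring $(\alpha+\gamma-3)/2 +(\text{log corrections})>0$, exactly the standing assumption.

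\textbf{Main obstacle.} The principal difficulty is bookkeeping: each of the four conditions contains nested $\max/\min$ expressions, and the dominant branch switches depending on the sign of $\alpha+\gamma-3$ and on whether $\beta\gtrless \alpha$. I expect the bulk of the work to lie in identifying, for the chosen $\beta$, which branch of each $\min/\max$ is active, and then checking the resulting linear inequality in $(\alpha,\beta,\gamma)$ has strict slack of order $\alpha+\gamma-3>0$. Once the right $\beta$ is pinned down, the rest is a mechanical verification facilitated by the picture in Figure~\ref{fig:ineq}, which already visualizes the feasibility region collapsing onto the half-plane $\alpha+\gamma>3$ for sufficiently large $\beta$.
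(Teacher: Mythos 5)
Your overall strategy is the same as the paper's: translate the four conditions of Theorem~\ref{thm:suff-cond-debiasing} into the four exponent inequalities in $(\alpha,\beta,\gamma)$ listed before Figure~\ref{fig:ineq}, then exhibit an admissible $\beta$ and check each inequality. However, the concrete $\beta$ you exhibit in Step 2 does not work on part of the claimed parameter range, so the proof has a genuine gap. Taking $\beta=\alpha$ whenever $\gamma\le 1$ violates condition~\ref{item:cond-lambda}: in exponent form that condition reads $\max\{1-\tfrac{1}{4}\beta-\tfrac{1}{2}\alpha-\tfrac{1}{2}\gamma,\ \tfrac{3}{2}-\alpha-\gamma\}<\max\{-\tfrac{1}{2}\alpha,-\tfrac{1}{2}\beta\}$, and with $\beta=\alpha$ the binding part becomes $1-\tfrac{1}{4}\alpha-\tfrac{1}{2}\gamma<0$, i.e.\ $\alpha+2\gamma>4$, which is not implied by $\alpha>2$, $\alpha+\gamma>3$. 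For instance $\alpha=2.5$, $\gamma=0.6$ satisfies the hypotheses of the corollary, but with $\beta=\alpha$ one gets LHS $=-1.175>-1.25=$ RHS (equivalently, in the original formulation, $\max\{1/\sqrt{n},\log p/\sqrt{r}\}/\max\{w\log p/\sqrt{nr},(w^2/n)\sqrt{\log p/r}\}\asymp p^{0.425}$ up to logs, which cannot dominate $\sqrt{s_0}\asymp p^{1/2}$). Your stated side constraints ($\beta>2$, $\beta\le\alpha$, $\alpha+\gamma-1>\beta/2$) are all met in this example, so they are not sufficient to guarantee feasibility.

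The paper avoids this by never taking $\beta$ large: it sets $d:=\alpha+\gamma-3>0$ and chooses $\beta=2+h$ with $0<h<\min\{\alpha-2,\,d\}$ in all cases, so that $\beta<\alpha$ and $\beta<2+d$; then the numerator branch in condition~\ref{item:cond-lambda} is $\log p/\sqrt{r}$ (not the $1/\sqrt{n}$ branch you assert dominates), and each of the four inequalities is verified with strict slack of order $h$ or $d$. If you replace your case split by this single choice of $\beta$ (or otherwise add the constraint $\beta<2+2(\alpha+\gamma-3)$ hidden in condition~\ref{item:cond-lambda}), your Step 3 bookkeeping goes through and the argument matches the paper's proof.
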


Note that Corollary~\ref{cor:sample-privacy-tradeoff} states a trade-off between sample complexity and the privacy loss, through the condition $\alpha+\gamma>3$. For example, if we want constant privacy loss, i.e. $\epsilon=O(1)$, then we need $n\gg p^3$. If we do not have enough sample size, say $n=O(p^{2.5})$, we must sacrifice the privacy so that $\epsilon\gg p^{0.5}$. The next Corollary \ref{cor:privacy-upper-bound} states that once $\gamma> 3/4$, then $\alpha>2$ suffices and we do not see this trade-off anymore in order to ensure $(n,p,s_0,r,\epsilon,\delta)\in\mathcal{P}$.

\begin{cor}\label{cor:privacy-upper-bound}
    If $s_0 = c_0p$ for some $c_0 \in (0,1), n=p^{\alpha},\epsilon=p^\gamma$ for $\alpha>2,\gamma>\frac{3}{4}$, $\log(\delta^{-1})=O(\log(p))$, then there exists $r$ such that $(n,p,c_0p,r,\epsilon,\delta)\in\mathcal{P}$. 
\end{cor}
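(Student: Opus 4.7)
The plan is to exhibit an explicit window of values for $\beta := \log_p r$ such that, under $\alpha > 2$ and $\gamma > 3/4$, every $\beta$ in the window satisfies all four polynomial inequalities obtained by substituting $n=p^\alpha$, $r=p^\beta$, $\epsilon=p^\gamma$, $s_0=c_0 p$, and $\log(1/\delta)=O(\log p)$ into conditions (\ref{item:cond-comp})--(\ref{item:cond-error}) of Theorem~\ref{thm:suff-cond-debiasing}. These are exactly the four inequalities in $(\alpha,\beta,\gamma)$ spelled out in the preceding discussion, so the corollary reduces to showing that the system admits a solution in $\beta$.

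My candidate window is $\beta \in I := (\max\{\alpha,\,4-2\gamma\},\; \alpha+2\gamma-1)$. The first task is to verify that $I$ is non-empty under the hypotheses: if $\alpha\ge 4-2\gamma$, non-emptiness reduces to $\gamma > 1/2$, implied by $\gamma>3/4$; otherwise it reduces to $\alpha+4\gamma>5$, which holds strictly since $\alpha>2$ and $4\gamma>3$. For any $\beta\in I$, several of the $\min/\max$ expressions resolve in closed form: since $\beta\ge \alpha$, $\min\{\alpha,\beta\}=\alpha$; since $\beta < \alpha+2\gamma-1$ together with $\alpha>1$ gives $\beta < 2\alpha+2\gamma-2$, we have $\max\{0,\,1+\beta/2-\alpha-\gamma\}=0$; and since $\beta > 4-2\gamma$ is exactly the threshold at which $\beta/4+\gamma/2-1/2>0$, one deduces $\min\{\alpha/2,\,\beta/4+(\alpha+\gamma)/2-1/2\}=\alpha/2$, while the bound $\alpha/2 < \alpha+\gamma-1$ (valid because $\alpha>2$ and $\gamma>3/4$) gives $\min\{\alpha/2,\beta/2,\alpha+\gamma-1\}=\alpha/2$.

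With these simplifications, each of the four inequalities collapses. Inequality~(\ref{item:cond-comp}) reduces to $1<\alpha/2$, i.e.\ $\alpha>2$. Inequality~(\ref{item:cond-lambda}), after resolving the outer max expressions on both sides with $\beta\ge\alpha$, reduces to $\beta>4-2\gamma$ together with $\alpha+2\gamma>3$, both implied by $\beta\in I$ and the hypotheses. Inequality~(\ref{item:cond-sparsity}) reduces to the same pair $\alpha+2\gamma>3$ and $\beta>4-2\gamma$. Inequality~(\ref{item:cond-error}) collapses to $1-\alpha/2<0$, i.e.\ $\alpha>2$. Hence every $\beta\in I$ yields a tuple in $\mathcal{P}$, and picking any such $\beta$ finishes the proof.

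The main obstacle is inequality~(\ref{item:cond-error}), which contains the longest chain of nested $\min/\max$ terms; one must verify that the ``active'' branch of each $\min/\max$ under $\beta\in I$ is consistent across both sides. The key algebraic identity is that $\beta > 4-2\gamma$ is precisely the threshold at which $\beta/4+\gamma/2-1/2>0$, which is exactly what collapses the innermost $\min$ on the right-hand side to $\alpha/2$. Notably, the condition $\gamma>3/4$ plays a dual role: it both opens up the feasible window $I$ in the regime $\alpha<4-2\gamma$ (via $\alpha+4\gamma>5$) and ensures that inequality~(\ref{item:cond-error}) simplifies to the clean bound $\alpha>2$.
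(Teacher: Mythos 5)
Your proof is correct and follows essentially the same route as the paper's (omitted) argument, namely the exponent bookkeeping used for Corollary~\ref{cor:sample-privacy-tradeoff}: substitute $n=p^\alpha$, $r=p^\beta$, $\epsilon=p^\gamma$ into the four inequalities and exhibit an explicit feasible $\beta$, and your window $\beta\in(\max\{\alpha,4-2\gamma\},\,\alpha+2\gamma-1)$ is indeed nonempty under $\alpha>2$, $\gamma>3/4$ and satisfies all four conditions (a larger $\beta$ than the choice in Corollary~\ref{cor:sample-privacy-tradeoff} is in fact necessary here when $\alpha+\gamma\le 3$). One small correction: the exact threshold at which $\beta/4+\gamma/2-1/2>0$ is $\beta>2-2\gamma$ (automatic since $\beta>2$), not $\beta>4-2\gamma$; the constraint $\beta>4-2\gamma$ is genuinely needed only in the second and third inequalities, which you do verify, so this slip does not affect the validity of the argument.
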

The proof of this corollary is similar, therefore omitted.

\section{Baseline Method: Gaussian Mechanism}\label{sec:baseline}
To be self-contained,  we include key implementation details for privatizing the second-moment matrix using the Gaussian mechanism \citep{analyzegauss}, though this is not the primary focus of our work. The Gaussian mechanism---a foundational differential privacy technique---ensures algorithmic privacy by adding calibrated Gaussian noise to data.

\begin{thm}
    (Gaussian mechanism) Given $\epsilon,\delta\in(0,1)$ and a function $f:\mathcal{X}\to\mbR^k$, if $\sigma\geq \sqrt{2\log(1.25/\delta)}\Delta_2f/\epsilon$, then $f+Z$ is $(\epsilon,\delta)$-differentially private, where $Z\sim\mcN(0,\sigma^2\bI_k)$ and $\Delta_2f$ denotes the $\ell_2$-sensitivity of $f$, defined as:
    \[
    \Delta_2f = \max\{\|f(x)-f(x')\|_2:x,x'\text{ differ by at most one entry}\}\,.
    \]
\end{thm}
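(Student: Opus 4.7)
The statement is the classical Gaussian mechanism theorem, and the plan is to follow the standard privacy-loss random variable argument. The core idea is to analyze the log-density ratio of the output $f(x)+Z$ versus $f(x')+Z$ for neighboring inputs $x,x'$, show that this privacy loss is itself a one-dimensional Gaussian, and then control its tail.

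First, I would fix neighboring inputs $x,x'\in\mathcal{X}$ and let $v=f(x)-f(x')$, so $\|v\|_2\le \Delta_2 f$. For any output point $t\in\mbR^k$, using the Gaussian density, a direct computation gives
\[
L(t):=\log\frac{p_{f(x)+Z}(t)}{p_{f(x')+Z}(t)}=\frac{\|t-f(x')\|_2^2-\|t-f(x)\|_2^2}{2\sigma^2}.
\]
Writing $u=t-f(x)$ and expanding $\|u+v\|_2^2$ yields
\[
L(t)=\frac{\langle u,v\rangle}{\sigma^2}+\frac{\|v\|_2^2}{2\sigma^2}.
\]
Under the output distribution of $f(x)+Z$, we have $u=Z\sim\mcN(0,\sigma^2 \bI_k)$, so $\langle u,v\rangle\sim \mcN(0,\sigma^2\|v\|_2^2)$ and $L$ is a scalar Gaussian with mean $\|v\|_2^2/(2\sigma^2)$ and variance $\|v\|_2^2/\sigma^2$.

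Second, I would show that $\Prob[L(t)>\epsilon]\le \delta$ whenever the stated noise condition holds. Since $L$ is monotone in $\|v\|_2$ (larger $\|v\|_2$ makes the tail event more likely), it suffices to consider the worst case $\|v\|_2=\Delta_2 f$. The event $L>\epsilon$ translates into a one-sided Gaussian tail inequality of the form $\mcN(0,1)>\epsilon\sigma/\|v\|_2-\|v\|_2/(2\sigma)$, and a standard Mill's ratio / Gaussian tail bound of the form $\Prob[\mcN(0,1)>s]\le \frac{1}{s\sqrt{2\pi}}e^{-s^2/2}$ together with algebraic manipulation yields the requirement $\sigma\ge\sqrt{2\log(1.25/\delta)}\,\Delta_2 f/\epsilon$. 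This is where the sharp constant $1.25$ comes from, and it is the one nontrivial step: the calculation requires a careful quadratic inequality in $\sigma$ and uses that $\epsilon,\delta\in(0,1)$ to simplify a log term into the $\log(1.25/\delta)$ form (the usual bookkeeping trick is to verify $\frac{1}{\sqrt{2\pi}\log(1/\delta)}\le 1.25/\delta$ in this regime).

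Third, I would convert the tail bound on the privacy loss into the $(\epsilon,\delta)$-DP guarantee. This is the standard lemma: if the privacy-loss random variable $L$ satisfies $\Prob_{Z}[L>\epsilon]\le \delta$, then for every measurable $F\subseteq\mbR^k$,
\[
\Prob[f(x)+Z\in F]\le e^\epsilon\Prob[f(x')+Z\in F]+\delta,
\]
obtained by splitting $F$ into the set where the density ratio is below $e^\epsilon$ (absorbed into the $e^\epsilon$ factor) and its complement (which has probability at most $\delta$ by the tail bound). Combining the three steps yields the theorem. I expect the main obstacle to be step two, specifically the tight Gaussian tail analysis that produces the precise constant $\sqrt{2\log(1.25/\delta)}$; the rest is essentially algebraic and follows the standard $(\epsilon,\delta)$-DP template.
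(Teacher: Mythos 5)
Your proposal is correct: the paper states this result as the classical Gaussian mechanism without giving a proof (it is the standard result from the differential privacy literature, e.g.\ Dwork--Roth), and your privacy-loss random-variable argument --- Gaussian log-likelihood ratio, worst-case sensitivity $\|v\|_2=\Delta_2 f$, tail bound yielding the constant $\sqrt{2\log(1.25/\delta)}$ for $\epsilon<1$, then the standard split of $F$ into the good event $\{L\le\epsilon\}$ and its complement --- is exactly the canonical proof the paper implicitly relies on. The only part you leave schematic is the quadratic/tail computation in step two that produces the $1.25$ constant; that calculation is routine but tedious and indeed requires $\epsilon\in(0,1)$, so your outline has no substantive gap.
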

In the same spirit, given a data matrix $\bA\in\mbR^{n\times d}$, \citep{analyzegauss} proposed Algorithm (\ref{algo:gauss}) that outputs a private estimate for the second-moment matrix $\bA^\top \bA$ by adding a symmetric Gaussian matrix. The sensitivity of matrix $\bA^\top \bA$ is measured in Frobenius norm where for a matrix $\bA$, the Frobenius norm $\|\bA\|_F = \sqrt{\sum_{i\in[n],j\in[d]}\bA_{ij}^2}$. Assuming that the row norm of $\bA$ is bounded by $B>0$, the $\ell_2$-sensitivity of $\bA^\top\bA$ with respect to replacing one row can be obtained by the following derivation. Assuming a neighboring data matrix $\bA'\in\mbR^{n\times d}$ replaces one row of $\bA\in\mbR^{n\times d}$, $v^\top\in\mbR^d$, by $v'^\top\in\mbR^d$, we have that 
\begin{align*}
    \|\bA^\top \bA - \bA'^\top\bA'\|_F&=\|vv^\top-v'v'^\top\|_F\\
        &= \sqrt{\|v\|_2^4+\|v'\|_2^4-2(v^\top v')^2}\\
        &\le \sqrt{\|v\|_2^4+\|v'\|_2^4}\\
        &\leq \sqrt{2}B^2\,.
\end{align*}
So the Gaussian mechanism suggests that if we add a Gaussian noise with standard deviation greater than or equal to $2\sqrt{\log(1.25/\delta)}B^2/\epsilon$, then the release of the second-moment matrix is $(\epsilon, \delta)$-differentially private. Since the second-moment matrix is symmetric, we only have to generate noise for the upper-triangular part and release the lower triangular entries same as their correspondent upper-triangular entries.

\begin{algorithm}\label{algo:gauss}
    \caption{Gaussian Mechanism for Second-Moment Matrix \citep{analyzegauss}}
    \textbf{Input}: data matrix $\bA\in \mathbb{R}^{n\times d}$ and upper-bound $B>0$ on the $l_2$-norm of every row of $\bA$; privacy parameters $\epsilon>0,\ \delta\in(0,1/e)$.
    \begin{enumerate}[label=\arabic*.]
        \item Compute the standard deviation of noise $\sigma = 2\sqrt{\log(1.25/\delta)}B^2/\epsilon$.
        \item Generate a symmetric random matrix $\bZ\in\mbR^{d^2}$ by sampling the upper-triangle entries from $\mcN(0,\sigma^2)$ independently. The lower triangular entries are copies of the corresponding upper-triangular entries, $\bZ_{ij} = \bZ_{ji}$.  
        \item \textbf{Return} $\bA^\top\bA+Z$.
    \end{enumerate}
\end{algorithm}

By rewriting the objective function in \eqref{eq:lasso-knockoff}, it is clear that Lasso solution only depends on the second moment of the data, $[\bX\ \bXT\ y]^\top[\bX\ \bXT \ y]$:
\[
\hth(\lambda) = \arg\min_{\theta\in\mbR^{2p}}\frac{1}{2n}\left(\theta^\top[\bX\ \bXT]^\top [\bX\ \bXT]\theta-2\theta^\top[\bX\ \bXT]^\top y\right)+\lambda\|\theta\|_1\,.
\]
Applying Algorithm~\ref{algo:gauss} by taking $\bA = [\bX\ \bXT\ y]$, the Gaussian Mechanism produces private version of $[\bX\ \bXT]^\top [\bX\ \bXT]$ and $[\bX\ \bXT]^\top y$ as follows:
\[
[\bX\ \bXT\ y]^\top[\bX\ \bXT \ y]+\bZ=
\begin{bmatrix}
    [\bX\ \bXT\ ]^\top[\bX\ \bXT \ ]+\bZ_1 & [\bX\ \bXT]^\top y+\bZ_2\\
    y^\top [\bX\ \bXT]+\bZ_2^\top & y^\top y+\bZ_3
\end{bmatrix}\,,
\]
for symmetric random Gaussian matrix $\bZ\in\mbR^{(2p+1)^2}$, and $\bZ_1\in\mbR^{(2p)^2},\bZ_2\in\mbR^{2p},\bZ_3\in\mbR$ are sub-matrices of $\bZ$ partitioned  based on the dimension of $[\bX\ \bXT]^\top [\bX\ \bXT]$ and $[\bX\ \bXT]^\top y$. Then by replacing the non-private second-moment matrix by the private one, we obtain a private solution for lasso:
\[
\hth^*_G(\lambda) = \arg\min_{\theta\in\mbR^{2p}}\frac{1}{2n}\left(\theta^\top([\bX\ \bXT]^\top [\bX\ \bXT]+\bZ_1)\theta-2\theta^\top([\bX\ \bXT]^\top y+\bZ_2)\right)+\lambda\|\theta\|_1\,,
\]
and the corresponding debiased Lasso estimate:
\[
\hth^u_G(\lambda) = \hth_G^*(\lambda) +\frac{1}{n}\left([\bX\ \bXT]^\top y+\bZ_2-([\bX\ \bXT]^\top[\bX\ \bXT]+\bZ_1)\hth^*(\lambda))\right)+\frac{w^2}{n }\hth_G^*(\lambda) \,.
\]
The remaining parts follow the knockoff procedure detailed in Section~\ref{sec:method-JL}.

\section{Simulation Studies}\label{sec:simulation}

This section presents numerical simulations regarding the JLT-privatized Model-X knockoffs framework. We first validate the power/FDP theoretical prediction regime from Section~\ref{sec:Power-FDR-analysis} by plotting theoretical values against empirical results, simultaneously demonstrating the convergence of prediction error to zero as sample size $n\to \infty$ and revealing an inverse relationship between privacy strength ($\epsilon$) and accuracy. We then systematically quantify the privacy-utility trade-off in Section~\ref{sec:privacy-utility-tradeoff}, through parameter sweeps, establishing operational boundaries for practical deployment. Finally, in Section~\ref{sec:compare-Gaussian} we conduct direct performance comparisons between JLT and Gaussian noise injection mechanisms for Model-X knockoff procedure.

Throughout this section, we use Lasso Coefficient Difference (LCD) as the feature statistics. Concretely, for each $j\in[p]$, 
\[
W_j := |\hth^*_j|-|\hth^*_{j+p}|\,,
\]
where $\hth^*$ is the private Lasso solution defined as in \eqref{eq:lasso-JL}. By Lemma~\ref{lem:function-of-debiased}, the private Lasso estimate can be viewed as a coordinate-wise function of the debiased private Lasso estimate. As discussed in Corollary~\ref{cor:utility-LCD}, the probability distribution of LCD can be characterized through such mapping. Namely, let $\hth^u$ be the debiased private Lasso estimate defined in \eqref{eq:lasso-debias}, then for $j\in[2p]$,
\[
|\hth^*_j| = f(\hth^u_j;\lambda):= \frac{1}{1+ \frac{w^2}{n}}\left(|\hth^u_j|-\lambda\right)_+\approx\frac{1}{1+ \frac{w^2}{n}}\left(|\theta_{0,j}+\sigma\rho_nZ|-\lambda\right)_+\,,
\]
where $Z\sim\mcN(0,1)$.

\subsection{Evaluating Theoretical Predictions with Numerical Experiments}\label{sec:thmVsim}
In Section~\ref{sec:Power-FDR-analysis} we derived theoretical prediction of the power and FDP for the JLT-privatized knockoffs procedure. In this section, we conduct numerical simulation to corroborate our theoretical curves. First, as discussed in Section~\ref{sec:Power-FDR-analysis}, equation \eqref{eq:FDP_t} and \eqref{eq:power_t}, we consider power and FDP as functions of selection threshold $t$. 
Recall that in the actual knockoff procedure, the selection threshold $\wht>0$ is data-dependent. Namely $\wht$ equals the smallest value in the nonzero $|W_j|$'s such that $\widehat{\FDP}(\wht)\leq q$. Theorem \ref{thm:utility-limit} provides a prediction formula (\ref{eq:FDP-hat-limit}) for $\widehat{\FDP}(t)$. Therefore, to estimate $\hat t$, we can solve the following equation for $t$:
\begin{equation}\label{eq:solve-t}
    \frac{c_0\Prob(f(\mu+Z;\lambda)-f(Z';\lambda)\leq -t)+(1-c_0)\Prob(f(Z;\lambda)-f(Z';\lambda)\leq -t)}{c_0\Prob(f(\mu+Z;\lambda)-f(Z';\lambda)\geq t)+(1-c_0)\Prob(f(Z;\lambda)-f(Z';\lambda)\geq t)}=q\,.
\end{equation}
We compute the solution to above equation by estimating the probabilities involved via simulation of Gaussian variables $Z, Z'\sim\mcN(0,\sigma^2\rho_n^2)$ independently. 

\begin{figure}
        \centering
        \includegraphics[width=1\linewidth]{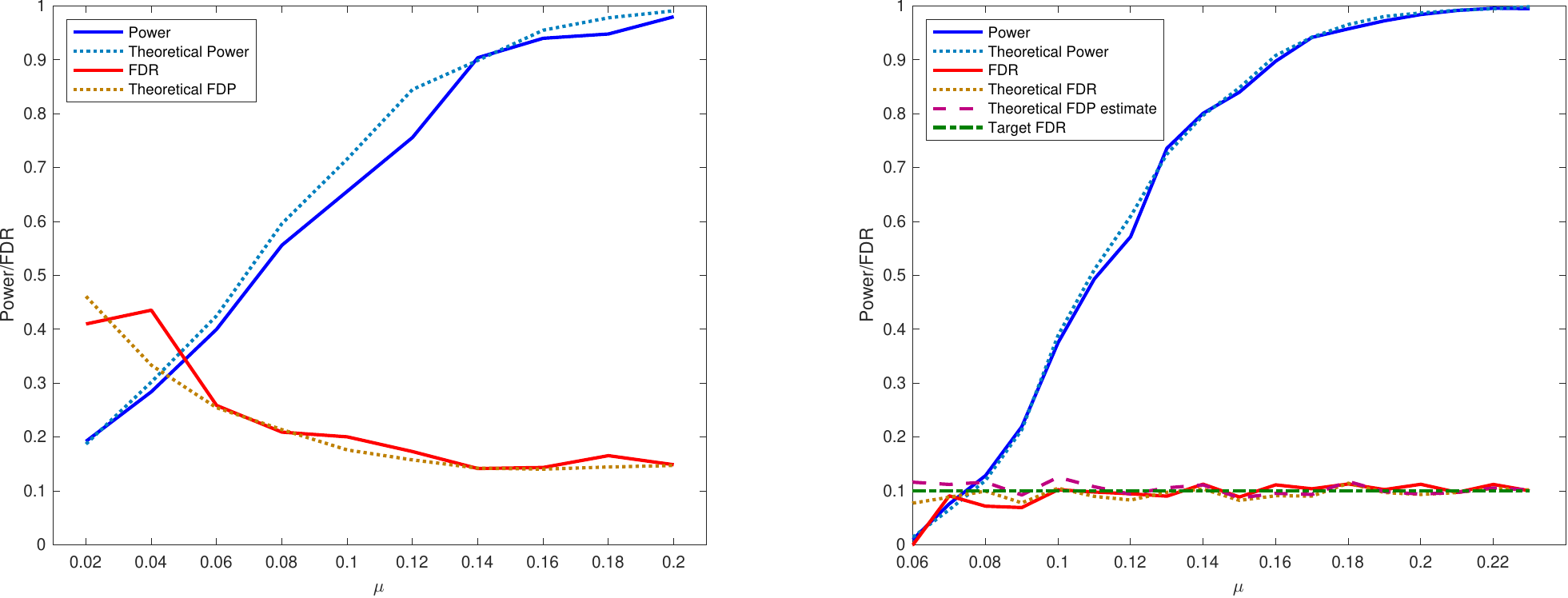}
        \caption{Signal Magnitude ($\mu$) v.s. (Theoretical)Power/FDR. Left plot is fixed-t. Right plot is estimated-t. Parameter used: $n=10^6, p = 50, s_0 = 12,\sigma^2=1, r =1500,\epsilon =1, \delta =0.01, \lambda = 0.03$, $t = 0.025$ (for fixed-t). Power and FDR are estimated by averaging over 100 knockoff-procedure iterations.}
        \label{fig:fixed_theory_threshold}
\end{figure}

In the first experiment, we investigate the theoretical prediction with a fixed value of $t$, whose result is shown on the left of Figure~\ref{fig:fixed_theory_threshold}. Then we run a similar experiment with $t$ chosen by solving \eqref{eq:solve-t}, whose result is shown on the right of Figure~\ref{fig:fixed_theory_threshold}. For simplicity, we let the non-zero coordinates of feature vector $\theta_0$ to have a constant value, $\theta_{0,j}=\mu$, for all $j\in S_0$. The theoretical prediction is evaluated by plotting power/FDR against the magnitude of $\mu$. As Figure \ref{fig:fixed_theory_threshold} reveals, with proper choice of parameters, the theoretical prediction is consistent with the simulation results for both fixed $t$ and estimated $t\approx\wht$. Note that for both simulations, the threshold $t$ is not data-dependent, hence different from the actual knockoff procedure. In the next experiment, we evaluate the accuracy of power/FDR prediction using the solution for $t$ to (\ref{eq:solve-t}) as the selection threshold.

\begin{figure}[h]
        \centering  
        \includegraphics[width=1\linewidth]{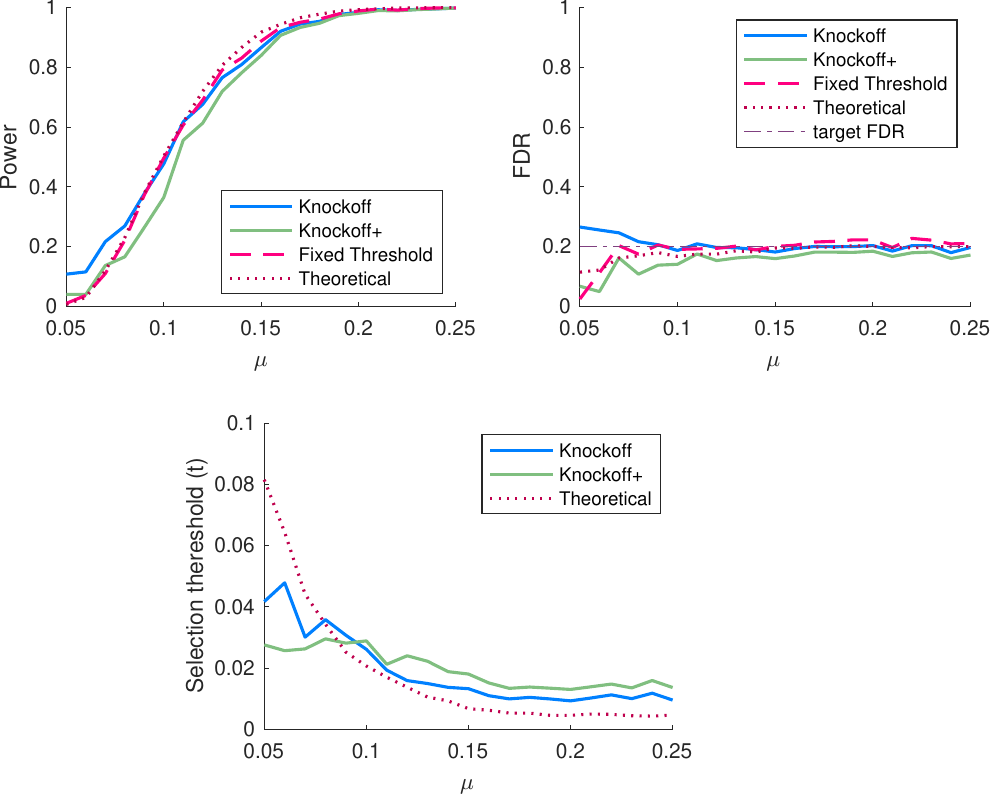}
        \caption{First row: Power(left)/FDR(right) against signal magnitude ($\mu$). Second row: average selection threshold ($t$) against signal magnitude ($\mu$). Parameter used: $n = 10^6, p = 100, s_0 = 25,\sigma^2=1, r =1500, \epsilon=1, \delta =0.01, \lambda = 0.025$, $q=0.2$. The power values of knockoff, knockoff+, and fixed threshold methods are estimated by averaging over 100 knockoff-procedure iterations.}
        \label{fig:data-t}
\end{figure}
\begin{figure}[h]
        \centering  
        \includegraphics[width=0.75\linewidth]{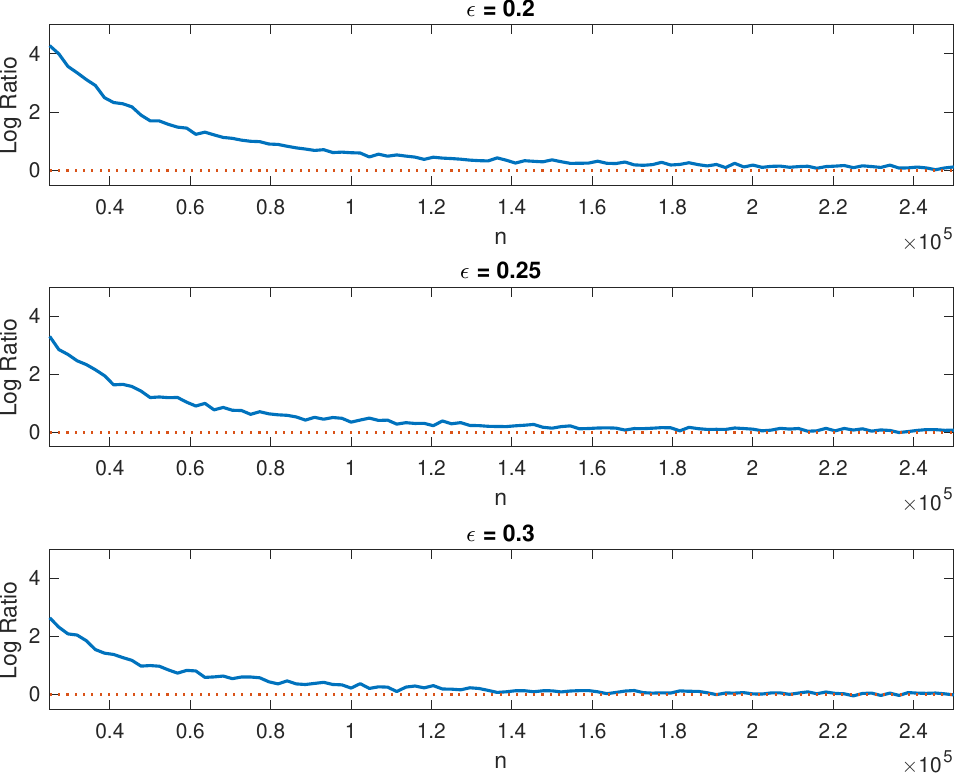}
        \caption{The log ratio of experiment power to theoretical power. Parameter used: $p = 80, s_0 = 20,\sigma^2=1, r =1000, \mu = 0.13, \delta =0.01, \lambda = 0.03$. Power and FDR are estimated by averaging over 150 knockoff-procedure iterations. Theoretical values are estimated by Monte-Carlo Methods. }
        \label{fig:error-conv}
\end{figure}
\begin{figure}[h]
        \centering
        \includegraphics[width=1\linewidth]{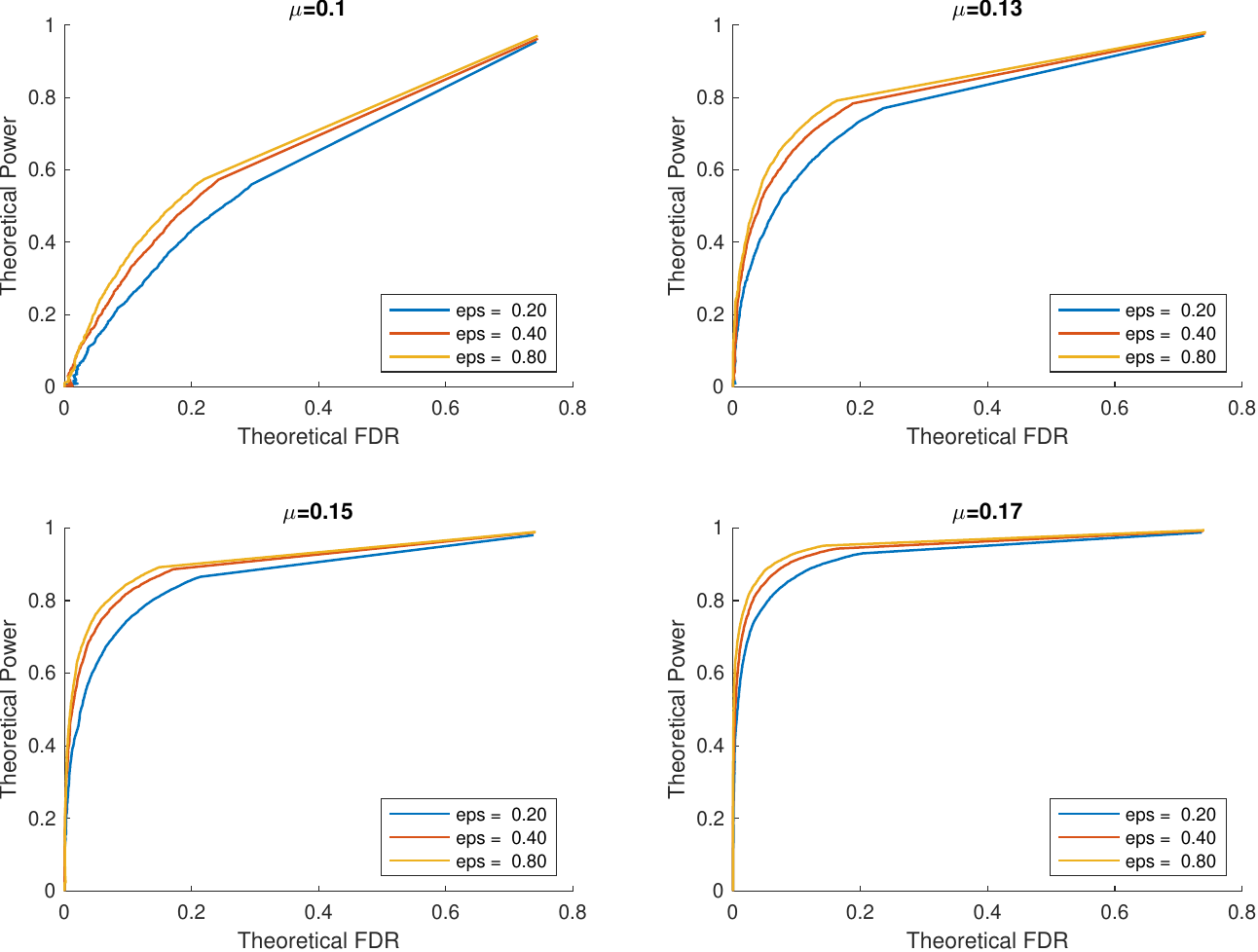}
        \caption{Power/FDR tradeoff for different privacy loss $\epsilon$. Parameters used: $n=10^6$, $p = 80$, $s_0 = 20$, $\sigma^2=1$, $r =1500$, $\delta =0.01$, $\lambda = 0.03$. }
        \label{fig:tradeoff}
\end{figure}

The theoretical estimation provided in (\ref{eq:power-limit}) and (\ref{eq:FDP-limit}) does not access data, therefore the selection threshold is estimated by solving (\ref{eq:solve-t}) for $t$. We verify the consistency of this regime by comparing it with simulations of Model-X knockoffs procedure. Note that, as introduced in \citep{candes2017panning}, there are two definitions of $\widehat{\FDP}$, one leads to exact FDR control, which is same as the present work uses (\ref{eq:FDP-hat}), the other one is less conservative. In particular, it does not add an extra count to the estimation of false discoveries:
\begin{equation}\label{eq:FDP-hat0}
    \widehat{\FDP}_0(t) = \frac{|\{j:W_j\leq -t\}|}{|\{j:W_j\geq t\}|\vee 1} \,.
\end{equation}
With such choice, the selection threshold will be smaller to selection more features. Following original names, we use knockoff$_+$ and knockoff to indicate whether an extra count is add to the estimation of the false discoveries or not, respectively. We plot the results using both of the methods, along with the theoretical estimation in Figure~\ref{fig:data-t}. Meanwhile, the plot on the second row of Figure~\ref{fig:data-t} shows the value of selection threshold as $\mu$ increases. We see that the theoretical prediction shows promising precision for the given set of parameters.  

In the third experiment, we investigate the precision of the theoretical estimation responding to the change in sample size and privacy. We measure the precision of the estimation by log ratio of simulation power to theoretical power:
\[
\log\left(\frac{\Power(\text{experiment})}{\Power(\text{theory})}\right)\,.
\]
As Figure~\ref{fig:error-conv} shows, the quality of theoretical prediction increases as sample size increases and $\epsilon$ increases. There is a clear trend that the prediction error converges to zero as sample size $n\to\infty$, at a faster rate for larger $\epsilon$.

\subsection{Privacy-Utility Trade-off}\label{sec:privacy-utility-tradeoff}
In this section, we study the privacy-utility trade-off using our theoretical prediction of the power. We explore the privacy-power trade-off by plotting power against the False Discovery Rate (FDR). Regardless of privacy constraints, there is an inherent trade-off between power and FDR: minimizing FDR can be achieved by selecting nothing, while maximizing power requires selecting everything. An ideal power-FDR curve exhibits high power at a relatively low FDR. Figure~\ref{fig:tradeoff} visualizes the trade-off between privacy and power. As $\epsilon$ increases (indicating weaker privacy), the curve shifts toward the top-left corner, meaning greater power is achieved at the same FDR level. We also observe that this improvement is not linear—loosening the privacy constraint yields diminishing returns in power gains as privacy decreases.

\subsection{Comparison with the Gaussian Mechanism}\label{sec:compare-Gaussian}
 In this section, we compare JLT to Gaussian mechanism on model-X knockoffs. As mentioned in Section~\ref{sec:prelim-jlt}, one major shortcoming of the Gaussian mechanism is that it does not guarantee the output matrix will be positive semi-definite (PSD). A non-PSD estimate of the second moment matrix can cause iterative methods, such as those used in solving the Lasso, to fail to converge. This issue is illustrated in Figure~\ref{fig:comparet}, where we observe a smooth, steadily increasing power curve as the sample size grows for the JLT method, in contrast to the abrupt jump in power seen with the Gaussian mechanism. This behavior aligns with our analysis: when the sample size is small, the minimum singular value of the second moment matrix $\bX^\top\bX$ is small relative to the magnitude of the noise $\mathbf{Z}$. As a result, the privatized second moment matrix $\bX^\top\bX + \mathbf{Z}$ is almost always non-PSD in this regime. Consequently, the Lasso algorithm fails to converge until the sample size is sufficiently large for the privatized second moment matrix to become PSD. 
 
 In practice, iterative algorithms can still converge even when the optimization problem is non-convex, provided the stopping condition is satisfied---such as when the distance between successive solutions falls within a small tolerance. However, if convergence is not achieved within a set maximum number of iterations, the algorithm will produce an incorrect or suboptimal result.

In Figure~\ref{fig:comparet}, we illustrate this phenomenon by plotting three curves (G1, G2, and G3) that depict different ways of handling cases where the second-moment estimate is non-PSD under the Gaussian Mechanism:
\begin{itemize}
    \item G1: The power is averaged over all trials, assigning a power of zero to cases where no selections are made due to a non-PSD second-moment estimate.
    \item G2: The power is averaged across all trials regardless of algorithm convergence or whether the second-moment matrix estimate is PSD.
    \item G3: The power is averaged only across trials in which the algorithm successfully converges.
\end{itemize}
Among these three approaches, G1 represents the most conservative estimate, while G3 is the most optimistic.

 \begin{figure}[h]
    \centering
    \includegraphics[width=0.65\textwidth]{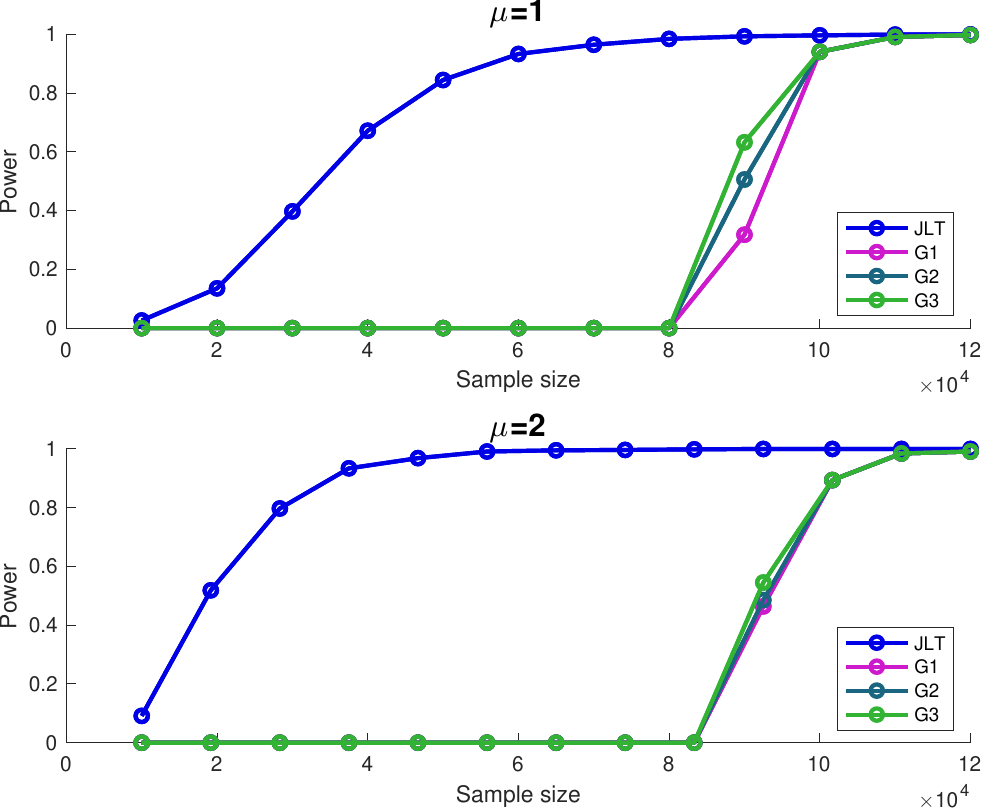} 
    \caption{Power comparison: Johnson-LindenStrauss Transform vs Laplace Mechanism. Legend explanation: JLT is the power of JLT-privatized knockoffs; G1 uses Gaussian Mechanism on the second moment matrices $\bX^\top \bX$ and $\bX^\top y$, it selects nothing when the privatized objective function cannot be minimized (because the quadratic term is not PSD); G2 uses the same privatization method as G1, but it includes the trials where the objective function is not convex; G3 uses the same privatization method as G1, while only averaging over the trials that the algorithm converges. Parameter used: $p = 50, s_0=15,r=10^4,\epsilon=0.2,\delta=0.01, \lambda =0.025, q =0.2$.}
    \label{fig:comparet}
\end{figure}

\section{Real Data}\label{sec:real-data}

To evaluate the reliability of the JL-privatized Model-X knockoff procedure in real-world settings, we apply it to a dataset published by Allstate for an insurance-claim loss prediction challenge. This dataset comprises 188,318 accident-claim records and 50 variables, 14 continuous (IDs 1–14) and 36 binary categorical (IDs 15–50). Each record may include highly sensitive financial or health information. It is important to emphasize that anonymization alone does not fully protect a dataset from re-identification attacks. For instance, \citep{NarayananNetflix} demonstrated that even datasets that appear anonymous can be de-anonymized using auxiliary information. As a result, adopting stronger privacy-preserving techniques—such as differential privacy—can provide more robust protection for individual privacy in the Allstate challenge.

\begin{table}[h]
  \centering
  \begin{tabular}{@{} cc @{}}
    \toprule
    Variable ID & Selection frequency \\ 
    \midrule
    2 &  100\% \\
    7 & 100\% \\
    24 &  100\% \\
    26 &  100\% \\
    30 & 100\% \\
    34 &  100\% \\
    36 &  100\% \\
    45 &  100\% \\
    3 &  90\% \\
    \bottomrule
  \end{tabular}
  \caption{Variables selected by running the non-private Model-X knockoff procedure 10 times at target FDR $q=0.2$. Only variables with non-zero selection frequencies are shown, listed in descending order of frequency.}
  \label{tab:nonprivate}
\end{table}

\subsection{Baseline Method: Non-Private Model-X Knockoff Procedure}
Since the true underlying variables are unknown, we assess the performance of the private knockoff methods by comparing their variable selection results with those obtained from the non-private approach. Note that in the Model-X knockoffs framework, even the non-private method has inherent randomness. To address this, we execute the non-private model ten times, each with a target FDR set at $q=0.2$. The selection frequencies of the variables across these runs are summarized in Table~\ref{tab:nonprivate}. Notably, nine variables are consistently identified with non-zero frequencies, and among these, eight are selected every time. This strong consistency highlights the stability and reliability of the non-private model, establishing it as a robust baseline for evaluating the private knockoff methods.

\begin{table}[!p]
  \centering
  \begin{tabular}{@{} c c c c c c @{}}
    \toprule
    Variable ID\textbackslash{} Privacy level 
      & $\epsilon=1$ 
      & $\epsilon=8$ 
      & $\epsilon=10$ 
      & $\epsilon=100$ 
      & Non-private \\
    \midrule
    2  & 0\%   & 40\% $*$ & 70\% $*$  & 100\% $*$ & 100\% $*$ \\
    7  & 10\%  & 40\% $*$ & 40\%  & 90\%   & 100\% $*$ \\
    26 & 0\%   & 10\%  & 10\%  & 10\%   & 100\% $*$ \\
    30 & 0\%   & 10\%  & 10\%  & 100\% $*$ & 100\% $*$ \\
    34 & 0\%   & 10\%  & 10\%  & 10\%   & 100\% $*$ \\
    36 & 0\%   & 10\%  & 10\%  & 20\%   & 100\% $*$ \\
    45 & 20\% $*$ & 20\%  & 30\%  & 100\% $*$ & 100\% $*$ \\
    1  & 0\%   & 10\%  & 20\%  & 20\%   & 0\%      \\
    15 & 20\% $*$ & 0\%   & 0\%   & 20\%   & 0\%      \\
    19 & 0\%   & 10\%  & 10\%  & 20\%   & 0\%      \\
    22 & 10\%  & 10\%  & 10\%  & 20\%   & 0\%      \\
    27 & 0\%   & 10\%  & 10\%  & 20\%   & 0\%      \\
    10 & 10\%  & 10\%  & 10\%  & 10\%   & 0\%      \\
    11 & 0\%   & 10\%  & 10\%  & 10\%   & 0\%      \\
    12 & 0\%   & 20\%  & 30\%  & 10\%   & 0\%      \\
    14 & 20\% $*$ & 0\%   & 0\%   & 10\%   & 0\%      \\
    21 & 20\% $*$ & 20\%  & 10\%  & 10\%   & 0\%      \\
    28 & 0\%   & 20\%  & 10\%  & 10\%   & 0\%      \\
    31 & 0\%   & 0\%   & 0\%   & 10\%   & 0\%      \\
    33 & 0\%   & 10\%  & 10\%  & 10\%   & 0\%      \\
    38 & 0\%   & 0\%   & 0\%   & 10\%   & 0\%      \\
    40 & 0\%   & 0\%   & 0\%   & 10\%   & 0\%      \\
    49 & 0\%   & 0\%   & 0\%   & 10\%   & 0\%      \\
    29 & 10\%  & 20\%  & 30\%  & 0\%    & 0\%      \\
    4  & 10\%  & 10\%  & 10\%  & 0\%    & 0\%      \\
    41 & 10\%  & 10\%  & 10\%  & 0\%    & 0\%      \\
    47 & 10\%  & 10\%  & 10\%  & 0\%    & 0\%      \\
    48 & 10\%  & 10\%  & 0\%   & 0\%    & 0\%      \\
    5  & 20\% $*$ & 0\%   & 0\%   & 0\%    & 0\%      \\
    6  & 10\%  & 0\%   & 0\%   & 0\%    & 0\%      \\
    16 & 10\%  & 0\%   & 0\%   & 0\%    & 0\%      \\
    25 & 10\%  & 0\%   & 0\%   & 0\%    & 0\%      \\
    32 & 10\%  & 0\%   & 0\%   & 0\%    & 0\%      \\
    42 & 10\%  & 0\%   & 0\%   & 0\%    & 0\%      \\
    46 & 10\%  & 0\%   & 0\%   & 0\%    & 0\%      \\
    \bottomrule
  \end{tabular}
 \caption{Variable selection frequencies across 10 repetitions of the JL-privatized Model-X knockoff procedure at target FDR $q=0.2$, for different privacy budgets ($\epsilon$). Privacy parameter is set to $\delta=10^{-7}$. Only variables selected at least once by the private models are included. An asterisk (`$*$') indicates the highest selection frequency within each privacy level.}
  \label{tab:private-jl}
\end{table}

\subsection{JL-privatized Model-X Knockoff Procedure}
In the JL-privatized knockoff procedure, we fix the privacy parameter, $\delta = 10^{-7}$, ensuring that it remains below $1/n$, and evaluate performance across various privacy budgets, namely $\epsilon=1,\ 8,\ 10,\ 100$. Through cross-validation, we determine the optimal projection dimension as $r=10,000$. We then repeat the JL-privatized knockoff selection procedure ten times, summarizing the results in Table~\ref{tab:private-jl}. The highest selection frequency at each privacy level is highlighted using an asterisk (`*') for clarity.

Under realistic privacy constraints ($\epsilon = 8,\ \epsilon =10$), the JL-privatized Model-X knockoff procedure successfully recovers a substantial subset of the variables selected by the non-private approach. In fact, it identifies all continuous variables consistently selected (100\% selection frequency) by the non-private model (Variable IDs 2 and 7). However, the table also reveals that privatized models generally exhibit lower selection frequencies, reflecting a reduced confidence due to the inherent noise introduced by differential privacy. This behavior highlights the fundamental trade-off between privacy and utility: enhancing privacy protections inevitably leads to some reduction in statistical power.

\subsection{Gaussian Mechanism}
Although the dataset has more samples than parameters, the design matrix exhibits substantial collinearity, resulting in a very small minimum singular value (approximately $10^{-4}$) of the design-knockoff second-moment matrix, $[\bX\ \bXT]^\top[\bX\ \bXT]$. This extreme near-singularity poses a critical issue for the Gaussian mechanism: adding symmetric noise to this matrix for privacy purposes almost inevitably results in a matrix that is not positive semi-definite (PSD). Consequently, the optimization problem becomes ill-posed and unsolvable, highlighting a fundamental limitation of applying the Gaussian mechanism to highly correlated datasets.

\section*{Acknowledgments}
AJ was partially supported by the Sloan fellowship in mathematics, the NSF CAREER Award
DMS-1844481, the NSF Award DMS-2311024, an Amazon Faculty Research Award, an Adobe Faculty Research Award and an iORB grant form USC Marshall School of Business.

\bibliographystyle{chicago}
\bibliography{references}
\newpage

\appendix
\section{Proof of Lemma~\ref{lem:iid-coin-toss}}\label{proof:lem:iid-coin-toss}
Consider $W$ as a function of the data and the randomization matrix: $W=W([\bX\ \bXT],y,\bR)$. Given $S\subseteq[p]$, define $[\bX\,\bXT]_{Swap(S)}$ to be the matrix obtained by swapping the $j$-th column of original variable with its corresponding knockoffs variable at $(j+p)$-th column for all $j\in S_0$. Define
\[
W_{Swap(S)}=W([\bX\ \bXT]_{Swap(S)},y,\bR)\,.
\]
Let $S' = \{j\in [p]:c_j=-1\}\subseteq S_0^c$. To prove the lemma, it suffices to show that\\
(a)$W_{Swap(S')}\stackrel{d}{=}W$ and (b) $W_{Swap(S')}\stackrel{d}{=}(c_1W_1,\dotsc,c_pW_p)$. 

Since $\bR$ is independent from the data, so (a) follows directly from the fact that
\[
[\bX\ \bXT]|y\stackrel{d}{=}[\bX\ \bXT]_{Swap(S')}|y\,,
\]
when $S'\subseteq S_0^c$. This a property for Model-X Knockoffs and we refer to \citep[Lemma 3.2]{candes2017panning} for the proof. 

As for (b), consider the Lasso estimator as a function of $\lambda$, $[\bX\ \bXT]$, $y$, and $\bR$:
\[
\hth^*(\lambda;[\bX\ \bXT],y,\bR) = \arg\min_{\theta\in\mbR^{2p}}\frac{1}{2n}\left\|[\bX^*\ \bXT^*]\theta - y^*\right\|_2^2+\lambda\|\theta\|_1\,.
\]
The privatized design-knockoff matrix can be re-written as 
\[
[\bX^*\ \bXT^*]= \bR_1[\bX\ \bXT]+w\bR_2,\ y^*=\bR_1y+w\bR_3\,.
\]
Now consider replacing $[\bX\ \bXT]$ by $[\bX\ \bXT]_{Swap(S'
)}$, we notice that swapping columns in $[\bX\ \bXT]$ is equivalent to swapping the corresponding coordinates in $\theta$, denoted as $\theta_{Swap(S')}$:
\[
(\bR_1[\bX\ \bXT]_{Swap(S')}+w\bR_2)\theta = (\bR_1[\bX\ \bXT]+w(\bR_2)_{Swap(S')})\theta_{Swap(S')}\,.
\]
Since $\bR_2$ has i.i.d. entries, $\bR_2\stackrel{d}{=}(\bR_2)_{Swap(S')}$. And $\|\cdot\|_1$ is invariant to swapping coordinates, so for any $\theta\in\mbR^{2p}$, $\lambda>0$,
\begin{align*}
    &\frac{1}{2n}\left\|(\bR_1[\bX\ \bXT]_{Swap(S')}+w\bR_2)\theta - (\bR_1y+w\bR_3)\right\|_2^2+\lambda\|\theta\|_1\\
    \stackrel{d}{=}&
\frac{1}{2n}\left\|(\bR_1[\bX\ \bXT]+w\bR_2)\theta_{Swap(S')} - (\bR_1y+w\bR_3)\right\|_2^2+\lambda\|\theta_{Swap(S')}\|_1\,.
\end{align*}
Therefore, the optimal solutions to the equations above also have the same distribution:
\[
\hth^*(\lambda;[\bX\ \bXT]_{Swap(S')},y,\bR) \stackrel{d}{=}(\hth^*(\lambda;[\bX\ \bXT],y,\bR))_{Swap(S')}\,.
\]
So the effect of swapping columns in $[\bX \ \bXT]$ on the debiased Lasso estimator $\hth^u$ is:
\begin{align*}
    \hth^u([\bX\ \bXT]_{Swap(S')},y)& \stackrel{d}{=} \hth^*_{Swap(S')} +\frac{1}{n}([\bX^*\ \bXT]_{Swap(S')})^\top(y^*-[\bX^*\ \bXT]_{Swap(S')}\hth^*_{Swap(S')})+\frac{w^2}{n }\hth^*_{Swap(S')}\\
        &=\hth^*_{Swap(S')} +\frac{1}{n}([\bX^*\ \bXT]_{Swap(S')})^\top(y^*-[\bX^*\ \bXT]\hth^*)+\frac{w^2}{n }\hth^*_{Swap(S')}\\
        &=\hth^*_{Swap(S')} +\frac{1}{n}\left([\bX^*\ \bXT]^\top(y^*-[\bX^*\ \bXT]\hth^*)\right)_{Swap(S')}+\frac{w^2}{n }\hth^*_{Swap(S')}\\
        &=\left(\hth^* +\frac{1}{n}[\bX^*\ \bXT]^\top(y^*-[\bX^*\ \bXT]\hth^*)+\frac{w^2}{n }\hth^*\right)_{Swap(S')}\\
        &=\left(\hth^u([\bX\ \bXT],y)\right)_{Swap(S')}\,.
\end{align*}
As a result, for $j\in S'$, 
\begin{align*}
    W_j([\bX\ \bXT]_{Swap(S')},y) &\stackrel{d}{=}  f\left(\left(\hth^u_{Swap(S')}\right)_{j}\right) - f\left(\left(\hth^u_{Swap(S')}\right)_{j+p}\right)\\
    &= f\left(\hth^u_{j+p}\right) - f\left(\hth^u_{j}\right)\\
    &= -W_j([\bX\ \bXT],y)
\end{align*}
Therefore, $W_{Swap(S')}$ has the following property:
\[
W_{Swap(S')}\stackrel{d}{=}(c_1'W_1,\dotsc,c_p'W_p),\ c_j'=\begin{cases}
    1 & \mbox{if }j\notin S'\\
    -1 & \mbox{if }j\in S'\,.
\end{cases}
\]
Note that $c=(c_1',\dotsc,c_p')$ which completes the proof of (b).
\section{Proof of Lemma \ref{lem:function-of-debiased}}\label{pf:function-of-debiased}

Define $\eta:\mbR^{2p}\to\mbR^{2p}$
\[
\eta(t) := \arg\min_{\theta\in\mathbb{R}^{2p}}\frac{ 1}{2}\|t-\theta\|^2+\lambda\|\theta\|_1+\frac{w^2}{2n}\|\theta\|_2^2 \,.
\]
By optimality of $\eta  (t)$, 
\[
    \theta = \eta  (t) \iff  \exists g_\theta \in sg(\theta)\,,\ -(t-\theta) + \lambda g_\theta +\frac{w^2}{n}\theta = 0\,,
\]
where $sg(\theta)$ is the sub-gradient of  $\|\theta\|_1$. In particular, $v\in sg(\theta)$ if and only for any $j\in [2p]$
\[
    v_i \in \begin{cases}
        \{1\} & \text{if } \theta_j>0\\
        \{-1\} & \text{if } \theta_j<0\\
        (-1, 1) & \text{if } \theta_j = 0\,.
    \end{cases}
\]
Recall that 
\[
\hth^*(\lambda) = \arg\min_{\theta\in\mbR^{2p}}\frac{1}{2n}\|y^* -[\bX^*\ \bXT^*]\theta  \|_2^2+\lambda\|\theta\|_1\,.
\]
By the optimality of $\hth^*$, there is a $g_{\hth^*} \in sg(\hth^*)$:
\[    
\lambda g_{\hth^*} = \frac{1}{n}[\bX^*\ \bXT^*]^\top (y^*-[\bX^*\ \bXT^*]\hth^*)\,.
\]
Now consider our definition of $\hth^u$:
\[
\hth^u = \hth^* +\frac{1}{ n}[\bX^*\ \bXT^*]^\top (y^*-[\bX^*\ \bXT^*]\hth^*)+\frac{w^2}{ n}\hth^*\,.
\]
Rearranging the terms in the above equation we have
\[
- (\hth^u -  \hth^*)+\frac{1}{n}[\bX^*\ \bXT^*]^\top (y^*-[\bX^*\ \bXT^*]\hth^*)+\frac{w^2}{n}\hth^*=0\,.
\]
Therefore, $\hth^* = \eta  (\hth^u)$. Now it remains to show that for any $t=(t_1,\dotsc,t_{2p})^\top$, $j\in[2p]$,
\[\left(\eta  (t)\right)_j= \frac{1}{1+ \frac{w^2}{ n}}\cdot \sign(t_j)\cdot\left(|t_j|-\lambda\right)_+\,.\]
Let $\theta = \eta  (t)$, by optimality, there exists $g_\theta \in sg(\theta)$, for all $j\in[2p]$, 
\[    
\left(1+\frac{w^2}{  n}\right)\theta_j=t_j-\lambda (g_\theta)_j\,.
\]
By setting $\theta>0$, $<0$, and $=0$, we have the following,
\begin{align*}
     (\eta  (t))_j &= \frac{1}{1+ \frac{w^2}{  n}}\cdot\begin{cases}
        t_j-\lambda & \text{if } t_j>\lambda\\
        t_j+\lambda & \text{if } t_j<-\lambda\\
        0 & \text{otherwise}\\
    \end{cases}\\
    &= \frac{1}{1+ \frac{w^2}{  n}}\cdot \sign(t_j)\cdot\left(|t_j|-\lambda\right)_+\,.
\end{align*}
\section{Proof of Theorem \ref{thm:debiased-rewrite}}\label{pf:debiased-rewrite}

    First, rewrite the unbiased estimator in the following way:
\small
\begin{align}\label{eq:decompo-Z}
    \hth^u &=  \hth^*+\frac{1}{n }\bX^{*\top}(y^*-\bX^*\hth^*)+\frac{w^2}{n }\hth\nonumber\\
        &= \hth^*+\frac{1}{n }\begin{bmatrix}\bX^\top & w\bI_p & \bf{0}\end{bmatrix} \bR^\top \bR \left(\begin{bmatrix}
            y\\
            \bf0\\
            w
        \end{bmatrix}-\begin{bmatrix}
            \bX\\
            w\bI_p\\
            \bf{0}^\top
        \end{bmatrix} \hth^*\right)+\frac{w^2}{n }\hth^*\nonumber\\
        &=  \hth^*+\frac{1}{n }\begin{bmatrix}\bX^\top & w\bI_p & \bf{0}\end{bmatrix} \begin{bmatrix}
            y-\bX\hth^*\\
            -w\hth^*\\
            w
        \end{bmatrix}+\frac{w^2}{n }\hth^*+\frac{1}{n }\begin{bmatrix}\bX^\top & w\bI_p & \bf{0}\end{bmatrix} (\bR^\top \bR-\bI_n)\begin{bmatrix}
            y-\bX\hth^*\\
            -w\hth^*\\
            w
        \end{bmatrix}\nonumber\\
        &=  \hth^*+\frac{1}{n }\begin{bmatrix}\bX^\top & w\bI_p & \bf{0}\end{bmatrix}\begin{bmatrix}\bX(\theta_0-\hth^*)+\xi\\ -w\hth^*\\ w\end{bmatrix}+\frac{w^2}{n }\hth^*\nonumber\\
        & \quad+\frac{1}{n }\begin{bmatrix}\bX^\top & w\bI_p & \bf{0}\end{bmatrix}(\bR^\top \bR-\bI_n)\begin{bmatrix}\bX(\theta_0-\hth^*)+\xi\\ -w\hth^*\\ w\end{bmatrix}\nonumber\\
        &= \theta_0 + (\hth^*-\theta_0) + \frac{1}{n }\bX^\top \bX(\theta_0-\hth^*) +\frac{1}{n }\bX^\top \xi\nonumber\\
        & \quad+\frac{1}{n }\begin{bmatrix}\bX^\top & w\bI_p & \bf{0}\end{bmatrix}(\bR^\top \bR-\bI_{n+p+1})\begin{bmatrix}\bX(\theta_0-\hth^*)\\ -w\hth^*\\ w\end{bmatrix}+\frac{1}{n }(\bX^\top (\bR_1^\top \bR_1-\bI_n)+w\bR_2^\top \bR_1)\xi \nonumber\\
        &= \theta_0 + \underbrace{\left(\frac{1}{n}\bX^\top \bX-\bI_p\right)(\theta_0-\hth^*)
        +\frac{1}{n}[\bX^\top\ w\bI_p\ {\bf 0}](\bR^\top \bR-\bI_{n+p+1})
        \begin{bmatrix}
        \bX(\theta_0-\hth^*)\\ -w\hth^*\\ w
        \end{bmatrix}}_\Delta\nonumber\\
        &\quad+\underbrace{\frac{1}{n }\bX^{*\top}\bR_1\xi}_Z\,.
\end{align}
\normalsize
Recall that $\xi$ is i.i.d noise with $\Expected(\xi\xi^\top)=\sigma_n^2\bI$ and $\Expected(\xi)=\bf{0}$, which completes the proof of theorem. 
\section{Proof of Lemma \ref{lem:rho_n}}\label{pf:rho_n}
Then we compute the expected value of the covariance matrix: 
\begin{align*}
    \Expected\left(\frac{1}{n^2}\bX^{*\top}\bR_1\bR_1^\top\bX^*\right) &= \frac{1}{n^2} \Expected_{\bX}\left(\Expected_{\bR}\left(\bX^{*\top}\bR_1\bR_1^\top\bX^*\right)\right)\\
        &= \frac{1}{n^2}\Expected_{\bX}(\Expected_{\bR}((\bX^\top \bR_1^\top+w\bR_2^\top)\bR_1\bR_1^\top(\bR_1X+w\bR_2)))\\
        &=  \frac{1}{n^2}\Expected_{\bX}(\bX^\top \Expected_{\bR}(\bR_1^\top \bR_1 \bR_1^\top \bR_1)\bX+w^2\Expected_{\bR}(\bR_2^\top \bR_1 \bR_1^\top \bR_2))\\
        &=  \frac{1}{n^2}\Expected_{\bX}(\bX^\top \Expected_{\bR_1}(\bR_1^\top \bR_1 \bR_1^\top \bR_1)\bX+w^2\Expected_{\bR_2}(\bR_2^\top \Expected_{\bR_1}(\bR_1 \bR_1^\top) \bR_2))\,.
\end{align*}
To compute $\Expected_R(\bR_1^\top \bR_1 \bR_1^\top \bR_1)$, note that 
\begin{equation}\label{eq:rho-eq1}
    \bR_1^\top \bR_1 = \frac{1}{r}\sum_{k =1}^rW_kW_k^\top\,,\quad W_1,\dotsc,W_r\stackrel{i.i.d.}{\sim}\mcN({{\bf{0}}},\bI_n)\,.
\end{equation}
Then we have that
\begin{align*}
    \Expected(\bR_1^\top \bR_1 \bR_1^\top \bR_1) &=  \frac{1}{r^2}\Expected\left(\left(\sum_{k=1}^r W_kW_k^\top\right)\left( \sum_{\ell=1}^rW_lW_l^\top\right)\right)\\
        &= \frac{1}{r^2}\sum_{k=1}^r\sum_{\ell=1}^r\Expected\left(W_kW_k^\top W_{\ell} W_{\ell}^\top\right)\\
        &= \frac{1}{r^2}\left(r\Expected(W_1W_1^\top W_1W_1^\top)+r(r-1)\Expected(W_1W_1^\top W_2W_2^\top)\right)\,,
\end{align*}
where the last equation holds because $W_1,\dotsc,W_r$ are i.i.d and we only have to consider the cases of $k=\ell$ and $k\neq \ell$. Taking the expectation entry-wise, we have
\begin{align}\label{eq:rho-eq2}
    \Expected\left(W_1W_1^\top W_1W_1^\top\right)_{ij} &= \Expected\left(\sum_{k=1}^n (W_{1,i}\cdot W_{1,k}^2\cdot W_{1,j})\right)\nonumber\\
        &= \begin{cases}
            0 & \text{if }i\neq j\\
            \Expected(W_{1,i}^4)+\sum_{k\neq i}\Expected(W_{1,i}^2)\Expected(W_{1,k}^2) & \text{if }i=j
        \end{cases}\nonumber \\
        &= \begin{cases}
            0 & \text{if }i\neq j\\
            n+2 & \text{if }i=j\,,
        \end{cases}
\end{align}
and $\Expected\left(W_1W_1^\top W_2W_2^\top\right) = \Expected\left(W_1W_1^\top\right)\Expected\left( W_2W_2^\top\right)= \bI_n$. Therefore, 
\begin{align}\label{eq:rho-eq3}
    \Expected(\bR_1^\top \bR_1 \bR_1^\top \bR_1) &=  \frac{1}{r^2}\left(r(n+2)\bI_n+r(r-1)\bI_n\right)\nonumber\\
        &=  \frac{r+n+1}{r}\bI_n\,.
\end{align}
On the other hand,
\begin{align*}
    \Expected_{\bR_2}(\bR_2^\top \Expected_{\bR_1}(\bR_1 \bR_1^\top) \bR_2) =  \Expected_{\bR_2}\left(\bR_2^\top \left(\frac{n}{r}\bI_r\right) \bR_2\right)
    = \frac{n}{r}\bI_p\,.
\end{align*}
The expected value of the covariance matrix follows after plugging in these computation results:
\begin{align*}
    \Expected\left(\frac{1}{n^2}\bX^{*\top}\bR_1\bR_1^\top\bX^*\right) &= \frac{1}{n^2}\Expected_{\bX}(\bX^\top \Expected_{\bR_1}(\bR_1^\top \bR_1 \bR_1^\top \bR_1)\bX+w^2\Expected_{\bR_2}(\bR_2^\top \Expected_{\bR_1}(\bR_1 \bR_1^\top) \bR_2))\\
        &= \frac{1}{n^2 }\Expected_{\bX}\left(\frac{r+n+1}{r}\bX^\top\bX+\frac{nw^2}{r}\bI_p\right)\\
        &=  \left(\frac{r+n+1}{ nr}+\frac{w^2}{nr}\right)\bI_p\,,
\end{align*}
where in the last line, we used the properties of $\bX$ described by~\eqref{eq:X-def}.

We proceed to prove that $|\bQ/\rho_n^2-\bI_p|_\infty=o_p(1)$. First we prove that $|\bX^{\top}(\bR_1^\top\bR_1)^2\bX/\mu^2-\bI_n|_\infty=o_p(1)$, where we let $\mu^2\bI_n:=\Expected(\bX^\top(\bR_1\bR_1)^2\bX)$. As shown in \eqref{eq:rho-eq3}, $\mu^2 = n(r+n+1)/r$. It suffices to show that,
\[
\left|\frac{\bX^{\top}(\bR_1^\top\bR_1)^2\bX}{\mu^2}-\frac{\mbox{tr}((\bR_1^\top\bR_1)^2)}{\mu^2}\bI_p\right|_\infty=o_p(1)\,,\quad\left|\frac{\mbox{tr}((\bR_1^\top\bR_1)^2)}{\mu^2}-1\right|=o_p(1)\,.
\]
The standard Gaussian concentration gives that for any $t_1\ge 0$,
\[
\Prob\left(\sigma_{\max}(\bR_1)\le \left(1+\sqrt{n/r}+t_1/\sqrt{r}\right)_+\right)\leq \exp(-t_1^2/2)\,.
\]
This high probability bound on $\sigma_{\max}(\bR_1)$ implies that with probability $1-\delta_1$,
\[
\|(\bR_1^\top\bR_1)\|_2^2\le \left(1+\sqrt{n/r}+\sqrt{\frac{\log(1/\delta_1)}{2r}}\right)^4\,.
\]
Denote by $\bX_j$ the $j$th column of $\bX$. By Hanson-Wright inequality, for any $i,j$ for any $t_2>0$, there exists a constant $c>0$ such that
\[
\Prob\left(|\bX_i^\top (\bR_1^\top \bR_1)^2\bX_j-\Expected(\bX_i^\top (\bR_1^\top \bR_1)^2\bX_j)|>t_2|\bR\right)\le 2\exp\left(\frac{ct_2}{\|(\bR_1^\top\bR_1)^2\|_2}\right)
\]
\[
\implies\Prob\left(\left|\frac{\bX_i^\top (\bR_1^\top \bR_1)^2\bX_j}{\mu^2}-\frac{\mbox{tr}((\bR_1^\top \bR_1)^2)}{\mu^2}\mathbb \delta_{ij}\right| >\frac{t_2}{\mu^2} \bigg |\bR\right)\le 2\exp\left(-\frac{ct_2}{\|(\bR_1^\top\bR_1)^2\|_2}\right)\,.
\]
By union bound, we have that
\[
\Prob\left(\left|\frac{\bX^{\top}(\bR_1^\top\bR_1)^2\bX}{\mu^2}-\frac{\mbox{tr}((\bR_1^\top\bR_1)^2)}{\mu^2}\bI_p\right|_\infty >\frac{t_2}{\mu^2} \bigg |\bR\right)\le 2p^2\exp\left(-\frac{ct_2}{\|(\bR_1^\top\bR_1)^2\|_2}\right)\,.
\]
Let $t_2 = \log(2p^2/\delta_2)\cdot \|(\bR_1^\top\bR_1)^2\|_2/c$, then as $n,r \to \infty$, with probability $1-\delta_2$, 
\begin{align*}
    \left|\frac{\bX^\top (\bR_1^\top \bR_1)^2\bX}{\mu^2}-\frac{\mbox{tr}((\bR_1^\top \bR_1)^2)}{\mu^2}\bI_p\right|_\infty &\le \frac{\log(2p^2/\delta_2)\cdot \|(\bR_1^\top\bR_1)^2\|_2/c}{n(r+n+1)/r}\\
    &= O\left(\frac{\log(2p^2/\delta_2)\left(1+\sqrt{n/r}+\sqrt{\frac{\log(1/\delta_1)}{2r}}\right)^4}{n(r+n+1)/r}\right)\\
    &= o(1)
\end{align*}
As for $|\mbox{tr}((\bR_1^\top \bR_1)^2)/\mu^2-1|$, we continue to use the notation in \eqref{eq:rho-eq1}, let $\bR_1^\top \bR_1 = \frac{1}{r}\sum_{k =1}^rW_kW_k^\top$. Then
\begin{align*}
    \mbox{tr}\left(\left(\bR_1^\top\bR_1\right)\right) &=\|\bR_1\bR_1^\top\|_F^2\\
        &=\frac{1}{r^2}\left\|\sum_{k=1}^rW_kW_k^\top\right\|_F^2\\
        &=\frac{1}{r^2}\left(\sum_{k=1}^r\|W_k\|_2^4+2\sum_{1\le k<l\le r}(W_k^\top W_l)^2\right)
\end{align*}
Let $f(W_{11},\dotsc,W_{rn})=\|\sum_{k=1}^rW_kW_k^\top\|_F^2$. By Gaussian Poincaré Inequality, 
\[
\Var(f(W_{11},\dotsc,W_{rn}))\leq \Expected\left(\|\nabla f\|_2^2\right)\,.
\]
Then we have that
\begin{align*}
    \Expected\left(\|\nabla f\|_2^2\right) &= \sum_{1\le i, j\le r}\Expected \left(\left(4\sum_{k=1}^rW_i^\top W_k W_{kj}\right)^2 \right)\\
        &= 16nr\sum_{1\le k,l\le r}\Expected\left(W_{kj}W_k^\top W_iW_i^\top W_lW_{lj}\right)\\
        &= 16nr\bigg(\Expected\left(\left(W_i^\top W_i\right)^2W_{ij}^2\right)+\sum_{k=l,k\neq i}\Expected\left(\left(W_i^\top W_k\right)^2W_{kj}^2\right)\\
        &\quad\quad+2\sum_{k\neq i}\Expected\left(W_{ij}\|W_i\|_2^2W_i^\top W_k W_{kj}\right) +\sum_{k,l\neq i,k\neq l}\Expected\left(W_{kj}W_k^\top W_iW_i^\top W_l W_{lj}\right)\bigg)\\
        &=16nr\left((n^2+2n+12)+(r-1)(n+2)+2(r-1)(2n+1)+0\right)\\
        &=16nr\left(n^2+5n+4r-3n+7\right)
\end{align*}
Therefore, 
\begin{align*}
    \Var\left(\frac{\mbox{tr}((\bR_1^\top \bR_1)^2)}{\mu^2}\right)&\le \frac{1}{r^4\mu^4}\cdot 16nr(n^2+5rn-3n+4r+7)\\
    &\le O\left(\max\left\{\frac{1}{n^2},\frac{1}{r^2}\right\}\right)
\end{align*}
Therefore, by Chebyshev inequality,
\[
\Prob\left(\left|\frac{\mbox{tr}((\bR_1^\top\bR_1)^2)}{\mu^2}-1\right|\ge t_3\right)\le t_3^{-2}\cdot O\left(\max\left\{\frac{1}{n^2},\frac{1}{r^2}\right\}\right)
\]
Then with probability $1-\delta_3$, 
\[
\left|\frac{\mbox{tr}((\bR_1^\top\bR_1)^2)}{\mu^2}-1\right|\le \sqrt{\delta_3^{-1}O\left(\max\left\{\frac{1}{n^2},\frac{1}{r^2}\right\}\right)}=o(1)
\]
With a similar manner, we can also prove that 
\[
\left|\frac{w}{n^2}\bX^\top\bR_1^\top\bR_1\bR_1^\top\bR_2\right|_\infty=o_p(1)\,,\quad \left|\frac{\frac{w^2}{n^2}\bR_2^\top\bR_1\bR_1^\top\bR_2}{w^2/nr}-\bI_p\right|_\infty=o_p(1)\,.
\]
Hence, the proof is complete.

\section{Proof of Theorem \ref{thm:suff-cond-debiasing}}\label{pf:suff-cond-debiasing}
By Theorem \ref{thm:debiased-rewrite}, the bias term can be written as
\begin{align}\label{eq:delta-expand}
    \Delta &=  \left(\frac{1}{n}\bX^\top \bX-\bI_p\right)(\theta_0-\hth^*)
        +\frac{1}{n}[\bX^\top\ w\bI_p\ \bf{0}](\bR^\top\bR-\bI_{n+p+1})\begin{bmatrix}\bX(\theta_0-\hth^*)\\ -w\hth^*\\ w\end{bmatrix}\nonumber\\
        &=\frac{1}{ n}\big(\underbrace{(\bX^\top \bX-\bI_p)(\theta_0-\hth^*)}_{(a)}
        +\underbrace{\bX^\top(\bR_1^\top \bR_1-\bI_n)\bX(\theta_0-\hth^*)}_{(b)}
        +\underbrace{w^2(\bR_2^\top \bR_2-\bI_p)\hth^*}_{(c)}\nonumber\\
        &\quad\quad+\underbrace{w\bR_2^\top \bR_1\bX(\theta_0-\hth^*)}_{(d)}
        -\underbrace{w\bX^\top \bR_1^\top \bR_2\hth^*}_{(e)}
        +\underbrace{w\bX^\top \bR_1^\top \bR_3}_{(f)}
        +\underbrace{w^2\bR_2^\top \bR_3}_{(g)}\big)\,.
\end{align}
In order to bound $\|\Delta\|_\infty$, we bound the norm of each of the terms (a)-(g) separately. In doing that we use the inequality $\|\bA v\|_\infty \le |\bA|_\infty \|v\|_1$, where for a matrix $\bA$, we define its max-norm as $|\bA|_\infty :=\max_{i,j}\{|\bA_{ij}|\}$. The bulk of the proof consists in controlling the max-norm of several matrices in the decomposition of $\Delta$, and controlling the error term $\|\theta_0-\hth^*\|_1$ and $\|\theta_0\|_1$.
\begin{lem}\label{lem:concentration-inequalities}
    For $\bX$ generated as~\eqref{eq:X-def} and $\bR\in\mbR^{r\times (n+p+1)}$ a random matrix with i.i.d entries from $\mcN(0,1/r)$, we have
    \begin{align*}        
    &\left|\frac{1}{n}\bX^\top \bX-\bI_p\right|_\infty = O\left(\sqrt{\frac{\log p}{n}}\right)\,,\quad
    |\bX^\top(\bR_1^\top \bR_1-\bI_n)\bX|_\infty = O\left(n\sqrt{\frac{\log p}{r}}\right)\,,\\
    &|\bR_2^\top \bR_2-\bI_p|_\infty =O\left(\sqrt{\frac{\log p}{r}}\right)\,,\quad
    \|\bR_2^\top \bR_3\|_\infty =O\left(\sqrt{\frac{\log p}{r}}\right)\,,\quad
    |\bR_2^\top \bR_1\bX|_\infty =O\left(\log p\sqrt{\frac{n}{r}}\right)\\
    &|\bX^\top \bR_1^\top \bR_2|_\infty =O\left(\log p\sqrt{\frac{n}{r}}\right)\,,\quad 
    \|\bX^\top \bR_1^\top\bR_3\|_\infty =O\left(\log p\sqrt{\frac{n}{r}}\right)\,.
    \end{align*}
\end{lem}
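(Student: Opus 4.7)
\textbf{Proof Proposal for Lemma \ref{lem:concentration-inequalities}.}

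The plan is to bound each of the seven matrix/vector max-norms entrywise, combine with a union bound, and use the fact that the entries of $\bX$ are bounded in $[-b,b]$ while the entries of $\bR$ are i.i.d.\ $\mcN(0,1/r)$ and independent across the three blocks $\bR_1,\bR_2,\bR_3$. Throughout, I would condition on the smaller of the two independent factors so that each entry of interest becomes a scalar that can be controlled by a classical tail bound, then handle the conditioning event in a separate high-probability step.

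\textbf{Bounds involving only $\bX$.} For $\left|\tfrac{1}{n}\bX^\top\bX-\bI_p\right|_\infty$, each entry is $\tfrac{1}{n}\sum_{k=1}^n \bX_{ki}\bX_{kj} - \delta_{ij}$, a centered average of i.i.d.\ bounded variables with $|\bX_{ki}\bX_{kj}|\le b^2$. Hoeffding's inequality gives sub-Gaussian tails with variance proxy $O(1/n)$; a union bound over $p^2$ entries yields the rate $\sqrt{\log p/n}$.

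\textbf{Bounds involving quadratic forms in $\bR$.} For $|\bX^\top(\bR_1^\top\bR_1-\bI_n)\bX|_\infty$ and $|\bR_2^\top\bR_2-\bI_p|_\infty$, I would condition on $\bX$ and apply the Hanson--Wright inequality (or, equivalently, Bernstein for products of Gaussians, using the representation $\bR_1^\top\bR_1=\tfrac{1}{r}\sum_{k=1}^r W_kW_k^\top$ with $W_k\sim\mcN({\bf 0},\bI_n)$). The $(i,j)$ entry equals $u^\top(\bR_1^\top\bR_1-\bI_n)v$ with $u,v$ columns of $\bX$; Hanson--Wright then gives sub-exponential deviation of order $\|u\|\|v\|\sqrt{\log p/r}$ after a union bound, combined with the high-probability bound $\|u\|^2,\|v\|^2 = O(n)$ from step one. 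Substituting yields $O(n\sqrt{\log p/r})$. The term $|\bR_2^\top\bR_2-\bI_p|_\infty$ is the same argument with $u=e_i, v=e_j$.

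\textbf{Bilinear terms mixing $\bR$-blocks and $\bX$.} Each of $\bR_2^\top\bR_1\bX$, $\bX^\top\bR_1^\top\bR_2$, $\bR_2^\top\bR_3$, and $\bX^\top\bR_1^\top\bR_3$ has a common structure: two independent Gaussian blocks separated by either a deterministic identity or a matrix independent of one of them. I would condition on $\bR_1$ and $\bX$; then, for instance, $(\bR_2^\top\bR_1\bX)_{ij}=\sum_{k=1}^r R_{2,ki}(\bR_1\bX)_{kj}$ is, conditional on $(\bR_1,\bX)$, a centered Gaussian of variance $\|(\bR_1\bX)_{\cdot,j}\|_2^2/r$. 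Since $\|(\bR_1\bX)_{\cdot,j}\|_2^2 = \bX_j^\top\bR_1^\top\bR_1\bX_j = O(n)$ with high probability (by the previous Hanson--Wright step), a Gaussian tail plus union bound over $p^2$ entries gives the stated $O(\log p\cdot \sqrt{n/r})$ rate; analogous arguments handle the other three terms, with $\bX_j$ replaced by $e_j$ in the $\bR_2^\top\bR_3$ case.

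\textbf{Main obstacle.} The one subtle point is the interplay of the two randomness sources: naively treating $\bR_1^\top\bR_1\bX_j$ as a fixed vector of norm $O(\sqrt n)$ requires that the high-probability event on which the norm bound holds be simultaneously valid across all the conditioning arguments. I would bundle all the required events (one per inequality) into a single high-probability intersection at the end, using $\log p \ll \min\{n,r\}$ implicitly from the parameter regime. The Gaussian/bounded dichotomy keeps each individual tail bound clean, so no new ideas beyond Hoeffding, Hanson--Wright, and a union bound are needed; the care is purely in bookkeeping the orders.
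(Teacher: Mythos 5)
Your proposal is correct, and for most terms it matches the paper's proof: the paper also handles $|\tfrac1n\bX^\top\bX-\bI_p|_\infty$ by Hoeffding plus a union bound, and handles $|\bR_2^\top\bR_2-\bI_p|_\infty$, $\|\bR_2^\top\bR_3\|_\infty$ and $|\bX^\top(\bR_1^\top\bR_1-\bI_n)\bX|_\infty$ exactly as you do, via the representation $\bR_1^\top\bR_1=\tfrac1r\sum_k W_kW_k^\top$, sub-exponential norms of products of Gaussians, and a Bernstein-type inequality conditional on $\bX$ (using $\|\bX_j\|_2^2\le nb^2$, which is in fact deterministic, so you do not even need your ``step one'' there). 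Where you genuinely diverge is the mixed bilinear terms $\bR_2^\top\bR_1\bX$, $\bX^\top\bR_1^\top\bR_2$, $\bX^\top\bR_1^\top\bR_3$: the paper conditions in the opposite direction, first bounding every entry of a column of $\bR_1^\top\bR_2$ by $a\sqrt{\log p/r}$ (a union bound over the $n$ coordinates), and then applying Hoeffding over the bounded entries of $\bX$; this worst-case $\ell_\infty$ control of the conditional variance is what produces the stated rate $\log p\sqrt{n/r}$. You instead condition on $(\bR_1,\bX)$ and use the exact conditional Gaussianity of $\bR_2$ (resp.\ $\bR_3$), with conditional variance $\|\bR_1\bX_j\|_2^2/r=O(n/r)$ obtained from your quadratic-form step; this gives the slightly sharper rate $\sqrt{n\log p/r}$, which of course implies the lemma's $O(\log p\sqrt{n/r})$. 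The only hypothesis your route adds is $\log p=O(r)$ so that $\bX_j^\top\bR_1^\top\bR_1\bX_j=O(n)$ on a high-probability event, and you correctly flag the need to intersect all such events at the end; since the parameter regimes of Theorem~\ref{thm:suff-cond-debiasing} force $r\gg(\log p)^3$, this is harmless. In short: same toolkit (Hoeffding, Bernstein/Hanson--Wright, union bounds), but a different conditioning order for the mixed terms --- the paper's choice only uses boundedness of $\bX$ and is marginally more elementary, while yours trades an extra $\sqrt{\log p}$ factor for a norm bound on $\bR_1\bX_j$.
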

For the ease of reading, in the statement of the lemma we only present the asymptotic results and refer to Appendix \ref{pf:concentration-inequalities} for finite sample analysis. Notice that

Next, we bound $\|\theta_0-\hth^*\|_1$. The $\ell_1$-norm of the private Lasso estimator, $\|\hth^*\|_1$, can be upper-bound by triangular inequality,
\[
\|\hth^*\|_1\leq\|\theta_0-\hth^*\|_1+\|\theta_0\|_1\,.
\]
For non-private Lasso, there is an abundant literature on this problem \citep{BulmannvandeGeer11}. In the JLT-privatized Lasso problem, there are two differences that require an adaptation: 1) the appearance of random matrix $\bR$; 2) the appearance of appended constant-diagonal matrix $w\bI_{p+1}$. Given that $\bR$ is Gaussian, the randomness can be dealt with using restricted isometry; the introduction of the $w\bI_{p+1}$ is analogous to ridge regression, which makes the optimization problem in (\ref{eq:lasso-JL-simpl}) similar to the Elastic-Net \citep{elasticnet-zou-hastie} because of the mixture of $\ell_1$ and $\ell_2$ regularization. Our technique to bound the error term is inspired by \citep{HvdG11}. 

Before presenting the upper bound, we introduce an important assumption that the private covariance matrix $\frac{1}{n}\bX^{*\top}\bX^*$ is non-singular in a restricted set of directions. This property is also known as the compatibility condition.
\begin{defn}
    (Compatibility condition) Denote $[p]:=\{1,2,\dotsc,p\}$, let $S\subset [p]$, $\bSigma\in\mathbb{R}^{p\times p}$ is symmetric and semi-positive definite. Let
    \begin{equation}\label{def:restricted-set}
        \mathcal{M}_{\kappa_n,\lambda}(S):= \bigg\{\theta\in\mathbb{R}^p\bigg|\; \|\theta_{S^c}\|_1\leq \frac{2\kappa_n}{\lambda}\|\theta_S\|_2\bigg\}\,,
    \end{equation}
    where $\kappa_n = 2\lambda\sqrt{|S|}+w^2/n$, and $(\theta_S)_i =\begin{cases}
        \theta_i & i\in S\\
        0 & i\in S^c\,.
    \end{cases}$
    
    We say \textbf{compatibility condition }$\mathcal{C}(\bSigma,S)$ is satisfied with constant $\phi(\bSigma,S)$ if there exists a constant $\phi(\bSigma,S)>0$ such that for all $\theta \in \mathcal{M}(S)$,
    \[
    \phi(\bSigma,S)\leq\frac{\theta^\top \bSigma \theta}{\|\theta_S\|_2^2}\,,
    \]
\end{defn}
We claim that given the distribution of $\bR$ and $\bX$, $\mathcal{C}\left(\frac{1}{n}\bX^{*\top}\bX^*,S_0\right)$ is satisfied with $\phi_w = \Omega(1+\frac{w^2}{n})$ with high probability if the condition~\ref{item:cond-comp} is true, namely
\[
\frac{\kappa_n^2}{\lambda^2\phi_w}=o\left(\sqrt{\frac{n}{\log p}}\cdot\min\left\{1,\sqrt{\frac{r}{n}},\frac{\sqrt{rn}}{w^2}\right\}\right)\,,\quad \kappa_n = 2\lambda\sqrt{s_0}+w^2/n\,.
\] 
Lemma~\ref{lem:compatibility1} states that when the max-norm of the difference between matrices $\hat{\bSigma}$ and $\bSigma$ is bounded, the compatibility condition $\mathcal{C}(\bSigma,S)$ implies the compatibility condition $\mathcal{C}\left(\hat{\bSigma},S\right)$. The proof can be found in Appendix \ref{pf:compatibility1}.
\begin{lem}\label{lem:compatibility1}
    Given arbitrary symmetric and positive semi-definite matrices $\bSigma$ and $\hat{\bSigma}$,  assume $\mathcal{C}(\bSigma,S)$ is satisfied with constant $\phi_0>0$, and
    \begin{equation*}
        \left|\hat{\bSigma}-\bSigma\right|_\infty\leq \tilde{t}\,.
    \end{equation*}
    Then for all $\theta\in\mathcal{M}_{\kappa_n,\lambda}(S)$ as defined in \eqref{def:restricted-set},
    \begin{equation*}
        \left|\frac{\theta^\top \hat{\bSigma}\theta}{\theta^\top \bSigma\theta}-1\right|\leq \frac{\tilde{t}\tilde{\kappa}_n^2}{\phi_0}\,,
    \end{equation*}
    where $\tilde{\kappa}_n=5\sqrt{|S|}+\frac{w^2}{n\lambda}$. Moreover, if we have 
    \begin{equation*}
        \frac{\tilde{t}\tilde{\kappa}_n^2}{\phi_0}\leq h\in (0,1)\,,
    \end{equation*}
    then $\mathcal{C}\left(\hat{\bSigma},S\right)$ is satisfied with constant $(1-h)\phi_0$.
\end{lem}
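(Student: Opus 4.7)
\textbf{Proof proposal for Lemma~\ref{lem:compatibility1}.} The plan is to bound $|\theta^\top(\hat{\bSigma}-\bSigma)\theta|$ via the $|\cdot|_\infty$ norm of the matrix perturbation and the $\ell_1$ norm of $\theta$, then convert the $\ell_1$ bound into an $\ell_2$ bound on $\theta_S$ using the restricted-cone structure of $\mathcal{M}_{\kappa_n,\lambda}(S)$, and finally translate back via the compatibility condition already available for $\bSigma$.

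First, I would start from the elementary inequality
\[
|\theta^\top(\hat{\bSigma}-\bSigma)\theta| \le |\hat{\bSigma}-\bSigma|_\infty \,\|\theta\|_1^2 \le \tilde t\,\|\theta\|_1^2.
\]
Next, for any $\theta\in\mathcal{M}_{\kappa_n,\lambda}(S)$ I would split $\|\theta\|_1 = \|\theta_S\|_1+\|\theta_{S^c}\|_1$, apply Cauchy--Schwarz to get $\|\theta_S\|_1\le \sqrt{|S|}\|\theta_S\|_2$, and use the defining inequality $\|\theta_{S^c}\|_1\le (2\kappa_n/\lambda)\|\theta_S\|_2$ of the restricted set. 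Plugging in $\kappa_n = 2\lambda\sqrt{|S|}+w^2/n$ yields
\[
\|\theta\|_1 \le \Bigl(\sqrt{|S|}+ \tfrac{2\kappa_n}{\lambda}\Bigr)\|\theta_S\|_2 \;\le\; \tilde\kappa_n\,\|\theta_S\|_2,
\]
up to adjusting the constant in $\tilde\kappa_n = 5\sqrt{|S|}+w^2/(n\lambda)$. Combining the two displays gives the quadratic bound
\[
|\theta^\top(\hat{\bSigma}-\bSigma)\theta| \le \tilde t\,\tilde\kappa_n^2\,\|\theta_S\|_2^2.
\]

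Now I would invoke the hypothesized compatibility $\mathcal{C}(\bSigma,S)$ with constant $\phi_0$, which gives $\|\theta_S\|_2^2 \le \theta^\top\bSigma\theta/\phi_0$, so
\[
|\theta^\top(\hat{\bSigma}-\bSigma)\theta| \;\le\; \frac{\tilde t\,\tilde\kappa_n^2}{\phi_0}\,\theta^\top\bSigma\theta.
\]
Since $\bSigma$ is PSD and $\theta\ne 0$ (the case $\theta_S=0$ forces $\theta=0$ inside the cone), we may divide by $\theta^\top\bSigma\theta$ to obtain the first claimed bound on $|\theta^\top\hat{\bSigma}\theta/\theta^\top\bSigma\theta - 1|$. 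For the second claim, under $\tilde t\tilde\kappa_n^2/\phi_0 \le h$, the above yields $\theta^\top\hat{\bSigma}\theta \ge (1-h)\theta^\top\bSigma\theta \ge (1-h)\phi_0\|\theta_S\|_2^2$ for every $\theta\in\mathcal{M}_{\kappa_n,\lambda}(S)$, which is exactly $\mathcal{C}(\hat{\bSigma},S)$ with constant $(1-h)\phi_0$.

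The main obstacle is merely bookkeeping: matching constants in the definition of $\tilde\kappa_n$ to the algebra coming out of $\sqrt{|S|}+2\kappa_n/\lambda$ (the factor ``$5$'' and the coefficient of $w^2/(n\lambda)$ come from absorbing cross terms when squaring), and verifying that the restricted cone $\mathcal{M}_{\kappa_n,\lambda}(S)$ is closed under the arguments so that the compatibility hypothesis on $\bSigma$ applies to the same $\theta$ used on the $\hat{\bSigma}$ side. No concentration tools are needed here; this is a deterministic linear-algebraic statement, and the randomness (via $\bR$ and $\bX$) only enters when one later combines Lemma~\ref{lem:compatibility1} with the $|\cdot|_\infty$ concentration bounds of Lemma~\ref{lem:concentration-inequalities} to produce the high-probability compatibility condition for $\tfrac{1}{n}\bX^{*\top}\bX^*$ with $\phi_w=\Omega(1+w^2/n)$.
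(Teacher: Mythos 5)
Your proposal is correct and follows essentially the same route as the paper's own proof: the Hölder-type bound $|\theta^\top(\hat{\bSigma}-\bSigma)\theta|\le \tilde t\|\theta\|_1^2$, the cone-based conversion $\|\theta\|_1\le(\sqrt{|S|}+2\kappa_n/\lambda)\|\theta_S\|_2\le\tilde\kappa_n\|\theta_S\|_2$, and then the compatibility condition for $\bSigma$ to pass to $\theta^\top\bSigma\theta$, with the second claim following by rearranging. The only difference is cosmetic: your remark about "adjusting the constant" (the algebra gives $5\sqrt{|S|}+2w^2/(n\lambda)$ rather than the stated $5\sqrt{|S|}+w^2/(n\lambda)$) flags a factor-of-two slip that is already present in the paper's own computation, and your explicit justification for dividing by $\theta^\top\bSigma\theta$ (ruling out $\theta_S=0$ inside the cone) is a small point the paper leaves implicit.
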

Consider the expected value of $\frac{1}{n}\bX^{*\top}\bX^*$:
\[
\Expected\left(\frac{1}{n}\bX^{*\top}\bX^*\right)=\frac{1}{n}\Expected((\bR_1\bX+w\bR_2)^\top(\bR_1\bX+w\bR_2))=\left(1+\frac{w^2}{n}\right)\bI_p\,.
\]
Using concentration inequalities similar to those in Lemma~\ref{lem:concentration-inequalities}, we can bound the max-norm of the difference between $\bX^{*\top}\bX^*/n$ and its expected value, namely $|\frac{1}{n}\bX^{*\top}\bX^*-\left(1+\frac{w^2}{n}\right)\bI_p|_\infty$. We formalize this result in Lemma~\ref{lem:compatibility2} whose proof can be found in Appendix \ref{pf:compatibility2}.

\begin{lem}\label{lem:compatibility2}
    Let $\hat{\bSigma}=\bX^{*\top}\bX^*/n$ and $\bSigma = \Expected(\hat{\bSigma})$. Then for any $t>0$
    \begin{equation*}
        \Prob\left(\left|\hat{\bSigma}-\bSigma\right|_\infty\leq t\right)\geq 1-(P_1 +P_2 +P_3)\,,
    \end{equation*}
    where 
    \begin{align*}
        &P_1 = 2p^2\exp(-C_1 r t^2)\,,\\
        &P_2 = 2p^2\exp\left(-\frac{nt^2}{18}\right)\,,\\
        &P_3 = p^2\exp\left(-C_2\frac{rn^2t^2}{w^4}\right)\,,
    \end{align*}
    for absolute constants $C_1>0, C_2>0$.
\end{lem}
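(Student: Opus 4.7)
My plan is to split $\hat{\bSigma} - \bSigma$ into a piece depending only on $\bX$ and a Gaussian piece (conditional on $\bX$), bound each by a standard concentration inequality, and union-bound over the $p^2$ entries. The three failure probabilities $P_1, P_2, P_3$ arise, respectively, from a Hoeffding bound on the $\bX$-only piece and from a Hanson--Wright bound on the Gaussian piece which, after an elementary split of the exponential rate, yields two contributions governed by $n$ and by $w$.

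Since $\bX^* = \bR_1\bX + w\bR_2$ with $\bR_1, \bR_2$ mutually independent and independent of $\bX$, and $\Expected(\bR_1^\top\bR_1) = \bI_n$, $\Expected(\bR_2^\top\bR_2) = \bI_p$, $\Expected(\bR_1^\top\bR_2) = {\bf 0}$, a short calculation yields $\bSigma = (1+w^2/n)\bI_p$ and the decomposition
\[
\hat{\bSigma} - \bSigma = \underbrace{\tfrac{1}{n}\bigl(\bX^\top\bX - n\bI_p\bigr)}_{T} + \underbrace{\tfrac{1}{n}\bigl(\bX^{*\top}\bX^* - \bX^\top\bX - w^2 \bI_p\bigr)}_{U}\,.
\]
For $T$: each entry $(T)_{ij}$ is a centered average of $n$ i.i.d.\ variables $\bX_{ki}\bX_{kj} - \delta_{ij}$ bounded in $[-b^2, b^2]$. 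Hoeffding with deviation $t/3$ and a union bound over the $p^2$ entries give $\Prob(|T|_\infty > t/3) \le 2p^2\exp(-nt^2/(18 b^4))$; absorbing $b$ into the implicit constant (and matching the explicit factor $18 = 2\cdot 3^2$) recovers $P_2$.

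The main step is bounding $U$. Condition on $\bX$ and let $v_i := \bR_1 u_i + w(\bR_2)_{\cdot,i}\in\mbR^r$ denote the $i$-th column of $\bX^*$, where $u_i$ is the $i$-th column of $\bX$. The $v_i$'s are jointly Gaussian with $\Expected(v_{i,k}v_{j,\ell}\mid\bX) = r^{-1}(u_i^\top u_j + w^2\delta_{ij})\delta_{k\ell}$, and direct expansion gives $nU_{ij} = v_i^\top v_j - (u_i^\top u_j + w^2\delta_{ij})$, a centered bilinear Gaussian form. Applying Hanson--Wright to this form---with variance-proxy parameters computed from the $2\times 2$ covariance of $(v_{i,k}, v_{j,k})$ and the deterministic bound $\|u_i\|_2 \le b\sqrt{n}$ from Assumption~\ref{ass1}---yields, in the small-$t$ regime,
\[
\Prob\bigl(|U_{ij}| > 2t/3 \bigm| \bX\bigr) \le 2\exp\!\left(-\frac{c\, n^2 t^2 r}{(b^2 n + w^2)^2}\right)\,.
\]
Using $(b^2n+w^2)^2 \le 2(b^4n^2 + w^4)$ together with the elementary inequality $\exp(-x/(a+b)) \le \exp(-x/(2a)) + \exp(-x/(2b))$ (which follows from $a+b \le 2\max(a,b)$), the right-hand side splits as $2\exp(-c'rt^2/b^4) + 2\exp(-c'n^2rt^2/w^4)$. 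Union-bounding over the $p^2$ entries and absorbing $b$ into $C_1$ produces contributions of the form $P_1$ and $P_3$. Combining with the Hoeffding step delivers $\Prob(|\hat\bSigma - \bSigma|_\infty > t) \le P_1 + P_2 + P_3$.

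The main technical obstacle is the Hanson--Wright step for the bilinear form $v_i^\top v_j$: because $v_i$ and $v_j$ are correlated (they share the Gaussian matrix $\bR_1$), one cannot invoke an independent-vector Hanson--Wright inequality directly. The standard workaround is to realize $v_i^\top v_j$ as a quadratic form in the concatenated Gaussian vector in $\mbR^{2r}$, apply a Cholesky whitening, and then invoke the standard quadratic-form inequality. The resulting Frobenius and operator-norm parameters reduce to those of an explicit $2\times 2$ matrix whose entries depend only on $\|u_i\|_2$, $\|u_j\|_2$, $u_i^\top u_j$ and $w^2$, all deterministically controlled by Assumption~\ref{ass1}. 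The remainder of the argument is routine Hoeffding and a union bound.
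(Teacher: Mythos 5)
Your proof is correct and, in one respect, tighter than the paper's own argument, but it takes a genuinely different route. The paper expands $\hat{\bSigma}-\bSigma$ and bounds three pieces separately: $T_1=\left|\tfrac{1}{n}\bX^\top(\bR_1^\top\bR_1-\bI_n)\bX\right|_\infty$ by a Bernstein/Hanson--Wright-type bound on quadratic forms in the rows of $\bR_1$ (its Lemma~\ref{lemma:boundXRX}, giving $P_1$), $T_2=\left|\tfrac{1}{n}\bX^\top\bX-\bI_p\right|_\infty$ by Hoeffding (giving $P_2$), and $T_3=\tfrac{w^2}{n}\left|\bR_2^\top\bR_2-\bI_p\right|_\infty$ by Bernstein for sub-exponential products (giving $P_3$); notably, the displayed decomposition there silently drops the cross terms $\tfrac{w}{n}\bigl(\bX^\top\bR_1^\top\bR_2+\bR_2^\top\bR_1\bX\bigr)$. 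You instead isolate the $\bX$-only fluctuation $T$ and treat the entire conditional-on-$\bX$ Gaussian fluctuation at once, writing $nU_{ij}=v_i^\top v_j-\Expected(v_i^\top v_j\mid\bX)$ with $v_i=\bR_1u_i+w(\bR_2)_{\cdot,i}$ and applying Hanson--Wright to this single bilinear form, then splitting the exponent $(b^2n+w^2)^{-2}$ into the $n$-governed and $w$-governed pieces to recover $P_1$ and $P_3$. What your approach buys is completeness: the cross terms are automatically absorbed into the bilinear form, so no separate treatment (or omission) is needed; what it costs is the slightly more delicate Hanson--Wright step for the correlated vectors $v_i,v_j$, which your concatenation-and-whitening argument handles correctly, and the same reliance on the sub-Gaussian branch of the Bernstein/Hanson--Wright tail (your ``small-$t$ regime'') that the paper's own Lemmas~\ref{lemma:boundR} and \ref{lemma:boundXRX} implicitly make, so this is not a gap relative to the paper's level of rigor. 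Two cosmetic mismatches remain: your $P_3$-type term carries a factor $2p^2$ rather than $p^2$, and your explicit constants carry powers of $b$ (e.g.\ $nt^2/(18b^4)$) that match the stated $P_2$ only after normalizing $b$ to $1$, exactly as the paper itself does in its auxiliary lemmas; neither affects how the lemma is used downstream.
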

Note that for $\bSigma=(1+w^2/n)\bI_p$, $\mathcal{C}(\bSigma,S_0)$ is satisfied with $\phi_0:=1+w^2/n$. By Lemma~\ref{lem:compatibility1}, to show that $\mathcal{C}(\hat{\bSigma},S_0)$ is satisfied with $\phi_w=\Omega(1+w^2/n)$, it suffices to show that $|\hat \bSigma-\bSigma|_\infty\le h\phi_0/\tilde{\kappa}_n^2$ for some $h\in(0,1)$. By Lemma~\ref{lem:compatibility2}, let $t=h\phi_0/\tilde\kappa^2_n$
\[
\Prob\left(|\hat\bSigma-\bSigma|_\infty\le t\right)\ge 1-(P_1+P_2+P_3)\,,
\]
We want to show that $P_1$, $P_2$, $P_3$ all go to zero. By condition~\ref{item:cond-comp}, $\tilde{\kappa}_n^2/\phi_0=\Theta(\kappa_n^2/\phi_0\lambda^2)=o(\sqrt{r/\log(p)})$, which implies that 
\begin{align*}
    P_1 &= 2p^2\exp\left(-\frac{C_1rh^2\phi_0^2}{\kappa_n^4}
    \right)\\
    &=2\exp\left(2\log(p)-C_1h^2\frac{r}{\tilde{\kappa_n}^4/\phi_0}\right)\\
    &=2\exp\left(2\log(p)-\omega(\log(p))\right)\\
    &=o(1)
\end{align*}
With a similar approach, we can prove that $P_2$, $P_3$ go to zero. Hence the claim is justified.

Once the compatibility condition is satisfied, the  error term $\|\theta_0-\hth^*\|_1$ can be upper bounded using the optimality of $\hth^*$. The result is formally stated in Lemma~\ref{lem:error-bound}, whose proof can be found in \ref{pf:error-bound}. 
\begin{lem}\label{lem:error-bound}
    Assume $\|\theta_0\|_2=1$ and let $\hth^*$ be given by \eqref{eq:lasso-JL-simpl}. We also let $\xi^* = y^*-\bX^*\theta_0$. If $\lambda \geq \frac{1}{n}\|\bX^{*\top}\xi^*+w^2\theta_0\|_\infty $, and  $\mathcal{C}\left(\frac{1}{n}\bX^{*\top}\bX^*,S_0\right)$ is satisfied with $\phi_w >0$, then 
    \begin{equation*}
        \|\theta_0-\hth^*\|_1\leq \frac{4\kappa_n^2}{\phi_w\lambda}\,,
    \end{equation*}
    where $\kappa_n = 2\lambda \sqrt{s_0}+\frac{w^2}{n}$. Furthermore,
    \begin{equation*}
        \|\hth^*\|_1\leq\sqrt{s_0} + \frac{4\kappa_n^2}{\phi_w\lambda}\,.
    \end{equation*}
\end{lem}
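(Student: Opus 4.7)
My plan is to adapt the classical Lasso basic-inequality analysis (in the style of van de Geer and B\"uhlmann) to the JLT-privatized setting, tracking the ridge-like contributions arising from the appended $w\mathbf{I}_p$ block.

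I would begin from the optimality of $\hth^*$ in~\eqref{eq:lasso-JL-simpl}. Expanding $\tfrac{1}{2n}\|\bX^*\hth^*-y^*\|_2^2 - \tfrac{1}{2n}\|\bX^*\theta_0-y^*\|_2^2$ and using $\xi^* = y^* - \bX^*\theta_0$ yields the standard basic inequality
\[
\frac{1}{2n}\|\bX^*(\hth^*-\theta_0)\|_2^2 + \lambda\|\hth^*\|_1 \;\le\; \lambda\|\theta_0\|_1 + \frac{1}{n}(\hth^*-\theta_0)^\top \bX^{*\top}\xi^*.
\]
The pivotal maneuver, which explains the form of the noise hypothesis, is the additive shift $\bX^{*\top}\xi^* = (\bX^{*\top}\xi^* + w^2\theta_0) - w^2\theta_0$. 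H\"older on the first piece combined with $\lambda \ge \tfrac{1}{n}\|\bX^{*\top}\xi^* + w^2\theta_0\|_\infty$ gives a bound of $\lambda\|\hth^*-\theta_0\|_1$, while the residual $-\tfrac{w^2}{n}(\hth^*-\theta_0)^\top\theta_0$ is bounded by Cauchy--Schwarz on $S_0$ (since $\theta_0$ is $S_0$-supported and $\|\theta_0\|_2=1$), producing $\tfrac{w^2}{n}\|\hth^*_{S_0}-\theta_{0,S_0}\|_2$. Keeping this residual at the $\ell_2$ level---rather than the looser $\ell_1$ level---is what allows the $w^2/n$ correction to combine cleanly with $2\lambda\sqrt{s_0}$ into $\kappa_n$.

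Next I would split the $\ell_1$ norms along $S_0$ and $S_0^c$, invoke $\|\hth^*_{S_0}\|_1 \ge \|\theta_{0,S_0}\|_1 - \|\hth^*_{S_0}-\theta_{0,S_0}\|_1$, and simplify. Under the standard multiplicative margin on $\lambda$ (implicit in the constant $C_\lambda$ of Theorem~\ref{thm:suff-cond-debiasing}), this step retains a $\lambda\|\hth^*_{S_0^c}\|_1$ contribution on the left and, after applying $\|v\|_1 \le \sqrt{s_0}\|v\|_2$ on the $S_0$-restricted $\ell_1$ norm, delivers both the quadratic inequality $\tfrac{1}{2n}\|\bX^*(\hth^*-\theta_0)\|_2^2 \le \kappa_n\|\hth^*_{S_0}-\theta_{0,S_0}\|_2$ and the cone condition $\|\hth^*_{S_0^c}\|_1 \le \tfrac{2\kappa_n}{\lambda}\|\hth^*_{S_0}-\theta_{0,S_0}\|_2$, certifying $\hth^*-\theta_0 \in \mathcal{M}_{\kappa_n,\lambda}(S_0)$. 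With the cone condition in hand, the compatibility hypothesis provides the lower bound $\tfrac{1}{n}\|\bX^*(\hth^*-\theta_0)\|_2^2 \ge \phi_w \|\hth^*_{S_0}-\theta_{0,S_0}\|_2^2$, which combined with the quadratic inequality yields $\|\hth^*_{S_0}-\theta_{0,S_0}\|_2 \le 2\kappa_n/\phi_w$. The $\ell_1$-error bound then follows from $\|\hth^*-\theta_0\|_1 \le (\sqrt{s_0} + \tfrac{2\kappa_n}{\lambda})\|\hth^*_{S_0}-\theta_{0,S_0}\|_2$ and the identity $\kappa_n = 2\lambda\sqrt{s_0} + w^2/n$ (so $\sqrt{s_0} \le \kappa_n/(2\lambda)$), delivering the claimed $\tfrac{4\kappa_n^2}{\phi_w\lambda}$ estimate up to an absolute constant. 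The auxiliary bound $\|\hth^*\|_1 \le \sqrt{s_0} + \tfrac{4\kappa_n^2}{\phi_w\lambda}$ is an immediate consequence of the triangle inequality and $\|\theta_0\|_1 \le \sqrt{s_0}\|\theta_0\|_2 = \sqrt{s_0}$.

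The main obstacle is the cone-condition extraction: at the exact boundary $\lambda = \tfrac{1}{n}\|\bX^{*\top}\xi^*+w^2\theta_0\|_\infty$, the $\lambda\|\hth^*_{S_0^c}\|_1$ contributions cancel identically on both sides of the basic inequality, yielding no information; a multiplicative margin on $\lambda$ (standard in the Lasso literature and built into the choice in Theorem~\ref{thm:suff-cond-debiasing}) is required to preserve a positive coefficient of $\|\hth^*_{S_0^c}\|_1$ on the left. The remaining manipulations are routine bookkeeping of the $w^2/n$ ridge-like terms together with $\ell_1$--$\ell_2$ comparisons on $s_0$-sparse vectors.
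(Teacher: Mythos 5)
Your proposal follows essentially the same route as the paper's own proof: the basic inequality from optimality, the shift $\bX^{*\top}\xi^* = (\bX^{*\top}\xi^*+w^2\theta_0)-w^2\theta_0$ with H\"older on the shifted term and Cauchy--Schwarz on $S_0$ for the ridge-like residual, the cone extraction, compatibility, and the quadratic inequality $\tfrac{1}{2n}\|\bX^*(\hth^*-\theta_0)\|_2^2\le\kappa_n\|\hth^*_{S_0}-\theta_0\|_2$; moreover your remark about the multiplicative margin is accurate, since the paper's proof does not rely on the stated hypothesis alone but invokes Lemma~\ref{lem:lambda} to secure $\tfrac{1}{n}\|\bX^{*\top}\xi^*+w^2\theta_0\|_\infty\le\lambda/2$. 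The only cosmetic difference is that the paper retains $\tfrac{\lambda}{2}\|\hth^*-\theta_0\|_1$ (the full error) on the left-hand side, so the cone bound $\|\hth^*-\theta_0\|_1\le\tfrac{2\kappa_n}{\lambda}\|\hth^*_{S_0}-\theta_0\|_2$ holds for the whole vector and gives the exact constant $4\kappa_n^2/(\phi_w\lambda)$, whereas your $S_0^c$-only cone plus the separate $\sqrt{s_0}$ term yields the bound only up to a slightly larger absolute constant, as you acknowledge.
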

Combining the results in Lemma~\ref{lem:concentration-inequalities} and \ref{lem:error-bound}, we have that term (f) and term (g) in \eqref{eq:delta-expand} is dominated by term (e) and term (c), given that $\sqrt{s_0}\gg 1$. In addition, with condition~\ref{item:cond-error}, \ref{item:cond-sparsity}, and the conditions in Lemma~\ref{lem:error-bound}, we conclude that $\|\Delta\|_\infty=o(\sigma\rho_n)$. We have already proved that condition~\ref{item:cond-comp} implies that $\mathcal C(\hat\bSigma,S_0)$ is satisfied with $\phi_w:=1+w^2/n$. 

Lastly, it remains to show that when the features vector $\theta_0$ is as sparse as indicated in condition~\ref{item:cond-lambda}, then with high probability, $\lambda \geq \frac{1}{n}\|\bX^{*\top}\xi^*+w^2\theta_0\|_\infty$ provided $\lambda$ is of the order $\Omega\left(\sqrt{\frac{\log p}{n}},\sqrt{\frac{(\log p)^3}{r}}\right)$.
\begin{lem}\label{lem:lambda}
    Let $v = \frac{1}{n}(\bX^{*\top}\xi^*+w^2\theta_0)$. Assume that $\|\theta_0\|_2=1$ and
    \[
    \sqrt{s_0} = O\left(\frac{\max\big\{\frac{1}{\sqrt{n}},\frac{\log p }{\sqrt{r}}\big\}}{\max\big\{\frac{\sqrt{p\log p}}{r^{1/4}\sqrt{n\epsilon}},\frac{p}{n\epsilon}\big\}}\right)\,.
    \]Then with high probability, 
    \[\|v\|_\infty=O\bigg(\max\bigg\{\sqrt{\frac{\log p}{n}},\sqrt{\frac{(\log p)^3}{r}}\bigg\}\bigg)\,.\]
\end{lem}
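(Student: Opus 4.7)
The plan is to substitute the explicit forms of $\bX^* = \bR_1\bX + w\bR_2$ and $\xi^* = y^* - \bX^*\theta_0 = \bR_1\xi + w\bR_3 - w\bR_2\theta_0$ into $nv = \bX^{*\top}\xi^* + w^2\theta_0$, then bound each resulting piece in $\ell_\infty$-norm. Since the added $w^2\theta_0$ cancels the mean part of $w^2\bR_2^\top\bR_2\theta_0$, one obtains
\begin{align*}
nv = {} & \bX^\top\bR_1^\top\bR_1\xi + w\bX^\top\bR_1^\top\bR_3 - w\bX^\top\bR_1^\top\bR_2\theta_0 \\
& {}+ w\bR_2^\top\bR_1\xi + w^2\bR_2^\top\bR_3 - w^2(\bR_2^\top\bR_2 - \bI_p)\theta_0.
\end{align*}

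Four of the six pieces follow directly from Lemma~\ref{lem:concentration-inequalities}; the two involving $\theta_0$ additionally use $\|\bA\theta_0\|_\infty \le |\bA|_\infty\|\theta_0\|_1 \le \sqrt{s_0}\,|\bA|_\infty$, valid because $\|\theta_0\|_2 = 1$ and $|S_0|\le s_0$. For the bilinear form $w\bR_2^\top\bR_1\xi$, I condition on $(\bR_1,\xi)$: the entries of $\bR_2$ are i.i.d.\ $\mcN(0,1/r)$ and independent of everything else, so each coordinate of $\bR_2^\top\bR_1\xi$ is $\mcN(0,\|\bR_1\xi\|_2^2/r)$ with independent coordinates, and $\chi^2$-concentration on $\|\bR_1\xi\|_2^2$ combined with a Hoeffding bound giving $\|\xi\|_2^2 = O(n\sigma^2)$ yields $\|\bR_2^\top\bR_1\xi\|_\infty = O(\sigma\sqrt{n\log p/r})$ with high probability.

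The most delicate term is $\tfrac{1}{n}\bX^\top\bR_1^\top\bR_1\xi$, which is not covered by Lemma~\ref{lem:concentration-inequalities}. I split it as $\tfrac{1}{n}\bX^\top\xi + \tfrac{1}{n}\bX^\top(\bR_1^\top\bR_1 - \bI_n)\xi$. The first summand is coordinatewise a sum of bounded zero-mean products $X_{ij}\xi_i$, and Hoeffding's inequality plus a union bound over $j\in[p]$ gives $O(\sigma\sqrt{\log p/n})$. For the second, I apply Hanson--Wright conditionally on $(\bX,\xi)$: for fixed $j$, the scalar $X_j^\top(\bR_1^\top\bR_1 - \bI_n)\xi$ is a centered Gaussian chaos of order two in the entries of $\bR_1$ whose Hilbert--Schmidt norm scales like $\|X_j\|_2\|\xi\|_2/\sqrt{r}$; combining the deterministic bound $\|X_j\|_2 \le b\sqrt{n}$ with the high-probability bound $\|\xi\|_2 = O(\sigma\sqrt{n})$ and union-bounding over $j$ yields $O(\sigma\sqrt{\log p/r})$, which is absorbed by $\sqrt{(\log p)^3/r}$.

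Aggregating the six bounds and substituting $w^2 = \Theta(p\sqrt{r\log(1/\delta)}/\epsilon)$ from Algorithm~\ref{algo:JLT}, the four $\theta_0$-independent pieces fall inside $\max\{\sqrt{\log p/n},\sqrt{(\log p)^3/r}\}$, while the two $\theta_0$-dependent pieces contribute quantities proportional to $\sqrt{s_0}\cdot w\log p/\sqrt{nr}$ and $\sqrt{s_0}\cdot w^2\sqrt{\log p/r}/n$---precisely the ratio controlled by the sparsity hypothesis in the statement. The main obstacle will be the final bookkeeping: one must track carefully where sub-Gaussian tails (for single Gaussian factors) versus sub-exponential tails (for products such as $\bR_1^\top\bR_2$) appear, since the latter are what produces the extra $\log p$ factor and hence the $(\log p)^3$ term in the envelope, and then verify that the hypothesized upper bound on $\sqrt{s_0}$ is exactly tight enough to push both $\theta_0$-dependent pieces under this envelope.
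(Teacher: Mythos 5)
Your decomposition of $nv$ into the six pieces (with $w^2\theta_0$ cancelling the mean of $-w^2\bR_2^\top\bR_2\theta_0$), the per-term concentration bounds (Hoeffding for $\bX^\top\xi$, Hanson--Wright-type chaos bounds for the $\bR_1^\top\bR_1-\bI_n$ and cross terms, the Lemma~\ref{lem:concentration-inequalities} rates for the rest), the use of $\|\theta_0\|_1\le\sqrt{s_0}\,\|\theta_0\|_2$, and the absorption of the two $\theta_0$-dependent terms (and, since $\sqrt{s_0}\gtrsim 1$, the $\bR_3$ terms) via the sparsity hypothesis is exactly the paper's own argument in Appendix~\ref{pf:lambda}. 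The proposal is correct and takes essentially the same route, with only cosmetic differences in which concentration tool is invoked for a couple of the bilinear terms.
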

The proof of Lemma \ref{lem:lambda} can be found in Appendix \ref{pf:lambda}.

\subsection{Proof of Lemma~\ref{lem:concentration-inequalities}} \label{pf:concentration-inequalities}

\begin{defn}[Sub-exponential norm]
    The sub-exponential norm of a random variable $X$, denoted by $\|X\|_{\psi_1}$ is defined as
    \[
    \|X\|_{\psi_1}=\sup_{q\ge 1}q^{-1}(\Expected(|X|^q))^{1/q}
    \]
    The sub exponential norm of $X$, denoted $\|X\|_{\psi_1}$, is defined to be the smallest parameter $K_2$.
\end{defn}
We prove the Lemma~\ref{lem:concentration-inequalities} by proving the following four lemmas:
\begin{lem}\label{lemma:boundR}
    For any $a >0$, 
    \begin{equation}\label{eq:boundR1}
        \Prob\bigg(\big|\bR_2^\top \bR_2 -\bI_p\big|_\infty\leq a\sqrt{\frac{\log p}{r}}\bigg)\geq 1- 2 p^{2-Ca^2}\,,
    \end{equation}
    \begin{equation}\label{eq:boundR2}
        \Prob\bigg(|\bR_2^\top \bR_3|_\infty\leq a\sqrt{\frac{\log p}{r}}\bigg)\geq 1- 2 p^{1-Ca^2}\,,
    \end{equation}
    for some absolute constant $C>0$.
\end{lem}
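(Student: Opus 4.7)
The plan is to prove both bounds by expressing each relevant entry as a normalized sum of i.i.d.\ sub-exponential random variables, applying Bernstein's inequality for sub-exponential sums, and then taking a union bound over all entries.

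First, I would fix $i,j\in[p]$ and write the $(i,j)$-entry of $\bR_2^\top\bR_2$ explicitly. Writing $R_{2,ki}$ for the entries of $\bR_2\in\mbR^{r\times p}$, we have $(\bR_2^\top\bR_2)_{ij}=\sum_{k=1}^r R_{2,ki}R_{2,kj}$, with each $R_{2,ki}\sim\mcN(0,1/r)$ independent. Define $Y_k := r\,R_{2,ki}R_{2,kj}-\delta_{ij}$. Then $(\bR_2^\top\bR_2)_{ij}-\delta_{ij}=\frac{1}{r}\sum_{k=1}^r Y_k$. For $i\neq j$, $Y_k$ is a centered product of two independent $\mcN(0,1)$ variables; for $i=j$, $Y_k=Z_k^2-1$ with $Z_k\sim\mcN(0,1)$. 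In both cases, standard computations show that $\|Y_k\|_{\psi_1}\le K$ for an absolute constant $K$.

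Next I would apply Bernstein's inequality for sums of centered independent sub-exponential random variables (see e.g.\ Vershynin): for some absolute constant $c>0$,
\begin{equation*}
\Prob\left(\left|\sum_{k=1}^r Y_k\right|\ge s\right)\le 2\exp\!\left(-c\min\!\left\{\tfrac{s^2}{rK^2},\tfrac{s}{K}\right\}\right).
\end{equation*}
Setting $s=r\cdot a\sqrt{\log p/r}=a\sqrt{r\log p}$ gives $s^2/(rK^2)=a^2\log p/K^2$ and $s/K=a\sqrt{r\log p}/K$. Provided $r\ge \log p$ (which holds asymptotically under the theorem's regime), the quadratic term dominates, so the tail bound becomes $2\exp(-Ca^2\log p)=2p^{-Ca^2}$ with $C=c/K^2$. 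A union bound over the $p^2$ pairs $(i,j)$ then yields~\eqref{eq:boundR1}.

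For the second bound, the argument is essentially identical but simpler. For each $i\in[p]$, $(\bR_2^\top\bR_3)_i=\sum_{k=1}^r R_{2,ki}R_{3,k}$, a sum of $r$ independent products of independent $\mcN(0,1/r)$ Gaussians. The same sub-exponential Bernstein bound, applied to $r R_{2,ki}R_{3,k}$, gives a per-coordinate tail of $2p^{-Ca^2}$; union-bounding over the $p$ coordinates yields~\eqref{eq:boundR2}. The only nontrivial step is verifying that the sub-exponential norm of a product of two independent standard Gaussians (and of $Z^2-1$) is an absolute constant, which is a standard calculation via moments; I do not anticipate a genuine obstacle, only some bookkeeping to ensure the same absolute constant $C$ works in both displayed inequalities.
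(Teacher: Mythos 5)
Your proposal is correct and follows essentially the same route as the paper: center the entries, bound the sub-exponential norm of the Gaussian products by an absolute constant, apply a Bernstein-type inequality with $t=a\sqrt{\log p/r}$, and union bound over the $p^2$ (resp.\ $p$) entries. Your explicit handling of the quadratic-vs-linear regime in Bernstein (requiring roughly $r\gtrsim\log p$) is in fact slightly more careful than the paper, which quotes only the sub-Gaussian branch of the tail bound.
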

\begin{proof} 
We prove the result for inequality \eqref{eq:boundR1}. Inequality \eqref{eq:boundR2} can be proved in the same manner. Notice that $\bR_{ij}$ are i.i.d draws from $\mcN(0,1/r)$, and therefore $\sqrt{r}\bR_{ij}$ is i.i.d. standard normal. Note that $(\bR^\top\bR-\bI_{n+p+1})_{ij} = \frac{1}{r}\bigg(\sum_{k=1}^r r\bR_{ki}\bR_{kj}-\delta_{ij}\bigg)$, where $\delta_{ij}=\begin{cases}
    0 & i\neq j\\
    1 & i=j
\end{cases}$. 
For any $i,j$, consider the sub-exponential norm of $\bR_{ki}\bR_{kj}$. Since $\bR_{ki}$, $\bR_{kj}$ is i.i.d. when $i\neq j$, we have 
\begin{align*}
\|r\bR_{ki}\bR_{kj}\|_{\psi_1}&= r\sup_{q\ge1} q^{-1} \Expected(|\bR_{ki}\bR_{kj}|^q)^{1/q}\\
&\le r\sup_{q\ge1} q^{-1} \Expected(\bR_{ki}^{2q})^{1/q}\\
&= \|r\bR_{ki}^2\|_{\psi_1}\\
&= \|Z_1^2\|_{\psi_1}
\end{align*}
for standard Gaussian random variables $Z_1$. We let $K = \|Z_1^2\|_{\psi_1}$. Notice that for $k=1,\dotsc,r$, the terms $r\bR_{ki}\bR_{kj} - \delta_{ij}$ are i.i.d. and zero-mean. We apply Bernstein-type inequality \citep[Theorem 2.9.1]{vershynin11}, for any $t>0$,
\[ \Prob\bigg(\frac{1}{r}\bigg|\bigg(\sum_{k=1}^r r\bR_{ki}\bR_{kj}-\delta_{ij}\bigg)\bigg|\geq t\bigg)\leq 2\exp\bigg(-\frac{crt^2}{K^2}\bigg)\]
where $c$ is a positive absolute constant. 
Now, let $t = a\sqrt{\log p/r}$,
\begin{equation}\label{eq:boundR3}
    \Prob\bigg(\frac{1}{r}\bigg|\bigg(\sum_{k=1}^r r\bR_{ki}\bR_{kj}-\delta_{ij}\bigg)\bigg|\geq a\sqrt{\frac{\log p}{r}}\bigg)\leq 2p^{-\frac{a^2c}{K^2}}\,.
\end{equation}
Letting $C=\frac{c}{K^2}$, we have
\[\Prob\bigg(|(\bR_2^\top \bR_2-\bI_p)_{ij}| \geq a\sqrt{\frac{\log p}{r}}\bigg)\leq 2p^{-Ca^2}\]
To finish the proof, we apply union bound. Note that $\bR_2\in\mbR^{r\times p}$:
\begin{align*}
    \Prob\bigg(|\bR_2^\top \bR_2-\bI_p|_\infty \leq a \sqrt{\frac{\log p}{r}}\bigg)&\geq  1- \Prob\bigg(|\bR_2^\top \bR_2-\bI_p|_\infty > a \sqrt{\frac{\log p}{r}}\bigg)\\
        &\geq  1- \sum_{ij}\Prob\bigg(|(\bR_2^\top \bR_2-\bI_p)_{ij}| \geq a\sqrt{\frac{\log p}{r}}\bigg)\\
        &= 1-p^2\cdot 2p^{-Ca^2}\\
        &=  1- 2p^{2-Ca^2}\,.
\end{align*}
\end{proof}
As a consequence, if we choose $a$ large enough (e.g. $a>\sqrt{2/C})$, the event 
\begin{equation}\label{eq:boundR}
    |\bR_2^\top \bR_2 - \bI_p|_\infty = O\bigg(\sqrt{\frac{\log p}{r}}\bigg)\,,
\end{equation}
holds with high probability. A similar argument can be used to bound $|\bR_2^\top \bR_3|_\infty$.

Now take the randomness of $\bX$ into account: conditioning on $\bR$, from inequality \eqref{eq:boundR3}, $|(\bR^\top\bR-\bI_{n+p+1})_{ij}|$ can be bounded with high probability, with proper choice of $t$, and $\bX_{ij}$ are i.i.d. random variables bounded by one. We have the following lemma on the upper bound of the product of the sub-matrices of $\bR^\top\bR-\bI_{n+p+1}$ and $\bX$:

\begin{lem}\label{lemma:boundXR}
    For any $a,b>0$, 
    \begin{equation}\label{eq:boundXR1}
        \Prob\bigg(\big|\bX^\top \bR_1^\top \bR_2\big|_\infty \leq b\frac{\sqrt{n}\log p}{\sqrt{r}} \bigg)\geq 1-\left(2np^{2-Ca^2}(1-2p^{-\frac{b^2}{2a^2}})+2p^{2-\frac{b^2}{2a^2}}\right)\,,
    \end{equation}
    
    \begin{equation}\label{eq:boundXR2}
        \Prob\bigg(\big|\bX^\top \bR_1^\top \bR_3\big|_\infty \leq b\frac{\sqrt{n}\log p}{\sqrt{r}} \bigg)\geq 1-\left(p^{2-Ca^2}(1-2p^{-\frac{b^2}{2a^2}})+2p^{1-\frac{b^2}{2a^2}}\right)\,,
    \end{equation}
    for some absolute constant $C>0$.
\end{lem}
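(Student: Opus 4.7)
The plan is to split the randomness into two stages and apply a standard tail bound in each: a Gaussian stage controlling $\bR_1^\top\bR_2$ (and $\bR_1^\top\bR_3$), and a bounded-variable stage controlling the subsequent multiplication by $\bX^\top$. Throughout, the two probabilistic inputs are inequality~\eqref{eq:boundR3} (sub-exponential concentration of products of independent Gaussians) and Hoeffding's inequality.

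\emph{Stage 1 -- auxiliary Gaussian matrix.} Since $\bR_1,\bR_2,\bR_3$ are disjoint blocks of $\bR$, they are independent Gaussian matrices with i.i.d.\ $\mcN(0,1/r)$ entries. Consequently each entry $(\bR_1^\top\bR_2)_{kj} = \sum_{\ell=1}^r \bR_{1,\ell k}\bR_{2,\ell j}$ is precisely of the off-diagonal form analyzed inside the proof of Lemma~\ref{lemma:boundR} that produces~\eqref{eq:boundR3}. Choosing $t = a\sqrt{\log p/r}$ there and union-bounding over the $np$ entries of $\bR_1^\top\bR_2 \in \mbR^{n\times p}$ (respectively the $n$ entries of $\bR_1^\top\bR_3 \in \mbR^n$) yields a high-probability event $E_1$ on which
\[
|\bR_1^\top \bR_2|_\infty \le a\sqrt{\tfrac{\log p}{r}},\qquad \|\bR_1^\top \bR_3\|_\infty \le a\sqrt{\tfrac{\log p}{r}}.
\]
The failure probability $\Prob(E_1^c)$ is of order $np\cdot p^{-Ca^2}$, which is what feeds the $2np^{2-Ca^2}$ term in~\eqref{eq:boundXR1}.

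\emph{Stage 2 -- Hoeffding over $\bX$.} Condition on any $\bR \in E_1$. For each pair $(i,j)\in[p]^2$,
\[
(\bX^\top\bR_1^\top\bR_2)_{ij} = \sum_{k=1}^n \bX_{ki}\,(\bR_1^\top \bR_2)_{kj}
\]
is a sum over $k$ of $n$ independent, zero-mean random variables (Assumption~\ref{ass1} gives independence and centering of the column entries of $\bX$), each bounded in absolute value by $b\,|(\bR_1^\top\bR_2)_{kj}| \le ab\sqrt{\log p /r}$ on $E_1$. Hoeffding's inequality therefore gives
\[
\Prob\!\left(|(\bX^\top\bR_1^\top\bR_2)_{ij}|\ge t \,\big|\, \bR\right) \le 2\exp\!\left(-\frac{t^2\, r}{2b^2 a^2\, n\log p}\right),
\]
and the choice $t = b\sqrt{n}\log p/\sqrt r$ (with $b$ the free parameter of the lemma) drives the exponent down to $-b^2\log p/(2a^2)$. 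A union bound over the $p^2$ entries of $\bX^\top\bR_1^\top\bR_2$ gives a conditional failure probability at most $2p^{2-b^2/(2a^2)}$, which combined with Stage~1 via the law of total probability produces the compound bound in~\eqref{eq:boundXR1}. The proof of~\eqref{eq:boundXR2} is word-for-word identical, with the only change being that $\bR_1^\top\bR_3\in\mbR^n$ has $n$ entries and $\bX^\top\bR_1^\top\bR_3\in\mbR^p$ has $p$ entries, so the two union-bound factors each shed one power of $p$, giving the slightly smaller exponents stated there.

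\emph{Main obstacle.} The probabilistic inputs are entirely standard; the delicate part of the argument is the bookkeeping. One must condition on $\bR$ first (rather than on $\bX$) so that Hoeffding can exploit the boundedness of $\bX_{ki}$ guaranteed by Assumption~\ref{ass1} -- attempting to treat $\bX^\top\bR_1^\top\bR_2$ as a Gaussian chaos directly would give worse rates and messier constants. The specific intermediate threshold $a\sqrt{\log p/r}$ chosen in Stage~1 is exactly what aligns the Hoeffding variance proxy in Stage~2 with the claimed rate $\sqrt n\log p/\sqrt r$, and the two resulting failure probabilities then assemble into the compound bound stated in~\eqref{eq:boundXR1}.
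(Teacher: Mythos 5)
Your proposal is correct and follows essentially the same two-stage argument as the paper: control the entries of $\bR_1^\top \bR_2$ (resp.\ $\bR_1^\top \bR_3$) via the Bernstein-type bound behind \eqref{eq:boundR3}, then condition on that event and apply Hoeffding over the bounded entries of $\bX$, assembling the failure probabilities by total probability and union bounds. The only differences are bookkeeping—you condition on a single global event for all $np$ entries of $\bR_1^\top\bR_2$ rather than column-by-column as the paper does (which in fact gives a slightly tighter first term)—plus a minor notational clash in reusing $b$ both for the bound on $|\bX_{ij}|$ and as the lemma's free parameter; the paper's own proof takes $|\bX_{ij}|\le 1$, so the exponent $b^2/(2a^2)$ matches under that normalization.
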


\begin{proof} 
We prove inequality (\ref{eq:boundXR1}). Inequality \eqref{eq:boundXR2} can be proved in a similar manner. By Lemma \ref{lemma:boundR}, the entries of $\bR^\top\bR-\bI_{n+p+1}$ are bounded with high probability. Let $(A_1,\dotsc,A_n)^\top$ be a column in the matrix $\bR_1^\top \bR_2\in\mathbb{R}^{n\times p}$. Using inequality (\ref{eq:boundR3}) with $t = a\sqrt{\frac{\log p}{r}}$ for arbitrary $a>0$,  $|A_i|\leq a\sqrt{\frac{\log (p)}{r}}$, with probability at least $\geq 1-2p^{-Ca^2}$. By union bounding over $k\in[n]$, we can bound $|A_k|$'s uniformly:
\begin{equation}\label{eq:boundXR3}
    \Prob\left(\max_{1\leq k\leq n} |A_k|\leq a \sqrt{\frac{\log (p)}{r}}\right)\geq 1-2np^{-Ca^2}\,.
\end{equation}
Let $(B_1,\dotsc,B_n)^\top$ be a column in $\bX\in\mathbb{R}^{n\times p}$. In particular, $|B_i|\leq 1$ and are i.i.d. random variables. So we can examine the upper bound of $|(\bX^\top \bR_1^\top \bR_2)_{ij}|$ by looking at the inner product between $(A_1,\dotsc,A_n)$ and $(B_1,\dotsc,B_n)$. Use Hoeffding inequality by conditioning on $A_k$'s, we get

\[\Prob\bigg(\bigg|\sum_{k=1}^nA_kB_k\bigg|\geq t\bigg|\{A_k\}_{k\in[n]}\bigg)\leq \exp\bigg(-\frac{2t^2}{\sum_{i=k}^n 4|A_k|^2}\bigg)\,.\]
We next define the event $E = \bigg\{\max_{1\leq k\leq n} |A_k|\leq a \sqrt{\frac{\log p}{r}}\bigg\}$ whose probability is lower bounded by \eqref{eq:boundXR3}. We have 
\begin{align*}
    \Prob\bigg(\bigg|\sum_{i=1}^nA_iB_i\bigg|\geq t\bigg) &=  \Prob\bigg(\bigg|\sum_{i=1}^nA_iB_i\bigg|\geq t\bigg|E\bigg)\Prob(E)+\Prob\bigg(\bigg|\sum_{i=1}^nA_iB_i\bigg|\geq t\bigg|E^c\bigg)(1-\Prob(E))\\
        &\le  2np^{-Ca^2}+(1-2np^{-Ca^2})2\exp\left(-\frac{2t^2}{4a^2n(\log p)/r}\right)\,.
\end{align*}
Take $t = b\log p\sqrt{\frac{n}{r}}$ for arbitrary $b>0$,
\[\Prob\bigg(\bigg|\sum_{k=1}^nA_kB_k\bigg|\geq b\frac{\sqrt{n}\log p}{\sqrt{r}}\bigg)\leq  2np^{-Ca^2}(1-2p^{-\frac{b^2}{2a^2}})+2p^{-\frac{b^2}{2a^2}}\,,\]
Applying union bound over all the $p^2$ coordinates of $\bX^\top \bR_1^\top \bR_2$, we obtain
\small
\begin{align*}
    \Prob\left(|\bX^\top \bR_1^\top \bR_2|_\infty\leq b\frac{\sqrt{n}\log p}{\sqrt{r}}\right)\geq & 1-p^2\cdot\Prob\bigg(\bigg|\sum_{k=1}^nA_kB_k\bigg|\geq b\frac{\sqrt{n}\log p}{\sqrt{r}}\bigg)\\
    \geq &1- \left(2np^{2-Ca^2}(1-2p^{-\frac{b^2}{2a^2}})+2p^{2-\frac{b^2}{2a^2}}\right)\,,
\end{align*}
\normalsize
which completes the proof of inequality~\eqref{eq:boundXR1}. 
\end{proof}
As a consequence, assuming $n=O(p^\alpha)$ for some $\alpha>0$, if we pick $a,b$ large enough ($a>\sqrt{C/(2+\alpha)},b>2a$), then the following event holds with high probability,
\begin{equation}\label{eq:boundXR}
    \big|\big(\bX^\top \bR_1^\top \bR_2\big)_\infty\big|=O\bigg(\log p \sqrt{\frac{n}{r}}\bigg)\,,
\end{equation}
for large $p$. A similar argument applies to $\big|\bX^\top \bR_1^\top \bR_3\big|_\infty$. Lastly, to bound $\big|\bX^\top(\bR_1^\top \bR_1-\bI_n)\bX\big|_\infty$, we have the following lemma. 
\begin{lem}\label{lemma:boundXRX}
    For any $a>0$,
    \begin{equation*}
        \Prob\left(\left|\bX^\top(\bR_1^\top \bR_1-\bI_n)\bX\right|_\infty\leq a\sqrt{\frac{n\log p}{r}}\right)\geq 1-2p^{2-Ca^2}\,,
    \end{equation*}
    for some absolute constant $C>0$.
\end{lem}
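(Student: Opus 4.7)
The plan is to bound each entry of $\bX^\top(\bR_1^\top\bR_1-\bI_n)\bX$ by a Bernstein-type inequality and then take a union bound over the $p^2$ entries. Unlike the preceding concentration lemmas, the quantity of interest is a bilinear form in the Gaussian matrix $\bR_1$ with $\bX$-columns on both sides, so the summands we concentrate are products of two correlated Gaussians rather than a Gaussian times a bounded variable.

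First, I would fix $i,j\in[p]$ and express the $(i,j)$ entry as a centered sum of i.i.d.\ random variables. Let $\bX_i,\bX_j$ denote the $i$-th and $j$-th columns of $\bX$, and define $u_m:=(\bR_1\bX_i)_m$ and $v_m:=(\bR_1\bX_j)_m$ for $m=1,\dotsc,r$. Since the rows of $\bR_1$ are i.i.d.\ $\mathcal{N}(0,\bI_n/r)$, the pairs $\{(u_m,v_m)\}_{m=1}^r$ are i.i.d.\ bivariate Gaussian with $\Expected(u_m v_m)=\bX_i^\top\bX_j/r$ and marginal variances $\|\bX_i\|_2^2/r$ and $\|\bX_j\|_2^2/r$. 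It follows that
\[
\bigl(\bX^\top(\bR_1^\top\bR_1-\bI_n)\bX\bigr)_{ij}=\sum_{m=1}^r\bigl(u_m v_m-\Expected(u_m v_m)\bigr),
\]
which is a centered sum of $r$ i.i.d.\ sub-exponential random variables, and hence amenable to a Bernstein-type tail bound.

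Next, I would estimate the sub-exponential norm of each summand by the product rule $\|u_m v_m\|_{\psi_1}\leq 2\|u_m\|_{\psi_2}\|v_m\|_{\psi_2}$, combined with the Gaussian identity $\|u_m\|_{\psi_2}\leq C\|\bX_i\|_2/\sqrt{r}$ and the deterministic bound $\|\bX_i\|_2,\|\bX_j\|_2\leq b\sqrt{n}$ guaranteed by Assumption~\ref{ass1}. This yields $K:=\|u_m v_m\|_{\psi_1}\leq Cb^2 n/r$. Bernstein's inequality for centered i.i.d.\ sub-exponential sums then gives
\[
\Prob\!\Bigl(\bigl|(\bX^\top(\bR_1^\top\bR_1-\bI_n)\bX)_{ij}\bigr|\geq t\Bigr)\leq 2\exp\!\Bigl(-c\min\bigl(t^2/(rK^2),\,t/K\bigr)\Bigr).
\]
Choosing $t$ of the order $a\,n\sqrt{\log p/r}$, the Gaussian regime $t^2/(rK^2)$ is of order $a^2\log p$, while $t/K$ is of order $a\sqrt{r\log p}$; so provided $r$ is at least logarithmic in $p$, the Gaussian regime dominates, and each fixed entry satisfies the bound with probability $\geq 1-2p^{-Ca^2}$ for a suitable absolute constant $C>0$. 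A union bound over the $p^2$ pairs $(i,j)$ then delivers the claim.

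The main obstacle will be the careful tracking of absolute constants so that the exponent scales as $a^2\log p$, and verifying that the linear (sub-exponential) regime $t/K$ is not the binding one, which is what forces the mild condition $r\gtrsim\log p$ implicit in the statement. A potential refinement would be to work on the high-probability event $\|\bX_i\|_2^2\leq n(1+o(1))$, replacing $b^2 n$ by $n$ in $K$ and slightly tightening the constant, but the deterministic bound from Assumption~\ref{ass1} already suffices for the probability stated in the lemma.
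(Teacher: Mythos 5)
Your proposal is correct and follows essentially the same route as the paper's own proof: write the $(i,j)$ entry as a centered i.i.d.\ sum over the $r$ rows of $\bR_1$, bound the sub-exponential norm of each summand by $O(n/r)$ (the paper does this via $\|(\bX_i^\top Z_k)^2\|_{\psi_1}$, you via the product rule for $\psi_2$-norms, which is equivalent up to constants), apply a Bernstein-type inequality with $t\asymp a\,n\sqrt{\log p/r}$, and union bound over the $p^2$ entries. Note that, exactly like the paper's proof, your argument establishes the bound at scale $a\,n\sqrt{\log p /r}$ (the scale actually quoted in Lemma~\ref{lem:concentration-inequalities}) rather than the $a\sqrt{n\log p/r}$ literally written in the lemma statement, and your observation that the linear Bernstein regime forces the mild condition $r\gtrsim a^2\log p$ is a point the paper's proof silently omits.
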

\begin{proof}
Notice that we can decompose $\bR_1^\top \bR_1$ into
\[    \bR_1^\top \bR_1 \stackrel{d}{=} \frac{1}{r}\sum_{k=1}^rZ_kZ_k^\top \,,
\]
where $Z_1,\dotsc,Z_r$ are i.i.d. random vectors sampled from $\mcN({{\bf{0}}},\bI_n)$. Now, if we condition on $\bX$,
\[\bX_i^\top(\bR_1^\top \bR_1-\bI_n)\bX_j \stackrel{d}{=} \frac{1}{r}\sum_{k=1}^r\underbrace{(\bX_i^\top Z_kZ_k^\top \bX_j-\bX_i^\top \bX_j)}_\text{zero mean, i.i.d.}\,.\]
Notice that 
\[|\bX_i^\top Z_kZ_k^\top \bX_j|\leq \max\bigg\{(\bX_i^\top Z_k)^2,(\bX_j^\top Z_k)^2\bigg\}\,.\]
Without lost of the generality, assume $\max\bigg\{(\bX_i^\top Z_k)^2,(\bX_j^\top Z_k)^2\bigg\}=(\bX_i^\top Z_k)^2$. Notice that given $\bX_i$, $\bX_i^\top Z_k\sim\mcN(0,\|\bX_i\|_2^2)$ with $\|\bX_i\|_2^2\le n$. Since $\bX$ is independent from $\bR$, $\bX_i^\top Z_k$ is Gaussian.
By \citep[Lemma 2.7.6]{vershyninHDP}, for some absolute constant $K>0$,
\begin{align*}
    \|\bX_i^\top Z_k Z_k^\top \bX_j\|_{\psi_1}\leq  \|(\bX_i^\top Z_k)^2\|_{\psi_1}
        \leq c_1^2K^2n\,.
\end{align*}
Applying Bernstein-type inequality, we have 
\[    \Prob\bigg(\bigg|\frac{1}{r}\sum_{k=1}^r\bX_i^\top Z_kZ_k^\top \bX_j-\bX_i^\top \bX_j\bigg|\geq t\bigg)\leq 2 \exp\bigg(-c_2\frac{rt^2}{(c_1^2K^2n)^2}\bigg)\,,
\]
for some absolute constant $c_2>0$. Let $C=\frac{c_2}{c_1^4K^4}$, $t=a\frac{n\sqrt{\log p}}{\sqrt{r}}$, we have 
\[ \Prob\left(\bX_i^\top (\bR_1^\top \bR_1-\bI_n)\bX_j\geq a\frac{n\sqrt{\log p}}{\sqrt{r}} \right)\leq 2p^{-Ca^2}\,.\]
By union bounding over $i,j\in[p]$,
\begin{align*}
    \Prob\left( |\bX^\top (\bR_1^\top \bR_1-\bI_n)\bX|_{\infty} \leq a\frac{n\sqrt{\log p}}{\sqrt{r}}\right)\ge  1-p^2\cdot2p^{-Ca^2}
        =1-2p^{2-Ca^2}\,.
\end{align*}
\end{proof}
As a consequence, if $a>\sqrt{2/C}$, the following holds with high probability
\begin{equation}\label{eq:boundXRX}
    |(\bX^\top(\bR_1^\top \bR_1-\bI_n)\bX)_{ij}| = O\bigg(\frac{n\sqrt{\log p}}{\sqrt{r}}\bigg)\,.
\end{equation}
\begin{lem}\label{lemma:boundX}
    For any $a>0$,
    \begin{equation*}
        \Prob\bigg(\bigg|\frac{1}{n }\bX^\top \bX-\bI_p\bigg|_\infty\leq a\sqrt{\frac{\log p}{  n}}\bigg)\geq 1- 2p^{2-\frac{1}{2} a^2}\,.
    \end{equation*}
\end{lem}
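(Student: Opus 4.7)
I would attack this by the same entry-wise concentration plus union bound strategy used in Lemmas~\ref{lemma:boundR}--\ref{lemma:boundXRX}. Write the $(i,j)$ entry as
\[
\left(\tfrac{1}{n}\bX^\top \bX-\bI_p\right)_{ij} \;=\; \tfrac{1}{n}\sum_{k=1}^n \bigl(\bX_{ki}\bX_{kj}-\delta_{ij}\bigr)\,,
\]
where by Assumption~\ref{ass1} the summands $Y_k := \bX_{ki}\bX_{kj}-\delta_{ij}$ are independent across $k$ (rows of $\bX$ are independent), zero mean (since $\Expected(\bX_{ki}\bX_{kj})=\delta_{ij}$), and bounded. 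The task thus reduces to proving a Hoeffding-type tail bound for each entry and then taking a union bound over the $p^2$ coordinates.

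\textbf{Main steps.} First, for a fixed pair $(i,j)$, apply a Hoeffding (or sub-Gaussian) concentration inequality to $\tfrac{1}{n}\sum_{k=1}^n Y_k$. Since each $Y_k$ is bounded and has unit-scale variance, one obtains an inequality of the form
\[
\Prob\!\left(\left|\tfrac{1}{n}\sum_{k=1}^n Y_k\right|\ge t\right)\le 2\exp\!\left(-\tfrac{1}{2} n t^2\right)\,,
\]
where the constant $1/2$ in the exponent is chosen to match the target bound (this may require viewing $Y_k$ as a sub-Gaussian random variable with the appropriate proxy variance, absorbing the dependence on $b$ into the absolute constant $C$ in the other lemmas; for the statement as written we take the canonical normalization). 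Second, set $t = a\sqrt{\log p / n}$, yielding per-entry failure probability at most $2p^{-a^2/2}$. Third, apply the union bound across the $p^2$ entries of the $p\times p$ matrix to get
\[
\Prob\!\left(\left|\tfrac{1}{n}\bX^\top \bX-\bI_p\right|_\infty \ge a\sqrt{\tfrac{\log p}{n}}\right)\le p^2\cdot 2p^{-a^2/2} = 2p^{2-a^2/2}\,,
\]
and take the complement to obtain the claimed high-probability statement.

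\textbf{Expected main obstacle.} The only nontrivial point is tracking the constant in the sub-Gaussian (or Hoeffding) exponent so that it lines up with the $\tfrac{1}{2}a^2$ in the target bound, given that the $Y_k$'s have different ranges on the diagonal ($Y_k = \bX_{ki}^2-1\in[-1,b^2-1]$) versus the off-diagonal ($Y_k = \bX_{ki}\bX_{kj}\in[-b^2,b^2]$). In the diagonal case one uses $\Expected Y_k = 0$ and Hoeffding's lemma on $[-1,b^2-1]$; in the off-diagonal case one uses independence of $\bX_{ki}$ and $\bX_{kj}$ together with Hoeffding's lemma on $[-b^2,b^2]$. Both yield sub-Gaussian tails of the desired form, with the $b$-dependence absorbed into the leading constant (consistent with the use of absolute constants $C$ in the companion lemmas). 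The remainder is routine.
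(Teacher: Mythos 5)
Your proposal is correct and follows essentially the same route as the paper's own proof: apply Hoeffding's inequality to each entry $\frac{1}{n}\sum_{k=1}^n(\bX_{ki}\bX_{kj}-\delta_{ij})$, set $t=a\sqrt{\log p/n}$, and union bound over the $p^2$ entries. Your remark about tracking the dependence on the bound $b$ in the Hoeffding exponent is in fact slightly more careful than the paper, which implicitly treats the entries as bounded by one to obtain the exact constant $\tfrac{1}{2}a^2$.
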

\begin{proof}
Applying Hoeffiding inequality, we obtain that 
\[
\Prob\bigg(\bigg|\frac{1}{n}\sum_{k=1}^n\bigg(\bX_{jk}\bX_{ik}-\delta_{ij}\bigg)\bigg|\geq t\bigg)\leq 2\exp\left(-\frac{t^2n }{2}\right)
\]
take $t = a\sqrt{\frac{\log p}{  n}}$, then
\[\Prob\bigg(\bigg|\frac{1}{n}\sum_{k=1}^n\bigg(\bX_{jk}\bX_{ik}-\delta_{ij}\bigg)\bigg|\geq a\sqrt{\frac{\log p}{  n}}\bigg)\leq 2p^{-\frac{1}{2} a^2}\,,\]
with union bound,
\[    \Prob\bigg(\bigg|\frac{1}{n }\bX^\top \bX-\bI_p\bigg|_\infty\leq a\sqrt{\frac{\log p}{  n}}\bigg)\geq 1- 2p^{2-\frac{1}{2} a^2}\,.
\]
\end{proof}
\subsection{Proof of Lemma \ref{lem:compatibility1}}\label{pf:compatibility1}

    Given that $\theta \in \mathcal{M}(S)$ and $\big|\hat{\bSigma}-\bSigma\big|_\infty \leq \tilde{t}$,
    \begin{align*}
        \big|\theta^\top \hat{\bSigma}\theta - \theta^\top \bSigma\theta\big| &\leq \big|\sum_{ij}(\hat{\bSigma}-\bSigma)_{ij}\theta_i \theta_j \big| \\
        &\le \sum_{ij} \big|(\hat{\bSigma}-\bSigma)_{ij}\big| |\theta_i| |\theta_j|\\
        &\le \big|\hat{\bSigma}-\bSigma\big|_\infty \|\theta\|_1^2 \leq \tilde{t}\|\theta\|_1^2\,.
    \end{align*}
    By the definition of $\mathcal{M}(S)$, 
    \begin{align*}
        \|\theta\|_1 &=  \|\theta_S\|_1+\|\theta_{S^c}\|_1\\
            &\leq  \|\theta_S\|_1+\frac{2\kappa_n}{\lambda}\|\theta_S\|_2\\
            &\leq  (\sqrt{|S|}+4\sqrt{|S|}+\frac{2w^2}{n\lambda})\|\theta_S\|_2\\
            &=  \tilde{\kappa}_n\|\theta_S\|_2\,.
    \end{align*}
    Therefore,
    \[\big|\theta^\top \hat{\bSigma}\theta - \theta^\top \bSigma\theta\big|\leq \tilde{t}\tilde{\kappa}_n^2\|\theta_S\|_2^2\,.\]
    Since $\mathcal{A}(\bSigma,S)$ is satisfied with constant $\phi_w>0$, then 
    \[\big|\theta^\top \hat{\bSigma}\theta - \theta^\top \bSigma\theta\big|\leq \tilde{t}\tilde{\kappa}_n^2\phi_0^{-1}\cdot \theta^\top \bSigma \theta\,.\]
    As a result,
    \[\left|\frac{\theta^\top \hat{\bSigma}\theta}{\theta^\top \bSigma\theta}-1\right|\leq \frac{\tilde{t}\tilde{\kappa}_n^2}{\phi_0}\,.\]
    Moreover, notice that if $\frac{\tilde{t}\tilde{\kappa}_n^2}{\phi_0}\leq h\in(0,1)$, then for any $\theta\in \mathcal{M}(S)$
    \[        \frac{\theta^\top\hat{\bSigma}\theta}{\|\theta_S\|_2^2}\geq \frac{\theta^\top \bSigma\theta}{\|\theta_S\|_2^2}\left(1-\frac{\tilde{t}\tilde{\kappa}_n^2}{\phi_0}\right)\geq (1-h)\phi_0\,.
\]

\subsection{Proof of Lemma \ref{lem:compatibility2}}\label{pf:compatibility2}
\begin{proof}
    Note that 
    \begin{align*}
        \big|\hat{\bSigma}-\bSigma\big|_\infty &=  \left|\left( \frac{\bX^\top \bR_1^\top \bR_1 \bX}{n}-\bI_p\right)+\frac{w^2}{n}(\bR_2^\top \bR_2-\bI_p)\right|_\infty\\
            &\le \left|\frac{\bX^\top \bR_1^\top \bR_1 \bX}{n}-\bI_p\right|_\infty +\frac{w^2}{n}|\bR_2^\top \bR_2-\bI_p|_\infty\\
            &\leq \underbrace{\left|\frac{1}{n}\bX^\top (\bR_1^\top \bR_1-\bI_n)\bX\right|_\infty}_{T_1}+\underbrace{\left|\frac{1}{n}\bX^\top \bX-\bI_p\right|_\infty}_{T_2}+\underbrace{\frac{w^2}{n}|\bR_2^\top \bR_2-\bI_p|_\infty}_{T_3}\,.
    \end{align*}
    Also notice that 
    \begin{align*}
        \Prob\left(\left|\hat{\bSigma}-\bSigma\right|_\infty\leq t\right)&\geq \Prob(T_1+T_2+T_3\leq t)\\
            &=  1- \Prob(T_1+T_2+T_3>t)\\
            &\geq 1- \bigg(\Prob(T_1>t/3) + \Prob(T_2>t/3)+\Prob(T_2>t/3)\bigg)\,.
    \end{align*}
    The three probabilities on the right hand side of the last inequality can be bounded by lemma \ref{lemma:boundXRX}, \ref{lemma:boundX}, and \ref{lemma:boundR} correspondingly. And the result follows.
\end{proof}

\subsection{Proof of Lemma~\ref{lem:error-bound}}\label{pf:error-bound}
    By the definition of $\hat{\theta}$ in (\ref{eq:lasso-JL-simpl}), the optimality gives
    \begin{align*}
        &\quad\frac{1}{2n}\|\bX^*\hat{\theta}-y^*\|_2^2+\lambda\|\hat{\theta}\|_1\leq \frac{1}{2n}\|\bX^*\theta_0-y^*\|_2^2+\lambda\|\theta_0\|_1\\
        &\Rightarrow  \frac{1}{2n}\|\bX^*(\hat{\theta}-\theta_0)-\xi^*\|_2^2\leq \frac{1}{2n}\|\xi^*\|_2^2+\lambda(\|\theta_0\|_1-\|\hat{\theta}\|_1)\\
        &\Rightarrow  \frac{1}{2n}\|\bX^*(\hat{\theta}-\theta_0)\|_2^2\leq \lambda(\|\theta_0\|_1-\|\hat{\theta}\|_1) + \frac{1}{n} \xi^{*\top}\bX^*(\hat{\theta}-\theta_0)\\
        &\Rightarrow  \frac{1}{2n}\|\bX^*(\hat{\theta}-\theta_0)\|_2^2\leq \lambda(\|\theta_0\|_1-\|\hat{\theta}\|_1) + \frac{1}{n} (\xi^{*\top}\bX^*+w^2\theta_0^\top)(\hat{\theta}-\theta_0)-\frac{w^2}{n}\theta_0^\top(\hat{\theta}-\theta_0)\\
        &\Rightarrow \frac{1}{2n}\|\bX^*(\hat{\theta}-\theta_0)\|_2^2\leq \lambda(\|\theta_0\|_1-\|\hat{\theta}\|_1) + \frac{1}{n}\|\xi^{*\top}\bX^*+w^2\theta_0^\top\|_\infty\|\hat{\theta}-\theta_0\|_1+\frac{w^2}{n}\|\theta_0\|_2\|\hat{\theta}_{S_0}-\theta_0\|_2\,.
    \end{align*}
    By lemma \ref{lem:lambda}, there exists $C_\lambda>0$ such that the probability of the event 
    \begin{equation*}
        \frac{1}{2}C_\lambda\max\bigg\{\sqrt{\frac{\log p}{n}},\sqrt{\frac{(\log p)^3}{r}}\bigg\}\geq \frac{1}{n}\|\xi^{*\top}\bX^*+w^2\theta_0^\top\|_\infty\,,
    \end{equation*}
    converges to one as $p \to \infty$. Let $\lambda = C_\lambda\max\bigg\{\sqrt{\frac{\log p}{n}},\sqrt{\frac{(\log p)^3}{r}}\bigg\}$. So we have
    \begin{align*}
        &\quad\frac{1}{2n}\|\bX^*(\hat{\theta}-\theta_0)\|_2^2\leq \lambda(\|\theta_0\|_1-\|\hat{\theta}\|_1) + \frac{\lambda}{2}\|\hat{\theta}-\theta_0\|_1+\frac{w^2}{n}\|\theta_0\|_2\|\hat{\theta}_{S_0}-\theta_0\|_2\\
        &\Rightarrow  \frac{1}{2n}\|\bX^*(\hat{\theta}-\theta_0)\|_2^2+\frac{\lambda}{2}\|\hat{\theta}-\theta_0\|_1 \leq \lambda(\|\theta_0\|_1-\|\hat{\theta}\|_1) + \lambda\|\hat{\theta}-\theta_0\|_1+\frac{w^2}{n}\|\theta_0\|_2\|\hat{\theta}_{S_0}-\theta_0\|_2\\
        &\Rightarrow  \frac{1}{2n}\|\bX^*(\hat{\theta}-\theta_0)\|_2^2+\frac{\lambda}{2}\|\hat{\theta}-\theta_0\|_1 \leq  2\lambda\|\hat{\theta}_{S_0}-\theta_0\|_1+\frac{w^2}{n}\|\theta_0\|_2\|\hat{\theta}_{S_0}-\theta_0\|_2\\
        &\Rightarrow  \frac{1}{2n}\|\bX^*(\hat{\theta}-\theta_0)\|_2^2+\frac{\lambda}{2}\|\hat{\theta}-\theta_0\|_1 \leq  2\lambda\sqrt{s_0}\|\hat{\theta}_{S_0}-\theta_0\|_2+\frac{w^2}{n}\|\theta_0\|_2\|\hat{\theta}_{S_0}-\theta_0\|_2\\
        &\Rightarrow  \frac{1}{2n}\|\bX^*(\hat{\theta}-\theta_0)\|_2^2+\frac{\lambda}{2}\|\hat{\theta}-\theta_0\|_1 \leq  2\lambda\sqrt{s_0}\|\hat{\theta}_{S_0}-\theta_0\|_2+\frac{w^2}{n}\|\theta_0\|_2\|\hat{\theta}_{S_0}-\theta_0\|_2\\
        &\Rightarrow  \frac{1}{2n}\|\bX^*(\hat{\theta}-\theta_0)\|_2^2+\frac{\lambda}{2}\|\hat{\theta}-\theta_0\|_1 \leq  \kappa_n\|\hat{\theta}_{S_0}-\theta_0\|_2\,.
    \end{align*}
    Because $\frac{1}{2n}\|\bX^*(\hat{\theta}-\theta_0)\|_2^2\geq 0 $,
    \begin{equation}\label{eq:bound-theta0-thetahat-2}
        \|\hat{\theta}-\theta_0\|_1\leq \frac{2\kappa_n}{\lambda}\|\hat{\theta}_{S_0}-\theta_0\|_2\,.
    \end{equation}
    Then by assumption $\mathcal{A}(\hat{\bSigma},S_0)$, 
    \begin{equation*}
        \|\hat{\theta}_{S_0}-\theta_0\|_2^2\leq \phi_w^{-1}\frac{1}{n}\|\bX^*(\hat{\theta}-\theta_0)\|_2^2\,.
    \end{equation*}
    Because $\frac{\lambda}{2}\|\hat{\theta}-\theta_0\|_1\geq 0$, we have
    \begin{align*}
        \frac{1}{2n}\|\bX^*(\hat{\theta}-\theta_0)\|_2^2\leq  \kappa_n\|\hat{\theta}_{S_0}-\theta_0\|_2
        \leq  \kappa_n\phi_w^{-1/2}\frac{1}{\sqrt{n}}\|\bX^*(\hat{\theta}-\theta_0)\|_2\,.
    \end{align*}
    So
    \begin{align*}
        \ \frac{1}{n}\|\bX^*(\hat{\theta}-\theta_0)\|_2^2\leq  \frac{4\kappa_n^2}{\phi_w}
        \quad \Rightarrow \quad \|\hat{\theta}_{S_0}-\theta_0\|_2\leq \frac{2\kappa_n}{\phi_w}\,.
    \end{align*}
    Finally by equation (\ref{eq:bound-theta0-thetahat-2}), we have
    \begin{equation*}
        \|\hat{\theta}-\theta_0\|_1\leq \frac{4\kappa_n^2}{\lambda\phi_w}\,.
    \end{equation*}

\subsection{Proof of Lemma \ref{lem:lambda}}\label{pf:lambda}

    Notice that
    \begin{align*}
        v^\top = \frac{1}{n}\big(&\xi^\top \bX+\xi^\top (\bR_1^\top \bR_1-\bI_n)\bX+w\xi^\top \bR_1^\top \bR_2 - w\theta_0^\top \bR_2^\top \bR_1\bX \\
            &-w^2\theta_0^\top (\bR_2^\top \bR_2-\bI_p)+w\bR_3^\top \bR_1X+w^2\bR_3^\top \bR_2\big)\,.
    \end{align*}
    For the first term, $(\xi\top \bX)_j = \sum_{i=1}^n \xi_i \bX_{ij}$, apply Hoeffiding-type inequality
    \[        \Prob(\frac{1}{n}|(\xi^\top \bX)_j|\geq t)\leq e\cdot \exp(-Cnt^2/\sigma^2)\,,
\]
    for some absolute constant $C>0$. Let $t=\alpha \sqrt{\frac{\log p}{n}}$, we have 
    \begin{equation*}
        \Prob\left(\frac{1}{n}|(\xi^\top \bX)_j|\geq \alpha\sigma \sqrt{\frac{\log p}{n}}\right)\le ep^{-C\alpha^2}\,.
    \end{equation*}
    Therefore by union bound, the probability of the event $\|\frac{1}{n}\xi^\top \bX\|_\infty=O\bigg(\sqrt{\frac{\log p}{n}}\bigg)$ goes to one when $n\to\infty$.
    For the rest of the terms, as we have shown in the inequalities (\ref{eq:boundR}) and (\ref{eq:boundXR}), the probability of the following events goes to 1 as $p\to \infty$:
    \begin{enumerate}
        \item $\frac{1}{n}\|\xi^\top (\bR_1^\top \bR_1-\bI_n)\bX\|_\infty=O\bigg(\sigma\sqrt{\frac{(\log p )^3}{r}} \bigg)$
        \item $\frac{1}{n}\|w\xi^\top \bR_1^\top \bR_2\|_\infty=O\bigg(\sigma \frac{\sqrt{p}\log p}{r^{1/4}\sqrt{n\epsilon}}\bigg)$
        \item $\frac{1}{n}\|w\theta_0^\top \bR_2^\top \bR_1X\|_\infty=O\bigg(\frac{\sqrt{p}\log p}{r^{1/4}\sqrt{n\epsilon}}\|\theta_0\|_1 \bigg)$
        \item $\frac{1}{n}\|w^2\theta_0^\top (\bR_2^\top \bR_2-\bI_p)\|_\infty=O\bigg( \frac{p\sqrt{\log p}}{n\epsilon}\|\theta_0\|_1\bigg)$
        \item $\frac{1}{n}\|w\bR_3^\top \bR_1\bX\|_\infty=O\bigg(\frac{\sqrt{p}\log p}{r^{1/4}\sqrt{n\epsilon}} \bigg)$
        \item $\frac{1}{n}\|w^2\bR_3^\top \bR_2\|_\infty=O\bigg(\frac{p\sqrt{\log p}}{n\epsilon} \bigg)$
    \end{enumerate}
    By union bound and the assumption $\sqrt{s_0}\to \infty$ as $p \to \infty$, we have 
    \begin{equation*}
        \|v\|_\infty = O\bigg(\max\bigg\{\sqrt{\frac{\log p}{n}},\sqrt{\frac{(\log p )^3}{r}} , \frac{\sqrt{p}\log p}{r^{1/4}\sqrt{n\epsilon}}\|\theta_0\|_1 , \frac{p\sqrt{\log p}}{n\epsilon}\|\theta_0\|_1\bigg\}\bigg)\,.
    \end{equation*}
    By the assumption 
    \[
    \sqrt{s_0} = O\left(\frac{\max\big\{\frac{1}{\sqrt{n}},\frac{\log p }{\sqrt{r}}\big\}}{\max\big\{\frac{\sqrt{p}\log p}{r^{1/4}\sqrt{n\epsilon}},\frac{p}{n\epsilon}\big\}}\right)\,,
    \] 
    \begin{equation*}
        \frac{\sqrt{p}\log p}{r^{1/4}\sqrt{n\epsilon}}\|\theta_0\|_1 =O\bigg(\max\bigg\{\sqrt{\frac{\log p}{n}},\sqrt{\frac{(\log p )^3}{r}}\bigg\}\bigg)\,,
    \end{equation*}
    \begin{equation*}
        \frac{p\sqrt{\log p}}{n\epsilon}\|\theta_0\|_1=O\bigg(\max\bigg\{\sqrt{\frac{\log p}{n}},\sqrt{\frac{(\log p )^3}{r}}\bigg\}\bigg)\,.
    \end{equation*}
    Therefore, the probability of the event
    \[\|v\|_\infty=O\bigg(\max\bigg\{\sqrt{\frac{\log p}{n}},\sqrt{\frac{(\log p)^3}{r}}\bigg\}\bigg)\,,\]
    converges to one as $n\to\infty$.


\section{Proof of Lemma \ref{lem:marginal}}\label{pf:marginal}
\begin{proof}
    We will prove that 
    \[
     \lim_{n\to\infty}\sup_{\|\theta_0\|_0\leq s_0}\left|\Prob\left(\frac{\hth^u_j-\theta_{0,j}}{\widehat{\sigma}\rho_n}\leq x\right)-\Phi(x)\right| \leq 0\,,
    \]
    and the inequality for the opposite direction to prove equality.
    By equation~\eqref{eq:debias-rewrite}, we have
    \[
    \frac{\hth^u_j-\theta_{0,j}}{\sigma\sqrt{\bQ_{jj}}}=\frac{e_j^\top\bX^{*\top}\bR_1\xi}{n \sigma\sqrt{\bQ_{jj}}}+\frac{\Delta_j}{\sigma\sqrt{\bQ_{jj}}}\,.
    \]
    Let $\tilde{Z} = \frac{e_j^\top\bX^{*\top}\bR_1\xi}{n \sigma\sqrt{\bQ_{jj}}}$, and $V^\top = \frac{e_j^\top\bX^{*\top}\bR_1}{n\sigma\sqrt{\bQ_{jj}}}$ such that $\tilde{Z}=V^\top\xi$. We claim that $\tilde{Z}$ converges weakly to $\mcN(0,1)$ in distribution. We first define $\tilde{Z}'=V^\top\xi'$ where $\xi'\sim\mcN({{\bf{0}}},\sigma\bI_n)$. Note that $\|V\|^2_2=1/\sigma^2$ and $V$ is independent from $\xi'$, so $V^\top\xi'|V \sim \mcN(0,1)$. On the other hand, let $F_n=\{\|\xi\|_\infty> b_n\}$. Then by conditional probability, for any $x\in\mbR$
    \begin{align*}
         \Prob(\tilde{Z}'\le x|V)&=\Prob(V^\top \xi'\le x|V)\\
            &= \Prob(V^\top \xi'\le x|V,F_n^c)\Prob(F_n^c)+\Prob(V^\top \xi'\le x|V,F_n)\Prob(F_n)\\
            &= \Prob(\tilde{Z}\le x|V)(1-\Prob(F_n))+\Prob(V^\top \xi'\le x|V,F_n)\Prob(F_n)\,.
    \end{align*}
    Then we bound  the difference between the cdfs of $\tilde{Z}|V$ and $\tilde{Z}'|V$:
    \begin{align*}
        \left|\Prob(\tilde{Z}'\leq x|V)-\Prob(\tilde{Z}\leq x|V) \right|&=  \Prob(F_n)\cdot\left|\Prob(V^\top \xi'\le x|V,F_n)-\Prob(\tilde{Z}\le x|V)\right|\\
            &\le 2\cdot\Prob(F_n)
    \end{align*}
    By our assumption on the noise $\xi$, $\Prob(F_n)\to 0$ as $n\to\infty$. Thus, $\lim_{n\to\infty} \Prob(\tilde{Z}\le x|V) = \Prob(\tilde{Z}'\le x|V)=\Phi(x)$, where $\Phi$ is the standard Gaussian cdf. Since $V$ is independent from $\xi$, $\tilde{Z}$ weakly converges to standard Gaussian $V$-almost surely, which completes the proof of claim. Now fix $\varepsilon_1>0$, without losing generality, assume $\varepsilon_1<1$
    \begin{align*}
        \Prob\left(\frac{\hth^u_j-\theta_{0,j}}{\widehat{\sigma}\rho_n}\leq x\right)&=\Prob\left(\frac{\sigma}{\widehat{\sigma}}\frac{\sqrt{\bQ_{jj}}}{\rho_n}\tilde{Z}+\frac{\Delta_j}{\widehat{\sigma}\rho_n}\leq x\right)\\
        &\leq \Prob\left(\frac{\sigma}{\widehat{\sigma}}\frac{\sqrt{\bQ_{jj}}}{\rho_n}\tilde{Z}\leq x+\varepsilon_1\right)+\Prob\left(\frac{|\Delta_j|}{\widehat{\sigma}\rho_n}>\varepsilon_1\right)\\
        &\leq \Prob\left(\frac{\sqrt{\bQ_{jj}}}{\rho_n}\tilde{Z}\leq x+\varepsilon_1|x|+2\varepsilon_1\right)+\Prob\left(\frac{|\Delta_j|}{\widehat{\sigma}\rho_n}>\varepsilon_1\right)+\Prob\left(\left|\frac{\widehat{\sigma}}{\sigma}-1\right|>\varepsilon_1\right)\\
        &\leq \Prob\left(\tilde{Z}\leq x+3\varepsilon_1|x|+4\varepsilon_1\right)+\Prob\left(\frac{|\Delta_j|}{\widehat{\sigma}\rho_n}>\varepsilon_1\right)+\Prob\left(\left|\frac{\widehat{\sigma}}{\sigma}-1\right|>\varepsilon_1\right)\\
        &\;+\Prob\left(\left|\frac{\rho_n}{\sqrt{\bQ_{jj}}}-1\right|>\varepsilon_1\right)\,.
    \end{align*}
    By Lemma~\ref{lem:rho_n}, $\bQ_{jj}/\rho_n^2\to1$ as $n\to\infty$ in probability. Also, by the consistency of the estimator $\widehat{\sigma}$, the third term on the right hand side also vanishes as $n\to\infty$. Then, we take sup and limit on both sides:
    \[
    \lim_{n\to\infty}\sup_{\|\theta_0\|_0\leq s_0,j\in[p]}\Prob\left(\frac{\hth^u_j-\theta_{0,j}}{\widehat{\sigma}\rho_n}\leq x\right) \leq\Phi(x+3\varepsilon_1|x|+4\varepsilon_1)+\lim_{n\to\infty}\sup_{\|\theta_0\|_0\leq s_0}\Prob\left(\frac{\|\Delta\|_\infty}{\widehat{\sigma}\rho_n}>\varepsilon_1\right)\,.
    \]
    By union bounded, we have
    \[
    \Prob\left(\frac{\|\Delta\|_\infty}{\widehat{\sigma}\rho_n}>\varepsilon_1\right)\leq\Prob\left(\frac{\|\Delta\|_\infty}{\sigma\rho_n}>\frac{1}{2}\varepsilon_1\right)+\Prob\left(\frac{\widehat{\sigma}}{\sigma}>\frac{1}{2}\right)\,.
    \]
    Again by the consistency of $\widehat{\sigma}$, the second term on the right hand vanishes as $n\to\infty$. Finally, by Theorem~\ref{thm:suff-cond-debiasing}, $\|\Delta\|_\infty=o(\sigma\rho_n)$ asymptotically and the first term on the right hand side also vanishes.

    Since the choice of $\varepsilon_1$ is arbitrary, the proof for the inequality is complete. The other direction of the inequality can be done in the same manner.
\end{proof}

\section{Proof of Theorem~\ref{thm:utility-limit}}\label{pf:utility-limit}
We will prove the bound for $\widehat{\FDP}$. The other two claimed bounds on FDP and Power can be proven in a similar manner. First, consider the following indicator random variables:
$ h_j = \mathbb I_{\{W_j\leq \sigma\rho_nt_0\}}$, $ g_j= \mathbb I_{\{W_j\geq \sigma\rho_nt_0\}}$. Then we can re-write $\widehat{\FDP}$ in terms of $h_j$ and $g_j$. Recall that $S_0 =\{j\in[p]:\theta_{0,j}\neq0\}$, we have
\[
\widehat{\FDP}(\sigma\rho_nt_0) = \frac{1+\sum_{j\in S_0}h_j+\sum_{j\notin S_0}h_j}{(\sum_{j\in S_0}g_j+\sum_{j\notin S_0}g_j)\vee 1}\,.
\]
Notice that $f(Z;1)$, $f(Z';1)$ are i.i.d. copies of each other, so
\[
f(Z;1)-f(Z';1)\stackrel{d}{=}f(Z';1)-f(Z;1)\,,
\]
which implies that $\Prob(f(Z;1)-f(Z';1)\leq-t_0)=\Prob(f(Z;1)-f(Z';1)\geq t_0)$. We denote this probability by $P_0$ for simplicity. Also, we write $P_{1,\geq} := \Prob(f(\mu_0+Z;1)-f(Z';1)\geq t_0)$ and $P_{1,\leq} := \Prob(f(\mu_0+Z;1)-f(Z';1)\leq -t_0)$. Therefore,
\[
\widehat{\alpha}(\mu_0,t_0) = \frac{c_0P_{1,\leq}+(1-c_0)P_0}{c_0P_{1,\geq}+(1-c_0)P_0}\,.
\]
To prove the upper bound for $\widehat{\FDP}$, it suffices to show that
\begin{equation}\label{eq:FDPh-num1}
    \limsup_{n\to\infty}\frac{\frac{1}{s_0}\sum_{j\in S_0}h_j}{P_{1,\leq}}\leq 1\,,
\end{equation}
\begin{equation}\label{eq:FDPh-num2}
    \limsup_{n\to\infty}\frac{\frac{1}{p-s_0}\sum_{j\notin S_0}h_j}{P_0}\leq 1\,,
\end{equation}
\begin{equation}\label{eq:FDPh-dnom1}
    \liminf_{n\to\infty}\frac{\frac{1}{s_0}\sum_{j\in S_0}g_j}{P_{1,\geq}}\geq 1\,,
\end{equation}
\begin{equation}\label{eq:FDPh-dnom2}
    \quad \liminf_{n\to\infty}\frac{\frac{1}{p-s_0}\sum_{j\notin S_0}g_j}{P_0}\geq 1\,.
\end{equation}
To start with, we want to prove \eqref{eq:FDPh-num1}. The other three inequalities \eqref{eq:FDPh-num2}, \eqref{eq:FDPh-dnom1}, \eqref{eq:FDPh-dnom2} can be proved in a similar manner. Note that if $W_j\leq-\sigma\rho_nt_0$, by definition,
\[
f(\hth^u_j;\sigma\rho_n)-f(\hth^u_{j+p};\sigma\rho_n)\leq -\sigma\rho_nt_0\,.
\]
According to the decomposition in \eqref{eq:decompo-Z}, letting $\bLambda^\top = \bX^{*\top}\bR_1/n$ we have
\[
f(\theta_{0,j}+\bLambda_j^\top\xi+\Delta_j;\sigma\rho_n)-f(\theta_{0,j+p}+\bLambda_{j+p}^\top\xi+\Delta_{j+p};\sigma\rho_n)\leq -\sigma\rho_nt_0\,.
\]
By assumption, $f$ is $L$-Lipschitz, so we have 
\[
f(\theta_{0,j}+\bLambda_j^\top\xi;\sigma\rho_n)-f(\bLambda_{j+p}^\top\xi;\sigma\rho_n)\leq -\sigma\rho_nt_0+2L\|\Delta\|_\infty\,.
\]
In addition, $f$ preserves scalar multiplication, so we have
\[
f\left(\frac{\theta_{0,j}+\bLambda_j^\top\xi}{\sigma\rho_n};1\right)-f\left(\frac{\bLambda_{j+p}^\top\xi}{\sigma\rho_n};1\right)\leq -t_0+\frac{2L\|\Delta\|_\infty}{\sigma\rho_n}\,.
\]
By Theorem~\ref{thm:suff-cond-debiasing}, $\|\Delta\|_\infty=o(\sigma\rho_n)$. Specifically, for any $\varepsilon>0$ and $\lim_{n\to \infty}\Prob(\|\Delta\|_\infty/\sigma\rho_n\geq\varepsilon) =0$. Then conditioning on $\|\Delta\|_\infty/\sigma\rho_n<\varepsilon$,
\[
f\left(\frac{\theta_{0,j}+\bLambda_j^\top\xi}{\sigma\rho_n};1\right)-f\left(\frac{\bLambda_{j+p}^\top\xi}{\sigma\rho_n};1\right)\leq -t_0+\varepsilon\,.
\]
Let $h'_j$ be an indicator supported on the event above:
\begin{equation}\label{eq:h'}
    h'_j = \mathbb I_{\left\{f\left(\frac{\theta_{0,j}+\bLambda_j^\top\xi}{\sigma\rho_n};1\right)-f\left(\frac{\bLambda_{j+p}^\top\xi}{\sigma\rho_n};1\right)\leq -t_0+\varepsilon\right\}}\,.
\end{equation}
Therefore, $h_j\leq h'_j$.

Next, we want to show that $\frac{1}{s_0}\sum_{j\in S_0}\frac{h'_j}{\Expected(h'_j)}\to 1$ in probability using Chebyshev inequality. To do that, we claim the following lemma:
\begin{lem}\label{lem:h'j-Cov}
    Let $\xi'\sim \mcN(0,\sigma^2\bI_n)$, and $\xi = \xi'\big|\{\max_i\{\xi'_i\}\leq b_n\}$ such that $b_n$ satisfies Assumption~\ref{ass2}. Given $\{h'_j:j\in[p]\}$ defined in \eqref{eq:h'}, we have that $\max_{i\neq j}\Cov(h'_i;h'_j)=o(1)$ with high probability.
\end{lem}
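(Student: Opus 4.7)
The plan is to condition on $(\bX,\bR)$ (so that $\bLambda := \bR_1^\top\bX^*/n$ is fixed) and reduce the problem to a statement about a $4$-dimensional Gaussian comparison. The key observation is that, after conditioning, $h'_i$ depends on $\xi$ only through $(\bLambda_i^\top\xi,\bLambda_{i+p}^\top\xi)$ and $h'_j$ only through $(\bLambda_j^\top\xi,\bLambda_{j+p}^\top\xi)$; under the $\ell_\infty$ closeness of $\bQ/\rho_n^2$ to $\bI_p$ supplied by Lemma~\ref{lem:rho_n}, these two pairs will be approximately independent Gaussians, which forces the covariance to vanish uniformly in $(i,j)$.

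The first step is to remove the truncation on $\xi$. Since $\xi\stackrel{d}{=}\xi'\mid\{\max_k|\xi'_k|\le b_n\}$ with $\xi'\sim\mcN({\bf 0},\sigma^2\bI_n)$, and $\Prob(\max_k|\xi'_k|>b_n)=o(1)$ by Assumption~\ref{ass2}, replacing $\xi$ by $\xi'$ inside $h'_j$ changes every marginal or joint expectation of these $\{0,1\}$-valued indicators by at most $4\Prob(\max_k|\xi'_k|>b_n)=o(1)$. Hence it suffices to bound the analogous covariance with $\xi'$ in place of $\xi$. Conditional on $(\bX,\bR)$, the vector
\[
U_{ij}:=\frac{1}{\sigma\rho_n}\bigl(\bLambda_i^\top\xi',\ \bLambda_{i+p}^\top\xi',\ \bLambda_j^\top\xi',\ \bLambda_{j+p}^\top\xi'\bigr)^\top
\]
is exactly $\mcN({\bf 0},\Sigma_{ij})$ where $\Sigma_{ij}$ is the $4\times4$ principal submatrix of $\bQ/\rho_n^2$ on the coordinates $\{i,i+p,j,j+p\}$.

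The second step is a Gaussian comparison between $\mcN({\bf 0},\Sigma_{ij})$ and $\mcN({\bf 0},\bI_4)$. Fix $\eta>0$ and condition on the high-probability event $\mathcal E_\eta=\{|\bQ/\rho_n^2-\bI_p|_\infty\le\eta\}$ from Lemma~\ref{lem:rho_n}. On $\mathcal E_\eta$ we have $\|\Sigma_{ij}-\bI_4\|_F\le 4\eta$ for every pair $(i,j)$, so the standard KL computation
\[
\mathrm{KL}\bigl(\mcN({\bf 0},\bI_4)\,\|\,\mcN({\bf 0},\Sigma_{ij})\bigr)=\tfrac{1}{2}\bigl(\tr(\Sigma_{ij}^{-1})-4-\log\det(\Sigma_{ij}^{-1})\bigr)=O(\eta^2)
\]
combined with Pinsker's inequality yields $\mathrm{TV}(\mcN({\bf 0},\Sigma_{ij}),\mcN({\bf 0},\bI_4))=O(\eta)$, uniformly in $i\neq j$. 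Under $\mcN({\bf 0},\bI_4)$, the first two coordinates (which determine $h'_i$) are independent of the last two (which determine $h'_j$), so the $\bI_4$-covariance is exactly zero. Writing the covariance split
\[
\mathrm{Cov}(h'_i,h'_j)=\bigl(\Expected^{\Sigma_{ij}}-\Expected^{\bI_4}\bigr)[h'_ih'_j]-\bigl(\Expected^{\Sigma_{ij}}[h'_i]\Expected^{\Sigma_{ij}}[h'_j]-\Expected^{\bI_4}[h'_i]\Expected^{\bI_4}[h'_j]\bigr),
\]
and bounding each expectation difference by the TV distance (the indicators are $\{0,1\}$-valued), gives $|\mathrm{Cov}(h'_i,h'_j)|=O(\eta)$ uniformly over all pairs on $\mathcal E_\eta$. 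Letting $\eta=\eta(n)\to 0$ slowly enough that $\Prob(\mathcal E_\eta)\to 1$ completes the proof.

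The main obstacle is producing the uniform bound over all $p^2$ pairs; this is handled cleanly because Lemma~\ref{lem:rho_n} already controls $|\bQ/\rho_n^2-\bI_p|_\infty$ simultaneously over all entries, so the $\eta$ in the Gaussian comparison does not depend on $(i,j)$. A secondary technicality is ensuring that the truncation of $\xi$, the Gaussian covariance perturbation, and the nonlinear (only Lipschitz) dependence of $h'_j$ on $U_{ij}$ all compose without hidden dependence on $\|\theta_0\|_\infty/(\sigma\rho_n)$; because we work entirely at the level of TV distance of the driving Gaussian, the behavior of $f$ inside the indicator never enters quantitatively and the bound is uniform in $\theta_0$ as needed.
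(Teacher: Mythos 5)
Your argument is correct in its mechanics and takes a genuinely different route from the paper. The paper removes the truncation exactly as you do, but then applies the law of total covariance given $\bQ$: the term $\Expected(\Cov(h''_i;h''_j\mid\bQ))$ is bounded by the canonical (maximal) correlation of the two jointly Gaussian two-dimensional blocks, which is $O(\varepsilon)$ when $|\bQ/\rho_n^2-\bI|_\infty\le\varepsilon$, while the term $\Cov(\Expected(h''_i\mid\bQ);\Expected(h''_j\mid\bQ))$ is controlled by a two-dimensional KL/Pinsker/TV comparison showing each conditional mean is nearly constant. You compress both steps into a single four-dimensional KL/Pinsker comparison of $\mcN(\mathbf{0},\Sigma_{ij})$ with $\mcN(\mathbf{0},\bI_4)$, under which the pair of coordinates driving $h'_i$ is exactly independent of the pair driving $h'_j$. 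This is cleaner: it avoids invoking the Gaussian maximal-correlation property (which the paper asserts without proof), it uses only that the indicators are bounded measurable functions of the relevant two coordinates, and uniformity over the $p^2$ pairs and over $\theta_0$ and $f$ is immediate from the entrywise control supplied by Lemma~\ref{lem:rho_n}.

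One point to patch: as written you bound the covariance conditional on $(\bX,\bR)$, on the event $\mathcal{E}_\eta$, which matches the literal ``with high probability'' phrasing, but the paper's proof---and the way the lemma is consumed in the Chebyshev step of the proof of Theorem~\ref{thm:utility-limit}---works with the unconditional covariance. The bridge costs one line and uses only estimates you already have: by the law of total covariance, $\Cov(h'_i;h'_j)=\Expected\left[\Cov(h'_i;h'_j\mid\bX,\bR)\right]+\Cov\left(\Expected[h'_i\mid\bX,\bR];\Expected[h'_j\mid\bX,\bR]\right)$. The first term is $O(\eta)+\Prob(\mathcal{E}_\eta^c)$ by your conditional bound and the boundedness of the indicators, and for the second term your same TV estimate shows that on $\mathcal{E}_\eta$ each conditional mean lies within $O(\eta)$ of the deterministic quantity computed under $\mcN(\mathbf{0},\bI_2)$, so its variance is $O(\eta^2)+\Prob(\mathcal{E}_\eta^c)$ and Cauchy--Schwarz finishes; this is precisely the second half of the paper's argument. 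With that addition your proof fully recovers, and slightly streamlines, the paper's lemma.
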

Without lost of the generality, we assume that for any $i,j\in S_0$, 
\[
\liminf_{n\to\infty}\Expected(h'_i)\Expected(h'_j)>0\,,
\]
and we will discuss the other case at the end of the proof (Note that $h'_i$'s are indicators and so $\Expected(h'_i)\ge0$, so the above assumption for now is precluding the case of zero expectation). Now, by Lemma~\ref{lem:h'j-Cov}, we have
\begin{align*}
    \Var\left(\frac{1}{s_0}\sum_{j\in S_0}\frac{h'_j}{\Expected(h'_j)}\right) &= \frac{1}{s_0^2}\sum_{i,j\in S_0}\frac{\Cov(h'_i;h'_j)}{\Expected(h'_i)\Expected(h'_j)}\\
    &\leq  \frac{1}{s_0^2}\left(\sum_{i,j\in S_0,i\neq j}\frac{o(1)}{\Expected(h'_i)\Expected(h'_j)}+\sum_{j\in S_0}\frac{1}{\Expected(h'_j)^2}\right)\\
    &\leq \frac{1}{s_0^2}\cdot\left( s_0^2\cdot o(1)+O(s_0)\right)\\
    &=  o(1)\,.
\end{align*}
By Chebyshev inequality, 
\[
\frac{1}{s_0}\sum_{j\in S_0}\frac{h'_j}{\Expected(h'_j)}\to 1\,.
\]
Now, note that 
\[
\Expected(h'_j) = \Prob\left(f\left(\frac{\theta_{0,j}}{\sigma\rho_n}+\frac{\bLambda_j^\top\xi}{\sigma\rho_n};1\right)-f\left(\frac{\bLambda_{j+p}^\top\xi}{\sigma\rho_n};1\right)\leq -t_0 +\varepsilon\right)\,.
\]
By Lemma~\ref{lem:marginal-gen}, the joint distribution of $(\bLambda_j^\top \xi/\sigma\rho_n,\bLambda_{j+p}\xi/\sigma\rho_n)$ converges to $\mcN({{\bf{0}}},\bI_2)$. Therefore,  
\begin{align*}
    \limsup_{n\to\infty}\left|\Expected(h'_j)-\Prob\left(f\left(\frac{\theta_{0,j}}{\sigma\rho_n}+Z;1\right)-f\left(Z';1\right)\leq -t_0 +\varepsilon\right)\right|=0\,.
\end{align*}
Since the cumulative distribution functions are right-continuous and $\varepsilon$ can be arbitrarily small,
\[
\limsup_{n\to\infty}\left|\Expected(h'_j)-\Prob\left(f\left(\frac{\theta_{0,j}}{\sigma\rho_n}+Z;1\right)-f\left(Z';1\right)\leq -t_0\right)\right|=0\,.
\]
Now, we claim the following lemma to close the gap from $\Prob(f(\theta_{0,j}/\sigma\rho_n+Z)-f(Z';1)\le t_0)$ to $P_{1,\leq}$. Recall that $P_{1,\leq}= \Prob(f(\mu_0+Z;1)-f(Z';1)\leq -t_0)$. We claim the following Lemma:
\begin{lem}\label{lem:prob-bounds-mu}
    Suppose that $|\theta_{0,j}|\geq \mu_n$. Then,
    \begin{enumerate}
        \item $\Prob(f(\theta_{0,j}/\sigma\rho_n+Z;1)-f(Z';1)\leq -t_0)\leq \Prob(f(\mu_0+Z;1)-f(Z';1)\leq -t_0)$
        \item $\Prob(f(\theta_{0,j}/\sigma\rho_n+Z;1)-f(Z';1)\geq t_0)\geq \Prob(f(\mu_0+Z;1)-f(Z';1)\geq t_0)$
    \end{enumerate}
\end{lem}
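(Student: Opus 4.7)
\textbf{Proof proposal for Lemma~\ref{lem:prob-bounds-mu}.}

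The plan is to prove a sharper, distribution-free monotonicity statement: the maps
\[
F(a) := \Prob\bigl(f(a+Z;1)-f(Z';1)\le -t_0\bigr), \qquad G(a) := \Prob\bigl(f(a+Z;1)-f(Z';1)\ge t_0\bigr)
\]
are respectively non-increasing and non-decreasing in $|a|$. Both inequalities in the lemma then follow from the comparison $|\theta_{0,j}|/(\sigma\rho_n) \ge \mu_0$, a comparison which holds asymptotically via the hypothesis $|\theta_{0,j}| \ge \mu_n$ together with the limit $\mu_n/(\sigma\rho_n) \to \mu_0$ built into the framework of Theorem~\ref{thm:utility-limit}.

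First, exploit symmetry. Assumption~\ref{ass:f-feat-stat} gives that $f(\cdot;1)$ is even, and $Z \stackrel{d}{=} -Z$, so $f(a+Z;1) \stackrel{d}{=} f(-a+Z;1)$. Hence $F(a)=F(-a)$ and $G(a)=G(-a)$, reducing matters to $a\ge 0$.

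The main step is to show that $|a+Z|$ is stochastically non-decreasing in $a\ge 0$. For any fixed $x\ge 0$,
\[
\Prob(|a+Z|\le x) \;=\; \Phi(x-a)-\Phi(-x-a),
\]
and differentiating in $a$ yields $-\phi(x-a)+\phi(x+a) \le 0$, since $|x+a| \ge |x-a|$ when $x, a \ge 0$ and the Gaussian density decreases in the absolute value of its argument. Because $f(\cdot;1)$ is even and non-decreasing on $[0,\infty)$, we have $f(a+Z;1) = f(|a+Z|;1)$, and composing a non-decreasing function with a stochastically monotone family preserves stochastic dominance. Thus $f(a+Z;1)$ is stochastically non-decreasing in $a\ge 0$. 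Conditioning on the independent copy $Z'$ and writing $V_a := f(a+Z;1)$, $U := f(Z';1)$, and $F_{V_a}$ for the CDF of $V_a$, stochastic dominance gives $F_{V_a}(\cdot)$ pointwise non-increasing in $a$, so that
\[
F(a) \;=\; \Expected\bigl[F_{V_a}(U-t_0)\bigr], \qquad G(a) \;=\; \Expected\bigl[1-F_{V_a}\bigl((U+t_0)^-\bigr)\bigr],
\]
are respectively non-increasing and non-decreasing in $a\ge 0$, as claimed.

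The main obstacle is the stochastic monotonicity of $|a+Z|$ in $a\ge 0$, which reduces to the short log-concavity calculation above. A secondary subtlety is the finite-$n$ gap between $\mu_n/(\sigma\rho_n)$ and $\mu_0$: since $f$ is Lipschitz and $Z,Z'$ have continuous densities, both $F$ and $G$ are continuous in $a$, so evaluating the monotone maps at $\mu_0$ versus at $\mu_n/(\sigma\rho_n) + o(1)$ produces the same limiting comparison, which is precisely how the lemma is used inside Theorem~\ref{thm:utility-limit}.
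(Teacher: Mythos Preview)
Your proposal is correct and follows essentially the same route as the paper: both arguments reduce to showing that $\Prob(|a+Z|\le x)=\Phi(x-a)-\Phi(-x-a)$ is non-increasing in $a\ge 0$ via the derivative $\phi(x+a)-\phi(x-a)\le 0$, then push this stochastic monotonicity through the even, non-decreasing function $f(\cdot;1)$ and condition on $Z'$. Your explicit remark on the finite-$n$ gap between $\mu_n/(\sigma\rho_n)$ and $\mu_0$ is a useful addition that the paper's proof leaves implicit, since the lemma is only invoked inside a $\limsup$ in the proof of Theorem~\ref{thm:utility-limit}.
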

Putting everything together along with Lemma~\ref{lem:prob-bounds-mu}, we have that
\begin{align*}
    \limsup_{n\to\infty}\frac{\frac{1}{s_0}\sum_{j\in S_0} h_j}{P_{1,\leq}}&\leq \limsup_{n\to\infty}\frac{1}{s_0}\sum_{j\in S_0} \frac{h'_j}{P_{1,\leq}}\\
    &\leq \limsup_{n\to \infty}\frac{1}{s_0}\sum_{j\in S_0}\frac{h'_j}{\Prob(f(\theta_{0,j}/\sigma\rho_n+Z;1)-f(Z';1)\leq -t_0)}\\
    &\leq  \limsup_{n\to\infty}\frac{1}{s_0}\sum_{j\in S_0}\frac{h'_j}{\Expected(h'_j)}\\
    &= 1\,.
\end{align*}
This completes the proof for \eqref{eq:FDPh-num1}. With a similar manner, we can prove \eqref{eq:FDPh-num2}, \eqref{eq:FDPh-dnom1}, and \eqref{eq:FDPh-dnom2}. \eqref{eq:FDPh-num1}, \eqref{eq:FDPh-num2}, \eqref{eq:FDPh-dnom1}, and \eqref{eq:FDPh-dnom2} complete the proof of \eqref{eq:FDP-hat-limit}. Inequalities \eqref{eq:FDP-limit} and \eqref{eq:power-limit} can be proved in the same way and hence omitted.

As for the case when $\liminf_{n\to\infty}\Expected(h'_i)\Expected(h'_j)=0$, then without losing of generality, we assume that $\liminf_{n\to\infty}\Expected(h'_i)=0$, then $h'_i\to 0$ in probability because $h'_i\ge 0$. Since $\mu_0<\infty$, $P_{1,\le}\neq 0$. Therefore, 
\[
\limsup_{n\to\infty}\frac{h_i}{P_{1,\le}}\le \limsup_{n\to\infty}\frac{h'_i}{P_{1,\le}}=0\le1\,.
\]
Hence we obtain the same result.

\subsection{Proof of Lemma~\ref{lem:h'j-Cov}}
We want to show that $\max_{i\neq j}\Cov(h'_i;h'_j) = o(1)$. First, we replace the truncated Gaussian noise $\xi$ by Gaussian noise $\xi'$ in the definition of $h'_j$. Let 
\[
h''_j := \mathbb I_{\{f((\theta_{0,j}+\bLambda_j^\top \xi')/\sigma\rho_n;1)-f(\bLambda_{j+p}^\top \xi'/\sigma\rho_n;1)\leq t_0+\varepsilon\}}\,,
\]
where $\xi'\sim\mcN(0,\sigma^2\bI_n)$. We denote that $E_n :=\{\max_i \xi'_i> b_n\}$. We claim that $|\Cov(h''_i;h''_j)-\Cov(h'_i;h'_j)|= o(1)$. Using conditional expected value, we have that
\begin{align*}
    \Cov(h''_i;h''_j) &= \Expected(h''_ih''_j)-\Expected(h''_i)\Expected(h''_j)\\
        &= (\Expected(h''_ih''_j|E_n)\Prob(E_n)+\Expected(h''_ih''_j|E_n^c)\Prob(E_n^c))\\
        &\quad\quad -(\Expected(h''_i|E_n)\Prob(E_n)+\Expected(h''_i|E_n^c)\Prob(E_n^c))(\Expected(h''_j|E_n)\Prob(E_n)+\Expected(h''_j|E_n^c)\Prob(E_n^c))\\
        &= \left(\Expected(h''_ih''_j|E_n^c)-\Expected(h''_i|E_n^c)\Expected(h''_j|E_n^c)\right)(1-\Prob(E_n))\\
        &\quad\quad+\Prob(E_n)(\Expected(h''_ih''_j|E_n)+\Expected(h''_i|E_n^c)\Expected(h''_j|E_n^c)\Prob(E_n^c)-\Expected(h''_i|E_n^c)\Expected(h''_j|E_n)\Prob(E_n^c)\\
        &\quad\quad-\Expected(h''_i|E_n)\Expected(h''_j|E_n^c)\Prob(E_n^c)-\Expected(h''_i|E_n)\Prob(E_n))\\
        &\le \Cov(h'_i;h'_j) +7\Prob(E_n)\,.
\end{align*}
The last inequality follows from the fact that $h''_i$, $h''_j$ are bounded by 1.  By the standard concentration inequality for Gaussian maxima, we have that 
\[
\Prob(E_n)\le 2\exp\left(-\frac{1}{2}\left(b_n/\sigma-\sqrt{2\log(2n)}\right)^2\right)\,.
\]
By Assumption~\ref{ass2}, $b_n/\sigma-\sqrt{2\log(2n)}\to\infty$, so $|\Cov(h''_i;h''_j)-\Cov(h'_i;h'_j)|\le\Prob(E_n)=o(1)$. Hence, the claim is proved.

We proceed to prove that $\Cov(h''_i;h''_j)=o(1)$. For any $j\in[p]$, we denote by 
\[
U_j:= (\bLambda_j\xi'/\sigma\rho_n,\bLambda_{j+p}\xi'/\sigma\rho_n)\,,
\]
the joint distribution of the noise terms. Recall from Theorem~\ref{thm:debiased-rewrite},
\[
\bQ=\frac{1}{n^2}\bX^{*\top}\bR_1\bR_1^\top\bX^*=\bLambda^\top\bLambda\,.
\]
Note that $\bQ$ is independent from $\xi'$. Conditioning on $\bQ$, $\bLambda\xi'/\sigma\rho_n|\bQ \sim \mcN(0, \bQ/\rho_n^2)$, so 
\[
(U_i,U_j)|\bQ \sim \mcN(0;\widetilde{\bQ})\,,\quad\widetilde{\bQ}=\begin{bmatrix}
    \widetilde{\bQ}_{i,i} & \widetilde{\bQ}_{i,j}\\
    \widetilde{\bQ}_{j,i} & \widetilde{\bQ}_{j,j}
\end{bmatrix}\,,
\]
where $\widetilde{\bQ}_{i,j}$ is the sub-matrix of $\bQ/\rho_n^2$ at the rows $\{i,i+p\}$, and the columns $\{j,j+p\}$. We can regard $h''_j$ as a function of $U_j$. Specifically, $h''_j = h''_j(U_j)$. By the law of total variance,
\begin{align}\label{eq:law-total-var}
    &\quad\,\, |\Cov(h''_i;h''_j)|\nonumber \\
    &=|\Expected(\Cov(h''_i;h''_j|\bQ)) + \Cov(\Expected(h''_i|\bQ);\Expected(h''_j|\bQ))|\nonumber\\
    &\le\left|\Expected\left(\Corr(h''_i;h''_j|\bQ)\cdot\sqrt{\Var(h''_i|\bQ)\Var(h''_j|\bQ)}\right)\right| + \left|\sqrt{\Var(\Expected(h''_i|\bQ))\Var(\Expected(h''_j|\bQ))}\right|
\end{align}
The last inequality follows from the definition of correlation and the fact that $|\Cov(X;Y)|\le\sqrt{\Var(X)\Var(Y)}$ (Cauchy-Schwartz inequality for Covariance). For the first term on the right-hand side, since $\bQ^{1/2}\xi'|\bQ$ is jointly Gaussian, the correlation is maximized by linear function which coincides with the canonical correlation:
\[
\Corr(h''_i;h''_j)\le \kappa_{\max} := \max_{a,b}\Corr(a^\top U_i;b^\top U_j)\,.
\]
For arbitrary $\varepsilon>0$, suppose $|\bQ/\rho_n-\bI_{2p}|_\infty \le \varepsilon$, then 
\begin{align*}
    \kappa_{\max} &= \left\|\widetilde{\bQ}_{i,i}^{-1/2}\widetilde{\bQ}_{i,j}\widetilde{\bQ}_{j,j}^{-1/2}\right\|_2\\
    &\le  \left\|\widetilde{\bQ}_{i,i}^{-1/2}\right\|_2\left\|\widetilde{\bQ}_{i,j}\right\|_2\left\|\widetilde{\bQ}_{j,j}^{-1/2}\right\|_2\\
    &\le  \frac{1}{\sqrt{\sigma_{\min}(\widetilde{\bQ}_{i,i})}} \cdot\left\|\widetilde{\bQ}_{i,j}\right\|_F\cdot \frac{1}{\sqrt{\sigma_{\min}(\widetilde{\bQ}_{j,j})}}\\
    &\le \frac{1}{\sqrt{1-2\varepsilon}}\cdot 2\varepsilon\cdot \frac{1}{\sqrt{1-2\varepsilon}}\\
    &=  \frac{2\varepsilon}{1-2\varepsilon}\,.
\end{align*}
By Lemma~\ref{lem:rho_n}, $\Prob(|\bQ/\rho_n^2-\bI_p|_\infty > \varepsilon)\to 0$. We denote by $F_n:=\{|\bQ/\rho_n^2-\bI_p|_\infty > \varepsilon\}$, then
\begin{align}\label{eq:total-cov-1}
    &\quad\Expected(\Cov(h''_i;h''_j|\bQ))\nonumber \\
    &= \Expected(\Cov(h''_i;h''_j|\bQ)|F_n)\Prob(F_n) + \Expected(\Cov(h''_i;h''_j|\bQ)|F_n^c)\Prob(F_n^c)\nonumber\\
    &=  \Expected(\Cov(h''_i;h''_j|\bQ)|F_n^c)-\Expected(\Cov(h''_i;h''_j|\bQ)|F_n^c)\Prob(F_n)+ \Expected(\Cov(h''_i;h''_j|\bQ)|F_n)\Prob(F_n)\nonumber\\
    &\le  \Expected(\Cov(h''_i;h''_j|\bQ)|F_n^c) + 2\Prob(F_n)\nonumber\\
    &\le  \frac{2\varepsilon}{1-2\varepsilon} + o(1)\,.
\end{align}
The first inequality follows from the fact that $|\Cov(h''_i;h''_j|\bQ)|\le 1$. 

For $\Var(\Expected(h_i''|\bQ))$, denote by $A(\bQ):= \Expected(h''_i|\bQ)$. Recall that we can regard $h''_i = h''_i(U_i)$, where $U_i|\bQ \sim \mcN\left(0,\widetilde{\bQ}_{i,i}\right)$. Let $Z_1\sim\mcN({{\bf{0}}},\bI_2)$. 
Recall that the total variation distance between two distributions $\mathcal{D}_1$ and $\mathcal{D}_2$ on $\mathcal X$ with density functions $p_1(x)$ and $p_2(x)$ is defined as
\[
\mbox{TV}(\mathcal{D}_1,\mathcal{D}_2) = \frac{1}{2}\int_{\mathcal X}|p_1(x)-p_2(x)|dx\,.
\]
Now we suppose $U_i|\bQ$ has density function $\phi'(x)$ and $Z_1$ has the standard Gaussian density on $\mbR^2$, denoted by $\phi_2(x)$, then because $|h''_i(x)|\le 1$
\begin{align*}
    |A(\bQ)-A(\bI_n)|&=|\Expected(h''_i(U_i)|\bQ)-\Expected(h''_i(Z_1))|\\
        &=|\Expected(h''_i(U_i|\bQ))-\Expected(h''_i(Z_1))|\\
        &=\left|\int_{\mbR^2}\left(h''_i(x)\phi'(x)-h''_i(x)\phi_2(x)\right)dx\right|\\
        &\le  \int_{\mbR^2}|h''_i(x)|\left|\phi'(x)-\phi_2(x)\right|dx\\
        &\le  2\mbox{TV}(\mcN(0,\widetilde{\bQ}_{i,i})|\mcN(\bf{0},\bI_2))\,.
\end{align*}
Since $h''_i$ is bounded, applying Pinsker's inequality, we have 
\begin{align*}
    |A(\bQ)-A(\bI_n)|  &\le  2\mbox{TV}(\mcN(0,\widetilde{\bQ}_{i,i})|\mcN(\bf{0},\bI_2))\\
        &\le 2\sqrt{\frac{1}{2}\mbox{KL}(\mcN(0,\widetilde{\bQ}_{i,i})||\mcN({{\bf{0}}},\bI_2))}\\
        &\le 2(\mbox{tr}(\widetilde{\bQ}_{i,i}^{-1})-2+\log(\det(\widetilde{\bQ}_{i,i}))
\end{align*}
Again we assume that $|\bQ/\rho_n-\bI_{2p}|_\infty \le \varepsilon$. Without losing of the generality, assume $\varepsilon<1/2$, then $\|\widetilde{\bQ}_{i,i}-\bI_2\|_2\le 2\varepsilon<1$. Then the following series converges:
\[
\widetilde{\bQ}_{i,i}^{-1} = \sum_{k=0}^\infty (\bI_2 - \widetilde{\bQ}_{i,i})^k
\]
Therefore, 
\begin{align}\label{eq:total-cov-2.1}
    \mbox{tr}(\widetilde{\bQ}_{i,i}^{-1}) &= 2+\sum_{k=1}^\infty\mbox{tr}(\bI_2 - \widetilde{\bQ}_{i,i})^k\nonumber\\
    &\le 2+ \sum_{k=1}^\infty2(2\varepsilon)^k\nonumber\\
    &=  2+\frac{4\varepsilon}{1-2\varepsilon}
\end{align}
On the other hand, since $|\bQ/\rho_n^2-\bI_2|_\infty\le \varepsilon$ and $\widetilde{\bQ}_{i,i}\in\mbR^{2\times 2}$, $|\det(\widetilde{\bQ}_{i,i})-1|\leq 2(\varepsilon+\varepsilon^2)$. Without losing of the generality, assume $\varepsilon$ is small enough such that $2(\varepsilon+\varepsilon^2)<1$. Then since $\log(1+x)\le x$ for any $x>-1$, 
\begin{equation}\label{eq:tota-cov-2.2}
    \log(\det(\widetilde{\bQ}_{i,i})) = \log(1+(\det(\widetilde{\bQ}_{i,i})-1)) \le 2(\varepsilon+\varepsilon^2)\,. 
\end{equation}
Putting together \eqref{eq:total-cov-2.1} and \eqref{eq:tota-cov-2.2}, we have that 
\[
|A(\bQ)-A(\bI_2)| \le 2 \left(\frac{4\varepsilon}{1-2\varepsilon}+2(\varepsilon+\varepsilon^2)\right)\,.
\]
Recall that $F_n:=\{|\bQ/\rho_n^2-\bI_p|_\infty > \varepsilon\}$. By the fact that $\Expected(X) = \arg\min_{x\in \mbR} \Expected((X-x)^2)$, then
\begin{align*}
    \Expected((A(\bQ)-\Expected(A(\bQ)))^2|F_n^c)\le 4\left(\frac{4\varepsilon}{1-2\varepsilon}+2(\varepsilon+\varepsilon^2)\right)^2\,.
\end{align*}
Recall that $A(\bQ) = \Expected(h''_i|\bQ)$, then by the law of total expectation, we have
\begin{align*}
    \Var(\Expected(h''_i|\bQ)) &= \Expected((\Expected(h''_i|\bQ)-\Expected(h''_i))^2)\\
        &= \Expected((\Expected(h''_i|\bQ)-\Expected(h''_i))^2|F_n)\Prob(F_n)+\Expected((\Expected(h''_i|\bQ)-\Expected(h''_i))^2|F_n^c)\Prob(F_n^c)\\
        &\le \Prob(F_n)+4\left(\frac{4\varepsilon}{1-2\varepsilon}+2(\varepsilon+\varepsilon^2)\right)^2\\
        &= 4\left(\frac{4\varepsilon}{1-2\varepsilon}+2(\varepsilon+\varepsilon^2)\right)^2+o(1)\,.
\end{align*}
Since $\varepsilon$ can be arbitrarily small, $\Var(\Expected(h''_i|\bQ))=o(1)$, for the same reason, $\Var(\Expected(h''_j|\bQ))=o(1)$.
Therefore, along with \eqref{eq:law-total-var} and \eqref{eq:total-cov-1}, we have that $\Cov(h''_i;h''_j)=o(1)$. Thus, $\Cov(h'_i;h'_j)=o(1)$. Since the choices of $i$, $j$ are arbitrary, this completes the proof of the lemma.

\subsection{Proof of Lemma~\ref{lem:prob-bounds-mu}}
    For the first inequality, it suffices to show that the probability $\Prob(f(\mu+Z;1)-f(Z';1)\leq -t)$ is decreasing in $\mu$. Denote $X:=f(\mu+Z;1), Y := f(Z';1)$
    \begin{align*}
        \Prob(f(\mu+Z;1)-f(Z';1)\leq -t) &= \Prob(X\leq Y-t)\\
            &= \int_t^\infty \Prob(X\leq y-t)f_Y(y)dy\,.
    \end{align*}
    Notice $Y$ does not depend on $\mu$ so we are going to complete the proof by showing the cdf of $X$ is decreasing as $\mu$ increases. Since $f(\cdot;1)$ is Lipschitz-continuous and non-decreasing on $[0,\infty)$, for any $x\ge f(0;1) $, there exists $s_x := \max\{s>0:f(s;1)\le x\}$. Since $f(\cdot;1)$ is non-decreasing on $[0,\infty)$ and $f(x) = f(-x)$, we have that $f(
    \mu+Z)\le x \iff |\mu+Z|\le s_x$. So,
    \begin{align*}
        \Prob(X\leq x)  &= \Prob(|\mu+Z|\leq s_x)\\
        &= \Prob(\mu+Z\le s_x)-\Prob(\mu+Z\le -s_x)\\
        &=\Phi( s_x-\mu)-\Phi(-s_x-\mu)\,.
    \end{align*}
    where $\Phi$ denotes the standard Gaussian cdf. We consider the right-hand side of the equation as a function of $\mu$. By taking the derivative, we have
    \[
    \frac{d}{d\mu}(\Phi(s_x-\mu)-\Phi(-s_x-\mu))=\phi(-s_x-\mu)-\phi(s_z-\mu)\,,
    \]
    where $\phi$ denotes the standard Gaussian pdf. From the equation above we observe that when $\mu=0$, the derivative equals 0; when $\mu\neq 0$, the sign of the derivative is the opposite of that of $\mu$. Therefore, we can conclude that the above probability is maximized when $\mu=0$, meanwhile decreasing on $[0,\infty)$ and increasing on $(-\infty,0]$. So $\Prob(X\le x)$ is decreasing over $\mu\in[0,\infty)$.

    The second proof can be done in a similar manner, hence omitted.
\section{Proof of Corollary \ref{cor:power1}}\label{pf:power1}

We first prove the second statement. Consider $\sigma\rho_n=o(t_n)$ and $t_n = o(\mu_n)$. Using Theorem~\ref{thm:utility-limit} it suffices to show that 
\begin{align*}
&\lim\sup_{n\to\infty}\alpha(\mu_n/\sigma\rho_n,t_n/\sigma\rho_n) = 0\,,\\
&\lim\inf_{n\to\infty} \beta(\mu_n/\sigma\rho_n,t_n/\sigma\rho_n)=1\,.
\end{align*}
To show these, by the definition of $\alpha(\mu/(\sigma\rho_n),t/(\sigma\rho_n))$ and $\beta(\mu/(\sigma\rho_n),t/(\sigma\rho_n))$, it suffices to show that
\begin{align*}
&\lim\sup_{n\to\infty}\Prob\left(f(Z;1)-f(Z';1)\geq \frac{t_n}{\sigma\rho_n}\right) = 0\,,\\
&\lim\inf_{n\to\infty} \Prob\left(f\left(\frac{\mu_n}{\sigma\rho_n}+Z;1\right)-f\left(Z';1\right)\geq \frac{t}{\sigma\rho_n}\right)=1\,,
\end{align*}
where $Z,Z'\sim\mathcal{N}(0,1)$. For the first equation, under the conditions that $\lim_{n\to\infty}\frac{t_n}{\sigma\rho_n}= \infty$, this probability converges to zero. 
For the other equation, we let $\nu$ be the probability measure of $f(Z';1)$. By the assumptions of $f$ in Assumption~\ref{ass:f-feat-stat}, there exists some $c_1>0$ such that
\begin{align*}
&\quad\lim_{n\to\infty}\Prob\left(f\left(\frac{\mu_n}{\sigma\rho_n}+Z;1\right)-f\left(Z';1\right)\geq \frac{t_n}{\sigma\rho_n}\right)\\
&= \lim_{n\to\infty}\Prob\left(f\left(\frac{\mu_n}{\sigma\rho_n}+Z;1\right)\ge \frac{t_n}{\sigma\rho_n}+f\left(Z';1\right)\right)\\
&\ge \lim_{n\to\infty}\int\Prob\left(c_1\left(\frac{\mu_n}{\sigma\rho_n}+Z\right)\ge\frac{t_n}{\sigma\rho_n}+z\right)d\nu(z)\\
&= \int\lim_{n\to\infty}\Prob\left(Z\ge \frac{c_1^{-1}t_n-\mu_n}{\sigma\rho_n}+c_1^{-1}z\right)d\nu(z)\\
&= 1
\end{align*}
The last equality follows from the conditions that $\sigma\rho_n=o(t_n)$ and $t_n=o(\mu_n)$.

For the first statement, consider $t_n:=\min\{t\in\mathcal{W}:\widehat{\FDP}(t)\le q\}$. $\FDR(t_n)\le q$ is proven in Theorem~\ref{thm:FDR}. For $\Power(t_n)$, let $t_*$ be defined as the following:
\begin{align}\label{eq:t*}
t_*:=\min\left\{t>0:\frac{(1-c_0)\Prob(f(Z;1)-f(Z';1)\le -t)}{c_0+(1-c_0)\Prob(f(Z;1)-f(Z';1)\ge t)}\le q_*\right\}\,,
\end{align}
such that $q_*<q$ and $t_*>0$. Note that $\Prob(f(Z;1)-f(Z';1)\le -t_0) = \Prob(f(Z;1)-f(Z';1)\ge t_0)$ which is decreasing in $t_0$. Hence the fraction in~\eqref{eq:t*} is a decreasing function of $t$ and goes to zero as $t\to\infty$. This is implies that such $t_*$ and $q_*$ exist. We next construct a deterministic threshold based on $t_*$ which serves as an upper bound for the data-dependent knock-oof threshold $t_n$.

By Theorem~\ref{thm:utility-limit}, we have
\begin{align*}
    &\quad\limsup_{n\to\infty}\widehat{\FDP}(\sigma\rho_nt_*)\\
    &\le  \limsup_{n\to\infty}\widehat{\alpha}\left(\frac{\mu_n}{\sigma\rho_n},t_*\right)\\
    &=  \limsup_{n\to\infty}\frac{c_0\Prob\left(f\left(\frac{\mu_n}{\sigma\rho_n}+Z;1\right)-f\left(Z';1\right)\leq -t_*\right)+(1-c_0)\Prob(f(Z;1)-f(Z';1)\leq -t_*)}{c_0\Prob\left(f\left(\frac{\mu_n}{\sigma\rho_n}+Z;1\right)-f\left(Z';1\right)\ge t_*\right)+(1-c_0)\Prob(f(Z;1)-f(Z';1)\geq t_*)}\,.
\end{align*}
Under the condition that $\lim_{n\to\infty} \mu_n/(\sigma\rho_n)=\infty$, we have
\[
\limsup_{n\to\infty}\widehat{\FDP}(\sigma\rho_nt_*)\le \frac{(1-c_0)\Prob(f(Z;1)-f(Z';1)\leq -t_*)}{c_0+(1-c_0)\Prob(f(Z;1)-f(Z';1)\geq t_*)}\le q_*<q\,,
\]
which means that for $n$ large enough, $\widehat{\FDP}(\sigma\rho_nt_*)\le q$. This implies that the set $\{t\in\mathcal{W}:\widehat{\FDP}(t)\le q\}$ is non-empty and $t_n<\infty$. Consider the set $T_n:=\{j\in[p]:|W_j|\ge\sigma\rho_nt_*\}$. If $T_n$ is non-empty, then there exists $k\in[p]$ such that $|W_k| = \min (T_n)$. Note that
\begin{align*}
    \{j\in[p]:W_j\ge \sigma\rho_nt_*\} &= \{j\in[p]:W_j\ge |W_k|\} \\
    \{j\in[p]:W_j\le -\sigma\rho_nt_*\} &= \{j\in[p]:W_j\le -|W_k|\}\,.
\end{align*}
Hence $\widehat{{\FDP}}(\sigma\rho_nt_*) = \widehat{\FDP}(|W_k|)<q$ and $\Power(\sigma\rho_nt_*)=\Power(|W_k|)$. Then by the definition of $t_n$, $|W_k|\ge t_n$. If $T_n$ is empty, then $\sigma\rho_nt_*\ge\max(\mathcal{W})\ge t_n$ on the event $t_n<\infty$. Now we construct the following sequence
\[
t'_n = \begin{cases}
    \sigma\rho_nt_* & \mbox{if }T_n=\emptyset\,\\
    \min(T_n) & \mbox{if }T_n\neq \emptyset\,.
    \end{cases}
\]
By this construction, $t_n\le t_n'$ and $\liminf_{n\to\infty}\Power(t_n')=\liminf_{n\to\infty}\Power(\sigma\rho_nt_*)$. Since $\Power(t)$ increases as $t$ decreases, by Theorem~\ref{thm:utility-limit}, we have 
\begin{align*}
    \liminf_{n\to\infty}\Power(t_n)&\ge\liminf_{n\to\infty}\Power(t_n')\\
        &=\liminf_{n\to\infty}\Power(\sigma\rho_n t_*)\\
        &\ge\Prob\left(f\left(\frac{\mu_n}{\sigma\rho_n}+Z;1\right)-f\left(Z';1\right)\ge t_*\right)\,, 
\end{align*}
where the last probability converges to one as $\liminf_{n\to\infty}\mu_n/(\sigma\rho_n)=\infty$.
\section{Proof of Lemma~\ref{lem:lambda-bigger-than-rho_n}}\label{pf:lambda-bigger-than-rho_n}
Recall the condition~\ref{item:cond-error} in~\ref{thm:suff-cond-debiasing}:
\[
\frac{1}{\sigma\rho_n}\frac{4\kappa_n^2}{\lambda\phi_w} = o\bigg(\min\bigg\{\sqrt{\frac{n}{\log p}},\sqrt{\frac{r}{\log p}},\frac{n}{w^2}\sqrt{\frac{r}{\log p}},\frac{\sqrt{nr}}{w\log p}\bigg\}\bigg)\,.
\]
Focusing on the first and second terms on the right-hand side, we have
\[
\frac{1}{\sigma\rho_n}\frac{4\kappa_n^2}{\lambda\phi_w} = o\bigg(\min\bigg\{\sqrt{\frac{n}{\log p}},\sqrt{\frac{r}{\log p}}\bigg\}\bigg)\,.
\]
Substituting $\rho_n^2 = (n+r+1)/(nr)+w^2/(nr)$, $\phi_w = 1+w^2/n$, $\kappa_n = 2\lambda\sqrt{s_0}+w^2/n$, we have
\[
\left(2\lambda\sqrt{s_0}+\frac{w^2}{n}\right)^2 = o\left(\min\bigg\{\sqrt{\frac{n}{\log p}},\sqrt{\frac{r}{\log p}}\bigg\}\cdot\left(\frac{n+r+1+w^2}{nr}\right)^{1/2}\cdot \lambda\left(1+\frac{w^2}{n}\right) \right)\,.
\]
Focusing on the second term on the left-hand side and substituting $\lambda = \sigma C_\lambda \max\left\{\sqrt{\frac{\log p}{n}},\sqrt{\frac{(\log p)^3}{r}}\right\}$, we have 
\begin{align*}
    \frac{\left(\frac{w^2}{n}\right)^2}{1+\frac{w^2}{n}} &= o\left(\min\bigg\{\sqrt{\frac{n}{\log p}},\sqrt{\frac{r}{\log p}}\bigg\}\cdot\sqrt{\frac{n+r+1+w^2}{nr}}\cdot\sigma\max\left\{\sqrt{\frac{\log p}{n}},\sqrt{\frac{(\log p)^3}{r}}\right\}\right)\\
    &=o\left(\max\left\{\min\bigg\{\sqrt{\frac{r}{n}},\log p\bigg\},\min\bigg\{1,\sqrt{\frac{n(\log p)^2}{r}}\bigg\}\right\}\cdot\sqrt{\frac{n+r+1+w^2}{nr}}\cdot\sigma\right)\\
    &=o\left(\log p\cdot\sqrt{\frac{n+r+1+w^2}{nr}}\cdot\sigma\right)\,.
\end{align*}
Note that $\sqrt{\frac{n+r+1}{nr}}=o(1)$, and if
\begin{align*}
    &\quad \ \frac{\left(\frac{w^2}{n}\right)^2}{1+\frac{w^2}{n}} =o\left(\log p \cdot\sqrt{\frac{w^2}{nr}} \cdot \sigma\right)\\
    &\implies  \frac{\left(\frac{w^2}{n}\right)^{3/2}}{1+\frac{w^2}{n}} =o\left(\log p \cdot\sqrt{\frac{1}{r}} \cdot \sigma\right)\,.
\end{align*}
By the assumption $\sigma =\Theta(1)$, from the above bound  we conclude that $\frac{w^2}{n}=o(1)$. Hence, 
\[
\sigma\rho_n = \sigma\sqrt{\frac{1}{r}+\frac{1}{n}+\frac{w^2+1}{nr}}=O\left(\sigma\sqrt{\frac{1}{n}+\frac{1}{r}}\right)=o(\lambda)\,.
\]
\section{Proof of Corollary~\ref{cor:utility-LCD}}\label{pf:utility-LCD}
The structure of this proof is similar to that of Theorem~\ref{thm:utility-limit}. First we show that $\lim_{n\to\infty}\widehat \FDP(t_n) = 0$. Recall that $j\in[p]$, $W_j :=|\eta(\hth^u_j;\lambda)|-|\eta(\hth^u_{j+p};\lambda)|$. Define the indicator functions: $ h_j = \mathbb I_{\{W_j\leq -t_n\}}$, $ g_j= \mathbb I_{\{W_j\geq t_n\}}$. We then have
\[
\widehat{\FDP}(t_n) = \frac{1+\sum_{j\in S_0}h_j+\sum_{j\notin S_0}h_j}{(\sum_{j\in S_0}g_j+\sum_{j\notin S_0}g_j)\vee 1}\,.
\]
It suffices to show that 
\[
\frac{1}{p}\sum_{j\in S_0}h_j\to0\,,\quad \frac{1}{p}\sum_{j\notin S_0}h_j\to0\,.
\]
Note that if $W_j\leq-t_n$, by definition,
\[
\frac{1}{1+\frac{w^2}{n}}(|\hth^u_j|-\lambda)_+-\frac{1}{1+\frac{w^2}{n}}(|\hth^u_{j+p}|-\lambda)_+\leq -t_n\,.
\]
As shown in the proof of Lemma~\ref{lem:lambda-bigger-than-rho_n}, $w^2/n = o(1)$. Then multiplying both side by $1+\frac{w^2}{n}$, we have
\[
(|\hth^u_j|-\lambda)_+-(|\hth^u_{j+p}|-\lambda)_+\leq -\left(1+\frac{w^2}{n}\right)t_n\,.
\]
According to the decomposition in \eqref{eq:decompo-Z}, letting $\bLambda^\top = \bX^{*\top}\bR_1/n$ we have
\[
(|\theta_{0,j}+\bLambda_j^\top\xi+\Delta_j|-\lambda)_+-(|\bLambda_{j+p}^\top\xi+\Delta_{j+p}|-\lambda)_+\leq -\left(1+\frac{w^2}{n}\right)t_n\,.
\]
Note that $(x-\lambda)_+$ is 1-Lipschitz, which implies
\[
(|\theta_{0,j}+\bLambda_j^\top\xi|-\lambda)_+-(|\bLambda_{j+p}^\top\xi|-\lambda)_+\leq -\left(1+\frac{w^2}{n}\right)t_n+2\|\Delta\|_\infty\,.
\]
Dividing both sides by $\sigma\rho_n$, we have
\[
\left(\frac{|\theta_{0,j}+\bLambda_j^\top\xi|-\lambda}{\sigma\rho_n}\right)_+-\left(\frac{|\bLambda_{j+p}^\top\xi|-\lambda}{\sigma\rho_n}\right)_+\leq -\left(1+\frac{w^2}{n}\right)\frac{t_n}{\sigma\rho_n}+\frac{2\|\Delta\|_\infty}{\sigma\rho_n}\,.
\]
Suppose arbitrary $\varepsilon>0$. By Theorem~\ref{thm:suff-cond-debiasing}, $\|\Delta\|_\infty=o_p(\sigma\rho_n)$. Conditioning on $\|\Delta\|_\infty/\sigma\rho_n<\varepsilon$, we have that
\[
\left(\frac{|\theta_{0,j}+\bLambda_j^\top\xi|-\lambda}{\sigma\rho_n}\right)_+-\left(\frac{|\bLambda_{j+p}^\top\xi|-\lambda}{\sigma\rho_n}\right)_+\leq -\left(1+\frac{w^2}{n}\right)\frac{t_n}{\sigma\rho_n}  +\varepsilon\le-\frac{t_n}{\sigma\rho_n}+\varepsilon\,.
\]
Let $h'_j$ be an indicator supported on the event above:
\[
h'_j = \mathbb I_{\left\{\left(\frac{|\theta_{0,j}+\bLambda_j^\top\xi|-\lambda}{\sigma\rho_n}\right)_+-\left(\frac{|\bLambda_{j+p}^\top\xi|-\lambda}{\sigma\rho_n}\right)_+\leq -\frac{t_n}{\sigma\rho_n}+\varepsilon\right\}}\,.
\]
Therefore, $h_j\leq h'_j$. Consider the covariance of the cross terms, $\Cov(h'_i;h'_j)$. The same proof for Lemma~\ref{lem:h'j-Cov} can show that $\max_{i\neq j}\Cov(h'_i;h'_j)=o(1)$. Hence by Chebyshev's inequality,
\[
\frac{1}{p}\sum_{j\in S_0}\left(h'_j-\Expected(h'_j)\right)\to 0\,.
\]
Since $h'_j$ are non-negative functions, it suffices to show that $\Expected(h'_j) \to 0$ for any $j\in S_0$. Note that 
\[
\Expected(h'_j) = \Prob\left(\left(\frac{|\theta_{0,j}+\bLambda_j^\top\xi|-\lambda}{\sigma\rho_n}\right)_+-\left(\frac{|\bLambda_{j+p}^\top\xi|-\lambda}{\sigma\rho_n}\right)_+\le -\frac{t_n}{\sigma\rho_n}+\varepsilon\right)\,.
\]
By Lemma~\ref{lem:marginal}, $\bLambda_{j+p}^\top\xi/(\sigma\rho_n)$ weakly converges to a standard Gaussian random variable, denoted by $Z$. Define the event $E_n :=\left\{\left|\frac{\bLambda_{j+p}^\top\xi}{\sigma\rho_n}\right|>\frac{\lambda}{\sigma\rho_n}\right\}$. For any constant $M\le\lambda/(\sigma\rho_n)$, we have
\[
\lim_{n\to\infty}\Prob\left(\left|\frac{\bLambda_{j+p}^\top\xi}{\sigma\rho_n}\right|>\frac{\lambda}{\sigma\rho_n}\right)\le \Prob\left(\left|Z\right|>M\right)\,.
\]
Since $\lambda/(\sigma\rho_n)\to\infty$ as shown in Lemma~\ref{lem:lambda-bigger-than-rho_n}, $M$ can be taken arbitrarily large. Therefore, $\Prob(E_n)\to0$. Note that $\left(\frac{|\theta_{0,j}+\bLambda_j^\top\xi|-\lambda}{\sigma\rho_n}\right)_+\ge0$. Then conditioning on the event $E_n^c$, we have
\begin{align*}
    \left(\frac{|\theta_{0,j}+\bLambda_j^\top\xi|-\lambda}{\sigma\rho_n}\right)_+-\left(\frac{|\bLambda_{j+p}^\top\xi|-\lambda}{\sigma\rho_n}\right)_+=\left(\frac{|\theta_{0,j}+\bLambda_j^\top\xi|-\lambda}{\sigma\rho_n}\right)_+\ge0
\end{align*}
On the other hand, $\liminf_{n\to\infty}t_n/(\sigma\rho_n)>0$. Without lost of generality, assume $\varepsilon<\liminf_{n\to\infty}t_n/(\sigma\rho_n)$. Then we have that for $n$ large enough,
\[
\Prob\left(\left(\frac{|\theta_{0,j}+\bLambda_j^\top\xi|-\lambda}{\sigma\rho_n}\right)_+-\left(\frac{|\bLambda_{j+p}^\top\xi|-\lambda}{\sigma\rho_n}\right)_+\le -\frac{t_n}{\sigma\rho_n}+\varepsilon\bigg\vert E_n^c\right)=0\,.
\]
Hence by total expectation,
\[
\Prob\left(\left(\frac{|\theta_{0,j}+\bLambda_j^\top\xi|-\lambda}{\sigma\rho_n}\right)_+-\left(\frac{|\bLambda_{j+p}^\top\xi|-\lambda}{\sigma\rho_n}\right)_+\le -\frac{t_n}{\sigma\rho_n}+\varepsilon\right)\le\Prob(E_n)\to0\,.
\]
Same result can be shown for $j\notin S_0$ as $\theta_{0,j}=0$ still has that $\left(\frac{|\theta_{0,j}+\bLambda_j^\top\xi|-\lambda}{\sigma\rho_n}\right)_+\ge0$. Thus the proof for $\lim_{n\to\infty}\widehat \FDP(t_n) = 0$ is complete.

By a similar argument, we can show that
\[
\Prob\left(\left(\frac{|\bLambda_j^\top\xi|-\lambda}{\sigma\rho_n}\right)_+-\left(\frac{|\bLambda_{j+p}^\top\xi|-\lambda}{\sigma\rho_n}\right)_+\ge \left(\frac{t_n}{\sigma\rho_n}-\varepsilon\right)\right)\to0\,.
\]
Thus $\FDP(t_n) \to 0$.

As for power, it suffices to show that for any $j\in S_0$,
\[
\Expected(g_j)\to1\,,
\]
where $g_j$ is the indicator for $W_j\ge t_n$. Similar to the proof for $\widehat \FDP$, since $\|\Delta\|_\infty = o_p(\sigma\rho_n)$, for $j\in S_0$, $W_j\ge t_n$ is implied by 
\begin{align}\label{eq:power-1-1}
\left(\frac{|\theta_{0,j}+\bLambda_j^\top\xi|-\lambda}{\sigma\rho_n}\right)_+-\left(\frac{|\bLambda_{j+p}^\top\xi|-\lambda}{\sigma\rho_n}\right)_+\ge\left(1+\frac{w^2}{n}\right)\frac{t_n}{\sigma\rho_n}  +\varepsilon_1\,.
\end{align}
for any fixed $\varepsilon_1>0$ with high probability. By Lemma~\ref{lem:lambda-bigger-than-rho_n}, $w^2 = o(n)$. Thus, the inequality above is implied by
\begin{align}\label{eq:power-1-2}
\left(\frac{|\theta_{0,j}+\bLambda_j^\top\xi|-\lambda}{\sigma\rho_n}\right)_+-\left(\frac{|\bLambda_{j+p}^\top\xi|-\lambda}{\sigma\rho_n}\right)_+\ge\left(1+\varepsilon_2\right)\frac{t_n}{\sigma\rho_n}  +\varepsilon_1\,,
\end{align}
for any fixed $\varepsilon_2<0$ when $n$ is large enough. 

By assumption, for some $\varepsilon'>0$, we have $\liminf_{n\to\infty}\frac{\mu_n-(1+\varepsilon')t_n}{\lambda}>1$. There exists $d<\infty$ such that 
\[
0<d<\min\left\{1\,, \liminf_{n\to\infty} \frac{\mu_n-(1+\varepsilon')t_n}{\lambda}-1\right\}\,.
\]
Then for $n$ large enough, $\mu_n-(1+\varepsilon')t_n>(1+d)\lambda$. Dividing both sides by $\sigma\rho_n$ and rearranging the terms, we have
\begin{equation}\label{eq:LCD-power-pf-1}
    \frac{\mu_n}{\sigma\rho_n}-\left(1+\frac{d}{2}\right)\frac{\lambda}{\sigma\rho_n}>(1+\varepsilon')\frac{t_n}{\sigma\rho_n}+\frac{d}{2}\frac{\lambda}{\sigma\rho_n}\,.
\end{equation}
Consider the event
\[
F_n = \left\{\max\left\{\left|\frac{\bLambda_j^\top\xi}{\sigma\rho_n}\right|,\left|\frac{\bLambda_{j+p}^\top\xi}{\sigma\rho_n}\right|\right\}> \frac{d}{2}\cdot \frac{\lambda}{\sigma\rho_n}\right\}\,.
\]
By Lemma~\ref{lem:lambda-bigger-than-rho_n}, $\sigma\rho_n=o(\lambda)$, hence $\Prob(F_n)\to 0$. Note that $d<1$, so on the event $F_n^c$, we have 
\[
\left|\frac{\theta_{0,j}}{\sigma\rho_n}+\frac{\bLambda_j^\top\xi}{\sigma\rho_n}\right|-\frac{\lambda}{\sigma\rho_n}\ge\frac{\mu_n}{\sigma\rho_n}-\left(1+\frac{d}{2}\right)\frac{\lambda}{\sigma\rho_n}\,,\quad \frac{|\bLambda_{j+p}^\top\xi|-\lambda}{\sigma\rho_n}<0\,,
\]
where we used that $\theta_{0,j}\ge\mu_n$. By the inequalities above and~\eqref{eq:LCD-power-pf-1}, we have that conditioning on $F_n^c$,
\begin{align*}
    \left(\frac{|\theta_{0,j}+\bLambda_j^\top\xi|-\lambda}{\sigma\rho_n}\right)_+-\left(\frac{|\bLambda_{j+p}^\top\xi|-\lambda}{\sigma\rho_n}\right)_+ &\ge \left(\frac{\mu_n}{\sigma\rho_n}-\left(1+\frac{d}{2}\right)\frac{\lambda}{\sigma\rho_n}\right)_+\\
    &>(1+\varepsilon')\frac{t_n}{\sigma\rho_n}+\frac{d}{2}\frac{\lambda}{\sigma\rho_n}\,.
\end{align*}
Note that $\lambda/(\sigma\rho_n)\to \infty$ so the inequality above implies~\eqref{eq:power-1-2} and so~\eqref{eq:power-1-1}, and as discussed the latter implies $W_j\ge t_n$ with high probability. Therefore,
\begin{align*}
    \Expected(g_j) & = \Prob(W_j\ge t_n)\\
        & \ge \Prob\left(\left(\frac{|\theta_{0,j}+\bLambda_j^\top\xi|-\lambda}{\sigma\rho_n}\right)_+-\left(\frac{|\bLambda_{j+p}^\top\xi|-\lambda}{\sigma\rho_n}\right)_+>(1+\varepsilon')\frac{t_n}{\sigma\rho_n}+\frac{d}{2}\frac{\lambda}{\sigma\rho_n}\right)\\
        &\ge \Prob(F_n^c)\,.
\end{align*}
Since $\Prob(F_n)\to0$, the proof for $\lim_{n\to\infty}\Power(t_n)\to 1$ is complete.
\section{Proof of Corollary~\ref{cor:sample-privacy-tradeoff}}\label{proof:cor:sample-privacy-tradeoff}
    We prove this corollary by checking each condition. Let $\alpha+\gamma=3+d$ for $d>0$, let $r=p^{\beta}$ where $\beta=2+h$ for some $h>0$ such that $h<\alpha-2$ and $h<d$.
    \begin{enumerate}
        \item $\max\{1,2+\beta-2\alpha-2\gamma+\min\{\alpha,\beta\}\}<\max\{0,1+\frac{1}{2}\beta-\alpha-\gamma\}+\min\{\frac{1}{2}\beta,\frac{1}{2}\alpha,\alpha+\gamma-1\}$.\\
        LHS: $\min\{\alpha,\beta\}=\beta$; $2+\beta-2\alpha-2\gamma+\min\{\alpha,\beta\}=2h-2d<0$. So $\mbox{LHS}=1$.\\
        RHS: $1+\frac{1}{2}\beta-\alpha-\gamma=-1-d+\frac{1}{2}h<0$, and $\min\{\frac{1}{2}\beta,\frac{1}{2}\alpha,\alpha+\gamma-1\}=\frac{1}{2}\beta=1+\frac{1}{2}h$. So $\mbox{RHS}=0+1+\frac{1}{2}h>1=\mbox{LHS}$.
        \item $\max\{1-\frac{1}{4}\beta-\frac{1}{2}\alpha-\frac{1}{2}\gamma,\frac{3}{2}-\alpha-\gamma\}<\max\{-\frac{1}{2}\alpha,-\frac{1}{2}\beta\}$.\\
        LHS: $1-\frac{1}{4}\beta-\frac{1}{2}\alpha-\frac{1}{2}\gamma=-1-\frac{1}{2}d-\frac{1}{4}h$; $\frac{3}{2}-\alpha-\gamma=\frac{3}{2}-d$. So $\mbox{LHS}=-1-\frac{1}{2}d-\frac{1}{4}h$.\\
        RHS: $\mbox{RHS}=\max\{-\frac{1}{2}\alpha,-\frac{1}{2}\beta\}=-1-\frac{1}{2}h>-1-\frac{1}{2}d-\frac{1}{4}h=\mbox{LHS}$
        \item $1+\min\{\alpha,\beta,\alpha+\gamma+\frac{1}{2}\beta-1\}<\min\{2\alpha+2\gamma-2,\frac{1}{2}\beta+\alpha+\gamma-1\}$.\\
        LHS: $\min\{\alpha,\beta,\alpha+\gamma+\frac{1}{2}\beta-1\}=2+h$.\\
        RHS: $2\alpha+2\gamma-2=4+2d$; $\frac{1}{2}\beta+\alpha+\gamma-1=3+d+\frac{1}{2}h$.\\
        $\mbox{RHS}=3+d+\frac{1}{2}h>3+d=\mbox{LHS}$
        \item $\min\{\frac{1}{2}\alpha,\frac{1}{2}\beta,\frac{1}{2}\alpha+\frac{1}{2}\gamma-\frac{1}{2}+\frac{1}{4}\beta\}+\max\{1-\min\{\alpha,\beta\},2+\beta-2\alpha-2\gamma\}<\\
        -\frac{1}{2}\min\{\alpha,\beta\}+\max\{0,1+\frac{1}{2}\beta-\alpha-\gamma\}+\min\{\frac{1}{2}\alpha,\frac{1}{2}\beta,\alpha+\gamma-1,\frac{1}{4}\beta+\frac{1}{2}(\alpha+\gamma)-\frac{1}{2}\}$.\\
        LHS: $\min\{\frac{1}{2}\alpha,\frac{1}{2}\beta,\frac{1}{2}\alpha+\frac{1}{2}\gamma-\frac{1}{2}+\frac{1}{4}\beta\}=1+\frac{1}{2}h$; $\max\{1-\min\{\alpha,\beta\},2+\beta-2\alpha-2\gamma\}=-1-h$. So $\mbox{LHS}=-\frac{1}{2}h$.\\
        RHS: $-\frac{1}{2}\min\{\alpha,\beta\}=-1-\frac{1}{2}h$; $\max\{0,1+\frac{1}{2}\beta-\alpha-\gamma\}=0$; $\min\{\frac{1}{2}\alpha,\frac{1}{2}\beta,\alpha+\gamma-1,\frac{1}{4}\beta+\frac{1}{2}(\alpha+\gamma)-\frac{1}{2}\}=1+\frac{1}{2}h$. \\
        So $\mbox{RHS}=0>-\frac{h}{2}=\mbox{LHS}$
    \end{enumerate}

\vskip 0.2in

\end{document}